\def\draft{0}
\let\mathbb\varmathbb
\newtheorem{theorem}{Theorem}[section]
\newtheorem*{theorem*}{Theorem}
\newtheorem{proposition}[theorem]{Proposition}
\newtheorem*{proposition*}{Proposition}
\newtheorem{lemma}[theorem]{Lemma}
\newtheorem*{lemma*}{Lemma}
\newtheorem{corollary}[theorem]{Corollary}
\newtheorem*{conjecture*}{Conjecture}
\newtheorem{fact}[theorem]{Fact}
\newtheorem*{fact*}{Fact}
\newtheorem*{hypothesis*}{Hypothesis}
\theoremstyle{definition}
\newtheorem{definition}[theorem]{Definition}
\newtheorem*{definition*}{Definition}
\newtheorem*{question*}{Question}
\newtheorem{algorithm}[theorem]{Algorithm}
\newtheorem*{remark*}{Remark}
\theoremstyle{remark}
\newtheorem*{claim*}{Claim}
\newtheorem{remark}[theorem]{Remark}
\newtheorem*{observation*}{Observation}
\crefname{lemma}{Lemma}{Lemmas}
\crefname{fact}{Fact}{Facts}
\crefname{theorem}{Theorem}{Theorems}
\crefname{corollary}{Corollary}{Corollaries}
\crefname{claim}{Claim}{Claims}
\crefname{example}{Example}{Examples}
\crefname{algorithm}{Algorithm}{Algorithms}
\crefname{problem}{Problem}{Problems}
\crefname{definition}{Definition}{Definitions}
\newcommand{\Authornotev}[2]{\textcolor{violet}{[#1:} \textcolor{violet}{#2]}}
\newcommand{\Authornoteb}[2]{\textcolor{blue}{[#1:} \textcolor{blue}{#2]}}
\newcommand{\Authornoter}[2]{\textcolor{red}{[#1:} \textcolor{red}{#2]}}
\newcommand{\Authornotev}[2]{}
\newcommand{\Authornoteb}[2]{}
\newcommand{\Authornoter}[2]{}
\newcommand{\Authornotecolored}[3]{}
\newcommand{\Authorcomment}[2]{}
\newcommand{\Authorfnote}[2]{}
\definecolor{forestgreen(traditional)}{rgb}{0.0, 0.27, 0.13}
\newcommand{\Paren}[1]{\left(#1\right)}
\newcommand{\Abs}[1]{\left\lvert#1\right\rvert}
\newcommand{\Set}[1]{\left\{#1\right\}}
\newcommand{\norm}[1]{\lVert#1\rVert}
\newcommand{\Norm}[1]{\left\lVert#1\right\rVert}
\newcommand{\iprod}[1]{\langle#1\rangle}
\newcommand{\Esymb}{\mathbb{E}}
\newcommand{\Psymb}{\mathbb{P}}
\DeclareMathOperator*{\E}{\Esymb}
\DeclareMathOperator*{\ProbOp}{\Psymb}
\renewcommand{\Pr}{\ProbOp}
\newcommand{\mper}{\,.}
\newcommand{\mcom}{\,,}
\newcommand\bdot\bullet
\DeclareMathOperator{\poly}{poly}
\newcommand{\Z}{\mathbb Z}
\newcommand{\N}{\mathbb N}
\newcommand{\R}{\mathbb R}
\newcommand{\cA}{\mathcal A}
\newcommand{\cC}{\mathcal C}
\newcommand{\cD}{\mathcal D}
\newcommand{\cM}{\mathcal M}
\newcommand{\cN}{\mathcal N}
\newcommand{\cO}{\mathcal O}
\newcommand{\cY}{\mathcal Y}
\newcommand{\bbQ}{\mathbb Q}
\newcommand{\ind}{\mathbf{1}}
\renewcommand{\leq}{\leqslant}
\renewcommand{\le}{\leqslant}
\renewcommand{\geq}{\geqslant}
\renewcommand{\ge}{\geqslant}
\let\epsilon=\varepsilon
\numberwithin{equation}{section}
\newcommand\MYcurrentlabel{xxx}
\newcommand{\MYstore}[2]{%
  \global\expandafter \def \csname MYMEMORY #1 \endcsname{#2}%
}
\newcommand{\MYload}[1]{%
  \csname MYMEMORY #1 \endcsname%
}
\newcommand{\MYnewlabel}[1]{%
  \renewcommand\MYcurrentlabel{#1}%
  \MYoldlabel{#1}%
}
\newcommand{\MYdummylabel}[1]{}
\newcommand{\torestate}[1]{%
  \let\MYoldlabel\label%
  \let\label\MYnewlabel%
  #1%
  \MYstore{\MYcurrentlabel}{#1}%
  \let\label\MYoldlabel%
}
\newcommand{\restatetheorem}[1]{%
  \let\MYoldlabel\label
  \let\label\MYdummylabel
  \begin{theorem*}[Restatement of \cref{#1}]
    \MYload{#1}
  \end{theorem*}
  \let\label\MYoldlabel
}
\newcommand{\restatelemma}[1]{%
  \let\MYoldlabel\label
  \let\label\MYdummylabel
  \begin{lemma*}[Restatement of \cref{#1}]
    \MYload{#1}
  \end{lemma*}
  \let\label\MYoldlabel
}
\newcommand{\restateprop}[1]{%
  \let\MYoldlabel\label
  \let\label\MYdummylabel
  \begin{proposition*}[Restatement of \cref{#1}]
    \MYload{#1}
  \end{proposition*}
  \let\label\MYoldlabel
}
\newcommand{\restatefact}[1]{%
  \let\MYoldlabel\label
  \let\label\MYdummylabel
  \begin{fact*}[Restatement of \prettyref{#1}]
    \MYload{#1}
  \end{fact*}
  \let\label\MYoldlabel
}
\newcommand{\restate}[1]{%
  \let\MYoldlabel\label
  \let\label\MYdummylabel
  \MYload{#1}
  \let\label\MYoldlabel
}
\newcommand{\eps}{\epsilon}
\DeclareMathOperator{\pE}{\widetilde{\mathbb{E}}}
\def\norm#1{\left\| #1 \right\|}
\def \dtv{d_{\mathsf{TV}}}
\DeclareMathOperator{\EX}{\mathbb{E}}
\def\tzeta{\tilde{\zeta}}
\newcommand*{\threefrac}[3]{%
  \ensuremath{%
    \vcenter{%
      \halign{\hfil$\,##\,$\hfil\cr
        \scriptstyle{#1}\cr
        \noalign{\kern\threefracLineSep}%
        \hline
        \noalign{\kern\threefracLineSep}%
        \scriptstyle{#2}\cr
        \noalign{\kern\threefracLineSep}%
        \hline
        \noalign{\kern\threefracLineSep}%
        \scriptstyle{#3}\cr
      }%
    }%
  }%
}
\newcommand*{\threefracLineSep}{.4ex}
\newcommand{\Lap}{\mathrm{Lap}}
\newcommand{\tLap}{\mathrm{tLap}}
\newcommand{\avnote}{\Authornoteb{Ameya}}
\newcommand{\pasin}{\Authornotev{Pasin}}
\newcommand{\stab}{\mathrm{Stab}}
\newcommand{\score}{\mathrm{score}}
\renewcommand{\stab}{\mathrm{stab}} 
\newcommand{\Pot}{\mathrm{Pot}} 
\newcommand{\Alg}{\textsf{Alg}}
\newcommand{\SelectionAlg}{\textrm{Selection }} 
\DeclareMathOperator{\vectorize}{\mathrm{vec}}
\title{
  Private Robust Estimation by Stabilizing Convex Relaxations
}
\author{
    Pravesh K. Kothari \thanks{Carnegie Mellon University} \\praveshk@cs.cmu.edu \and Pasin Manurangsi\thanks{Google Research} \\ pasin@google.com  \and Ameya Velingker \thanksmark{2} \\ameyav@google.com
}
\begin{document}
\maketitle
\begin{abstract}
We give the first polynomial time and sample $(\epsilon,\delta)$-differentially private (DP) algorithm to estimate the mean, covariance and higher moments in the presence of a constant fraction of adversarial outliers. Our algorithm succeeds for families of distributions that satisfy two well-studied properties in prior works on robust estimation: \emph{certifiable subgaussianity} of directional moments and \emph{certifiable hypercontractivity} of degree 2 polynomials. Our recovery guarantees hold in the ``right affine-invariant norms'': Mahalanobis distance for mean, multiplicative spectral and relative Frobenius distance guarantees for covariance  and injective norms for higher moments. Prior works obtained private robust algorithms for mean estimation of subgaussian distributions with bounded covariance. For covariance estimation, ours is the  first efficient algorithm (even in the absence of outliers) that succeeds without any condition-number assumptions. 

Our algorithms arise from a new framework that provides a general blueprint for modifying convex relaxations for robust estimation to satisfy strong \emph{worst-case} \emph{stability} guarantees in the \emph{appropriate parameter norms} whenever the algorithms produce \emph{witnesses of correctness} in their run. We verify such guarantees for a modification of standard sum-of-squares (SoS) semidefinite programming relaxations for robust estimation. Our privacy guarantees are obtained by combining stability guarantees with a new ``estimate dependent'' noise injection mechanism in which noise scales with the eigenvalues of the estimated covariance. We believe this framework will be useful more generally in obtaining DP counterparts of robust estimators.

Independently of our work, Ashtiani and Liaw~\cite{arxiv-AshtianiL21} also obtained a polynomial time and sample private robust estimation algorithm for Gaussian distributions.

\end{abstract}
\newpage

\section{Introduction}
In this work, we consider the problem of efficiently estimating the mean, covariance and, more generally, the higher moments of an unknown high-dimensional probability distribution on $\R^d$, given a sample $y_1, y_2, \ldots, y_n \in \R^d$, under two design constraints: outlier robustness and privacy. The first demands that we build estimators for such basic parameters of probability distributions that tolerate a fixed (dimension-independent) constant fraction of adversarial outliers in the input data. The second demands that our estimators preserve the privacy of individual points $y_i$s (that we model as being contributed by different individuals) participating in our input data. 

Sans privacy constraints, the problem of robustly estimating the basic parameters of an unknown distribution has been the focus of intense research in algorithmic robust statistics starting with the pioneering works of~\cite{DKKLMS16,LaiRV16} from 2016. In addition to new (and often, information-theoretically optimal) algorithms  for several basic robust estimation tasks~\cite{KS17,KStein17,HopkinsL18,BK20,DiakonikolasHKK20}, this line of work has led to a deeper understanding of the properties of the underlying distribution (algorithmic certificates of analytic properties such as subgaussianity, hypercontractivity and anti-concentration, resilience~\cite{SteinhardtCV18}) that make robust estimation possible along with general frameworks such as outlier filtering and the sum-of-squares (SoS) method for attacking algorithmic problems in robust statistics. 

Sans outlier robustness constraints, the task of \emph{private estimation} of the mean and covariance of probability distributions has also seen considerable progress in the recent years. \emph{Differential privacy}~\cite{DworkMNS06} has emerged as a widely-used standard for providing strong individual privacy guarantees. Under differential privacy, a single sample is not allowed to have too significant of an impact on the output distribution of an algorithm that operates on a dataset. Differential privacy has now been deployed in a number of production systems, including those at Google~\cite{ErlingssonPK14,BittauEMMRLRKTS17}, Microsoft~\cite{DingKY17}, Apple~\cite{Greenberg2016Apple, AppleDP2017}, and the US Census Bureau~\cite{Abowd18}. While initial approaches to estimating the mean and covariance under differential privacy required \emph{a priori} bounds on the support of the samples, a more recent work~\cite{KarwaV18} managed to obtain the first private mean estimation algorithm for samples with unbounded support. Subsequent works have built on this progress to obtain differentially private algorithms for mean estimation and covariance estimation (under assumptions on the condition number of the unknown covariance) of Gaussian and heavy-tailed distributions~\cite{KamathLSU19,BunS19,Bun0SW19,CaiWZ19,BiswasDKU20,KamathSU20,DuFMBG20,WangXDX20,Aden-AliAK21,BrownGSUZ21}. 

In this paper, we focus on the task of finding efficient estimation algorithms for mean, covariance and, more generally, higher moments with recovery guarantees in multiplicative spectral distance (i.e., an affine invariant guarantee necessary, for example, to whiten the data or put a set of points in approximate isotropic position) and relative Frobenius distance (necessary for obtaining total variation close estimates of an unknown high-dimensional Gaussian). A very recent work of Liu, Kong, Kakade and Oh~\cite{LiuKKO21} found the first private and robust algorithm for mean estimation under natural distributional assumptions with bounded covariance. However, their techniques do not appear to extend to covariance estimation. Informally, this is because in order to obtain privacy guarantees, we need robust estimation algorithms that are \emph{stable}, i.e., whose output suffers from a bounded perturbation when a single data point is changed arbitrarily. When the unknown covariance is bounded, one can effectively assume that the change in a single data point is bounded. However, in general, the covariance of the unknown distribution can be exponentially (in the underlying dimension) varying eigenvalues which precludes such a method (even in the outlier-free regime). 

\paragraph{This work} In this paper, we give the first algorithms for differentially private robust moment estimation with polynomial time and sample complexity. Our algorithms, in fact, provide a general blueprint for transforming any robust estimation algorithm into a differentially private robust moment estimation algorithm with similar accuracy guarantees as long as the robust estimation algorithm satisfies two key properties: 1) the algorithm is \emph{``witness-producing,''} i.e., the algorithm finds a sequence of ``weights'' on the input corrupted sample that induce a distribution with a relevant property of the unknown distribution family (such as certifiable subgaussianity or hypercontractivity) and 2) the algorithm allows for finding weights that minimize a natural strongly convex objective function in polynomial time. Such properties are naturally satisfied by robust estimation algorithms based on sum-of-squares semidefinite programs. Our main technical result is a simple framework that transforms such an algorithm into one that satisfies \emph{worst-case stability} under input perturbation \emph{in the relevant norms on the parameters}. The final ingredient in our framework is a new noise injection mechanism that uses the stability guarantees so obtained to derive privacy guarantees. This mechanism allows obtaining privacy guarantees even though the distribution of the noise being added depends on the unknown quantity being estimated. In particular, such a subroutine allows us to obtain private robust covariance estimation without any assumptions on the condition number. We note that even without the robustness constraints, a private covariance estimation algorithm without any assumptions on the condition number was not known prior to our work. 

\paragraph{Robustness implies privacy?} Our blueprint presents an intuitively appealing picture---that robustness, when obtained by estimators that satisfy some additional but generic conditions, implies privacy via a generic transformation. This connection might even appear natural: privacy follows by ``adding noise'' to the estimates obtained via algorithms that are insensitive or stable with respect to changing any single point in the input, while robustness involves finding estimators that are \emph{insensitive} to the effects of even up to a constant fraction of outliers. Despite this apparent similarity, there are two key differences that prevent such an immediate connection from being true: 1) privacy is a worst-case guarantee while robustness guarantees are only sensible under distributional assumptions, and, 2) privacy guarantees need insensitivity even against ``inliers.'' Nevertheless, our main result shows that robustness, when obtained via algorithms that satisfy some natural additional conditions, does yield stable (or insensitive) algorithms as required for obtaining differentially private algorithms.

In what follows, we describe our results and techniques in more detail.

\subsection{Our Results}

Formally, our results provide differentially private robust estimation algorithms in the strong contamination model, which we define below. 

\begin{definition}[Strong Contamination Model]
Let $\eta>0$ be the \emph{outlier rate}. Given a distribution $D$ on $\R^d$ and a parameter $n \in \N$, the strong contamination model with outlier rate $\eta$ gives access to a set $Y \subseteq \R^d$ of $n$ points generated as follows: 1) Generate $X \subseteq \R^d$, an i.i.d. sample from $D$ of size $n$, 2) Return any (potentially adversarially chosen) $Y$ such that $|Y \cap X| \geq (1-\eta)n$. In this case, we say that $Y$ is an \emph{$\eta$-corruption} of $X$.
\end{definition}
In the context of analyzing privacy, we will say that two subsets of $n$ points $Y,Y' \subseteq \R^d$ (a.k.a. \emph{databases}) are \emph{adjacent} if they differ in exactly one point (i.e $|Y \cap Y'| \geq n-1$.) We now present our main theorem, which provides a differentially private robust algorithm for moment estimation of an unknown certifiably subgaussian distribution in the strong contamination model. 

Our formal guarantees hold for moment estimation of certifiably subgaussian distributions. A distribution $D$ is \emph{$C$-subgaussian} if for any direction $v$ and any $t \in \N$, $\E_{D} \iprod{x-\mu(D),v}^{2t} \leq (Ct)^t (\E_{D} \iprod{x-\mu(D),v}^2)^t$ where $\mu(D)$ is the mean of the distribution $D$. Certifiable subgaussianity is a stricter version of such a property that additionally demands that the difference between the two sides of the inequality be a sum-of-squares (SoS) polynomial in the variable $v$. Gaussian distributions, uniform distributions on product domains, all strongly log-concave distributions and, more generally, any distribution that satisfies a Poincaré inequality with a dimension-independent constant~\cite{KStein17} are known to satisfy certifiable subgaussianity. See Definition~\ref{def:cert-subgaussianity} and the preliminaries for a detailed discussion.

Our first result is an algorithm for moment estimation of certifiably subgaussian distributions that runs in polynomial time and has polynomial sample complexity.

\begin{theorem} \label{thm:diff-priv-robust-moment-samples}
Fix $C_0 > 0$ and $k\in\N$. Then, there exists an $\eta_0 > 0$ such that for any given outlier rate $0 < \eta \leq \eta_0$ and $\epsilon, \delta > 0$, there exists a randomized algorithm $\Alg$ that takes an input of $n\geq n_0 = \widetilde{\Omega}\left(\frac{d^{4k}}{\eta^2} \left(1 + \left(\frac{\ln(1/\delta)}{\epsilon}\right)^4 + \left(\frac{\ln(1/\delta)}{\epsilon}\right)^{\frac{2k}{k-1}}\right) \cdot C^{4k} k^{4k+6} \right)$ points $Y = \{y_1, y_2,\dots, y_n\} \subseteq \bbQ^d$ (where $C = 2C_0 + \frac{3 \ln(3/\delta)}{\epsilon} + \frac{9}{\epsilon} + 1$), runs in time $(Bn)^{O(k)}$ (where $B$ is the bit complexity of the entries of $Y$) and outputs either ``reject'' or estimates $\hat{\mu} \in \bbQ^d$, $\hat{\Sigma} \in \bbQ^d$, and $\hat{M}^{(t)} \in \bbQ^{d\times d \times \cdots \times d}$ (for all even $t < 2k$ such that $t$ divides $2k$) satisfying the following guarantees:
 
 \begin{enumerate}
\item \textbf{Privacy: } $\Alg$ is $(\epsilon,\delta)$-differentially private with respect to the input $Y$, viewed as a $d$-dimensional database of $n$ individuals. 
  
\item \textbf{Utility: } Let $X = \{x_1, x_2, \dots, x_n\}$ be an i.i.d. sample of size $n\geq n_0$ from a certifiably $C_0$-subgaussian distribution $\cD$ with mean $\mu_*$, covariance $\Sigma_* \succeq 2^{-\poly(d)}I$, and moment tensors $M_*^{(t)}$ for $t\geq 2$. If $Y = \{y_1, y_2, \dots, y_n\}$ is an $\eta$-corruption of $X$, then with probability at least $9/10$ over the draw of $X$ and random choices of the algorithm, $\Alg$ does not reject and outputs estimates $\hat{\mu} \in \bbQ^d$, $\hat{\Sigma} \in \bbQ^{d\times d}$, and $\hat{M}^{(t)} \in \bbQ^{d\times d \times \cdots \times d}$ (for all $t < 2k$ such that $t$ divides $2k$) satisfying the following guarantees:
    \[
    \forall u \in \R^d, \text{  } \iprod{\hat{\mu}-\mu_*,u} \leq O(\sqrt{Ck}) \eta^{1-1/2k} \sqrt{ u^{\top} \Sigma_*u}\mcom
    \]
    and,
    \[
     \left(1-O((Ck)^{t/2k}\right) \eta^{1-1/k}) \Sigma_* \preceq \hat{\Sigma} \preceq \left(1+O((Ck)^{t/2k})\eta^{1-1/k}\right)\Sigma_* \mcom
    \]
    and, for all even $t < 2k$ such that $t$ divides $2k$,
    \[
     \left(1-O(Ck) \eta^{1-t/2k}\right) \iprod{u^{\otimes t}, M_*^{(t)}} \leq \iprod{u^{\otimes t}, \hat{M}^{(t)}} \leq \left(1+O(Ck) \eta^{1-t/2k}\right) \iprod{u^{\otimes t}, M_*^{(t)}} \mper
    \]
 \end{enumerate}
\end{theorem}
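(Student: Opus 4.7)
The plan is to combine three ingredients: an SoS-based witness-producing robust estimator, a worst-case stability analysis in affine-invariant parameter norms, and an estimate-dependent noise injection mechanism.

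First, I would set up a sum-of-squares semidefinite programming relaxation with weights $w_1,\dots,w_n \in [0,1]$ satisfying $\sum_i w_i \geq (1-\eta)n$ together with pseudo-moments certifying that the reweighted empirical distribution on $Y$ is certifiably $C$-subgaussian up to degree $2k$. From a feasible witness $w$ I read off the estimates $\hat\mu = \tfrac{1}{\sum_i w_i}\sum_i w_i y_i$, $\hat\Sigma = \tfrac{1}{\sum_i w_i}\sum_i w_i (y_i-\hat\mu)(y_i-\hat\mu)^\top$, and the analogous reweighted tensors $\hat M^{(t)}$. Standard SoS robust estimation analysis (in the spirit of Kothari--Steinhardt and Hopkins--Li) then guarantees that when $Y$ is an $\eta$-corruption of an i.i.d.\ sample from a certifiably $C_0$-subgaussian distribution, any such $w$ automatically yields the claimed Mahalanobis / multiplicative spectral / injective-norm error bounds deterministically; this pins down the ``utility half'' of the theorem up to the noise added later.

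Second, I would upgrade this to a \emph{worst-case} stable algorithm by adding a strongly convex regularizer to the SDP so that the minimizer $w^\star$ is unique and Lipschitz in the input. The delicate part is translating Lipschitzness of $w^\star$ in a fixed inner-product norm into stability of the derived parameters in the \emph{affine-invariant} parameter norms: for $\hat\Sigma$ one needs $(1-\tau)\hat\Sigma'\preceq \hat\Sigma\preceq (1+\tau)\hat\Sigma'$ on adjacent databases $Y,Y'$, and for $\hat\mu$ one needs a Mahalanobis bound driven by $\hat\Sigma$. The key idea is that whenever the SDP is feasible on $Y$ (i.e.\ a witness of correctness is produced), its SoS certificate uniformly controls $v^\top \hat\Sigma v$ in every direction $v$, so swapping a single $y_i$ can only shift each weighted directional moment by an $O(1/n)$ fraction of itself. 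Combined with strong convexity, this gives multiplicative stability of $\hat\Sigma$ and affine-invariant stability of $\hat\mu$ and each $\hat M^{(t)}$ on all inputs; on inputs that are infeasible or fail the stability test, $\Alg$ outputs \textsf{reject}.

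Third, I would release the estimates through an \emph{estimate-dependent} Gaussian mechanism: perturb $\hat\mu$ by noise with covariance proportional to $\hat\Sigma$, perturb $\hat\Sigma$ multiplicatively (e.g., symmetric Wishart-type noise whose scale is proportional to $\hat\Sigma$ on both sides), and perturb each $\hat M^{(t)}$ analogously in its injective norm. Because the noise law itself depends on the private output, I would wrap this in a propose-test-release style argument: first check the worst-case stability precondition on $Y$, and only then release; this costs an extra $\log(1/\delta)/\epsilon$ factor in the noise scale and matches the sample complexity appearing in the statement. Basic composition across the releases of $\hat\mu$, $\hat\Sigma$, and the $\hat M^{(t)}$ yields the final $(\epsilon,\delta)$-DP guarantee, while the noise is small enough in the relevant norms that the deterministic utility bounds from step one are preserved up to constants.

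The hard part will be step two: proving multiplicative spectral and relative Frobenius stability of $\hat\Sigma$ with no condition-number assumption on $\Sigma_\ast$. Off-the-shelf SDP sensitivity yields only additive stability, which is useless when $\Sigma_\ast$ has tiny eigenvalues. I expect the resolution to come from exploiting the SoS certificate directly: it bounds, for every direction $v$, how much $v^\top \hat\Sigma v$ can shift under an adversarial single-point edit, uniformly over $v$, thereby converting additive SDP sensitivity into a direction-by-direction multiplicative guarantee. Once this is in hand, combining the estimate-dependent noise mechanism with the deterministic robustness analysis of step one closes the argument with the error rates stated in the theorem.
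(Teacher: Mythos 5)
Your high-level architecture matches the paper: an SoS witness-producing robust estimator, a strongly convex regularizer to pin down a canonical solution, and an estimate-dependent Gaussian mechanism scaled by $\hat\Sigma$. But your step two contains a genuine gap, and it is exactly the point the paper spends the most effort on.

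You claim that, given the SoS certificate of subgaussianity and a strongly convex regularizer on the weights, ``swapping a single $y_i$ can only shift each weighted directional moment by an $O(1/n)$ fraction of itself.'' This is false at a fixed outlier rate $\eta$, and the failure has nothing to do with the condition number of $\Sigma_\ast$. Strong convexity of the potential $\|\E_\zeta[w]\|_2^2$ gives you (by the Pythagorean inequality for minimizers over a convex set) that the minimizers on $Y$ and $Y'$ are close \emph{if the optimal potential values are close}. But the optimal potential is not Lipschitz in the data: it ranges from $(1-\eta)^2 n$ to $(1-\eta)n$, and a single-point change in $Y$ can alter the feasible set enough to shift the minimum by $\Theta(\eta n)$, not $O(1)$. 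So the argument ``strong convexity $+$ SoS certificate $\Rightarrow$ per-direction $O(1/n)$ stability'' does not go through, and the passing reference to a ``stability test'' / propose-test-release does not repair it, because there is no a priori reason the test would pass with good probability on honest inputs, and PTR does not by itself locate a stable operating point. The paper's resolution is a dedicated mechanism you do not propose: it first observes that adjacent inputs can be related through an adjacent pseudo-distribution at a \emph{shifted outlier rate} $\eta' \pm 1/n$, then shows by a Markov-type averaging argument that some interval of outlier rates of width $L = \widetilde{O}(1/\epsilon)$ has potential variation $O(L)$, and then selects such an $\eta'$ privately via a variant of the exponential mechanism. Only after that stabilization does the potential argument give $\|p - p'\|_1 = \widetilde{O}(\sqrt{L/n})$ (note: not $O(1/n)$), which is what feeds Fact~\ref{fact:param-closeness-from-tv-closeness} to produce the multiplicative spectral stability of $\hat\Sigma$ and the Mahalanobis stability of $\hat\mu$ on adjacent worst-case inputs.

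Two smaller points. First, the paper does not use propose-test-release; its privacy proof combines (i) the exponential-mechanism selection of $\eta'$, (ii) a truncated-Laplace check on the certified subgaussianity constant $C'$, and (iii) the estimate-dependent Gaussian mechanism analyzed via a hockey-stick triangle inequality (\cref{lem:hockeytriangle}) together with \cref{lem:gaussianhockey} and \cref{lem:hockeysticktensored}; the steps are glued by a composition theorem for algorithms that may halt (\cref{lem:composition}). Second, your utility step is essentially the paper's \cref{lem:witness-producing-robust-moment-estimation} plus \cref{fact:samplegood}, and your noise-injection idea is the right one; it is only the stability step, and specifically the absence of a randomized outlier-rate selection, that would prevent this proposal from yielding worst-case $(\epsilon,\delta)$-DP.
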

 In the above and subsequent theorems, we use the $\widetilde{\Omega}$ notation to hide multiplicative logarithmic factors in $d$, $C$, $k$, $1/\eta$, $1/\epsilon$, and $\ln(1/\delta)$.

\paragraph{Discussion} Our algorithm above achieves an error guarantee in the ``right'' affine-invariant norms similar to the robust moment estimation algorithm of~\cite{KS17}. In particular, the error in the mean in any direction scales proportional to the variance of the unknown distribution providing recovery error bounds in the strong ``Mahalanobis error.'' Similarly, the error in the covariance is multiplicative in the Löwner ordering. Our algorithm succeeds in the standard word RAM model of computation. In particular, the lower bound assumption on the eigenvalue of the unknown covariance in the statement above is entirely an artifact of numerical issues. Such an assumption can be removed (and in particular, we can deal with rank deficient covariances) if we assume that the unknown covariance $\Sigma_*$ has rational entries with polynomial bit complexity. We choose to make an assumption on the smallest eigenvalue of $\Sigma_*$ for the sake of simpler exposition.

Our algorithm above is obtained by applying a general blueprint that applies to any robust estimation algorithms that use ``one-shot rounding'' to produce a differentially private version. We explain our general blueprint in more detail in Section~\ref{sec:overview}. 

\paragraph{Applications} Our differentially private moment estimation algorithm immediately allows us to obtain a differentially private mechanism to implement an outlier-robust \emph{method of moments}. This allows us to learn parameters of statistical models that rely on the method of moments, such as mixtures of spherical Gaussians with linearly independent means~\cite{MR3385380-Hsu13} (that rely on decomposing 3rd moments) as well as independent component analysis~\cite{MR2473563-DeLathauwer07} (that relies on decomposing fourth moments). We direct the reader to the work on robust moment estimation that details such applications~\cite{KS17}.

\paragraph{Covariance estimation in relative Frobenius error} The above theorem provides a multiplicative spectral guarantee. Such a guarantee, however, only yields a dimension-dependent bound on the Frobenius norm of the error. While this is provably unavoidable for the class of certifiably subgaussian distributions, recent work~\cite{bakshi2020mixture} showed that for distributions that satisfy the stronger property of having certifiably hypercontractive degree $2$ polynomials (informally speaking, this is the analog of certifiable subgaussianity for moments of degree $2$ polynomials instead of linear polynomials $\iprod{x,v}$ of the random variable $x$), one can obtain a \emph{dimension-independent} bound on the Frobenius estimation error that vanishes as the fraction of outliers tends to zero. Their algorithm relies on rounding an SoS relaxation with a slightly different constraint system. By working with their constraint system and applying our blueprint for obtaining a ``stable'' version, we obtain a version of the above theorem with the stronger Frobenius estimation guarantee (see \cref{thm:diff-priv-robust-moment-estimation-hypercontractive}). 

By combining our privacy analysis above with the recent work that shows that the algorithm in~\cite{bakshi2020mixture} gives optimal estimation error when analyzed for corrupted samples from a Gaussian distribution, we obtain the following stronger guarantees for private mean and covariance estimation for Gaussian distributions. 
\begin{restatable}[Mean and Covariance Estimation for Gaussian Distributions]{theorem}{gaussianest} \label{thm:gaussian-estimation}
Fix $\epsilon,\delta > 0 $. Then, there exists an absolute constant $\eta_0 > 0$ such that for any given outlier rate $0 < \eta \leq \eta_0$, there exists a randomized algorithm $\Alg$ that takes an input of $n \geq n_0 = \widetilde{\Omega}\left(\frac{d^8}{\eta^4} \left(1 + \frac{\ln(1/\delta)}{\epsilon} \right)^4 \right)$ points $Y \subseteq \bbQ^d$, runs in time $(Bn)^{O(1)}$ (where $B$ is the bit complexity of the entries of $Y$) and outputs either ``reject'' or estimates $\hat{\mu}\in\bbQ^d$ and $\hat{\Sigma}\in\bbQ^{d\times d}$ with the following guarantees:
\begin{enumerate}
    \item \textbf{Privacy: } $\Alg$ is $(\epsilon,\delta)$-differentially private with respect to the input $Y$, viewed as a $d$-dimensional database of $n$ individuals. 
    \item \textbf{Utility: } Let $X = \{x_1, x_2, \dots, x_n\}$ be an i.i.d. sample of size $n\geq n_0$ from a Gaussian distribution with mean $\mu_*$ and covariance $\Sigma_* \succeq 2^{-\poly(d)}I$ such that $Y$ is an $\eta$-corruption of $X$. Then, with probability at least $9/10$ over the random choices of the algorithm, $\Alg$ outputs estimates $\hat{\mu}\in\bbQ^d$ and $\hat{\Sigma}\in\bbQ^{d \times d}$ satisfying the following guarantees:
    \[
    \forall u \in \R^d, \text{  } \iprod{\hat{\mu}-\mu_*,u} \leq \widetilde{O}\left(\eta\cdot\frac{\log(1/\delta)}{\epsilon}\right) \sqrt{ u^{\top} \Sigma_*u}\mcom
    \]
    and,
    \[
    \Norm{\Sigma_*^{-1/2} \hat{\Sigma} \Sigma_*^{-1/2}-I}_F \preceq \widetilde{O}\left(\eta\cdot\sqrt{\frac{\log(1/\delta)}{\epsilon}}\right)   \mcom
    \]
    where the $\widetilde{O}$ hides multiplicative logarithmic factors in $1/\eta$. In particular, $\dtv(\cN(\hat{\mu},\hat{\Sigma}), \cN(\mu_*,\Sigma_*)) < \widetilde{O}(\eta \log(1/\delta)/\epsilon)$.
\end{enumerate}
\end{restatable}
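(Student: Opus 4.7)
\textbf{Proof proposal for \Cref{thm:gaussian-estimation}.} The plan is to invoke the hypercontractive version of the main theorem (the paper's \Cref{thm:diff-priv-robust-moment-estimation-hypercontractive}), instantiated for Gaussians, and then to sharpen the error exponents using the optimal robust-estimation analysis for Gaussian samples. First, I would observe that the Gaussian distribution $\cN(\mu_*,\Sigma_*)$ satisfies both certifiable subgaussianity (with a universal constant $C_0$) and certifiable hypercontractivity of degree-$2$ polynomials. Thus the blueprint from the paper applies with a fixed constant $k$ (say $k=2$): it produces, with high probability, weights on the corrupted sample $Y$ whose weighted empirical mean and covariance satisfy strong stability guarantees in Mahalanobis and relative-Frobenius norms, together with a ``witness of correctness'' certified by the SoS program.

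Next, I would replace the generic certifiable-subgaussianity error bounds $\eta^{1-1/k}$ used in \Cref{thm:diff-priv-robust-moment-samples} with the \emph{Gaussian-specific} bounds. Here the key input is the observation of \cite{bakshi2020mixture} (and its refinement used in the independent work of Ashtiani--Liaw) that the same SoS constraint system, when analysed on an i.i.d.\ Gaussian sample, yields mean recovery error $O(\eta)$ in Mahalanobis distance and covariance recovery error $O(\eta)$ in relative Frobenius norm, i.e.\ linear rather than $\eta^{1-1/k}$ in the outlier rate. Because our blueprint is purely a transformation of the underlying robust estimator and preserves its accuracy up to the privacy noise, these improved deterministic guarantees carry over to the stable estimator produced by the framework.

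The third step is to pass from the stability guarantees in the correct affine-invariant norms to $(\epsilon,\delta)$-differential privacy via the ``estimate dependent'' noise injection mechanism described in the overview. The mechanism adds Gaussian noise whose scale is proportional to the worst-case sensitivity of the stable estimator, measured in the \emph{same} affine-invariant norm in which the utility is stated. For the mean, the sensitivity modulus is $\widetilde{O}(\eta/n)$ in Mahalanobis norm, so the usual Gaussian mechanism adds noise of magnitude $\widetilde{O}(\eta\log(1/\delta)/\epsilon)$ in Mahalanobis distance; adding this to the $O(\eta)$ utility error yields the claimed mean bound. For the covariance, the stable estimator produced via the hypercontractivity constraint system has Frobenius sensitivity $\widetilde{O}(\eta/\sqrt{n})$ in the $\Sigma_*^{-1/2}\cdot\Sigma_*^{-1/2}$-normalised coordinates; combining the Gaussian mechanism with $n \geq \widetilde{\Omega}(d^8/\eta^4\cdot(\log(1/\delta)/\epsilon)^4)$ samples gives relative-Frobenius noise of size $\widetilde{O}(\eta\sqrt{\log(1/\delta)/\epsilon})$, which dominates the statistical error. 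The sample-size lower bound in the theorem is chosen precisely so that both the statistical accuracy of the robust estimator and the required privacy noise live below these thresholds.

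Finally, the total-variation bound follows from the standard inequality $\dtv(\cN(\hat\mu,\hat\Sigma),\cN(\mu_*,\Sigma_*)) \leq O\bigl(\|\Sigma_*^{-1/2}(\hat\mu-\mu_*)\| + \|\Sigma_*^{-1/2}\hat\Sigma\Sigma_*^{-1/2}-I\|_F\bigr)$ for non-degenerate Gaussians, applied to the two bounds just obtained. The main obstacle, and the step I would spend the most care on, is verifying that the Gaussian-specific linear-in-$\eta$ rate survives the passage through the stability framework: one must argue that the modified SoS relaxation used to obtain worst-case stability still admits the analysis of \cite{bakshi2020mixture} on Gaussian inputs, i.e.\ that the additional strongly convex regularizer and the witness-producing rounding do not degrade the deterministic error by more than lower-order terms. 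This is where the assumption $\Sigma_*\succeq 2^{-\poly(d)} I$ is used, to control the numerical precision of the SoS solver and to guarantee that the relative-Frobenius-norm sensitivity analysis remains valid in the word RAM model.
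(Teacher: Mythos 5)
Your high-level route is the same as the paper's: instantiate the hypercontractive version of the framework (\cref{thm:diff-priv-robust-moment-estimation-hypercontractive}) for Gaussian inputs, and then replace the $\eta^{3/4}$ and $\eta^{1/2}$ utility rates with the $\widetilde{O}(\eta)$ rates that the Gaussian-specific analysis of that constraint system provides. The paper cites \cite{kothari2021polynomialtime} (Theorems~1 and 2) for this upgrade; you attribute it to \cite{bakshi2020mixture} directly and to Ashtiani--Liaw, which is a minor citation difference but the same idea. You also correctly isolate the one real verification step — that the linear-in-$\eta$ Gaussian analysis survives the pass through the stability framework — which the paper likewise handles by appeal rather than by re-derivation.

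The concrete problem is in your sensitivity accounting, which does not match the framework and is also internally inconsistent. In the paper's stability analysis the $\ell_1$-closeness of the two weight vectors on adjacent inputs is $O(\sqrt{L/n})$ with $L = O(\log(n/\delta)/\epsilon)$ (\cref{lem:pot-stability}), not $O(1/n)$; this is then pushed through \cref{fact:param-closeness-from-tv-closeness} to give Mahalanobis sensitivity $\Theta\bigl((L/n)^{3/8}\bigr)$ for the mean and relative-Frobenius sensitivity $\Theta\bigl((L/n)^{1/4}\bigr)$ for the covariance (times $C'$ factors), not $\widetilde O(\eta/n)$ and $\widetilde O(\eta/\sqrt{n})$ as you write. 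Your own arithmetic betrays the error: a Mahalanobis sensitivity of $\widetilde O(\eta/n)$, fed into the Gaussian mechanism, would give noise $\widetilde O(\eta\sqrt{\log(1/\delta)}/(\epsilon n))$, which is a factor $n$ smaller than the $\widetilde O(\eta\log(1/\delta)/\epsilon)$ you then claim. The actual source of the $\log(1/\delta)/\epsilon$ factor in the mean bound is the privatized constant $C' = C_0 + \Theta(\log(1/\delta)/\epsilon)$ that appears in the witness check and hence multiplies the utility error and the noise scale, not a direct translation of sensitivity into noise. The sample-size threshold $n_0 = \widetilde\Omega\bigl(\tfrac{d^8}{\eta^4}(1+\tfrac{\ln(1/\delta)}{\epsilon})^4\bigr)$ is tuned precisely so that $(L/n)^{1/4}$, multiplied by the $C'$ and tensor-dimension factors appearing in the noise-addition step, drops below $\widetilde O(\eta\sqrt{\log(1/\delta)/\epsilon})$. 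Without redoing this bookkeeping via $\theta=\sqrt{L/n}$, the parameter constraints in the theorem cannot be verified.
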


\subsection{Related Work}
Since the works of \cite{DKKLMS16,LaiRV16}, there has been a spate of works designing additional robust estimation algorithms for a wide variety of problems, including mean and covariance estimation~\cite{DiakonikolasKKLMS17, DiakonikolasKKLMS17b,ChengDGW19,DongHL19,HopkinsLZ20,Hopkins20,LiY20}, mixture models~\cite{HopkinsL18, KothariSS18, BakshiK20, DiakonikolasHKK20, BakshiDHKKK20}, principal component analysis (PCA)~\cite{KongSKO20, Jambulapati0T20}, etc. (see survey~\cite{DK-survey17} for details on recent advances in robust statistics). Furthermore, the criterion of \emph{reslience} formulated in \cite{SteinhardtCV18} as a sufficient condition for robustly learning a property of a dataset was subsequently generalized in~\cite{ZhuJS19} in order to deal with a more general class of perturbations.

In the setting of high-dimensional parameter estimation, release of statistics can often reveal signficant information about individual data points, which can be problematic in a number of applications in which it is desirable to protect the privacy of individuals while still providing useful aggregate information (e.g., medical data or census data). Attacks exploiting such properties have been investigated in a long line of works~\cite{DinurN03,BunUV14,DworkSSUV15,SteinkeU15,DworkSSU17,ShokriSSS17}. In light of such exploits, there has been much interest in designing statistical algorithms that protect the privacy of individual samples in a dataset.

In the area of differentially privacy, various works have explored private estimation pertaining to Gaussian mixtures~\cite{NissimRS07,KamathSSU19}, identity testing~\cite{CanonneKMUZ20}, Markov random fields~\cite{ZhangKKW20}, etc.

\paragraph{Concurrent related works} The problem of private robust mean and covariance estimation has been the subject of great interest resulting in a few concurrent and independent related works. Kamath, Mouzakis, Singhal, Steinke, and Ullman~\cite{arxiv-KamathMSSU21} give a differentially private (in the outlier-free regime) algorithm for mean and covariance estimation of Gaussian without making condition number assumptions on the covariance. The work of Liu, Kong, and Oh~\cite{arxiv-LiuKO21} gives a statistical feasibility of private robust estimation with optimal sample complexity via a computationally \emph{inefficient} algorithm. Finally, Hopkins, Kamath, and Majid~\cite{arxiv-HopkinsKM21} also use the sum-of-squares semidefinite programs to obtain private mean estimation (in the outlier-free setting) algorithm for bounded covariance distribution with \emph{pure differential privacy}. Our result are most directly related to the work of Asthiani and Liaw~\cite{arxiv-AshtianiL21} that also obtains efficient private and robust mean and covariance estimation for Gaussian distributions. 

\section{Technical Overview}  \label{sec:overview}
In this section, we give a high-level overview of our general blueprint for obtaining differentially private versions of robust estimation algorithms. As a running example, we will focus on the problem of obtaining private and robust mean and covariance estimators. Specifically, our goal is to design an algorithm that takes input consisting of $n$ points, say $Y \subseteq \R^d$, along with an \emph{outlier rate} $\eta$ and returns estimates of the mean and covariance. We would like the algorithm to be $(\epsilon,\delta)$-differentially private for every $Y$ (i.e., a ``worst-case'' guarantee), viewed as a database in which each $d$-dimensional point in $Y$ is contributed by an individual. We would like the outputs of the algorithm to provide faithful estimates whenever $Y$ is an $\eta$-corruption of a i.i.d. sample from a distribution that has $C$-subgaussian fourth moments. 

For the purpose of the first part of this overview, we recommend the reader to ignore the distinction between certifiable subgaussianity and ``vanilla'' subgaussianity. Recall that a distribution $D$ on $\R^d$ has $C$-subgaussian fourth moments if for every $v \in \R^d$, $\E_{x \sim D} \iprod{x-\mu(D), v}^4 \leq 4C (\E_{x\sim D} \iprod{x-\mu(D),v}^2)^2$. It turns out that the uniform distribution on a $O(d^2)$ size i.i.d. sample $X$ from a $C$-subgussian distribution has $2C$-subgaussian fourth moments. 

\paragraph{Stable robust estimation algorithms} In order to design differentially private algorithms, we need to find robust moment estimation algorithms that are \emph{stable}. Specifically, a robust moment estimation algorithm $\Alg$ is stable if the outputs of $\Alg$ on any pair of adjacent inputs $Y,Y'$ (i.e., inputs that differ in at most one point but arbitrarily so) are close. Such a guarantee must hold over \emph{worst-case} pairs $Y,Y'$---in particular, $Y$ may not be obtained by taking an $\eta$-corruption of an i.i.d. sample from a distribution following our assumptions. This presents a problem at the outset as robust moment estimation algorithms are typically analyzed under \emph{distributional assumptions}. The work of ~\cite{LiuKKO21} addresses this issue by ``opening up'' an iterative filter based algorithm for robust moment estimation and effectively making every step of the algorithm stable. 

\subsection{A Prototypical Robust Estimator to Privatize} 
To understand our ideas, it is helpful to work with a ``prototypical'' but inefficient robust estimation algorithm that we can eventually swap with an efficent one. Let us thus start with a simple (but inefficient) robust estimation algorithm that we call $\Alg$ in the discussion below.

\begin{mdframed}
\begin{algorithm}
\begin{description}
\item[Input: ] $Y = \{y_1, y_2,\ldots, y_n\} \subseteq \R^d$ and outlier rate $\eta>0$.
\item[Output: ] Estimates $\hat{\mu},\hat{\Sigma}$ of mean and covariance or ``reject.''
\item[Operation: ]
\leavevmode
\begin{enumerate}
\item Find a \emph{witness} set of $n$ points $X' \subseteq \R^d$ such that the uniform distribution on $X'$ has subgaussian fourth moments and $|Y \cap X'| \geq (1-\eta)n$. Reject if no such $X'$ exists.
\item Return the mean and covariance of $X'$. 
\end{enumerate}
\end{description}
\end{algorithm}
\end{mdframed}

Observe that the property of having subgaussian fourth moments requires verifying an inequality for every $v \in \R^d$, and, in general, there is no efficient (or even sub-exponential time) algorithm known (or expected, modulo the small-set expansion hypothesis) for this problem. Nevertheless, in \cite{KS17} (see Section 2), the authors prove that a variant of the above program (which we discuss this at the end of this overview) produces estimates that are guaranteed to be close to the mean and covariance of $D$ if $Y$ is an $\eta$-corruption of an i.i.d. sample $X$ from $D$. Note that, though inefficient, such a result is sufficient to establish statistical identifiability of mean and covariance of $D$ from $O_\eta(d^2)$ samples. The closeness guarantees in~\cite{KS17} hold from a more general and basic result that is useful to us in this exposition, which we note below:
\begin{fact}[See Section 2 of~\cite{KS17}, Parameter Closeness from Total Variation Closeness]
Suppose $D,D'$ are two distributions such that 1) both have subgaussian fourth moments and 2) the total variation distance between $D,D'$ is at most $\beta$. Then, for every $v \in \R^d$, $\iprod{\mu(D')-\mu(D),v} \leq O(\eta^{3/4}) \sqrt{v^{\top} (\Sigma(D) + \Sigma(D')) v}$ and $v^{\top} (\Sigma(D') - \Sigma(D))v \leq O(\sqrt{\eta}) v^{\top} (\Sigma(D)+\Sigma(D')) v$. \label{fact:param-closeness-KS-intro} We will say that the means (covariances, respectively) of $D,D'$ are close to within $O(\eta^{3/4})$ ($O(\eta^{1/2})$, respectively) in Mahalanobis distance, to summarize such a guarantee.
\end{fact}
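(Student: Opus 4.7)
My plan is to exploit the standard coupling characterization of total variation distance: since $\dtv(D,D') \leq \eta$, there exists a joint distribution $\pi$ on pairs $(x,x') \in \R^d \times \R^d$ with marginals $D$ and $D'$ and $\Pr_\pi[x\neq x'] \leq \eta$. This reduces comparing expectations under $D$ and $D'$ to bounding contributions on the small-probability event $\{x\neq x'\}$. By homogeneity of both inequalities I would first rescale $v$ so that $v^\top(\Sigma(D)+\Sigma(D'))v = 1$, so in particular $v^\top \Sigma(D) v,\ v^\top \Sigma(D') v \leq 1$. Write $\delta := \iprod{\mu(D')-\mu(D),\, v}$; the targets then become $|\delta| = O(C^{1/4})\eta^{3/4}$ and $|v^\top(\Sigma(D')-\Sigma(D))v| = O(\sqrt{C\eta})$.

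\textbf{Mean bound.} I would note that $\delta = \E_\pi\iprod{x'-x, v} = \E_\pi[\iprod{x'-x, v}\mathbf{1}_{x\neq x'}]$ and apply H\"older's inequality with exponents $(4, 4/3)$ to get
\[
|\delta| \leq \Pr_\pi[x\neq x']^{3/4}\, \E_\pi[\iprod{x'-x, v}^4]^{1/4} \leq \eta^{3/4}\, \E_\pi[\iprod{x'-x, v}^4]^{1/4}.
\]
Decomposing $\iprod{x'-x,v} = \iprod{x'-\mu(D'),v} - \iprod{x-\mu(D),v} + \delta$ and using $(a-b+c)^4 \leq 27(a^4+b^4+c^4)$, subgaussianity of $D,D'$ bounds each centered fourth moment by $4C(v^\top \Sigma(\cdot) v)^2 \leq 4C$, yielding the self-bounding inequality $|\delta|^4 \leq \eta^3(216C + 27\delta^4)$. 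For $\eta$ below an absolute constant, rearranging as $|\delta|^4(1-27\eta^3) \leq 216C\eta^3$ gives $|\delta| = O(C^{1/4})\,\eta^{3/4}$, which un-normalizes to the stated Mahalanobis bound.

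\textbf{Covariance bound.} Setting $a = \iprod{x'-\mu(D'),v}$ and $b = \iprod{x-\mu(D),v}$, I would write $v^\top(\Sigma(D')-\Sigma(D))v = \E_\pi[a^2-b^2]$ and split the expectation according to whether $x=x'$ or not. On $\{x=x'\}$ one has $a = b - \delta$, so $a^2 - b^2 = \delta^2 - 2\delta b$; combining $\E_\pi[b]=0$ with Cauchy-Schwarz gives $|\E_\pi[b\mathbf{1}_{x=x'}]| = |\E_\pi[b\mathbf{1}_{x\neq x'}]| \leq \sqrt{\eta}$, so this contribution is at most $\delta^2 + 2|\delta|\sqrt{\eta} = O(\sqrt{C}\,\eta)$ by the mean bound. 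On $\{x\neq x'\}$, Cauchy-Schwarz together with subgaussianity yields
\[
|\E_\pi[(a^2-b^2)\mathbf{1}_{x\neq x'}]| \leq \sqrt{\eta}\bigl(\E_\pi[a^4]^{1/2} + \E_\pi[b^4]^{1/2}\bigr) \leq 4\sqrt{C\eta}.
\]
Adding the two pieces gives $|v^\top(\Sigma(D')-\Sigma(D))v| = O(\sqrt{C\eta})$, which after un-normalizing is the claimed bound.

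\textbf{Main obstacle.} I do not anticipate any serious technical obstacle: once the coupling is set up, the entire argument is H\"older/Cauchy-Schwarz together with the fourth-moment control from subgaussianity. The one point demanding care is the self-bounding inequality for $\delta$---it must be solved without circularity, which here amounts to verifying $27\eta^3 < 1$ so that $|\delta|^4$ can be isolated on the left-hand side. The only other bookkeeping subtlety is keeping straight which quantity is centered at $\mu(D)$ versus $\mu(D')$ when bounding the covariance contribution on $\{x=x'\}$.
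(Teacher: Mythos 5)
Your proof is correct and takes the route one would expect: the paper itself does not prove this fact but cites it to Section~2 of~\cite{KS17}, and the argument there is precisely the one you reconstruct — pass to an optimal coupling $\pi$ achieving $\Pr_\pi[x \neq x'] \leq \eta$, normalize by homogeneity, control $\iprod{\mu(D')-\mu(D),v}$ via H\"older with exponents $(4,4/3)$ against the fourth moments (yielding the self-bounding inequality you note), and control $v^\top(\Sigma(D')-\Sigma(D))v$ by splitting on the disagreement event and applying Cauchy--Schwarz with subgaussian fourth-moment bounds. The only cosmetic remark is that the paper's hypothesis uses $\beta$ for the TV distance while the conclusion uses $\eta$ — an evident typo in the source — and you have consistently taken both to be $\eta$, which is the intended reading.
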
  
This fact effectively says that if two distributions both have bounded fourth moments and happen to be close in total variation distance, then their parameters (mean and covariance) must be close. In fact, the closeness is in strong \emph{affine-invariant} norms---often called the Mahalanobis distance for mean and covariance. 

\subsection{Robustness Implies Weak Stability of $\Alg$ with a Randomized Outlier Rate}
Let us now consider the stability of the above inefficient algorithm. We are seemingly in trouble at the outset: as written, there must be two adjacent $Y,Y'$ such that $\Alg$ rejects on $Y$ but not on $Y'$. Let us introduce our first simple idea and show how to patch the algorithm to prevent it from displaying such ``drastic'' change in its behavior. 

\paragraph{Randomizing the outlier rate} The following is a simple but useful observation: If $\Alg$ does not reject on input $Y$ with outlier rate $\eta$, then, $\Alg$ must also not reject on $Y'$ outlier rate $\eta+1/n$. To see why, let $X$ be the set of points with subgaussian fourth moments that intersects $Y$ in $(1-\eta)n$ points. Then, since $Y$ and $Y'$ differ in at most one point, $Y'$ must intersect $X$ in at least $(1-\eta)n - 1 = (1- (\eta+1/n))n$ points. Thus, if, instead of a fixed outlier rate $\eta$, we ran $\Alg$ above with an appropriately ``randomized'' outlier rate, we might expect the rejection probabilities of $\Alg$ on $Y,Y'$ to be similar. Such an argument can be made formal with a simple truncated Laplace noise injection procedure. 

\paragraph{Robustness implies weak stability in Mahalanobis norms} We now address the issue of whether the estimates computed on $Y$ and $Y'$ (assuming $\Alg$ does not reject on either of $Y,Y'$) are close. We first observe that the fact that $\Alg$ is outlier-robust already guarantees a \emph{weak stability} property. Specifically, suppose $X,X'$ are the sets of size $n$ generated by $\Alg$ when run on inputs $Y,Y'$. Then, since $Y \cap Y'$ is of size $n-1$, $|X\cap X'| \geq (1-2\eta)n-1$. Next, observe that intersection bound above is equivalent to the uniform distributions on $X,X'$ having a total variation distance of at most $2\eta + 1/n$. Thus, from Fact~\ref{fact:param-closeness-KS-intro}, we know that the parameters of $X,X'$ are $O(\eta^{O(1)})$ close in the relative Mahalanobis distance defined above. Observe that this argument gives stability properties in the \emph{right norms} directly! However, this is a weak stability guarantee since it only provides a fixed constant distance guarantee instead of $o_n(1)$ that one might expect given that $Y$ and $Y'$ differ in at most $1$ out of $n$ points. Nevertheless, our discussion shows that \emph{robustness, via the inefficient algorithm above, immediately implies weak stability.}

\subsection{A Simple Private Robust Mean Estimator from Weak Stability} 
Can we derive private algorithms from the weak stability guarantees? If the unknown covariance happens to be \emph{spherical} (i.e., has all of its eigenvalues equal to each other), then the Mahalanobis distance guarantees are in fact equivalent (up to constant factor scaling) to Euclidean distance guarantees. As a result, simply adding Gaussian noise calibrated to the sensitivity bounds yields a private robust mean estimation algorithm! Indeed, 1) randomizing the outlier rate, 2) working with the SoS relaxation of the above program and 3) adding Gaussian noise to the resulting estimate, immediately yields a simple, straightforward private robust mean estimator that gives essentially optimal sample complexity guarantees (i.e., matching those of the known non-private robust estimators). 

\paragraph{Weak stability is not enough for covariance estimation}
The challenge in using weak stability to obtain private robust covariance estimators arise when the covariance is non-spherical (e.g., is rank deficient or has eigenvalues of vastly different scales), in which case our Mahalanobis or multiplicative spectral stability guarantee does not translate into Euclidean/spectral norm distance guarantees. In particular, if we were to add Gaussian noise, we would end up \emph{scrambling all small eigenvalues up} and end up with no non-trivial recovery guarantee. 

Indeed, the aforementioned challenge necessitates a rethink of noise injection mechanisms for covariance estimation in general---standard noise addition mechanisms do not appear meaningful in faithfully preserving eigenvalues of different scales. Prior works (e.g., \cite{KamathLSU19}) deal with this by iteratively computing some approximate preconditioning matrices. We have not investigated robust variants of their method. We instead explore \emph{one-shot, blackbox} noise injection mechanisms that still provide us the right guarantees for covariance estimation. 

\subsection{Noise Injection in Estimate-Dependent Norms}
If we wanted to faithfully preserve all eigenvalues (of varying scales) of the unknown covariance, a natural mechanism would be to add noise \emph{linearly transformed with respect to the computed estimate}. For example, if $\hat{\Sigma}$ is the computed estimate, we would like to consider the mechanism that returns $\hat{\Sigma} + \hat{\Sigma}^{1/2} Z \hat{\Sigma}^{1/2}$ where $Z$ is a matrix of random Gaussians. The upshot of such a mechanism is that it adds noise that is scaled relative to the eigenvalues of the estimate $\hat{\Sigma}$---directions where $v^{\top}\hat{\Sigma}v$ is small get a smaller additive noise as against directions where the same quadratic form is large. 

However, the distribution of the added noise in this mechanism \emph{depends on the non-privately estimated quantity itself}. Thus, \emph{a priori}, it provides no useful privacy guarantee!

\textbf{Key Observation:} Nevertheless, our main idea to rescue the above plan is to note that the mechanism above does indeed provide meaningful privacy guarantees (by standard computations from the celebrated Gaussian mechanism) if we are able to guarantee that on any adjacent inputs $Y,Y'$, the non-privately computed estimates are $o_n(1)$ close in relative Frobenius distance! This follows from elementary arguments and is presented in Lemmas~\ref{lem:gaussianhockey} and \ref{lem:hockeysticktensored}. 

The observation above crucially needs the distance between covariances (in relative Frobenius norm) to tend to 0 as $n \rightarrow \infty$; in fact, we need the rate to be inverse polynomial to achieve polynomial sample complexity. Our weak stability guarantee above, however, guarantees only a weak $O(\eta^{1/2})$ bound on multiplicative spectral distance which translates into a relative Frobenius bound of $O(\eta^{1/2} \sqrt{d})$---not only does this not tend to 0 as $n \rightarrow \infty$ but it, in fact, explodes as $d \rightarrow \infty$. 

Thus, in order to use the above mechanism for covariance estimation, we must come up with significantly stronger (and asymptotically vanishing) stability guarantees. Let us investigate how to obtain such guarantees next. 

\subsection{Strong Stability for Robust Estimation Algorithms}
\paragraph{Lack of stability because of multiple differing solutions}
There is an important barrier that prevents $\Alg$ from offering the strong stability guarantees we need in the covariance estimation mechanism above. Consider the case when $Y$ is an i.i.d. sample from a one-dimensional standard Gaussian distribution with mean $0$ and variance $1$ without any outliers added to it. Then, $\cN(0,1\pm c\eta)$ for a small enough constant $c$ is $\eta$-close in total variation distance to $\cN(0,1)$. By a straighforward argument, this implies that we can choose $X'$ to be an i.i.d. sample of size $n$ from $\cN(0,1 \pm c\eta)$--- if $n$ is large enough, then $X'$ will have subgaussian fourth moments and will intersect $Y$ in $(1-\eta)n$ points. The two difference distributions (and the corresponding samples $X'$) however, have variances differing by an additive $O(\eta)$---a fixed constant independent of the sample size $n$. This shows that even in one dimension, $\Alg$ has feasible solutions with variance both $(1-O(\eta))$ and $1+O(\eta)$. Observe that this issue concerns the output of $\Alg$ itself, which can belong to a range that is significantly larger than what we can tolerate---we have not yet touched upon the issue of what happens when we change $Y$ to an adjacent $Y'$. 

\paragraph{Convexification and entropy surrogates}
In order to modify $\Alg$ to output a canonical solution (and with an eye for satisfying the stronger stability property), we wish to make the feasible solution space of $\Alg$ belong to a convex set (instead of the discrete set of solutions $X'$ that intersect with $Y$ in $(1-\eta)n$ points). With no fear of computational complexity, this is easy to do in a canonical way: we search instead for a \emph{probability distribution} over $X'$ that satisfy the constraints that $\Alg$ imposes. Unlike $X'$, distributions on $X'$ that satisfy the constraints are easily seen to form a convex set. 

Given such a convex set, we can resolve our difficulty of not having canonical solutions for any given $Y$ by simply finding a solution (i.e., a probability distribution $\zeta$ over $X'$) that minimizes an appropriate strongly convex objective function. Specifically, for any $X'$, let $w_i$ be the $0$-$1$ indicator of those indices $i$ where $x_i = y_i$. Then, the constraints in $\Alg$ force $\sum_i w_i \geq (1-\eta)n$, and the distribution $\zeta$ can be thought to be over $(X',w)$ in a natural way. 

In order to ensure that $\Alg$ finds a canonical solution, a natural idea is to search over distributions $\zeta$ over $(X',w)$ while minimizing some strongly convex function. We choose the simplest: $\Norm{\E_{\zeta}[w]}_2^2$~\footnote{The exponent of the polynomials appearing in our sample complexity bounds improve if we use a strongly convex function with respect to $1$-norm such as $\Norm{x}_q^2$ for $q = 1+1/\log d$. Our interest is in presenting a general ``privatizing'' blueprint so we continue with the simpler choice above in this work.}. We think of this objective as a surrogate for finding ``maximum entropy solutions'' as, when viewing $\E_{\tzeta}[w_i]$ as defining a probability distribution over $y_i$, minimizing the $\ell_2$ norm favors ``spread-out'' or high entropy solutions. Since $\Norm{\E_{\zeta}[w]}_2^2$ is a convex function being minimized over convex set of expectations with respect to $\zeta$, we expect that the minimizing solution $\E_{\zeta_*}[w]$ should be unique. 

This is not immediately true, however, as our $\Alg$ as stated outputs the mean of $X'$ (there could be ``multiple'' $X'$ with the same intersection with $Y$, in principle). 

\paragraph{Modifying the output of $\Alg$}
In order to fit our framework better, we modify the above blueprint in $\Alg$ to instead output the weighted average of points in $Y$ instead of $X'$. While such a procedure is not directly analyzed in~\cite{KS17}, the methods there can be naturally adapted without much hiccup. As a result we obtain the following modified version of $\Alg$ that we can now work with:

\begin{mdframed}
\begin{algorithm}
\begin{description}
\item[Input: ] $Y = \{y_1, y_2,\ldots, y_n\} \subseteq \R^d$ and an outlier rate $\eta>0$.
\item[Output: ] Estimates $\hat{\mu},\hat{\Sigma}$ of mean and covariance or ``reject.''
\item[Operation: ]
\leavevmode
\begin{enumerate}
\item Find a probability distribution $\zeta$ over a \emph{witness} set of $n$ points $X' \subseteq \R^d$ and intersection indicator $w \in \{0,1\}^n$ that minimizes $\Norm{\E_{\zeta}[w]}_2^2$ and is supported on $(X',w)$ such that 1) the uniform distribution on $X'$ has subgaussian fourth moments and 2) $\sum_i w_i  \geq (1-\eta)n$. Reject if no such $\zeta$  exists.
\item Return $\hat{\mu} = \frac{1}{Z} \sum_i \E_{\zeta}[w_i]  y_i$, $\hat{\Sigma} = \frac{1}{Z} \sum_i \E_{\zeta}[w_i]  (y_i-\hat{\mu})(y_i-\hat{\mu})^{\top}$ where $Z = \sum_i \E_{\zeta}[w_i]$. 
\end{enumerate}
\end{description}
\end{algorithm}
\end{mdframed}
With this modification, $\Alg$ outputs a canonical single solution on any given $Y$ (or rejects). 

\paragraph{Stability of $\Alg$ from the stability of the entropy potential}
We now return to the issue of stability. What happens if we switch the input $Y$ of $\Alg$ above to $Y'$? The strongly convex objective we imposed in the above discussion comes in handy here! Namely, by basic convex analysis (see  Proposition~\ref{prop:pythagorean-theorem-convex-set}), it follows that if optimum entropy potential values of $\Alg$ on $Y$ and $Y'$ are say, $O(1)$-close, then, the vectors $\E_{\zeta}[w](Y)$ and $\E_{\zeta}[w](Y')$ are themselves $O(1)$ close. Recall that each $\E_{\zeta}[w_i]$ is a number in $[0,1]$ and that these numbers add up to $1$. Hence, intuitively speaking, $O(1)$-closeness of $\Norm{\E_{\zeta}[w]}_2^2$ corresponds to constant perturbation in a constant number of coordinates. 

Thus, working with the strongly convex objective above reduces our stability analysis of $\Alg$ to simply understanding how much can our entropy potential change when changing a single point in $Y$. 

Unfortunately, this change can be large in general. $\Norm{\E_{\zeta}[w]}_2^2$ varies between $(1-\eta)n$ and $(1-\eta)^2n$. The additive difference between these two extremes is $O(\eta n) \gg O(1)$. 

\paragraph{Stabilizing the entropy potential: private stable selection}
Before describing our key idea, we first make a simple observation: Fix an input $Y$ and consider the optimum value of the entropy potential of $\Alg$ when run with outlier rate $\eta$. What happens if we change $\eta$ to $\eta+1/n$? Clearly, the potential cannot \emph{increase}: any solution $\zeta$ with outlier rate $\eta$ is also a solution for outlier rate $\eta+1/n$. The potential can decrease arbitrarily though. 

More specifically, we show the following: in order to make the entropy potential stable under a change of $Y$ to an adjacent $Y'$, it is enough to run $Y$ with an outlier rate $\eta'=O(\eta)$ such that the entropy potential of $\Alg$ on $Y$ for any outlier rate in the interval $[\eta'-L/n,\eta'+L/n]$ is within an additive $\widetilde{O}(L/n)$ of any other. 

To see why this claim could be true, informally speaking, observe that if $Y'$ is obtained from $Y$ by changing at most a single point, then a solution $\zeta$ with outlier rate $\eta'$ can be modified into a solution $\zeta'$ for $Y'$ with outlier rate $\eta'+1/n$ by simplying zeroing out the $w_i$ for the index $i$ where $Y'$ and $Y$ differ. This allows us to relate the potentials for neighboring outlier rates on $Y$ and $Y'$. Under the above assumption, the potential remains stable in an interval around $\eta'$ on $Y$. This allows us to conclude that the same must be true for $Y'$ for the interval $[\eta'-L/n +1, \eta'+L/n-1]$. 

The above reasoning allows us to obtain strong stability guarantees if we can 1) show that a stable interval as above exists and 2) find such an interval via a stable process. 

\paragraph{A stable selection procedure via the exponential mechanism}
We show that a stable interval as above (for $L=\widetilde{O}_n(1)$) exists via a simple Markov-like argument. Using an appropriate scoring rule, we show that the standard exponential mechanism can then be used to produce a stable interval like above via a stable algorithm (see \cref{sec:approx-dp-selection}). 

\paragraph{Putting things together}
Altogether, we obtain a version of $\Alg$ that outputs a sequence of weights (i.e., $\E_{\zeta}[w_i]$) that are stable under the modification of a single point in $Y$. When viewed as a distribution on $Y$, the stability guarantee we obtain corresponds to an $\ell_1$-stability of $\widetilde{O}(1/\sqrt{n})$ compared to the $O(\eta)$ (a fixed constant) stability that follows from any naive robust estimation algorithm. 

We note that $\widetilde{O}(1/\sqrt{n})$ can be upgraded to $\widetilde{O}(1/n)$ if we work with a more sophisticated potential function $\Norm{x}_q^2$ for $q = 1+1/\log n$. 

By applying Fact~\ref{fact:param-closeness-KS-intro}, we immediately get that if $\Alg$ does not reject on $Y,Y'$, then the parameters of the respective inputs must be close in the Mahalanobis distance up to a \emph{polynomially vanishing function} of $n$, as desired. This allows us to implement the estimate-dependent noise injection mechanism for covariance estimation! 

We note that the discussion above can be formalized into an \emph{information-theoretic private identifiability algorithm} (i.e., an inefficient private robust algorithm). We next discuss how to transform the above blueprint result into an efficient algorithm. 

\subsection{From Ideal Algorithms to Efficient Algorithms}
Let us now go back and summarize 1) facts about the idealized inefficient algorithm and 2) our general blueprint for making such an algorithm $\Alg$ private. 

\begin{enumerate}
	\item \textbf{Witness Production:} We have used that the fact that $\Alg$ searches over witnesses $X'$ that share the relevant property of the distributional model we have chosen (e.g., subgaussianity of fourth moments in the above discussion). 
	\item \textbf{Strongly Convex Entropy Potential:} We have minimized a strongly convex potential function in order to ensure that $\Alg$ outputs a canonical solution. 
	\item \textbf{Stable Outlier Rate Selection:} We have implemented a randomized stable selection scheme (via the exponential mechanism) for the outlier rate in order to argue that the optimum entropy potential of $\Alg$ is stable under the modification of a single point in the input $Y$. 
\end{enumerate}

We can apply this scheme to any algorithm that outputs a sequence of weights on the input sample $Y$, subject to the constraint that 1) the weights induce the relevant property of the distributional model, and 2) they minimize a strongly convex potential function. 

\paragraph{Witness-producing SoS-based robust estimation algorithms}
It turns out that we can ensure all the above properties for \emph{efficient} robust estimation algorithms based on ``one-shot rounding'' of convex relaxations. We specifically rely on the algorithms for robust estimation based on SoS semidefinite programs in this work. 

The SoS-based algorithms in the prior works that we use ~\cite{BK20,KS17} almost fit our requirements except with two technical constraints: 
\begin{enumerate}
\item The algorithms in the aforementioned prior works do not output weights on $Y$ explicitly. However, we are able to show that a natural modification that outputs such weights on $Y$ can be analyzed by the same methods.
\item The algorithms in the aforementioned prior works were analyzed under distributional assumptions on $Y$ without the need to explicitly argue that the weights induce good witnesses (which we desire in our above analysis). Indeed, arguing that these algorithms produce such witnesses on worst-case datasets $Y$ (whenever they don't reject) appears challenging. However, we are able to get by without such a statement by observing that we can adapt the analyses of the algorithms in the prior works to infer the following statement: if the algorithm returns a good witness on $Y$, then under a small perturbation of the parameters, it must also return a good witness on an adjacent $Y'$. 
\end{enumerate}

While verifying the properties makes our transformation not entirely blackbox at the moment, we strongly believe that our blueprint demonstrates a conceptually appealing connection between robust algorithm design and private algorithm design. Concretly, we expect our blueprint to be useful in designing more private (and robust) estimation algorithms. Indeed, we believe our techniques immediately extend to other problems where SoS-based robust estimation algorithms are known, such as linear regression~\cite{KlivansKM18,bakshi2020robust} and clustering spherical and non-spherical mixtures~\cite{DiakonikolasHKK20,BK20,HopkinsL18,KS17a,TCS-086}.

\section{Preliminaries}

In this work, we will deal with algorithms that operate on numerical inputs. In all such cases, we will rely on the standard word RAM model of computation and assume that all the numbers are rational represented as a pair of integers describing the numerator and the denominator. In order to measure the running time of our algorithms, we will need to account for the length of the numbers that arise during the run of the algorithm. The following definition captures the size of the representations of rational numbers:

\begin{definition}[Bit Complexity]
The bit complexity of an integer $p \in \Z$ is $1+ \lceil \log_2 p \rceil$. The bit complexity of a rational number $p/q$ where $p,q \in \Z$ is the sum of the bit complexities of $p$ and $q$. 
\end{definition}

For any finite set $X$ of points in $\R^d$, we will use $\mu(X),\Sigma(X), M^{(t)}(X)$ to denote the mean, covariance and the $t$-th moment tensor of the uniform distribution on $X$. 

\subsection{Pseudo-Distributions}
Pseudo-distributions are generalizations of probability distributions and form dual objects to sum-of-squares proofs in a precise sense that we will describe below.

\begin{definition}[Pseudo-distribution, Pseudo-expectations, Pseudo-moments] \label{def:pseudoexpectation}
A \emph{degree-$\ell$ pseudo-distribution} is a finitely-supported function $D:\R^n \rightarrow \R$ such that $\sum_{x} D(x) = 1$ and $\sum_{x} D(x) f(x)^2 \geq 0$ for every polynomial $f$ of degree at most $\ell/2$. (Here, the summations are over the support of $\mu$.) 

The \emph{pseudo-expectation} of a function $f$ on $\R^d$ with respect to a pseudo-distribution $D$, denoted $\pE_{D(x)} f(x)$, as
\begin{equation}
  \pE_{D(x)} f(x) = \sum_{x} D(x) f(x) \,\mper
\end{equation}
In particular, the \emph{mean} $\mu$ of a pseduo-distribution is defined naturally as the pseudo-expectation of $f(x) = x$, i.e., $\mu\pE_{D(x)} x$.

The degree-$\ell$ moment tensor of a pseudo-distribution $\mu$ is the tensor $\E_{\mu(x)} (1,x_1, x_2,\ldots, x_n)^{\otimes \ell}$.
In particular, the moment tensor has an entry corresponding to the pseudo-expectation of every monomial of degree at most $\ell$ in $x$.
\end{definition}

Observe that if a pseudo-distribution $\mu$ satisfies, in addition, that $\mu(x) \geq 0$ for every $x$, then it is a mass function of some probability distribution. Further, a straightforward polynomial-interpolation argument shows that every degree-$\infty$ pseudo-distribution satisfies $\mu \ge 0$ and is thus an actual probability distribution. The set of all degree-$\ell$ moment tensors of probability distribution is a convex set. Similarly, the set of all degree-$\ell$ moment tensors of degree-$d$ pseudo-distributions is also convex.

We now define what it means for $\pE$ to (approximately) satisfy constraints.
\begin{definition}[Satisfying constraints]
For a polynomial $g$, we say that a degree-$k$ $\pE$ satisfies the constraint $\{g = 0\}$ exactly if for every polynomial $p$ of degree $\leq k-\deg(g)$, $\pE[p g] = 0$ and $\tau$-approximately if $|\pE[p g_j]| \leq \tau \norm{p}_2$. We say that $\pE$ satisfies the constraint $\{g \geq 0\}$ exactly if for every polynomial $p$ of degree $\leq k/2 - \deg(g)/2$, it holds that $\pE[p^2 g] \geq 0$ and $\tau$-approximately if $\pE[p^2 g] \geq -\tau \norm{p}_2^2$.  
\end{definition}

The following fact describes the precise sense in which pseudo-distributions are duals to sum-of-squares proofs. 

\begin{fact}[Strong Duality,~\cite{MR3441448-Josz16}, see Theorem 3.70 in~\cite{TCS-086} for an exposition]
Let $p_1, p_2, \ldots, p_k$ be real-coefficient polynomials in $x_1, x_2, \ldots, x_n$. 
Suppose there is a degree-$d$ sum-of-squares refutation of the system $\{p_i(x) \geq 0\}_{i \leq k}$.
Then, there is no pseudo-distribution $\mu$ of degree $\geq d$ satisfying $\{p_i(x) \geq 0\}_{i \leq k}$. 
On the other hand, suppose that there is a pseudo-distribution $\mu$ of degree $d$ consistent with $\{p_i(x) \geq 0\}_{i \leq k}$. Suppose further that the set $\{p_1, p_2, \ldots, p_k\}$ contains the quadratic polynomial $R-\sum_i x_i^2$ for some $R > 0$. Then, there is no degree-$d$ sum-of-squares refutation of the system $\{p_i(x) \geq 0\}_{i \leq k}$.
\end{fact}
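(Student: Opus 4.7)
The plan is to prove the two directions separately: the first by direct substitution into a candidate pseudo-expectation, and the second via strong convex duality between the SoS cone and the spectrahedron of pseudo-moment tensors, for which the Archimedean hypothesis plays the role of ensuring closedness.

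For the first direction, a degree-$d$ SoS refutation of $\{p_i \geq 0\}$ is by definition an identity
\begin{equation*}
-1 \equiv \sigma_0(x) + \sum_{i=1}^k \sigma_i(x)\, p_i(x),
\end{equation*}
where each $\sigma_j$ is an SoS polynomial and each summand has total degree at most $d$. Given any pseudo-distribution $\mu$ of degree $\geq d$ satisfying $\{p_i \geq 0\}$, I would apply $\pE_\mu$ to both sides: the pseudo-expectation axioms give $\pE_\mu[\sigma_0] \geq 0$ (writing $\sigma_0$ as a sum of squares of polynomials of degree $\leq d/2$), and the definition of constraint satisfaction gives $\pE_\mu[\sigma_i p_i] \geq 0$ for each $i$. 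Summing yields $-1 \geq 0$, a contradiction.

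For the second direction, I would invoke strong duality for the SDP whose primal feasible set is the spectrahedron $\cM_d$ of degree-$d$ pseudo-moment tensors satisfying $\{p_i \geq 0\}$ (carved out by PSD-ness of the moment matrix and of the localizing matrices for each $p_i$), and whose dual cone is the SoS cone $\Sigma_d$ of polynomials expressible as $\sigma_0 + \sum_i \sigma_i p_i$ of total degree at most $d$. A degree-$d$ SoS refutation is precisely the statement $-1 \in \Sigma_d$, while the existence of a degree-$d$ pseudo-distribution amounts to non-emptiness of $\cM_d$. The Archimedean hypothesis $R - \sum_i x_i^2 \in \{p_1,\dots,p_k\}$ enters as follows: by iteratively applying the pseudo-constraint $\pE[(R - \sum_i x_i^2)\, q(x)^2] \geq 0$ together with pseudo-Cauchy--Schwarz, one derives $|\pE[x^\alpha]| \leq R^{|\alpha|/2}$ for every monomial with $|\alpha| \leq d$. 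Hence $\cM_d$ is bounded, and in fact compact, which is exactly what is needed to make $\Sigma_d$ closed.

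The main obstacle is establishing this closedness of $\Sigma_d$. Once it is in hand, a standard Hahn--Banach separation argument concludes: if $-1 \notin \Sigma_d$, then there is a linear functional on polynomials of degree $\leq d$ that is non-negative on $\Sigma_d$ and strictly negative on $-1$; normalizing so that it maps the constant polynomial $1$ to $1$ produces a degree-$d$ pseudo-distribution in $\cM_d$ satisfying $\{p_i \geq 0\}$. Contrapositively, $\cM_d \neq \emptyset$ forces $-1 \notin \Sigma_d$, which is the desired conclusion. The remaining details---translating between moment tensors and linear functionals on polynomials, and tracking how the localizing constraints dualize to the SoS representation---are standard convex-geometric bookkeeping, and collectively constitute the content attributed to~\cite{MR3441448-Josz16}.
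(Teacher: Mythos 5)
This is a cited background fact --- the paper itself gives no proof, pointing instead to Josz--Henrion~\cite{MR3441448-Josz16} and the exposition in~\cite{TCS-086} --- so there is no in-paper argument to compare yours against. I can only assess your proposal on its own terms, and there are two issues worth flagging.

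First, a logical one. Notice that the paper's ``other direction,'' read literally (a degree-$d$ pseudo-distribution satisfying the system exists $\Rightarrow$ no degree-$d$ refutation), is just the contrapositive of the first direction and requires no Archimedean hypothesis at all; your proof of the first direction already gives it for free. The statement that genuinely deserves the name ``strong duality'' and actually uses $R-\sum_i x_i^2$ is the converse: \emph{if there is no degree-$d$ refutation, then a degree-$d$ pseudo-distribution exists.} Your Hahn--Banach separation argument correctly targets this implication ($-1\notin\Sigma_d \Rightarrow \cM_d\neq\emptyset$), which is good. But your closing sentence, ``Contrapositively, $\cM_d\neq\emptyset$ forces $-1\notin\Sigma_d$,'' is not the contrapositive of what you proved --- the contrapositive of ``$-1\notin\Sigma_d \Rightarrow \cM_d\neq\emptyset$'' is ``$\cM_d=\emptyset \Rightarrow -1\in\Sigma_d$.'' You have conflated a statement with its converse. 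This is harmless only because the implication you claimed already follows from your first direction, but as written the logic is muddled and you should be explicit about which implication the separation argument actually delivers.

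Second, the closedness of $\Sigma_d$ is considerably more than ``standard convex-geometric bookkeeping.'' Compactness of $\cM_d$ (which your pseudo-Cauchy--Schwarz bound $|\pE[x^\alpha]| \le R^{|\alpha|/2}$ does correctly establish) does not by itself yield closedness of the dual cone --- the subtle point is that a sequence $f_n = \sigma_0^{(n)} + \sum_i \sigma_i^{(n)} p_i \to f$ could in principle have the Gram matrices of the individual $\sigma_i^{(n)}$ diverge even while the sums converge, and ruling this out (by deriving uniform \emph{a priori} bounds on the SoS certificates from the Archimedean constraint) is exactly the nontrivial content of the Josz--Henrion theorem. You correctly identify this as ``the main obstacle'' and then immediately discount it; that is the one place where your sketch has a genuine gap rather than omitted routine detail.

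Your first direction (apply $\pE_\mu$ to the Positivstellensatz identity and derive $-1\ge 0$) is correct, modulo the usual care with degree bookkeeping, which you handle appropriately.
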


\paragraph{Basic sum-of-squares (SoS) proofs}
\begin{fact}[Operator norm Bound]
\label{fact:operator_norm}
Let $A$ be a symmetric $d\times d$ matrix with rational entries with numerators and denominators upper-bounded by $2^B$ and $v$ be a vector in $\mathbb{R}^d$. 
Then, for every $\epsilon \geq 0$, 
\[
\sststile{2}{v} \Set{ v^{\top} A v \leq \|A\|_2\|v\|^2_2 + \epsilon}
\]
The total bit complexity of the proof is $\poly(B,d,\log 1/\epsilon)$.
\end{fact}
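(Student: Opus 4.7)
The plan is to exhibit the SoS certificate by taking a Cholesky-type factorization of the PSD matrix $\|A\|_2 I - A$. Since $A$ is symmetric, $\|A\|_2$ equals the largest eigenvalue of $A$, so the matrix $M := \|A\|_2 I - A$ satisfies $M \succeq 0$. Consequently $M$ admits a factorization $M = L L^\top$, yielding the polynomial identity
\[
\|A\|_2 \|v\|_2^2 + \epsilon - v^\top A v \;=\; v^\top M v + \epsilon \;=\; \sum_{i=1}^d (L^\top v)_i^2 + (\sqrt{\epsilon})^2,
\]
which is a degree-$2$ sum-of-squares representation in $v$. This establishes existence of the SoS proof with (possibly irrational) real coefficients, and reduces the remaining task to a bit-complexity question about writing down $L$ rationally.

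To control bit complexity, I would first compute a rational upper bound $\sigma \geq \|A\|_2$ with $\sigma - \|A\|_2 \leq 2^{-\poly(B,d)}\cdot \epsilon$ via standard eigenvalue localization, e.g., bisection on the characteristic polynomial using Sturm sequences or a certified power iteration. Such a $\sigma$ has bit complexity $\poly(B, d, \log(1/\epsilon))$. The matrix $\sigma I - A$ is then rational and PSD, and I compute its $LDL^\top$ decomposition $\sigma I - A = \hat L \hat D \hat L^\top$ with $\hat L$ unit lower triangular and $\hat D$ diagonal with non-negative rational entries. Classical bounds on Gaussian elimination over $\bbQ$ (Edmonds-style analysis) ensure that the entries of $\hat L$ and $\hat D$ stay of bit complexity $\poly(B, d, \log(1/\epsilon))$. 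This yields the explicit rational-coefficient SoS identity
\[
v^\top(\sigma I - A) v + \epsilon \;=\; \sum_{i=1}^d \hat D_{ii} \bigl(\hat L^\top v\bigr)_i^2 + (\sqrt{\epsilon})^2,
\]
which is a $\poly(B, d, \log(1/\epsilon))$-bit SoS proof of $v^\top A v \leq \sigma \|v\|_2^2 + \epsilon$.

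To finish, I would argue that this certificate is equivalent to the target one up to the $\tau$-approximate satisfaction notion defined just above the fact: since $\sigma - \|A\|_2$ is made exponentially small at polynomial bit-cost, replacing $\sigma$ by $\|A\|_2$ in the coefficient of $\|v\|_2^2$ on the right-hand side introduces only an exponentially small error in the coefficients of the certifying polynomial, which is absorbed into the standard reading of such "$\poly(B,d,\log 1/\epsilon)$-bit SoS proof" statements. I expect the main obstacle to be precisely this reconciliation: carefully quantifying the loss from substituting $\sigma$ for $\|A\|_2$ (which cannot be absorbed into the additive $\epsilon$ uniformly in $v$), and simultaneously verifying that the rational $LDL^\top$ step does not blow up bit complexity beyond $\poly(B, d, \log(1/\epsilon))$ given that the bit complexity of $\sigma I - A$ already carries a $\log(1/\epsilon)$ term from the approximation of $\|A\|_2$.
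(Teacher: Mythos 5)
The paper does not supply a proof of this Fact---it is stated in the preliminaries without argument---so there is no proof in the source to compare against. Your construction (factorize the PSD matrix $\|A\|_2 I - A$, then rationalize by computing a rational upper bound $\sigma \geq \|A\|_2$ and taking a rational $LDL^\top$ decomposition of $\sigma I - A$, with Edmonds/Bareiss-style bit-complexity control) is the natural one, and the bit-complexity analysis of the rational elimination step is correct. (One small slip: for symmetric $A$, $\|A\|_2$ is the largest \emph{absolute value} of an eigenvalue, not necessarily the largest eigenvalue; but $\|A\|_2 \geq \lambda_{\max}(A)$ always, so the conclusion $\|A\|_2 I - A \succeq 0$ still holds.)

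The obstacle you flag at the end is not mere bookkeeping---it is a genuine imprecision in the Fact as written, and you are right to be uneasy about it. The certificate you produce proves $v^\top A v \leq \sigma\|v\|_2^2 + \epsilon$ for a rational $\sigma > \|A\|_2$, and the slack $(\sigma - \|A\|_2)\|v\|_2^2$ grows quadratically in $\|v\|_2$, so it cannot be dominated by the additive constant $\epsilon$ uniformly over $v \in \R^d$. More bluntly, for generic rational $A$ the polynomial $\|A\|_2\|v\|_2^2 + \epsilon - v^\top A v$ has an irrational coefficient on the degree-$2$ part, hence cannot equal a sum of squares of polynomials with rational coefficients of bounded bit complexity, no matter how $\epsilon$ is chosen. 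The statement becomes rigorous only under one of two amendments: (i) replace the coefficient $\|A\|_2$ by a rational $\sigma$ with $\|A\|_2 \le \sigma \le \|A\|_2 + \epsilon$, in which case the additive constant is superfluous and your argument proves exactly this; or (ii) impose a ball constraint $\|v\|_2^2 \le R$ and take $\sigma - \|A\|_2 \le \epsilon/R$, which is how the Fact would typically be invoked in the rounding of an SoS relaxation. Your proof correctly establishes version (i) with the claimed bit complexity; it does not, and cannot, establish the literal statement with the exact irrational coefficient $\|A\|_2$.
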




\begin{fact}[SoS Hölder's Inequality] \label{fact:sos-holder}
Let $f_i,g_i$ for $1 \leq i \leq s$ be indeterminates. 
Let $p$ be an even positive integer. 
Then, 
\[
\sststile{p^2}{f,g} \Set{  \Paren{\frac{1}{s} \sum_{i = 1}^s f_i g_i^{p-1}}^{p} \leq \Paren{\frac{1}{s} \sum_{i = 1}^s f_i^p} \Paren{\frac{1}{s} \sum_{i = 1}^s g_i^p}^{p-1}}\mper
\]
The total bit complexity of the SoS proof is $s^{O(p)}$. 
\end{fact}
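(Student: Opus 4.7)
The plan is to induct on $p$, building the SoS proof via iterated Cauchy--Schwarz. The base case $p=2$ is the standard SoS Cauchy--Schwarz inequality, witnessed by the Gram identity
\[
\Paren{\tfrac{1}{s}\sum_i f_i^2}\Paren{\tfrac{1}{s}\sum_j g_j^2} - \Paren{\tfrac{1}{s}\sum_i f_i g_i}^2 \;=\; \tfrac{1}{2s^2}\sum_{i,j}(f_i g_j - f_j g_i)^2,
\]
an SoS proof of degree $4$ with $s^{O(1)}$ terms.

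For the inductive step with $p>2$ even, I first apply SoS Cauchy--Schwarz with the splitting $u_i = f_i g_i^{p/2-1}$ and $v_i = g_i^{p/2}$ to obtain
\[
\Paren{\sum_i f_i g_i^{p-1}}^2 \;\leq\; \Paren{\sum_i f_i^2 g_i^{p-2}}\Paren{\sum_i g_i^p},
\]
a degree-$2p$ SoS identity certified by $\tfrac{1}{2}\sum_{i,j}(f_i g_i^{p/2-1} g_j^{p/2} - f_j g_j^{p/2-1} g_i^{p/2})^2$. Both right-hand factors are manifestly sums of squares in $f,g$ (since $p-2$ and $p$ are even), so I can raise the inequality to the $(p/2)$th power via the standard SoS ``power'' lemma for nonnegative polynomials, yielding
\[
\Paren{\sum_i f_i g_i^{p-1}}^p \;\leq\; \Paren{\sum_i f_i^2 g_i^{p-2}}^{p/2} \Paren{\sum_i g_i^p}^{p/2}
\]
of degree $p^2$. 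Then I apply the inductive hypothesis at exponent $p/2$ with the substituted indeterminates $f_i' = f_i^2,\ g_i' = g_i^2$ to get
\[
\Paren{\sum_i f_i^2 g_i^{p-2}}^{p/2} \;\leq\; \Paren{\sum_i f_i^p}\Paren{\sum_i g_i^p}^{p/2-1},
\]
and multiplying the two displayed bounds (and restoring the $1/s$-factors) proves the claim.

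The main obstacle is that when $p/2$ is itself odd, the inductive hypothesis in the form stated is not directly applicable, since then $g_i^{p/2-1}$ is not an even power. I plan to address this by strengthening the induction: prove the statement for \emph{all} positive integers $p \ge 1$, but specialised to ``doubled'' indeterminates, i.e.\ after the original variables have been replaced by $f_i^2,\ g_i^2$. Once this first halving step is taken, the whole recursion lives in this doubled regime, every sum of the form $\sum_i f_i^{2a} g_i^{2b}$ appearing in the proof is manifestly a sum of squares, and all SoS manipulations (the Gram certificate for Cauchy--Schwarz and the power-raising lemma for nonnegative polynomials) go through unchanged. Tracking the two sources of size growth---$s^2$ new square terms per Cauchy--Schwarz Gram identity, and the multiplication of certificates when raising to the power $p/2$---across the $O(\log p)$ levels of the recursion yields a final SoS certificate of degree at most $p^2$ and bit complexity $s^{O(p)}$, matching the statement.
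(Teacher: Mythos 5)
The paper states \cref{fact:sos-holder} without proof, so there is no in-paper argument to compare against; assessing your proposal on its own terms, there is a genuine gap. Your inductive scheme---Cauchy--Schwarz to halve the exponent, raise both sides to the power $q/2$, then apply the inductive hypothesis at $q/2$---requires $q$ to be \emph{even} at every level, because the power-raising step produces an integer exponent only when $q/2 \in \bbN$. Passing to doubled indeterminates fixes the issue you flag (that $g_i^{q-1}$ could otherwise be an odd power: every sum $\sum_i f_i^{2a} g_i^{2b}$ is indeed manifestly SOS after doubling), but it does not touch the parity of the exponent itself. Once the recursion reaches the doubled claim at an odd $q \ge 3$---which already happens after the very first halving when $p=6$, landing you at $q=3$---Cauchy--Schwarz gives only
\[
\Paren{\sum_i f_i^2 g_i^{2q-2}}^2 \le \Paren{\sum_i f_i^4 g_i^{2q-4}}\Paren{\sum_i g_i^{2q}},
\]
and there is no integer power of this inequality that produces the $q$-th power on the left. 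As written, your recursion terminates (at $q=1$) only when $p$ is a power of $2$.

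Handling odd $q$ needs an extra ingredient. One way, illustrated at $q=3$: multiply the displayed Cauchy--Schwarz bound by the nonnegative SOS factor $\sum_i f_i^2 g_i^4$ to get a cube on the left, and then certify separately that $\Paren{\sum_i f_i^2 g_i^4}\Paren{\sum_i f_i^4 g_i^2} \le \Paren{\sum_i f_i^6}\Paren{\sum_i g_i^6}$ via the termwise identity
\[
\Paren{\sum_i f_i^6}\Paren{\sum_j g_j^6} - \Paren{\sum_i f_i^2 g_i^4}\Paren{\sum_j f_j^4 g_j^2}
= \frac{1}{2}\sum_{i,j}\Paren{f_i^2 g_j^2 - f_j^2 g_i^2}^2\Paren{f_i^2 g_j^2 + f_j^2 g_i^2},
\]
whose right-hand side is a sum of squares. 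An alternative that avoids the recursion altogether and works uniformly for all even $p$ is to expand both sides over tuples $(i_1,\dots,i_p)\in [s]^p$, symmetrize over which coordinate receives the $f^p$, and apply the even-degree $p$-variable AM--GM $\frac{1}{p}\sum_j y_j^p \ge \prod_j y_j$ (Hurwitz's SOS certificate, for which \cref{fact:sos-am-gm} is the precursor) termwise with $y_j = f_{i_j}\prod_{t\ne j} g_{i_t}$; this yields degree $p^2$ and bit complexity $s^{O(p)}$ directly. Either way, something beyond Cauchy--Schwarz plus power-raising is required at odd intermediate exponents.
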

Observe that using $p = 2$ yields the SoS Cauchy-Schwarz inequality. 

\begin{fact}[SoS Almost Triangle Inequality] \label{fact:sos-almost-triangle}
Let $f_1, f_2, \ldots, f_r$ be indeterminates. Then,
\[
\sststile{2t}{f_1, f_2,\ldots,f_r} \Set{ \Paren{\sum_{i\leq r} f_i}^{2t} \leq r^{2t-1} \Paren{\sum_{i =1}^r f_i^{2t}}}\mper
\]
The total bit complexity of the SoS proof is $r^{O(t)}$.
\end{fact}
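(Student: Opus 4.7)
The plan is to derive the SoS Almost Triangle Inequality directly from the SoS Hölder's Inequality just stated. Instantiate Fact~\ref{fact:sos-holder} with $p = 2t$ and $s = r$, which yields the degree-$(2t)^2$ SoS proof
\[
\left(\frac{1}{r}\sum_{i=1}^r f_i g_i^{2t-1}\right)^{2t} \le \left(\frac{1}{r}\sum_{i=1}^r f_i^{2t}\right)\left(\frac{1}{r}\sum_{i=1}^r g_i^{2t}\right)^{2t-1}
\]
in the indeterminates $f_1,\ldots,f_r,g_1,\ldots,g_r$.

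Next, I would substitute the constants $g_i = 1$ for every $i$. The key observation is that SoS certificates are preserved under substitution of indeterminates by constants: if $p(f,g) = \sigma(f,g) - \text{(constraint combination)}$ with $\sigma$ a sum of squares in $f,g$, then $\sigma(f,1)$ remains a sum of squares in $f$ alone, so the substituted inequality inherits an SoS certificate of the same degree. After this substitution and collecting the right-hand factor of $1$, we obtain
\[
\left(\frac{1}{r}\sum_{i=1}^r f_i\right)^{2t} \le \frac{1}{r}\sum_{i=1}^r f_i^{2t}.
\]
Multiplying through by the positive scalar $r^{2t}$ (which preserves SoS) delivers the desired inequality $(\sum_i f_i)^{2t} \le r^{2t-1}\sum_i f_i^{2t}$.

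For the bit complexity, the certificate from Fact~\ref{fact:sos-holder} has size $s^{O(p)} = r^{O(t)}$. Substituting constants and rescaling by a positive scalar do not increase this bound asymptotically, so the total bit complexity remains $r^{O(t)}$, as claimed. There is essentially no obstacle here once Hölder is in hand; the only point worth double-checking is the substitution step, which is a standard and elementary property of SoS proofs (sums of squares are closed under evaluation of a subset of the variables at rational constants).
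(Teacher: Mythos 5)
The paper states this fact without proof, so there is no official argument to compare against. Your H\"older-based route does establish the \emph{inequality}, but as written it does not deliver the stated \emph{degree bound}. Fact~\ref{fact:sos-holder} yields a degree-$p^2 = 4t^2$ certificate (and that degree is unavoidable: both sides of the H\"older inequality are genuinely degree-$p^2$ polynomials in $f,g$). After substituting $g_i = 1$, each square $q_j(f,1)^2$ is still a priori of degree up to $4t^2$ in $f$; so ``inherits an SoS certificate of the same degree'' gives you a degree-$4t^2$ proof, not the claimed degree-$2t$ one. Your bit-complexity accounting inherits the same slack, since it is the $r^{O(t)}$ figure appropriate to a degree-$O(t)$ certificate, not a degree-$\Theta(t^2)$ one.

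The gap is closable, and it is worth saying how. Since $\frac{1}{r}\sum_i f_i^{2t} - \bigl(\frac{1}{r}\sum_i f_i\bigr)^{2t}$ has degree exactly $2t$, and since no constraint terms remain after setting $g_i = 1$, any SoS representation $\sum_j q_j^2$ of it must already satisfy $\deg q_j \leq t$: otherwise the homogeneous component of degree $2\max_j \deg q_j > 2t$ equals $\sum_j \bigl(q_j^{\mathrm{top}}\bigr)^2$, a nonzero sum of squares of the top-degree leading forms, contradicting that the target polynomial has no component of that degree. Making this degree-reduction observation explicit would complete the claim. Alternatively, a more direct proof---expand $\bigl(\sum_i f_i\bigr)^{2t}$ multinomially into $\sum_{(i_1,\ldots,i_{2t})} f_{i_1}\cdots f_{i_{2t}}$ and bound each monomial by an average of $2t$-th powers via iterated applications of $ab \leq \tfrac12(a^2+b^2)$---stays at degree $2t$ at every step, yields $r^{O(t)}$ bit complexity transparently, and avoids routing through H\"older, whose SoS proof in turn typically rests on exactly this class of power-mean estimates.
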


\begin{fact}[SoS AM-GM Inequality, see Appendix A of~\cite{MR3388192-Barak15}] \label{fact:sos-am-gm}
Let $f_1, f_2,\ldots, f_m$ be indeterminates. Then, 
\[
\Set{f_i \geq 0\mid i \leq m} \sststile{m}{f_1, f_2,\ldots, f_m} \Set{ \Paren{\frac{1}{m} \sum_{i =1}^m f_i }^m \geq \Pi_{i \leq m} f_i} \mper
\]
The total bit complexity of the SoS proof is $\exp(O(m))$.
\end{fact}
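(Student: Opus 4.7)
The plan is to prove the inequality by expanding $(\sum_i f_i)^m$ into monomial symmetric polynomials $m_\lambda$ and bounding each piece via an SoS form of Muirhead's inequality. The multinomial theorem, grouped by $S_m$-orbits of exponent vectors, yields
\[
\left(\sum_{i=1}^m f_i\right)^m = \sum_{\lambda \vdash m} \binom{m}{\lambda}\, m_\lambda(f),
\]
with $\binom{m}{\lambda}$ the common multinomial coefficient on the orbit of $\lambda$. Since every partition $\lambda$ of $m$ majorizes $(1,\ldots,1)$, the goal is to produce a degree-$m$ SoS certificate of $m_\lambda(f) \ge |\mathrm{orbit}(\lambda)| \cdot f_1 f_2 \cdots f_m$ from $\{f_i \ge 0\}$ for each $\lambda$. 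Summing these with the non-negative weights $\binom{m}{\lambda}$, and using the identity $\sum_\lambda \binom{m}{\lambda}\, |\mathrm{orbit}(\lambda)| = m^m$ obtained by evaluating the multinomial expansion at $f_i = 1$, will collapse everything to $(\sum_i f_i)^m \ge m^m \prod_i f_i$, which is exactly the claim after dividing by $m^m$.

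The technical heart of the plan is the SoS certification of Muirhead's inequality. I would do this by a chain of \emph{compressions}: starting from $\lambda$, repeatedly pick a pair of coordinates $(\lambda_i, \lambda_j) = (a+1, b-1)$ with $a \ge b \ge 1$ and replace them by $(a, b)$. Each compression strictly decreases $\sum_i \lambda_i^2$, so at most $O(m^2)$ steps bring $\lambda$ to $(1^m)$. Each elementary step is certified by the two-variable identity
\[
f_i^{a+1} f_j^{b-1} + f_i^{b-1} f_j^{a+1} - f_i^{a} f_j^{b} - f_i^{b} f_j^{a}
= (f_i - f_j)^2 \cdot f_i^{b-1} f_j^{b-1} \cdot \sum_{\ell=0}^{a-b} f_i^{a-b-\ell} f_j^{\ell},
\]
whose right-hand side is a square $(f_i - f_j)^2$ times a monomial that is SoS-nonnegative from $\{f_i, f_j \ge 0\}$. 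Multiplying by the ``background'' monomial $\prod_{k \notin \{i,j\}} f_k^{\lambda_k}$, pairing up permutations of $[m]$ by the transposition $(i\;j)$, and telescoping along the compression chain will yield the required degree-$m$ SoS certificate of $|\mathrm{Stab}(\lambda)| \cdot m_\lambda(f) \ge m! \cdot f_1 \cdots f_m$, equivalently $m_\lambda(f) \ge |\mathrm{orbit}(\lambda)| \cdot f_1 \cdots f_m$.

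The main obstacle I anticipate is not conceptual but combinatorial-arithmetic: the multinomial weights $\binom{m}{\lambda}$ can be as large as $m^m$, each compression certificate contains up to $\mathrm{poly}(m)$ monomials of degree $m$, and there are $p(m) = e^{O(\sqrt m)}$ partitions to track, so gluing everything together and tallying the total representation size should land at the claimed $\exp(O(m))$ bit complexity --- matching the statement but requiring careful bookkeeping. A secondary item to verify is that every polynomial multiplier appearing in the final combined certificate is either a square or a product of the $f_i$'s raised to non-negative integer powers, so that the constraints $\{f_i \ge 0\}$ are used only in their standard multiplicative form; this is automatic from the construction, since every term of every compression identity has total degree exactly $m$ and carries a clean square factor $(f_i - f_j)^2$ multiplied by an explicit non-negative monomial.
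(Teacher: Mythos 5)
The paper states this as a black-box fact, citing Appendix A of~\cite{MR3388192-Barak15}, and does not reproduce the argument, so there is no in-text proof to compare against; I am therefore assessing your derivation on its own. It is correct and self-contained. The compression identity checks: $x^{a+1}y^{b-1}-x^ay^b=x^ay^{b-1}(x-y)$ plus its mirror gives $(x-y)(x^ay^{b-1}-x^{b-1}y^a)$, and factoring $x^{a-b+1}-y^{a-b+1}=(x-y)\sum_{\ell=0}^{a-b}x^{a-b-\ell}y^\ell$ yields exactly your right-hand side. Multiplying by the background monomial, pairing $\sigma$ with $\sigma\circ(i\;j)$, and telescoping along the chain (which terminates in $O(m^2)$ steps since $\sum_i\lambda_i^2$ drops by at least $2$ per compression) yields a degree-$m$ SoS certificate of $\sum_{\sigma\in S_m}\prod_i f_{\sigma(i)}^{\lambda_i}\ge m!\prod_i f_i$ from $\{f_i\ge 0\}$, and the nonnegative recombination with weights $\binom{m}{\lambda}/|\mathrm{Stab}(\lambda)|$ together with the normalization $\sum_\lambda\binom{m}{\lambda}|\mathrm{orbit}(\lambda)|=m^m$ finishes the argument.

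The one place I would make you be explicit is the bit-complexity tally, which you flag yourself. Your raw construction produces on the order of $p(m)\cdot m^2\cdot m!\cdot m=\exp(O(m\log m))$ separate rank-one squares, which overshoots the claimed $\exp(O(m))$ if each one is listed individually. To land at $\exp(O(m))$ you should state that you collect the rank-one contributions into one Gram matrix $Q_S$ per support set $S\subseteq[m]$ of axiom variables: there are at most $2^m$ such sets, each $Q_S$ has dimension $\exp(O(m))$, and its entries are rationals of bit length $O(m\log m)$ (products and ratios of multinomial coefficients, $m!$, and $m^m$), giving total size $\exp(O(m))\cdot O(m\log m)=\exp(O(m))$. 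A minor wording slip: the right-hand side of your identity is $(f_i-f_j)^2$ times a \emph{sum} of monomials, not a single monomial, but since each summand is separately a square times a product of axioms, this is cosmetic.
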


We will also use the following two consequence of the SoS AM-GM inequality:
\begin{proposition} \label{prop:mixed-vs-pure-monomials}
Let $a,b$ be indeterminates. Then,
\[
\sststile{2t}{a,b} \Set{a^{2j} b^{2t-2j} \leq j a^{2t} + (t-j) b^{2t} }\mper
\]
The total bit complexity of the SoS proof is $\exp(O(t))$. 
\end{proposition}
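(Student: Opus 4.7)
The plan is to exploit the ``discrete convexity'' of the monomial sequence $d_i \defeq a^{2i} b^{2(t-i)}$ for $0 \le i \le t$. The key observation is that each second difference factors into a single square of degree $2t$:
\[
d_{i-1} + d_{i+1} - 2 d_i \;=\; a^{2(i-1)} b^{2(t-i-1)} (a^2 - b^2)^2 \;=\; \bigl(a^{i-1}\, b^{t-i-1}\, (a^2 - b^2)\bigr)^2, \qquad 1 \le i \le t-1.
\]
I would then take a non-negative linear combination of these second differences that telescopes to the desired inequality. Setting $\alpha_i = i(t-j)/t$ for $0 \le i \le j$ and $\alpha_i = j(t-i)/t$ for $j \le i \le t$ (a tent function vanishing at the endpoints with a downward kink at $i = j$), a routine summation-by-parts verification yields
\[
\sum_{i=1}^{t-1} \alpha_i \bigl(d_{i-1} + d_{i+1} - 2 d_i\bigr) \;=\; (t-j)\, d_0 + j\, d_t - t\, d_j \;=\; j a^{2t} + (t-j) b^{2t} - t\, a^{2j} b^{2t-2j}.
\]
Since each $\alpha_i \ge 0$ and each summand on the left is a square of degree $2t$, this exhibits a degree-$2t$ SoS identity establishing the \emph{tight} weighted AM--GM bound $t \cdot a^{2j} b^{2t-2j} \le j a^{2t} + (t-j) b^{2t}$, which is strictly stronger than what the proposition asks for.

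To pass to the stated (looser) inequality, I would split
\[
j a^{2t} + (t-j) b^{2t} - a^{2j} b^{2t-2j} \;=\; \tfrac{1}{t}\bigl(j a^{2t} + (t-j) b^{2t} - t\, a^{2j} b^{2t-2j}\bigr) + \tfrac{t-1}{t}\bigl(j a^{2t} + (t-j) b^{2t}\bigr),
\]
whose first summand is SoS by the above and whose second summand is a non-negative combination of the squares $(a^t)^2$ and $(b^t)^2$. The bit complexity of the proof is dominated by the $O(t)$ rational coefficients $\alpha_i$, each with denominator $t$, giving total bit complexity $\poly(t) = \exp(O(t))$.

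The only real obstacle is the summation-by-parts bookkeeping: verifying $\alpha_{k-1} - 2\alpha_k + \alpha_{k+1} = -1$ at $k = j$ and $0$ for all other interior $k$, while the boundary contributions $\alpha_1 = (t-j)/t$ and $\alpha_{t-1} = j/t$ produce the coefficients $(t-j)$ on $d_0$ and $j$ on $d_t$. This reduces to a direct calculation on each linear piece of $\alpha$ and does not pose a genuine difficulty.
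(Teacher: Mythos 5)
Your proof is correct in spirit and takes a genuinely different, more hands-on route than the paper. The paper invokes the SoS AM--GM inequality (\cref{fact:sos-am-gm}) with $j$ copies of $a^2$ and $t-j$ copies of $b^2$ to get $\bigl(\tfrac{j}{t}a^2 + \tfrac{t-j}{t}b^2\bigr)^t \ge a^{2j}b^{2(t-j)}$, then finishes with the SoS almost-triangle inequality; both are black-box facts with $\exp(O(t))$ bit complexity. You instead build an explicit degree-$2t$ SoS decomposition from scratch by noting that the second differences $d_{i-1}+d_{i+1}-2d_i = \bigl(a^{i-1}b^{t-i-1}(a^2-b^2)\bigr)^2$ are literal squares and summing them against a nonnegative tent weight. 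This buys two things the paper's argument does not: (i) the tight bound $t\,a^{2j}b^{2(t-j)} \le j\,a^{2t}+(t-j)\,b^{2t}$ rather than the paper's version which is looser by a factor of $t$, and (ii) $\poly(t)$ rather than $\exp(O(t))$ bit complexity, since the decomposition involves only $O(t)$ squares with rational coefficients of denominator $\le t$.

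One bookkeeping slip: with $\alpha_i = i(t-j)/t$ for $i\le j$ and $\alpha_i = j(t-i)/t$ for $i\ge j$, the summation-by-parts identity reads $\sum_{i=1}^{t-1}\alpha_i(d_{i-1}+d_{i+1}-2d_i) = \tfrac{t-j}{t}d_0 + \tfrac{j}{t}d_t - d_j$, i.e., $1/t$ times what you wrote. Either drop the $/t$ from the definition of $\alpha_i$, or divide the claimed identity through by $t$; the resulting inequality is the same either way, and the final relaxation step goes through unchanged.
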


\begin{proof}
We apply the SoS AM-GM inequality with $f_i = a^2$ for $i=1,\ldots,j$ and $f_i = b^{2}$ for $i = j+1,\ldots,t$. We thus obtain:
\[
\sststile{2t}{a,b} \Set{(j/t a^2 + (1-j/t)b^2)^t \geq a^{2j}b^{2t-2j}}
\]
By the SoS Almost Triangle inequality, we have:
\[
\sststile{2t}{a,b} \Set{(j/t a^2 + (1-j/t)b^2)^t \leq (j a^{2t} + (t-j)b^{2t})}
\]
Combining the above two claims completes the proof. The total bit complexity of the SoS proof follows immediately by using the bounds for the two constituent inequalities used in the proof above. 
\end{proof}

\begin{proposition} \label{prop:even-vs-odd-monomials}
Let $a,b$ be indeterminates. Then, for any positive integers $i,t$ such that $i$ is odd and $2t \geq i$, we have:
\[
\sststile{2t}{a,b} \Set{ a^i b^{2t-i} \leq \frac{1}{2} (a^{i-1} b^{2t-i+1} + a^{i+1}b^{2t-i-1})}\mper
\]
The total bit complexity of the SoS proof is $\exp(O(t))$.
\end{proposition}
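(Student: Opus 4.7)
The plan is to exhibit the difference between the right-hand side and the left-hand side as an explicit single square, which immediately gives the SoS proof.

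First I would rewrite the claimed inequality as the nonnegativity of
\[
\tfrac{1}{2}\bigl(a^{i-1}b^{2t-i+1} + a^{i+1}b^{2t-i-1}\bigr) - a^{i}b^{2t-i},
\]
and factor out the common monomial $a^{i-1}b^{2t-i-1}$, which is legal since $i \geq 1$ and $2t-i \geq 1$ (using that $i$ is odd and $2t \geq i$, so in fact $2t \geq i+1$). After this factoring, the bracketed expression collapses to $b^2 - 2ab + a^2 = (a-b)^2$, giving
\[
\tfrac{1}{2}\,a^{i-1} b^{2t-i-1} (a-b)^2.
\]

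Next I use the key parity observation: because $i$ is odd, both $i-1$ and $2t-i-1$ are even nonnegative integers. Hence $a^{i-1}b^{2t-i-1} = \bigl(a^{(i-1)/2} b^{(2t-i-1)/2}\bigr)^2$ is itself the square of a monomial, and the entire expression equals
\[
\tfrac{1}{2}\Bigl(a^{(i-1)/2}\, b^{(2t-i-1)/2}\, (a-b)\Bigr)^{2}.
\]
This is a single square of a polynomial of degree $t$, so the SoS derivation has degree $2t$ as required, and the bit complexity of writing it down is $O(t)$, which is certainly $\exp(O(t))$.

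There is really no obstacle here beyond verifying the parity bookkeeping; the only thing to be careful about is handling the boundary case $2t - i = 1$ (forcing $b^{2t-i-1} = 1$), where the factoring still goes through and the single-square identity remains valid. The resulting SoS certificate is unconditional (no side hypotheses on $a,b$ are used), which matches the form of the statement.
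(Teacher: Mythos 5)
Your proof is correct and produces exactly the same SoS certificate as the paper: the paper writes $i = 2r-1$ and applies two-term AM-GM with $f_1 = a^r b^{t-r}$, $f_2 = a^{r-1}b^{t-r+1}$, and the resulting square $\tfrac{1}{2}(f_1-f_2)^2$ is precisely your $\tfrac{1}{2}\bigl(a^{(i-1)/2}b^{(2t-i-1)/2}(a-b)\bigr)^2$. If anything, your direct factorization is a touch cleaner, since it makes transparent that the certificate is a single explicit square and hence needs no nonnegativity side conditions on $a,b$.
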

\begin{proof}
Write $i = 2r-1$ for some $r \geq 1$. Then, we have: $a^i b^{2t-1} = a^r b^{t-r} a^{r-1} b^{t-r+1}$. 
By the SoS AM-GM inequality with $f_1 = a^r b^{t-r}$ and $f_2 = a^{r-1}b^{t-r+1}$, we thus have:
\[
\sststile{2t}{a,b} \Set{ a^i b^{2t-i} = a^r b^{t-r} a^{r-1} b^{t-r+1}  \leq \frac{1}{2} (a^{i-1} b^{2t-i+1} + a^{i+1}b^{2t-i-1})}\mper
\]
\end{proof}

\begin{fact}[Cancellation within SoS, Constant RHS~\cite{bakshi2020mixture}]
    \label{lem:sos-cancel-basic}
    Suppose $A$ is indeterminate and $t\ge 1$. Then, 
    \[\Set{A^{2t}\le 1}\sststile{2t}{A}\Set{A^2\le 1}\]
    Further, the total bit complexity of the SoS proof is at most $2^{O(t)}$.
\end{fact}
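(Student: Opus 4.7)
The plan is to exhibit an explicit SoS identity of the form
\[
1 - A^2 \;=\; c\cdot (1 - A^{2t}) + \sigma(A),
\]
where $c \ge 0$ is a non-negative rational constant and $\sigma(A)$ is an SoS polynomial of degree at most $2t$; such an identity is, by definition, an SoS derivation of $\{A^2 \le 1\}$ from $\{A^{2t} \le 1\}$ of degree $2t$.

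The right constant to take is $c = 1/t$, motivated by the weighted AM--GM inequality $A^2 = 1^{(t-1)/t}\cdot (A^{2t})^{1/t} \le \tfrac{t-1}{t} + \tfrac{1}{t}A^{2t}$. With this choice, $\sigma(A) = \tfrac{1}{t}\bigl(A^{2t} + (t-1) - tA^2\bigr)$, and the main task reduces to showing that the univariate polynomial $p_t(A) := A^{2t} + (t-1) - tA^2$ has an explicit SoS decomposition of degree at most $2t$.

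I would prove this by applying the elementary identity $A^{2k} - 1 = (A^2-1)\sum_{j=0}^{k-1} A^{2j}$ twice. First, rewrite $p_t(A) = (A^{2t}-1) - t(A^2-1)$ and apply the identity to $A^{2t} - 1$ to obtain $p_t(A) = (A^2-1)\bigl[\sum_{k=0}^{t-1} A^{2k} - t\bigr] = (A^2-1)\sum_{k=1}^{t-1}(A^{2k} - 1)$. Applying the identity again inside the sum yields
\[
p_t(A) \;=\; (A^2 - 1)^2 \sum_{k=1}^{t-1}\sum_{j=0}^{k-1} A^{2j} \;=\; \sum_{k=1}^{t-1}\sum_{j=0}^{k-1}\bigl((A^2 - 1)A^{j}\bigr)^2,
\]
where the final step distributes $(A^2-1)^2$ over the sum, using $A^{2j} = (A^j)^2$. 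Each inner square has degree at most $2(j+2) \le 2t$. Combining everything gives the complete SoS derivation
\[
1 - A^2 \;=\; \frac{1}{t}\cdot(1 - A^{2t}) + \frac{1}{t}\sum_{k=1}^{t-1}\sum_{j=0}^{k-1}\bigl((A^2 - 1)A^{j}\bigr)^2,
\]
of degree exactly $2t$, with coefficients of bit complexity $\poly(t) \leq 2^{O(t)}$.

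The only step requiring creativity is recognizing the factorization of $p_t$, which is naturally suggested by observing that $p_t(\pm 1) = p_t'(\pm 1) = 0$, forcing $(A^2-1)^2$ to divide $p_t$. Note that a direct appeal to the SoS AM--GM inequality (\cref{fact:sos-am-gm}) does not suffice: it only yields $\bigl(\tfrac{t-1 + A^{2t}}{t}\bigr)^t \ge A^{2t}$, and extracting $\tfrac{t-1+A^{2t}}{t} \ge A^2$ from this requires precisely the $t$-th root cancellation that the lemma itself asserts, so the explicit algebraic identity above appears essential.
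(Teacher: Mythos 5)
Your proof is correct. Note first that the paper does not prove this statement---it is imported verbatim from~\cite{bakshi2020mixture} as a black-box fact---so there is no in-paper argument to compare against; what you have produced is a self-contained derivation of the cited fact.

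Your identity
\[
1 - A^2 \;=\; \frac{1}{t}\,(1 - A^{2t}) \;+\; \frac{1}{t}\sum_{k=1}^{t-1}\sum_{j=0}^{k-1}\bigl((A^2 - 1)A^{j}\bigr)^2
\]
checks out algebraically: multiplying by $t$ and rearranging gives $A^{2t}+(t-1)-tA^2 = (A^2-1)^2\sum_{j=0}^{t-2}(t-1-j)A^{2j}$, which follows from the double application of the geometric-series factorization you describe, and which I verified directly for $t=2,3$. The right-hand side is manifestly a nonnegative multiple of the constraint $1-A^{2t}$ plus an explicit sum of squares; the maximum inner index is $j=t-2$, so each square $\bigl((A^2-1)A^j\bigr)^2$ has degree $2(j+2)\le 2t$, keeping the whole certificate within degree $2t$. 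The certificate has $O(t^2)$ squares, integer coefficients of magnitude $O(1)$, and a single rational prefactor $1/t$, so the bit complexity is $\poly(t)$---comfortably inside the stated $2^{O(t)}$ bound. Your closing remark that a direct invocation of the paper's SoS AM--GM fact would beg the question (it produces the $t$-th power of what you want and would require exactly the cancellation being proved) is accurate and is a useful justification for why an explicit factorization is needed here rather than an appeal to a higher-level lemma.
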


\begin{lemma}[Cancellation within SoS~\cite{bakshi2020mixture}]
\label{lem:sos-cancel}
Suppose $A$ and $C$ are indeterminates and $t\ge 1$. Then,
\[\Set{A\ge 0\cup A^t\le CA^{t-1}}\sststile{2t}{A,C}\Set{A^{2t}\le C^{2t}}.\] Further, the total bit complexity of the SoS proof is at most $2^{O(t)}$.
\end{lemma}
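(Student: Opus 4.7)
The approach is to derive $A^{2t} \le C^{2t}$ via a squaring-and-telescoping construction that exploits $A \ge 0$ throughout. First, since $A \ge 0$ SoS-implies $A^t \ge 0$ (as a square when $t$ is even, and as $A\cdot(A^{(t-1)/2})^2$ when $t$ is odd), the quantity $CA^{t-1} = A^t + (CA^{t-1} - A^t)$ is nonneg as a sum of nonneg terms. Multiplying the two SoS-nonneg quantities $(CA^{t-1} - A^t)$ and $(CA^{t-1} + A^t) = (CA^{t-1} - A^t) + 2A^t$ yields the ``squared hypothesis''
\[
    A^{2t-2}(C^2 - A^2) \;=\; C^2 A^{2t-2} - A^{2t} \;\ge\; 0,
\]
i.e., $A^{2t} \le C^2 A^{2t-2}$, as a degree-$2t$ SoS consequence.

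Next, I would establish the strengthened intermediate $A^{t-2}(C^2 - A^2) \ge 0$ via the polynomial identity
\[
A^{t-2}(C^2 - A^2) \;=\; A^{t-2}(C-A)^2 + 2 A^{t-1}(C-A),
\]
whose first summand is SoS (using $A \ge 0$ to make $A^{t-2}$ nonneg and noting that $(C-A)^2$ is a square) and whose second summand is twice the hypothesis $A^{t-1}(C-A) = CA^{t-1} - A^t \ge 0$. Multiplying by $A^j$ for $j \ge 0$ extends this to $A^j(C^2-A^2) \ge 0$ for every $j \ge t - 2$.

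Finally, I would telescope $C^{2t} - A^{2t}$ via the recursive identity
\[
    C^{2m} - A^{2m} \;=\; C^2\bigl(C^{2m-2} - A^{2m-2}\bigr) + A^{2m-2}(C^2 - A^2),
\]
peeling off $C^2$ factors (which are SoS-nonneg since $C^2$ is a square). At each step the tail $A^{2m-2}(C^2-A^2)$ is nonneg by the intermediate, provided $2m-2 \ge t-2$, i.e., $m \ge t/2$. \emph{The main obstacle} is handling the residue $C^{2m_0} - A^{2m_0}$ at the bottom of the recursion (roughly $m_0 \approx t/2$), where the $A$-exponents drop below $t-2$ and the intermediate no longer applies directly. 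I would resolve this by invoking the univariate ``constant RHS'' cancellation (Fact~\ref{lem:sos-cancel-basic}) in a bivariate normalized form: raising the hypothesis to the $2t$-th power (using that $0 \le u \le v$ SoS-implies $u^{2t} \le v^{2t}$, via $v^{2t}-u^{2t}=(v-u)\sum_{j=0}^{2t-1} v^{2t-1-j}u^j$) gives $A^{2t(t-1)}(C^{2t}-A^{2t}) \ge 0$, and the excess $A$-factor can be canceled along the template of Fact~\ref{lem:sos-cancel-basic} (whose proof rests on the explicit univariate SoS identity $1 - X^2 = p(X) + (1/t)(1-X^{2t})$ for a suitable SoS $p$), now adapted to the bivariate ring $\R[A,C]$ with the $A \ge 0$ hypothesis playing the role that nonnegativity plays in the univariate case. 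Careful degree bookkeeping through these combinations produces a certificate of total degree at most $2t$, yielding the claimed bit complexity bound of $2^{O(t)}$.
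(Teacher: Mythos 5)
The paper does not actually prove \cref{lem:sos-cancel}: it is quoted verbatim from~\cite{bakshi2020mixture}, so there is no in-paper argument to compare against. Evaluating your proposal on its own terms, the first two steps are sound — squaring the hypothesis to obtain $A^{2t} \le C^2 A^{2t-2}$ and certifying $A^{t-2}(C^2 - A^2) \ge 0$ via $A^{t-2}(C^2-A^2) = A^{t-2}(C-A)^2 + 2(CA^{t-1}-A^t)$ both give valid degree-$\le 2t$ certificates. The telescoping also works as far as you claim, which is down to $m \approx t/2$; the residue is (up to an accumulated $C$-power) $C^{2m_0}-A^{2m_0}$ with $m_0 = \lceil t/2\rceil$, of degree $\approx t$.

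The gap is in your resolution of that residue, and it is fatal. Raising the hypothesis $0 \le A^t \le CA^{t-1}$ to the $2t$-th power yields $A^{2t(t-1)}(C^{2t}-A^{2t})\ge 0$, which has total degree $2t^2$, not $2t$; a sum-of-squares proof that passes through an intermediate of degree $2t^2$ is a degree-$2t^2$ proof, so ``careful degree bookkeeping'' cannot bring this back down to $2t$. Worse, cancelling the excess factor $A^{2t(t-1)}$ from $A^{2t(t-1)}(C^{2t}-A^{2t})\ge 0$ is exactly the kind of cancellation the lemma is asserting — it is not obviously easier than the original claim and there is no analogue of the ``constant RHS'' normalization of \cref{lem:sos-cancel-basic} available, since the right-hand side $C^{2t}$ is a genuine indeterminate. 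The claimed adaptation of the univariate identity to $\R[A,C]$ is asserted rather than carried out, and as written it does not close the gap.

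A route that does work avoids the telescope entirely. One checks the exact polynomial identity
\[
C^{2t} - A^{2t} \;=\; 2t\,A^{t}\,\bigl(CA^{t-1}-A^{t}\bigr) \;+\; (C-A)^2\, h_t(C,A),
\qquad
h_t(C,A) := \sum_{j=1}^{2t-1} j\, C^{2t-1-j}A^{j-1},
\]
which follows because $(C-A)^2 h_t = C^{2t} - 2tCA^{2t-1} + (2t-1)A^{2t}$ (a one-line coefficient computation). The second summand is SoS of degree $2t$: $(C-A)^2$ is a square and $h_t$ is a positive-definite binary form of degree $2t-2$ — indeed $(C-A)^2 h_t \ge 0$ by the weighted AM-GM bound $2t\lvert CA^{2t-1}\rvert \le C^{2t} + (2t-1)A^{2t}$ — hence $h_t$ is itself a sum of squares of degree-$(t-1)$ polynomials by Hilbert's theorem for binary forms. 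The first summand is $2t\cdot A^{t}\cdot g$, a product of the constraint $g = CA^{t-1}-A^{t}$ with $A^{t}$ (itself a square times the constraint $A$ when $t$ is odd, and a square when $t$ is even), of total degree $2t$. You may find it useful to notice that this identity is exactly $u^2 - v^2 = (u-v)^2 + 2v(u-v)$ with $u=C^{t}, v=A^{t}$, combined with an explicit SoS expression for $(C^{t}-A^{t})^2/(C-A)^2 \cdot (C-A)^2$ rather than a recursion on $m$.
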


\subsection{Algorithms and Numerical Accuracy} 

The following fact follows by using the ellipsoid algorithm for semidefinite programming. The resulting algorithm to compute  pseudo-distributions approximately satisfying a given set of polynomial constraints is called the \emph{sum-of-squares algorithm}. 

\begin{fact}[Computing pseudo-distributions consistent with a set of constraints \cite{MR939596-Shor87,parrilo2000structured,MR1748764-Nesterov00,MR1846160-Lasserre01}] \label{fact:finding-pseudo-distributions}
There is an algorithm with the following properties: The algorithm takes input $B \in \N$, $\tau >0$, and polynomials $p_1, p_2, \ldots, p_k$ of degree $\ell$ with rational coefficients of bit complexity $B$. If there is a pseudo-distribution of degree $d$ consistent with the constraints $\{p_i(x) \geq 0\}_{i \leq k}$, the algorithm in time $(Bn)^{O(d)} \poly \log (1/\tau)$ outputs a pseudo-distribution $\mu$ of degree $d$ that $\tau$-approximately satisfies $\{p_i(x) \geq 0\}_{i \leq k}$. 
\end{fact}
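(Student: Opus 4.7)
The plan is to reformulate the search for a degree-$d$ pseudo-distribution as a semidefinite programming feasibility problem and apply the ellipsoid method. First, I would represent a candidate pseudo-distribution by its vector of pseudo-moments $(\pE[x^\alpha])_{|\alpha|\le d}$, a vector in $\R^N$ with $N\le \binom{n+d}{d} = n^{O(d)}$. The defining conditions translate into finitely many linear/SDP constraints on this vector: the positivity condition $\pE[f^2]\ge 0$ for every $f$ of degree $\le d/2$ is equivalent to the moment matrix $M_{d/2}$ (indexed by monomials of degree $\le d/2$) being positive semidefinite; analogously, the constraint $\pE[p^2 p_i]\ge 0$ corresponds to the localizing matrix $M_{(d-\deg p_i)/2}(p_i)$ being PSD; and the normalization $\pE[1]=1$ is a single affine equation. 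All of these are linear in the unknown moment vector, so the feasible set is a spectrahedron $K\subseteq \R^N$.

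Next I would run the ellipsoid method on $K$ with an appropriate separation oracle. For each PSD constraint $M\succeq 0$, the oracle computes a minimum eigenvector $v$ of $M$ (approximately, via power iteration or similar) and, if the minimum eigenvalue is sufficiently negative, returns the separating hyperplane $\{X: v^\top X v \ge 0\}$; the linear/affine constraints from the $p_i$ and the normalization give trivial separating hyperplanes. To initialize the ellipsoid I would use an a priori bound $R$ on the norm of the moment vector (enforced, if necessary, by adding the redundant constraint $R^2-\sum_i x_i^2 \ge 0$ already present in our constraint systems, which bounds each $|\pE[x^\alpha]|\le R^d$), giving an initial ball of radius $R^{O(d)}$.

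The main technical obstacle is handling numerical precision: a feasible spectrahedron need not contain a ball of any nontrivial radius, so one cannot in general find an exactly feasible point in polynomial time. The standard workaround is to relax to $\tau$-approximate feasibility, i.e.\ to search for a moment vector satisfying the PSD constraints up to $-\tau$ (equivalently, $\pE[p^2 g]\ge -\tau\|p\|_2^2$) and the affine constraints up to $\tau$. This relaxed set has nonempty interior whenever the original is nonempty, and a standard analysis of the ellipsoid algorithm (see Gr\"otschel--Lov\'asz--Schrijver) shows that in $\poly(N,B,\log(1/\tau))$ iterations, each using $\poly(N,B)$ arithmetic on numbers of bit length $\poly(N,B,\log(1/\tau))$, it either finds a $\tau$-approximately feasible point or certifies infeasibility of the strict version. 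Since $N=n^{O(d)}$, this yields the claimed $(Bn)^{O(d)}\poly\log(1/\tau)$ runtime.

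Finally, I would verify that the approximate feasibility output by the SDP solver matches the notion of ``$\tau$-approximately satisfies'' in \cref{def:pseudoexpectation}: the eigenvalue slack in the localizing matrix $M_{(d-\deg g)/2}(g)$ translates directly to $\pE[p^2 g] \ge -\tau \|p\|_2^2$ for all $p$ of the required degree, and analogously for equality constraints, which is exactly the definition stated in the preliminaries. The hardest part of writing this out carefully is really bookkeeping the precision-versus-runtime trade-off through the ellipsoid iterations (including bounding coefficient growth during Gaussian elimination for the eigenvector oracle), but conceptually this is entirely standard and the result follows from the cited references.
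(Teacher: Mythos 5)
The paper does not actually prove this fact: it is stated as a citation to the works of Shor, Parrilo, Nesterov, and Lasserre, preceded only by the one-line remark that it ``follows by using the ellipsoid algorithm for semidefinite programming.'' Your proposal reconstructs exactly that standard argument (moment-vector parametrization, spectrahedral feasibility region, ellipsoid method with a PSD separation oracle, $\tau$-relaxation to handle precision), so it matches the approach the paper implicitly invokes. One small caution: your remark that the $\tau$-relaxed set ``has nonempty interior whenever the original is nonempty'' is not literally true as stated, because the normalization $\pE[1]=1$ is an exact affine equality; the usual repair is to eliminate that coordinate (restrict to the affine slice) before invoking interior-point/ellipsoid machinery, or to relax the normalization to $|\pE[1]-1|\le\tau$ consistently with the paper's definition of $\tau$-approximate satisfaction for equality constraints. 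You also correctly flag that the ellipsoid method needs an a priori bound on the moment vector, which must come from an explicit norm constraint such as $R^2-\sum_i x_i^2\ge 0$ — this assumption is implicit in the paper's statement of the fact (and appears explicitly in the strong-duality fact just above it), so your observation is a real but known caveat rather than a gap in your argument.
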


\newcommand{\Ent}{\mathrm{Ent}}
\newcommand{\Rel}{\mathrm{Rel}}

\subsection{Tensors}
Since we will deal with higher moments of distributions, which are naturally represented as tensors, we will need to define some related notation and conventions for the sake of clarity in our exposition.

Let $[n] = \{1,2,\dots, n\}$ for any natural number $n$. We define the following.
\begin{definition} \label{def:tensormult}
 Suppose we have an $m\times n$ matrix $M$ and an $m'\times n'$ matrix $N$. We define $M\otimes N$ to be the standard $mm' \times nn'$ matrix given by the \emph{Kronecker product} of $M$ and $N$.
 
 Moreover, for an $m\times n$ matrix $M$, we denote by $M^{\otimes t}$ the \emph{$t$-fold Kronecker product} $\underbrace{M\otimes M \otimes \cdots \otimes M}_{t\ \text{times}}$ (of dimension $tm \times tn$).
\end{definition}

Given an $m\times n$ matrix $M$, we will also find it convenient to index $M^{\otimes t}$ as follows: for any $1\leq i_1, i_2, \dots, i_t\leq m$ and $1\leq j_1, j_2, \dots, j_t\leq n$, we can refer to the term $M_{(i_1,i_2,\dots,i_t), (j_1,j_2,\dots,j_t)}^{\otimes t} = \prod_{k=1}^t M_{i_t, j_t}$.

We also define a useful \emph{flattening} operation on tensors:
\begin{definition} \label{def:flattening}
 Given an $m_1\times m_2\times \cdots \times m_t$ tensor $M$, we define the \emph{flattening}, or \emph{vectorization}, of $M$ to be the $(m_1 m_2 \cdots m_t)$-dimensional vector, denoted $\vectorize(M)$, whose entries are precisely the entries of $M$ appearing in the natural lexicographic order on $[m_1]\times [m_2] \times \cdots \times [m_t]$. In other words, the entry $M_{i_1,i_2,\dots,i_t}$ appears before $M_{j_1, j_2, \dots, j_t}$ (where $i_k, j_k \in [m_k]$ for $k=1,2,\dots,t$) in $\vectorize(M)$ if and only if there exists some $1\leq k \leq t$ such that $i_k < j_k$ and $i_l = j_l$ for all $l < k$.
\end{definition}

\begin{definition} \label{def:tensor-innerprod}
 Given an $n$-dimensional vector $u$ and an $\underbrace{n\times n\times \cdots \times n}_{\text{$t$ times}}$-dimensional tensor $M$, we define $\iprod{u^{\otimes t}, M}$ to be $\iprod{\vectorize(u^{\otimes t}), \vectorize(M)}_{\R^d}$, i.e., the value of the standard inner product (on $n^t$-dimensional vectors) between the flattenings of $u^{\otimes t}$ and $M$. 
\end{definition}

A convenient fact we will use is a so-called ``mixed product'' property for matrices.
\begin{fact} \label{fact:mixedprod}
 Given an $m\times n$ matrix $A$, $m'\times n'$ matrix $B$, and $n\times n'$ matrix $V$, we have that
 \[
  AVB^T = (A\otimes B) \vectorize(V),
 \]
where the above is expressed as matrix-vector product.
\end{fact}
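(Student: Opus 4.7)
The plan is to verify the identity entry-by-entry, interpreting the left-hand side as $\vectorize(AVB^T)$ (an $mm'$-dimensional vector in the row-major flattening convention of Definition~\ref{def:flattening}), since the right-hand side $(A\otimes B)\vectorize(V)$ is an $mm'$-dimensional vector. Concretely, I would index entries of $mm'$-dimensional vectors by pairs $(i,j)\in[m]\times[m']$ in lexicographic order, and index entries of $nn'$-dimensional vectors by pairs $(k,l)\in[n]\times[n']$ in the same order; this matches exactly the convention used to index $M^{\otimes t}$ in Definition~\ref{def:tensormult} and the flattening of Definition~\ref{def:flattening} applied with $t=2$.

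The first step is to compute the $(i,j)$-entry of $AVB^T$ directly from matrix multiplication:
\[
(AVB^T)_{i,j} \;=\; \sum_{k=1}^n \sum_{l=1}^{n'} A_{i,k}\,V_{k,l}\,(B^T)_{l,j} \;=\; \sum_{k,l} A_{i,k} B_{j,l} V_{k,l}\mper
\]
The second step is to compute the same coordinate of $(A\otimes B)\vectorize(V)$. Using the entry formula for Kronecker products recorded after Definition~\ref{def:tensormult}, one has $(A\otimes B)_{(i,j),(k,l)} = A_{i,k} B_{j,l}$, and $\vectorize(V)_{(k,l)} = V_{k,l}$ by Definition~\ref{def:flattening}. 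Hence
\[
\bigl((A\otimes B)\vectorize(V)\bigr)_{(i,j)} \;=\; \sum_{k,l} A_{i,k} B_{j,l} V_{k,l}\mper
\]
Comparing the two displayed sums shows that the $(i,j)$-entry of $\vectorize(AVB^T)$ agrees with the $(i,j)$-entry of $(A\otimes B)\vectorize(V)$ for every $(i,j)\in[m]\times[m']$, proving the identity.

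The only real subtlety, and hence the one point to be careful about, is a bookkeeping issue: the ``mixed product'' identity is sometimes stated with the Kronecker factors in the opposite order, i.e.\ $\vectorize(AVB^T)=(B\otimes A)\vectorize(V)$, when one uses the column-major flattening. I would explicitly invoke Definition~\ref{def:flattening} (which sets the lexicographic, i.e.\ row-major, convention) and the corresponding indexing of $A\otimes B$ from Definition~\ref{def:tensormult} to make clear that the row-major choice is what makes the factors in the identity come out as $A\otimes B$ in the order stated. Beyond this indexing check, no further ideas are needed.
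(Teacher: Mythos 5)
Your proof is correct, and since the paper states Fact~\ref{fact:mixedprod} without proof, your entry-by-entry verification is exactly the argument the authors leave implicit. You also correctly flag the one genuine subtlety: the row-major flattening of Definition~\ref{def:flattening} is what makes the identity come out as $(A\otimes B)\vectorize(V)$ rather than the more commonly seen column-major form $(B\otimes A)\vectorize(V)$, and you tie this back to the paper's conventions cleanly.
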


Finally, we define the moment tensor for a probability distribution.
\begin{definition}
 Given a probability distribution $\cD$ on $\R^d$ and an integer $t > 1$, we define the \emph{$t^\text{th}$ moment tensor} $M$ to be a $\underbrace{d\times d\times \cdots \times d}_{\text{$t$ times}}$ tensor whose entries are given by $M_{i_1, i_2, \dots, i_t} = \EX_{X\sim\cD}[X_{i_1} X_{i_2} \cdots X_{i_t}]$ for $i_1, i_2, \dots, i_t \in [d]$.
\end{definition}

\subsection{Basic Convexity}
We will use the following basic propositions about convexity in our analysis. 
\begin{proposition}[Neighborhoods of minimizers of convex functions] \label{prop:basic-convexity}
Let $K$ be a closed convex subset of $\R^N$. 
Let $f$ be a smooth convex function on $\R^N$. 
Let $x$ be a minimizer of $f$ on $K$. 
Then, for every $y \in K$, $\iprod{y-x, \nabla f(x)} \geq 0$. 
\end{proposition}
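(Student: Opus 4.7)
The plan is to verify the standard first-order optimality condition for constrained convex minimization by exploiting the convexity of $K$ to stay inside $K$ when moving along the segment from $x$ toward any other feasible point $y$.

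First, I would fix an arbitrary $y \in K$ and, using that $K$ is convex, note that the entire segment $\{x + t(y-x) : t \in [0,1]\}$ is contained in $K$. Since $x$ minimizes $f$ on $K$, this gives the inequality $f(x + t(y-x)) \geq f(x)$ for all $t \in [0,1]$.

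Next, I would use the smoothness of $f$ to Taylor-expand around $x$: for $t > 0$,
\[
f(x + t(y-x)) = f(x) + t \iprod{y - x, \nabla f(x)} + o(t).
\]
Subtracting $f(x)$, dividing by $t$, and using the minimality inequality above yields
\[
\iprod{y-x,\nabla f(x)} + o(1) \geq 0.
\]
Letting $t \to 0^+$ gives the desired conclusion $\iprod{y-x, \nabla f(x)} \geq 0$.

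There is no real obstacle here: the argument is entirely the classical first-order necessary condition, and convexity of $f$ is not even used (only convexity of $K$ and smoothness of $f$). The only subtlety is ensuring the feasibility of the probe direction, which is precisely what convexity of $K$ provides; everything else is a one-sided directional derivative computation.
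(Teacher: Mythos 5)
Your proof is correct and is essentially the same argument as the paper's: the paper phrases it contrapositively (if $\iprod{y-x,\nabla f(x)} < 0$ then $f(x+\lambda(y-x)) < f(x)$ for small $\lambda>0$, contradicting minimality on $K$), while you state the directional-derivative computation directly. Your side remark that convexity of $f$ is not actually needed is also accurate.
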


\begin{proof}
If not, then for a small enough positive $\lambda$, $f(x+ \lambda (y-x))<f(x)$. But, $x+ \lambda (y-x) = (1-\lambda)x+\lambda y \in K$. 
\end{proof}


\begin{proposition}[Pythagorean theorem from strong convexity w.r.t 2 norm] \label{prop:pythagorean-theorem-convex-set}
Let $K$ be a convex subset of $\R^d$ for $d \in \N$. Let $x$ be a minimizer of the convex function $f(x)=\Norm{x}_2^2$ on $K$. Let $y \in K$. Then, $f(y)-f(x) \geq \Norm{y-x}_2^2$. 
\end{proposition}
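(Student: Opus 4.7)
The plan is to apply the first-order optimality condition of Proposition~\ref{prop:basic-convexity} directly to $f(x) = \Norm{x}_2^2$, whose gradient is $\nabla f(x) = 2x$. Since $x$ minimizes $f$ on $K$ and $y \in K$, Proposition~\ref{prop:basic-convexity} gives $\iprod{y-x, 2x} \geq 0$, i.e.,
\[
\iprod{y-x, x} \geq 0.
\]

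Next, I will expand $\Norm{y}_2^2$ by writing $y = x + (y-x)$ and using the identity
\[
\Norm{y}_2^2 = \Norm{x}_2^2 + 2\iprod{y-x, x} + \Norm{y-x}_2^2.
\]
Dropping the nonnegative middle term using the inequality above yields $\Norm{y}_2^2 \geq \Norm{x}_2^2 + \Norm{y-x}_2^2$, which rearranges to $f(y) - f(x) \geq \Norm{y-x}_2^2$.

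This is essentially a two-line argument that already follows the proof of Proposition~\ref{prop:basic-convexity}; there is no real obstacle. The only subtlety worth mentioning is that $f(x) = \Norm{x}_2^2$ is itself $2$-strongly convex (so this is just the standard fact that for a $1$-strongly convex smooth function, the minimizer $x$ on a convex set $K$ satisfies $f(y) \geq f(x) + \tfrac12\Norm{y-x}^2$ with $\tfrac12$ replaced by $1$ in our normalization), and the result can be viewed as the special case of this general Bregman-divergence-style bound.
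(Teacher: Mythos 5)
Your proof is correct and is essentially identical to the paper's: both expand $\Norm{y}_2^2 = \Norm{x}_2^2 + 2\iprod{y-x,x} + \Norm{y-x}_2^2$ and then invoke Proposition~\ref{prop:basic-convexity} (with $\nabla f(x)=2x$) to drop the nonnegative cross term.
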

\begin{proof}
We have: $\Norm{y}_2^2 = \Norm{y-x}_2^2 + \Norm{x}_2^2 + 2 \iprod{y-x,x}$. The proposition follows by applying Proposition~\ref{prop:basic-convexity} to observe that $\iprod{y-x,x} \geq 0$. 
\end{proof}




We will also need the following basic bound:
\begin{lemma}
Suppose $x,y \in [0,1]^n$ such that $\sum_i x_i, \sum_i y_i \geq n/2$ and $\Norm{x-y}_1 \leq \beta n$ for $\beta \leq 1/10$. Let $\bar{x} = \frac{x}{\norm{x}_1}$ and $\bar{y} = \frac{y}{\bar{y}_1}$ be the normalized versions of $x,y$. Then, 
\[
\Norm{\bar{x}-\bar{y}}_1 \leq 6 \beta \mper
\]
\label{lem:normalized-vs-unnormalized}
\end{lemma}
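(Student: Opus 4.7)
The plan is a direct triangle inequality argument after an algebraic decomposition. Let $a = \|x\|_1$ and $b = \|y\|_1$; by hypothesis $a,b \ge n/2$, and since $|a-b| \le \|x-y\|_1 \le \beta n$, the normalizations are well-behaved. First I would write
\[
\bar x - \bar y \;=\; \frac{x}{a} - \frac{y}{b} \;=\; \frac{x-y}{a} \;+\; y\left(\frac{1}{a}-\frac{1}{b}\right) \;=\; \frac{x-y}{a} \;+\; \frac{b-a}{ab}\, y.
\]

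Next I would apply the triangle inequality in $\ell_1$:
\[
\|\bar x - \bar y\|_1 \;\le\; \frac{\|x-y\|_1}{a} \;+\; \frac{|b-a|}{ab}\,\|y\|_1 \;=\; \frac{\|x-y\|_1}{a} \;+\; \frac{|b-a|}{a}.
\]
Using $|b-a| \le \|x-y\|_1$ (another application of the triangle inequality) together with $a \ge n/2$, each summand is at most $\beta n/(n/2) = 2\beta$, yielding $\|\bar x - \bar y\|_1 \le 4\beta \le 6\beta$, which is stronger than the claimed bound. There is no real obstacle here; the only subtlety is remembering to control $|a-b|$ by $\|x-y\|_1$ and noting that the lower bound $a,b \ge n/2$ is what keeps the denominators safely away from zero. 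The condition $\beta \le 1/10$ is not actually needed for this version of the argument, but it is consistent with the looser constant $6$ that the lemma permits.
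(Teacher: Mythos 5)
Your proof is correct and uses essentially the same approach as the paper: an add-and-subtract decomposition followed by the triangle inequality, with $|a-b|\le\|x-y\|_1\le\beta n$ and $a,b\ge n/2$ controlling the two resulting terms. Your bookkeeping is a little tighter and yields $4\beta$ rather than the paper's $6\beta$; the hypothesis $\beta\le 1/10$ is indeed not needed in either argument.
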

\begin{proof}
Suppose, without loss of generality, that $\norm{x}_1 = c_1 n \geq c_2 n = \norm{y}_1$ for $c_1,c_2 \geq 1/2$. Then, we know that $\norm{y}_1 = c_2 n  \geq (c_1 - \beta)n$. 
Thus, $\Norm{\bar{x}-\bar{y}}_1 \leq \frac{1}{c_1c_2 n^2} (\Norm{x \norm{y}_1 - y \norm{x}_1}_1 \leq \frac{1}{c_1c_2 n^2} ( c_1 n \Norm{x-y}_1 + \beta n^2) \leq 6 \beta$. 
\end{proof}

\subsection{Certifiable Subgaussianity}

\begin{definition}[Certifiable Subgaussianity] \label{def:cert-subgaussianity}
A distribution $D$ on $\R^d$ with mean $\mu_*$ is said to be $2k$-certifiably $C$-subgaussian if there is a degree $2k$ sum-of-squares proof of the following polynomial inequality in $d$-dimensional vector-valued indeterminate $v$:
\[
\E_{x \sim D} \iprod{x-\mu_*,v}^{2k} \leq (Ck)^k \Paren{\E_{x \sim D} \iprod{x-\mu_*,v}^2}^k\mper
\]
Furthermore, we say that $D$ is certifiable $C$-subgaussian if it is $2k$-certifiably $C$-subgaussian for every $k\in \N$.

A finite set $X \subseteq \R^d$ is said to be $2k$-certifiable $C$-subgaussian if the uniform distribution on $X$ is $2k$-certifiably $C$-subgaussian. 
\end{definition}

\begin{fact}[Consequence of Theorem 1.2 in~\cite{KS17}] \label{fact:param-closeness-from-tv-closeness}
Let $Y$ be a collection of $n$ points in $\R^d$. Let $p,p' \in [0,1]^n$ be weight vectors satisfying $\norm{p}_1$, $\norm{p'}_1 =1$, and $\norm{p-p'}_1 = \tau$. Suppose that the distributions on $Y$ where the probability of $i$ is $p_i$ ($p_i'$, respectively) is $2k$-certifiably $C_1$ ($C_2$, respectively) subgaussian. Let $\mu_p = \sum_i p_i y_i$, $\Sigma_p = \sum_i p_i (y_i - \mu_p) (y_i-\mu_p)^{\top}$, and $M^{(t)}_p = \sum_i p_i y_i^{\otimes t}$ for every $t \in \N$ be the mean, covariance and $t$-th moment tensor of distribution defined $p$. Define $\mu_{p'}, \Sigma_{p'}, M^{(t)}_{p'}$ similarly for the distribution corresponding to $p'$. 

Then, for every $\tau \leq \eta_0$ for some absolute constant $\eta_0$, for every $u \in \R^d$, $C' = C_1 + C_2$ and $t \leq k$:
\[
\iprod{\mu_p - \mu_{p'},u} \leq \tau^{1-1/2k} \cdot O(\sqrt{C k}) \sqrt{u^{\top} \Sigma_{p} u})\mcom
\]
\[
(1-O(C'k) \tau^{1-1/k}) \Sigma_{p} \preceq \Sigma_{p'} \preceq (1+O(C'k) \tau^{1-1/k}) \Sigma_{p'} \mcom
\]
\[
(1-O(C'^{t/2}k^{t/2}) \tau^{1-t/2k}) \iprod{u^{\otimes t}, M^{(t)}_{p}} \leq \iprod{u^{\otimes t}, \hat{M}^{(t)}_{p'}} \leq \iprod{u^{\otimes t}, M^{(t)}_p} \mcom
\]

\end{fact}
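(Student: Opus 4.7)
}
My plan is to reduce all three comparisons (mean, covariance, $t$-th moment tensor) to the same template: exhibit a coupling via the common-mass decomposition of $p,p'$, and then bound residual weighted sums using SoS H\"older (\cref{fact:sos-holder}) and the $2k$-certifiable subgaussianity hypothesis. Concretely, set $q_i = \min(p_i,p_i')$, $\alpha_i = p_i - q_i$, $\beta_i = p_i' - q_i$, so $\alpha,\beta \geq 0$ are supported on disjoint index sets with $\|\alpha\|_1 = \|\beta\|_1 = \tau/2$ and $q = p - \alpha = p' - \beta$ with $\|q\|_1 = 1 - \tau/2$. All SoS proofs will be in the $d$-dimensional vector indeterminate $u$ (and, in the mean case, in an auxiliary scalar indeterminate representing $\iprod{\mu_p - \mu_{p'},u}$); the hypothesis $\tau \leq \eta_0$ will be used for a small absorption step at the end.

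For the mean bound, I center at $\mu_p$ using $\sum_i(\alpha_i - \beta_i) = 0$ to write $\iprod{\mu_p - \mu_{p'},u} = \sum_i \alpha_i \iprod{y_i-\mu_p,u} - \sum_i \beta_i \iprod{y_i-\mu_p,u}$. SoS H\"older at exponent $2k$ (applied with weights $\alpha_i/\|\alpha\|_1$) yields
\[
\Bigl(\sum_i \alpha_i \iprod{y_i-\mu_p,u}\Bigr)^{2k} \leq (\tau/2)^{2k-1}\sum_i \alpha_i \iprod{y_i-\mu_p,u}^{2k} \leq (\tau/2)^{2k-1}(C_1 k)^k (u^\top \Sigma_p u)^k,
\]
using $\alpha_i \leq p_i$ and $2k$-certifiable subgaussianity of $p$. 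For the $\beta$-term I first expand $y_i - \mu_p = (y_i - \mu_{p'}) + (\mu_{p'} - \mu_p)$ via the SoS almost-triangle inequality (\cref{fact:sos-almost-triangle}), then bound $\sum_i \beta_i \iprod{y_i - \mu_{p'},u}^{2k} \leq (C_2 k)^k(u^\top \Sigma_{p'}u)^k$ by subgaussianity of $p'$. The shift produces a self-referential $\tau^{2k-1}\iprod{\mu_p - \mu_{p'},u}^{2k}$ contribution, but its prefactor is $O(\tau^{2k-1})$; for $\tau$ below a small absolute constant, SoS cancellation (\cref{lem:sos-cancel-basic}) absorbs it into the left-hand side, leaving $\iprod{\mu_p - \mu_{p'},u}^{2k} \leq O(\tau^{2k-1})(C_1 + C_2)^k k^k\bigl((u^\top \Sigma_p u)^k + (u^\top \Sigma_{p'}u)^k\bigr)$, from which taking the $2k$-th root and using the covariance bound below (to replace $\Sigma_{p'}$ by $\Sigma_p$ up to constants) gives the claim.

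For the covariance bound, the analogous split gives $u^\top(\Sigma_p - \Sigma_{p'})u = \sum_i \alpha_i\iprod{y_i-\mu_p,u}^2 - \sum_i \beta_i\iprod{y_i-\mu_{p'},u}^2 + (\text{mean-shift cross terms})$. Applying SoS H\"older at exponent $k$ yields $\sum_i \alpha_i \iprod{y_i-\mu_p,u}^2 \leq (\tau/2)^{1-1/k}\bigl(\sum_i \alpha_i \iprod{y_i-\mu_p,u}^{2k}\bigr)^{1/k} \leq (\tau/2)^{1-1/k}(C_1 k)(u^\top \Sigma_p u)$, and symmetrically for $\beta$ in terms of $\Sigma_{p'}$. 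The cross terms are quadratic in $\iprod{\mu_p-\mu_{p'},u}$, hence controlled by the mean bound via SoS Cauchy--Schwarz. A one-step bootstrap (a weak bound first shows $\Sigma_{p'} \preceq O(1)\Sigma_p$; feeding this back tightens constants) then produces the multiplicative L\"owner estimate with the stated $O((C_1+C_2)k)\tau^{1-1/k}$ slack. The higher-moment bound is by the same strategy at exponent $2k/t$: $\iprod{u^{\otimes t}, M_p^{(t)} - M_{p'}^{(t)}} = \sum_i(\alpha_i - \beta_i)\iprod{y_i,u}^t$, and SoS H\"older gives $\bigl(\sum_i \alpha_i \iprod{y_i,u}^t\bigr)^{2k/t} \leq (\tau/2)^{2k/t - 1}\sum_i p_i\iprod{y_i,u}^{2k}$; after re-centering at $\mu_p$ using SoS almost-triangle and invoking subgaussianity, this produces the $\tau^{1 - t/2k}$ dependence.

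The main obstacle, and the reason the hypothesis $\tau \leq \eta_0$ is needed, is the self-referential structure that appears whenever one centers $\beta$-mass at $\mu_p$ or $\alpha$-mass at $\mu_{p'}$: the bound for the quantity one is trying to control reappears on the right-hand side. Handling this cleanly via SoS requires combining the SoS almost-triangle inequality with SoS cancellation lemmas (Facts~\ref{lem:sos-cancel-basic}, \ref{lem:sos-cancel}), and then ordering the three bounds correctly: a preliminary weak covariance bound is used to rewrite the mean bound in terms of $\Sigma_p$ alone, and the final tight covariance bound is then obtained by a second pass that plugs in the mean bound into the cross terms. The entire argument is the one already carried out in \cite{KS17}, adapted from uniform distributions on $(1-\eta)n$-subsets to arbitrary weights $p,p'$ on $Y$; the adaptation requires no changes beyond replacing uniform averages by $p$- and $p'$-weighted averages, since the only properties of the weights used are $\|p\|_1 = \|p'\|_1 = 1$ and $\|p - p'\|_1 = \tau$.
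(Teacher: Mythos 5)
The paper states this as a Fact citing~\cite{KS17} and supplies no proof of its own, so there is no internal argument to compare against; your plan is a correct and faithful reconstruction of the KS17 Section~2 argument, transported from uniform distributions on overlapping $(1-\eta)n$-subsets to arbitrary weight vectors $p,p'$ via the common-mass decomposition $q = p - \alpha = p' - \beta$, $\|\alpha\|_1 = \|\beta\|_1 = \tau/2$, with disjointly supported $\alpha,\beta\ge 0$. The H\"older step $\bigl(\sum_i\alpha_i f_i\bigr)^{2k} \leq (\tau/2)^{2k-1}\sum_i\alpha_i f_i^{2k}$, re-centering via the almost-triangle inequality when passing between $\mu_p$- and $\mu_{p'}$-centered moments, and absorption of the self-referential $O(\tau^{2k})\iprod{\mu_p-\mu_{p'},u}^{2k}$ term for small $\tau$ give exactly the stated $\tau^{1-1/2k}$ rate for the mean, and running the same template at exponents $k$ and $2k/t$ gives the covariance and moment bounds.

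Three small things are worth making explicit when you fill in the details. For the covariance cross term, rather than expanding in $\iprod{\mu_p,u}$ or $\iprod{\mu_{p'},u}$ individually (which would introduce uncontrolled terms), use the identity $\sum_i q_i\bigl(\iprod{y_i-\mu_p,u}+\iprod{y_i-\mu_{p'},u}\bigr) = -\sum_i\alpha_i\iprod{y_i-\mu_p,u} - \sum_i\beta_i\iprod{y_i-\mu_{p'},u}$, which exposes the cross term as a product of two quantities each of order $\tau^{1-1/2k}\sqrt{Ck}\sqrt{u^{\top}\Sigma u}$, hence genuinely lower order ($\tau^{2-1/k}$) than the main $\tau^{1-1/k}$ contributions. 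For the $t$-th moment bound, the conclusion is multiplicative in $\iprod{u^{\otimes t},M_p^{(t)}}$, so after the H\"older-plus-recentering step you still need the comparison $(u^{\top}\Sigma_p u)^{t/2} \leq \iprod{u^{\otimes t},M_p^{(t)}}$ and $\iprod{\mu_p,u}^t \leq \iprod{u^{\otimes t},M_p^{(t)}}$, both of which follow from Jensen for even $t$ (the statement as written allows odd $t$, but the bound only makes sense for even $t$, which is how it is used elsewhere in the paper). Finally, the threshold $\eta_0$ you get from the absorption and bootstrap is not an absolute constant but depends on $C'$ and $k$; this matches how the paper invokes the Fact downstream, where $\eta_0$ is later chosen as a function of $C,k$.
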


\subsection{Differential Privacy}
\label{subsec:prelim-dp}

In this section, we state a few tools from differential privacy (DP) literature that will be used in our algorithms. We start by recalling the definition of DP:

\begin{definition}[Differential Privacy~\cite{DworkMNS06}]
An algorithm $\cM: \cY \to \cO$ is said to be \emph{$(\eps, \delta)$-differentially private} (or \emph{$(\eps, \delta)$-DP}) for $\eps, \delta > 0$ iff, for every $S \subseteq \cO$ and every neighboring datasets $Y, Y'$, we have
$$\Pr[\cM(Y) \in S] \leq e^\eps \cdot \Pr[\cM(Y') \in S] + \delta.$$
\end{definition}

Throughout this work, our set $Y$ will consist of $y_1, \dots, y_n \in \R^d$. $Y = (y_1, \dots, y_n)$ and $Y' = (y'_1, \dots, y'_n)$ are neighbors iff they differ on a single data point, i.e., $y'_j = y_j$ for all $j \ne i$. Note that this is the so-called \emph{substitution} variant of DP; another popular variant is the \emph{add/remove} DP where a neighboring $Y'$ results from adding or removing an example from $Y$. We remark that it is not hard to extend our algorithm to the add/remove DP setting, by first computing a DP estimate $\hat{n}$ of $n$ and either throwing away random elements or adding zero vectors to arrive at an $n$-size dataset on which our algorithm can be applied. 

\subsubsection{Laplace Mechanism and Its Variants}

The Laplace mechanism~\cite{DworkMNS06} is among the most widely used mechanisms in differential privacy. It works by adding a noise drawn from the Laplace distribution (defined below) to the output of the function one wants to privatize.

\begin{definition}[Laplace Distribution]
The Laplace distribution with mean $\mu$ and parameter $b$ on $\R$, denoted by $\Lap(\mu, b)$, has the PDF $\frac{1}{2b} e^{-|x-\mu|/b}$. 
\end{definition}

We will also use the ``truncated'' version of the Laplace mechanism where the noise distribution is shifted and truncated to be non-negative. The precise definition of the noise distribution and its guarantee is given below. For completeness, we provide the DP analysis (\Cref{lem:tlap-dp}) in \Cref{app:tlap}.

\begin{definition}[Truncated Laplace Distribution]
The (negatively) truncated Laplace distribution with mean $\mu$ and parameter $b$, denoted by $\tLap(\mu, b)$ is defined as $\Lap(\mu, b)$ conditioned on the value being negative.
\end{definition}

\begin{lemma}[Truncated Laplace Mechanism] \label{lem:tlap-dp}
Let $f: \cY \to \R$ be any function with sensitivity at most $\Delta$. Then the algorithm that adds $\tLap\left(-\Delta\left(1 + \frac{\ln\left(1/\delta\right)}{\eps}\right), \Delta/\eps\right)$ to $f$ satisfies $(\eps, \delta)$-DP.
\end{lemma}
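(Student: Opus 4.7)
Fix two neighboring datasets $Y,Y'$, write $a = f(Y)$ and $a' = f(Y')$ with $|a-a'|\le \Delta$, and set $\mu = -\Delta(1+\ln(1/\delta)/\eps)$, $b = \Delta/\eps$. The plan is to show directly that for every measurable $S\subseteq\R$,
\[
\Pr[a+N\in S] \le e^{\eps}\Pr[a'+N\in S] + \delta,
\]
where $N\sim\tLap(\mu,b)$; a symmetric argument gives the reverse inequality. First I would write out the density of $N$: from the definition, $f_N(n) = \frac{1}{2bP}\, e^{-|n-\mu|/b}$ for $n<0$ and $f_N(n)=0$ otherwise, where $P = \Pr[\Lap(\mu,b)<0]$; with our choice of $\mu$ one computes $P = 1 - \tfrac{1}{2}\delta e^{-\eps} \ge 1/2$. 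The output density $a+N$ is therefore supported on $(-\infty, a)$ and equals $f_N(t-a)$ there; similarly for $a'+N$.

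Assume WLOG $a\ge a'$. The key step is to split $S$ into the overlap region $S_0 = S\cap(-\infty,a')$ and the ``gap'' region $S_1 = S\cap[a',a)$. On $S_0$, both densities are supported and, by the triangle inequality,
\[
\frac{f_N(t-a)}{f_N(t-a')} \;=\; \exp\!\Bigl(\tfrac{|t-a'-\mu|-|t-a-\mu|}{b}\Bigr) \;\le\; e^{|a-a'|/b} \;\le\; e^{\eps},
\]
which gives $\Pr[a+N\in S_0]\le e^{\eps}\Pr[a'+N\in S_0]\le e^{\eps}\Pr[a'+N\in S]$. On $S_1$ the $Y'$-mechanism contributes nothing, so I only need to bound $\Pr[a+N\in S_1]$ by $\delta$. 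Since $S_1\subseteq[a',a)\subseteq[a-\Delta,a)$, this equals $\Pr[N\in[a'-a,0)]\le \Pr[N\in[-\Delta,0)]$.

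The main computation is this last probability. For $n\in[-\Delta,0)$ one checks $n>\mu$, so $|n-\mu| = n-\mu$, and
\[
\Pr[N\in[-\Delta,0)] \;=\; \frac{1}{2bP}\int_{-\Delta}^{0} e^{-(n-\mu)/b}\,dn \;=\; \frac{1}{2P}\bigl(e^{(\mu+\Delta)/b} - e^{\mu/b}\bigr).
\]
Plugging in $\mu$ and $b$ gives $(\mu+\Delta)/b = -\ln(1/\delta)$ and $\mu/b = -\eps - \ln(1/\delta)$, hence the bracket equals $\delta(1-e^{-\eps})$ and the whole expression is at most $\delta/(2P)\le \delta$. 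Combining the two pieces yields $\Pr[a+N\in S]\le e^{\eps}\Pr[a'+N\in S]+\delta$, as desired.

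The only potential obstacle is bookkeeping at the support boundary, in particular making sure the inequality $n>\mu$ used to drop the absolute value really does hold throughout $[-\Delta,0)$; this is where the additive ``$+\,1$'' inside $\mu = -\Delta(1+\ln(1/\delta)/\eps)$ is essential (it forces $\mu\le -\Delta$). Aside from that, the argument is a clean variant of the Gaussian/Laplace mechanism proof, with the truncation contributing only the harmless factor $1/P\le 2$.
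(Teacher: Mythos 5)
Your proof is correct and takes essentially the same approach as the paper's: decompose the output domain into the overlap region, where the translated Laplace densities have ratio at most $e^{\eps}$, and the gap region of length at most $\Delta$ near the support boundary, whose probability mass under $\tLap$ is at most $\delta$. The paper phrases this via the hockey-stick divergence and delegates the gap computation to \Cref{lem:tlap-tail}, while you carry out the integral directly; the content is the same.
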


Finally, we also state a bound on the tail probability of the truncated Laplace distribution which will be useful in our subsequent analysis.

\begin{lemma} \label{lem:tlap-tail}
 Suppose $\mu < 0$ and $b > 0$. Let $X \sim \tLap(\mu, b)$. Then, for $y < \mu$, we have that
 \[
  \Pr[X < y] = \frac{e^{(y-\mu)/b}}{2 - e^{\mu/b}}.
 \]
\end{lemma}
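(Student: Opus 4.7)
The plan is to unfold the definition of the truncated Laplace distribution as a conditional distribution. By definition, if $Z \sim \Lap(\mu, b)$, then
\[
\Pr[X < y] \;=\; \Pr_{Z \sim \Lap(\mu,b)}[Z < y \mid Z < 0] \;=\; \frac{\Pr[Z < y]}{\Pr[Z < 0]}\mper
\]
So the entire argument reduces to evaluating two integrals of the Laplace density against half-lines and simplifying. The only thing to be careful about is handling the absolute value $|x - \mu|$ in the Laplace PDF, and the hypotheses $\mu < 0$ and $y < \mu$ tell us exactly which branch to use in each integral.

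For the numerator, since $y < \mu$, every $x \in (-\infty, y)$ satisfies $x < \mu$, so $|x - \mu| = \mu - x$ throughout. Thus I compute
\[
\Pr[Z < y] \;=\; \int_{-\infty}^{y} \frac{1}{2b}\, e^{-(\mu - x)/b}\, dx \;=\; \tfrac{1}{2}\, e^{(y-\mu)/b}\mper
\]
For the denominator, it is cleaner to compute the complementary probability $\Pr[Z \ge 0]$. Since $\mu < 0$, every $x \in [0,\infty)$ satisfies $x > \mu$, so $|x - \mu| = x - \mu$, and
\[
\Pr[Z \ge 0] \;=\; \int_{0}^{\infty} \frac{1}{2b}\, e^{-(x - \mu)/b}\, dx \;=\; \tfrac{1}{2}\, e^{\mu/b}\mcom
\]
hence $\Pr[Z < 0] = 1 - \tfrac{1}{2} e^{\mu/b}$.

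Plugging these into the conditional probability expression gives
\[
\Pr[X < y] \;=\; \frac{\tfrac{1}{2}\, e^{(y-\mu)/b}}{1 - \tfrac{1}{2}\, e^{\mu/b}} \;=\; \frac{e^{(y-\mu)/b}}{2 - e^{\mu/b}}\mcom
\]
as claimed. There is no real obstacle here; the proof is essentially a direct calculation, and the only subtle points are (i) ensuring that the hypotheses $\mu<0$ and $y<\mu$ are used to pick the correct branch of $|x-\mu|$ in each integration region, and (ii) not confusing ``truncated to be negative'' with ``truncated to be nonpositive'' (the Laplace distribution is continuous, so the single point $\{0\}$ has measure zero and the distinction is immaterial).
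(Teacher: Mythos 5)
Your proof is correct, and it is the natural direct calculation from the definition of $\tLap$ as a conditional Laplace distribution; the paper states this lemma without proof, so there is nothing to compare against, but your verification is exactly what one would write. Both integrals are evaluated on the correct branch of $|x-\mu|$ thanks to the hypotheses $y < \mu < 0$, and the algebra to arrive at $\frac{e^{(y-\mu)/b}}{2 - e^{\mu/b}}$ checks out.
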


\subsubsection{Composition Theorem}

It will be convenient to also consider DP algorithms whose privacy guarantee holds only against subsets of inputs. Specifically, we define:
\begin{definition}[Differential Privacy Under Condition] \label{def:dp-cond}
An algorithm $\cM: \cY \to \cO$ is said to be \emph{$(\eps, \delta)$-differentially private under condition $\Psi$} (or \emph{$(\eps, \delta)$-DP under condition $\Psi$}) for $\eps, \delta > 0$ iff, for every $S \subseteq \cO$ and every neighboring datasets $Y, Y'$ both satisfying $\Psi$, we have
$$\Pr[\cM(Y) \in S] \leq e^\eps \cdot \Pr[\cM(Y') \in S] + \delta.$$
\end{definition}

It is not hard to see that an analogue of the basic composition theorem still holds in this setting, which we formalize below. We remark that this is similar to the composition theorem derived in~\cite[Section 5]{DworkL09}. However, since our composition theorem is slightly different, we provide its proof in \Cref{app:composition}.

\begin{lemma}[Composition for Algorithm with Halting] \label{lem:composition}
Let $\cM_1: \cY \to \cO_1 \cup \{\perp\}, \cM_2: \cO_1 \times \cY \to \cO_2 \cup \{\perp\}, \dots, \cM_k: \cO_{k-1} \times \cY \to \cO_k \cup \{\perp\}$ be algorithms. Furthermore, let $\cM$ denote the algorithm that proceeds as follows (with $o_0$ being empty): For $i = 1, \dots, k$, compute $o_i = \cM_i(o_{i - 1}, Y)$ and, if $o_i = \perp$, halt and output $\perp$. Finally, if the algorithm has not halted, then output $o_k$.

Suppose that:
\begin{itemize}
\item For any $1\leq i < k$, we say that $Y$ satisfies the condition $\Psi_i$ if running the algorithm on $Y$ does not result in halting after applying $\cM_1, \cM_2, \dots, \cM_i$.
\item $\cM_1$ is $(\eps_1, \delta_1)$-DP.
\item $\cM_i$ is $(\eps_i, \delta_i)$-DP (with respect to neighboring datasets in the second argument) under condition $\Psi_{i-1}$ for all $i = \{2, \dots, k\}$.
\end{itemize}
Then, $\cM$ is $\left(\sum_{i \in [k]} \eps_i, \sum_{i \in [k]} \delta_i\right)$-DP.
\end{lemma}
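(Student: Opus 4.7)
The plan is to proceed by induction on $k$, mirroring the classical adaptive basic composition theorem while carefully accounting for the conditional DP guarantees and the halting behavior. The base case $k=1$ is immediate from the $(\eps_1, \delta_1)$-DP of $\cM_1$. For the inductive step, I would define $\cM^{(k-1)}$ to be the composite mechanism that applies $\cM_1, \dots, \cM_{k-1}$ with the halting rule, outputting either some $o_{k-1} \in \cO_{k-1}$ or $\perp$. By the inductive hypothesis $\cM^{(k-1)}$ is $(\eps', \delta')$-DP with $\eps' = \sum_{i<k} \eps_i$ and $\delta' = \sum_{i<k} \delta_i$, and the full mechanism $\cM$ is then the adaptive composition of $\cM^{(k-1)}$ with $\cM_k$: run $\cM^{(k-1)}(Y)$, halt if the output is $\perp$, and otherwise apply $\cM_k$ to the intermediate output together with $Y$.

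To establish $(\eps' + \eps_k, \delta' + \delta_k)$-DP for $\cM$, I would fix neighboring $Y, Y'$ and an event $S \subseteq \cO_k \cup \{\perp\}$. The key observation for handling the conditioning on $\Psi_{k-1}$ is that $\cM_k$ is invoked at all only when the intermediate output lies in $\cO_{k-1}$, which in turn happens with positive probability precisely when the corresponding dataset satisfies $\Psi_{k-1}$. So if $Y$ fails $\Psi_{k-1}$, then $\cM(Y) = \perp$ almost surely and the desired inequality reduces to a comparison of $\Pr[\cM^{(k-1)}(Y) = \perp]$ with $\Pr[\cM^{(k-1)}(Y') = \perp]$, which is already handled by the inductive DP of $\cM^{(k-1)}$ (applied to the post-processing ``is the output $\perp$?''). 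A symmetric argument handles the case in which $Y'$ fails $\Psi_{k-1}$, leaving only the case in which both satisfy $\Psi_{k-1}$, where the DP of $\cM_k$ is actually available.

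For that main case, I would invoke the standard coupling/max-divergence characterization of $(\eps, \delta)$-DP: the pair $(\cM^{(k-1)}(Y), \cM^{(k-1)}(Y'))$ admits a coupling with a ``bad'' event $E_1$ of measure at most $\delta'$ outside of which the pointwise density ratio is at most $e^{\eps'}$, and for every fixed $o \in \cO_{k-1}$ the pair $(\cM_k(o, Y), \cM_k(o, Y'))$ admits a coupling with a bad event $E_2(o)$ of measure at most $\delta_k$ outside of which the pointwise density ratio is at most $e^{\eps_k}$. Taking the union of $E_1$ with the pulled-back copy of $E_2(o_{k-1})$ yields a single joint bad event of measure at most $\delta' + \delta_k$ outside of which the joint density ratios multiply to $e^{\eps' + \eps_k}$, and post-processing to the final output then yields the stated $(\eps' + \eps_k, \delta' + \delta_k)$-DP guarantee.

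The main obstacle I expect is obtaining the additive bound $\delta' + \delta_k$ rather than the looser $e^{\eps_k}\delta' + \delta_k$ that a naive ``apply $\cM_k$'s DP pointwise, then invoke $\cM^{(k-1)}$'s DP on the resulting $[0,1]$-valued expectation'' argument would produce. Avoiding this slack is exactly why the coupling characterization is needed. A secondary subtlety is giving a precise meaning to ``$Y$ satisfies $\Psi_{k-1}$'' in the presence of randomization; I plan to read it as the deterministic property of $Y$ that the algorithm reaches step $k$ with positive probability, which suffices to collapse both asymmetric cases to the symmetric one analyzed above.
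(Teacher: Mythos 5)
Your high-level strategy matches the paper's: prove the two-step case (or equivalently the inductive step), and handle the conditioning on $\Psi_{k-1}$ by cases depending on which of $Y, Y'$ satisfy it. Your observations about the asymmetric cases and the correct reading of $\Psi_{k-1}$ (as a deterministic property of $Y$) are right and align with the paper.

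The gap is in the ``coupling/max-divergence characterization'' you invoke for the main case. You assert that if $\cM^{(k-1)}$ is $(\eps',\delta')$-DP, then $(\cM^{(k-1)}(Y), \cM^{(k-1)}(Y'))$ admits a coupling with a \emph{bad event $E_1$ of probability at most $\delta'$}, outside of which the pointwise density ratio is at most $e^{\eps'}$. That is the \emph{probabilistic DP} characterization, and it is \emph{strictly stronger} than $(\eps',\delta')$-DP; it does not follow from it. For a concrete counterexample: let $Q$ be uniform on $\{1,\dots,n\}$ and let $P$ move $\delta$ mass from atom $n$ to atom $1$. Then $D_{e^0}(P\|Q) = \delta$, so $(P,Q)$ is $(0,\delta)$-close, but the only point with $P/Q > 1$ is $x=1$, and $P(\{1\}) = 1/n + \delta$, which can be far larger than $\delta$. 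So there is no small ``bad event'' conditioned away from which the ratio is at most $1$. If you tried to force the decomposition by normalizing the good part, you would pick up a factor $1/(1-\delta')$ on the density ratio, which is exactly the slack you are trying to avoid.

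What does hold, and is all you need, is a \emph{sub-probability} decomposition of the density: write $p_Y = \min(p_Y, e^{\eps'}p_{Y'}) + [p_Y - e^{\eps'}p_{Y'}]_+$, so $p_Y \le e^{\eps'}p_{Y'} + \rho$ with $\rho \ge 0$ and $\int \rho \le \delta'$. The paper's proof is exactly this: it introduces the overflow measure $\mu(S_1) = \bigl[\Pr[\cM_1(Y) \in S_1] - e^{\eps_1}\Pr[\cM_1(Y') \in S_1]\bigr]_+$ and integrates the conditional guarantee for $\cM_2$ \emph{capped at $1$}, i.e., using $\Pr[\cM_2(o,Y)\in S] \le \bigl(e^{\eps_2}\Pr[\cM_2(o,Y')\in S] \wedge 1\bigr) + \delta_2$. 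The ``$\wedge\, 1$'' is what lets the $\mu$ part of the integral contribute at most $\mu(\cO_1)\le\delta_1$ rather than $e^{\eps_2}\delta_1$. You correctly identified that the naive integration gives $e^{\eps_k}\delta' + \delta_k$; the fix is this capping (or equivalently the sub-probability decomposition), not the probabilistic-DP coupling. I recommend replacing your coupling claim with the sub-probability decomposition and then running essentially the same integration argument.
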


\subsubsection{Hockey-Stick Divergence}

It will be convenient in our analysis to use an equivalent definition of DP based on the \emph{hockey-stick divergence}. For ease of notation, let $[a]_+ = \max\{a, 0\}$ for all $a \in \R$.

\begin{definition}[Hockey-Stick Divergence]
Let $p(x),q(x)$ be probability density functions on $\R^d$, and $\alpha$ a non-negative real number. The Hockey-stick divergence $D_\alpha(p,q)$ between $p,q$ is defined as:
\[
D_{e^\epsilon}(p,q) = \int_{x \in \R^d} [p(x)-\alpha \cdot q(x)]_+ dx\mper
\] 
\end{definition}

The following fact is simple to derive from the definition of DP and is often used  in literature.

\pasin{We probably need the fact that $\cA(Y)$ defines a PDF below? I.e. the hockey-stick divergence as defined above is not well-defined for discrete mechanisms. But maybe this is already clear from context?}

\begin{fact}[$(\epsilon,\delta)$-DP from Hockey-Stick Divergence Bounds] \label{fact:dphockey}
Let $\cM: \cY \to \R^d$ be a randomized algorithm. $\cM$ is $(\eps, \delta)$-DP under condition $\Psi$ iff for any neighboring pair of databases $Y,Y'$ both satisfying $\Psi$, we have $D_{e^{\epsilon}}(\cM(Y),\cM(Y')) \leq \delta$. 
\end{fact}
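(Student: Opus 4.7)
The plan is to prove the two directions of the equivalence by unfolding the definition of the hockey-stick divergence and identifying the ``worst-case'' measurable event that witnesses any DP violation. Let $p$ and $q$ denote the densities (or mass functions) of $\cM(Y)$ and $\cM(Y')$ respectively, so that
\[
D_{e^\epsilon}(\cM(Y),\cM(Y')) \;=\; \int [p(x) - e^\epsilon q(x)]_+ \, dx.
\]

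For the ``if'' direction, I would fix any measurable $S \subseteq \R^d$ and simply drop the positivity in the integrand: since $[a]_+ \geq a$ pointwise,
\[
\Pr[\cM(Y) \in S] - e^\epsilon \Pr[\cM(Y') \in S] \;=\; \int_S (p(x) - e^\epsilon q(x))\, dx \;\leq\; \int_S [p(x) - e^\epsilon q(x)]_+\, dx \;\leq\; D_{e^\epsilon}(\cM(Y),\cM(Y')).
\]
If the right-hand side is at most $\delta$ for all neighboring pairs satisfying $\Psi$, then the DP inequality from Definition~\ref{def:dp-cond} follows immediately for every $S$.

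For the ``only if'' direction, the key move is to exhibit a single tight event. Let
\[
S^* \;=\; \{x \in \R^d \;:\; p(x) > e^\epsilon q(x)\}.
\]
Then on $S^*$ we have $[p(x) - e^\epsilon q(x)]_+ = p(x) - e^\epsilon q(x)$, and on the complement the integrand vanishes, so
\[
D_{e^\epsilon}(\cM(Y),\cM(Y')) \;=\; \Pr[\cM(Y) \in S^*] - e^\epsilon \Pr[\cM(Y') \in S^*].
\]
If $\cM$ is $(\epsilon,\delta)$-DP under $\Psi$, applying the defining inequality with the measurable set $S^*$ yields that this difference is at most $\delta$, as required.

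The only real subtlety, as flagged by the paper's own note, is making sense of $D_{e^\epsilon}$ when $\cM(Y)$ does not admit a density on $\R^d$ (e.g.\ a discrete or mixed output). I would handle this by phrasing the argument in terms of any common dominating measure $\nu$ (which always exists, e.g.\ $\nu = \tfrac12(\cM(Y) + \cM(Y'))$), writing $p, q$ as the Radon--Nikodym derivatives with respect to $\nu$, and noting that the event $S^*$ and the integral inequalities are unaffected by this choice; both arguments above then go through verbatim. No nontrivial obstacle arises: the lemma is really just a restatement, via a maximizing measurable event, of the pointwise inequality $\max(a,0) \geq a$.
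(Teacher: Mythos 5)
Your proof is correct and is the standard argument: the paper itself states this as a ``simple to derive'' fact without supplying a proof, so there is nothing to compare against. Both directions are handled properly (dropping the positive part for ``if,'' taking the extremal event $S^* = \{p > e^\epsilon q\}$ for ``only if''), and your Radon--Nikodym remark cleanly addresses exactly the measurability concern raised in the authors' own margin note about mechanisms without densities.
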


We will need to bound the hockey-stick divergence between two distributions in terms of the hockey-stick divergences to a third distribution. Unfortunately, the hockey-stick divergence does not define a metric and, therefore, does not admit the usual triangle inequality. However, it is possible to prove a looser inequality, which we will find useful:
\begin{lemma} \label{lem:hockeytriangle}
 Suppose $p(x), q(x), r(x)$ are probability density functions on $\R^d$. Then,
 \[
  D_{e^\epsilon}(p,r) \leq D_{e^{\epsilon/2}}(p,q) + e^{\epsilon/2} \cdot  D_{e^{\epsilon/2}}(q,r).
 \]
\end{lemma}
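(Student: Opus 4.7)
The plan is to reduce the claim to a pointwise inequality about the positive-part function and then integrate. Concretely, for every $x$ I would use the simple algebraic identity
\begin{equation*}
p(x) - e^{\epsilon} r(x) \;=\; \bigl(p(x) - e^{\epsilon/2} q(x)\bigr) \;+\; e^{\epsilon/2}\bigl(q(x) - e^{\epsilon/2} r(x)\bigr),
\end{equation*}
which just rewrites $e^{\epsilon} = e^{\epsilon/2} \cdot e^{\epsilon/2}$ and telescopes through $q(x)$.

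Next I would invoke the subadditivity of the positive part, namely $[a+b]_+ \le [a]_+ + [b]_+$ for all real $a,b$ (and $[\lambda a]_+ = \lambda [a]_+$ for $\lambda \ge 0$). Applying this to the previous display gives the pointwise bound
\begin{equation*}
\bigl[p(x) - e^{\epsilon} r(x)\bigr]_+ \;\le\; \bigl[p(x) - e^{\epsilon/2} q(x)\bigr]_+ \;+\; e^{\epsilon/2}\bigl[q(x) - e^{\epsilon/2} r(x)\bigr]_+.
\end{equation*}
Integrating both sides over $\R^d$ and recognizing the three integrals as $D_{e^\epsilon}(p,r)$, $D_{e^{\epsilon/2}}(p,q)$, and $D_{e^{\epsilon/2}}(q,r)$ respectively yields exactly the stated inequality.

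There is no real obstacle here — the only step that needs verification is the subadditivity of $[\cdot]_+$, which is immediate by case analysis on the signs of $a$ and $b$ (if $a+b \le 0$ the left side is zero; otherwise at least one summand on the right is positive and dominates $a+b$). So the proof is essentially a two-line pointwise computation followed by integration, and writing it out cleanly is the full content of the argument.
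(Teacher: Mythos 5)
Your proof is correct and is essentially identical to the paper's: the same telescoping decomposition through $e^{\epsilon/2}q(x)$, the same subadditivity of $[\cdot]_+$, and the same integration at the end. Nothing to add.
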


We remark that such a bound is already implicit in the so-called \emph{group differential privacy} (see e.g. \cite[Lemma 2.2]{Vadhan17}). Nonetheless, we provide a (short) proof in \Cref{app:hockeytriangle}.

\subsubsection{Approximate-DP Selection} \label{sec:approx-dp-selection}
Finally, we will also use a DP algorithm for the \emph{selection} problem, where the goal is to pick from a (public) set of candidates one which has a high ``score''. This problem can be solved using the \emph{exponential mechanism}~\cite{McSherryT07}. The version of the algorithm we use deviates slightly from this traditional version in that we also include a check (via truncated Laplace mechanism) to make sure that the score is at least a certain threshold $\kappa$; otherwise, the algorithm's properties are summarized below. Its proof is deferred to \Cref{app:selection}.

\begin{theorem} \label{thm:dp-apx-selection}
Suppose $\eps, \delta \in (0, 1]$. Let $\cC$ be a set of candidates and let $\score: \cC \times \cY$ be a scoring function for candidates as a function of the databases $Y \in \cY$, such that its sensitivity (w.r.t. $Y$) is at most $\Delta$. There exists an algorithm \SelectionAlg that satisfies the following properties:
\begin{enumerate}
\item\label{cond:selectdp} \SelectionAlg is $(\eps, \delta)$-DP.
\item\label{cond:selectgeqkappa} If the output of \SelectionAlg is $c^* \neq \perp$, then $\score(c^*,Y) \geq \kappa$.
\item\label{cond:selectrejectprob} If there exists $c \in \cC$ such that $\score(c, Y) \geq \kappa + O\left(\frac{\Delta}{\eps} \cdot \log\left(\frac{|\cC|}{\beta \delta}\right)\right)$, then \SelectionAlg output $\perp$ with probability at most $\beta$.
\end{enumerate}
\end{theorem}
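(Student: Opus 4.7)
The plan is to design \SelectionAlg as a two-stage algorithm composing the exponential mechanism with a truncated-Laplace ``above-threshold'' check. I would split the privacy budget evenly: in stage one, run the exponential mechanism with score $\score(\cdot,Y)$, sensitivity $\Delta$, and privacy parameter $\eps/2$ to sample a candidate $c^* \in \cC$ with probability proportional to $\exp(\eps\,\score(c,Y)/(4\Delta))$; in stage two, draw $Z \sim \tLap(-\Delta(1+2\log(1/\delta)/\eps),\, 2\Delta/\eps)$ and release $c^*$ if $\score(c^*,Y)+Z \geq \kappa$, and output $\perp$ otherwise.

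Privacy would then follow from sequential composition. Stage one is $(\eps/2,0)$-DP by the standard guarantee of the exponential mechanism at sensitivity $\Delta$. For each fixed $c^*$, stage two is a truncated Laplace mechanism applied to the function $Y \mapsto \score(c^*,Y)-\kappa$, which has sensitivity at most $\Delta$, so by \cref{lem:tlap-dp} it is $(\eps/2,\delta)$-DP. Composing the two stages (either via basic composition or by invoking \cref{lem:composition} with the halt event being ``the noisy threshold check fails'') gives the desired $(\eps,\delta)$-DP bound of \cref{cond:selectdp}.

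Condition \cref{cond:selectgeqkappa} is deterministic and essentially free by construction: the noise $Z$ is non-positive by definition of $\tLap$, so whenever $c^*$ is released we have $\score(c^*,Y) \geq \score(c^*,Y)+Z \geq \kappa$. For condition \cref{cond:selectrejectprob}, I would combine two standard high-probability statements with a union bound. The utility of the exponential mechanism gives, with probability at least $1-\beta/2$, that $\score(c^*,Y) \geq \max_c \score(c,Y) - O((\Delta/\eps)\log(|\cC|/\beta))$. The tail bound from \cref{lem:tlap-tail} gives, with probability at least $1-\beta/2$, that $|Z| \leq O((\Delta/\eps)\log(1/(\beta\delta)))$. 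If some candidate already has score exceeding $\kappa$ by at least the sum of these two error terms---which is $O((\Delta/\eps)\log(|\cC|/(\beta\delta)))$, matching the bound stated---then $\score(c^*,Y)+Z \geq \kappa$ with probability at least $1-\beta$, so the algorithm does not output $\perp$.

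The main obstacle I anticipate is purely a bookkeeping one: tracking the constants so that the worst-case magnitude of $Z$ together with the utility slack of the exponential mechanism both fit inside the threshold $\kappa + O((\Delta/\eps)\log(|\cC|/(\beta\delta)))$, and verifying carefully that the composition between the pure-DP exponential-mechanism step and the approximate-DP truncated-Laplace step really pays only the advertised $(\eps,\delta)$ overall. No deeper technical issue is expected, since both the exponential mechanism and the truncated Laplace mechanism are standard primitives with clean composition behavior.
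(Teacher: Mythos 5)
Your proposal is essentially identical to the paper's proof: the same two-stage construction (exponential mechanism at budget $\eps/2$ followed by a truncated-Laplace threshold check at budget $(\eps/2,\delta)$), the same composition argument for privacy, the same observation that $\tLap$ noise is non-positive for condition~2, and the same union bound over exponential-mechanism utility and the Laplace tail for condition~3. No differences of substance.
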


\section{Differentially Private Robust Moment Estimation}
In this section, we describe a differentially private robust moment estimation algorithm. The following is our main technical result:

\begin{theorem}[Differentially Private Robust Moment Estimation] \label{thm:diff-priv-robust-moment-estimation-section}
Fix $C_0 > 0$ and $k \in \N$. Then, there exists an $\eta_0 > 0$ such that for any given outlier rate $0 < \eta \leq \eta_0$ and $\epsilon,\delta > 0$, there exists a randomized algorithm $\Alg$ that takes an input of $n\geq n_0 = \widetilde{\Omega}\left(\frac{d^{4k}}{\eta^2} \left(1 + \left(\frac{\ln(1/\delta)}{\epsilon}\right)^4 + \left(\frac{\ln(1/\delta)}{\epsilon}\right)^{\frac{2k}{k-1}}\right) \cdot C^{4k} k^{4k+6} \right)$ points $Y \subseteq \bbQ^d$ (where $C = C_0 + \frac{3 \ln(3/\delta)}{\epsilon} + \frac{9}{\epsilon} + 1$), runs in time $(Bn)^{O(k)}$ (where $B$ is the bit complexity of the entries of $Y$) and outputs either ``reject'' or estimates $\hat{\mu} \in \bbQ^d$, $\hat{\Sigma} \in \bbQ^{d\times d}$, and $\hat{M}^{(t)} \in \bbQ^{d\times d \times \cdots \times d}$ (for all even $t < 2k$ such that $t$ divides $2k$) with the following guarantees\footnote{The $\widetilde{\Omega}$ notation hides multiplicative logarithmic factors in $d$, $C$, $k$, $1/\eta$, $1/\epsilon$, and $\ln(1/\delta)$.}:
\begin{enumerate}
    \item \textbf{Privacy: } $\Alg$ is $(\epsilon,\delta)$-differentially private with respect to the input $Y$, viewed as a $d$-dimensional database of $n$ individuals. 
    \item \textbf{Utility: } Suppose there exists a $2k$-certifiably $C_0$-subgaussian set $X \subseteq \bbQ^d$ of $n\geq n_0$ points such that $|Y \cap X| \geq (1-\eta)n$ with mean $\mu_*$, covariance $\Sigma_* \succeq 2^{-\poly(d)}I$, and $t$-th moments $M_*^{(t)}$ for $2\leq t\leq k$. Then, with probability at least $9/10$ over the random choices of the algorithm, $\Alg$ outputs estimates $\hat{\mu} \in \bbQ^d$, $\hat{\Sigma} \in \bbQ^{d\times d}$, and $M^{(t)} \in \bbQ^{d\times d \times \cdots \times d}$ (for all even $t < 2k$ such that $t$ divides $2k$) satisfying the following guarantees:
    \[
    \forall u \in \R^d, \text{  } \iprod{\hat{\mu}-\mu_*,u} \leq O\left(\sqrt{Ck}\right) \eta^{1-1/2k} \sqrt{ u^{\top} \Sigma_*u}\mcom
    \]
    and,
    \[
     \left(1-O((Ck)^{t/2k}) \eta^{1-1/k}\right) \Sigma_* \preceq \hat{\Sigma} \preceq \left(1+O((Ck)^{t/2k})\eta^{1-1/k}\right)\Sigma_* \mcom
    \]
    and, for every even $t < 2k$ such that $t$ divides $2k$, 
    \[
     \left(1-O(Ck) \eta^{1-t/2k}\right) \iprod{u^{\otimes t}, M_*^{(t)}} \leq \iprod{u^{\otimes t}, \hat{M}^{(t)}} \leq \left(1+O(Ck) \eta^{1-t/2k}\right) \iprod{u^{\otimes t}, M_*^{(t)}} \mper
    \]
\end{enumerate}
Moreover, the algorithm succeeds (i.e., does not reject) with probability at least $9/10$ over the random choices of the algorithm.
\end{theorem}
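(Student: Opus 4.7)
The plan is to follow the blueprint laid out in \cref{sec:overview} by coupling a witness-producing SoS relaxation, a strongly convex entropy surrogate, a stable outlier-rate selection subroutine, and an estimate-dependent Gaussian noise injection calibrated to the computed covariance. Concretely, I would set up a degree-$O(k)$ SoS program over vector-valued indeterminates $x'_1, \ldots, x'_n \in \R^d$ and scalar indicators $w_1, \ldots, w_n \in \{0,1\}$, with constraints $w_i^2 = w_i$, $w_i (y_i - x'_i) = 0$, $\sum_i w_i \ge (1-\eta') n$, and the full degree-$2k$ certifiable-subgaussianity axioms on the uniform distribution over $x'_1,\ldots,x'_n$ (in the vector-valued indeterminate $v$). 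Among degree-$O(k)$ pseudo-distributions satisfying the system, I would select one minimizing the strongly convex surrogate $\Norm{\pE[w]}_2^2$, and then output $\hat\mu$, $\hat\Sigma$, and higher moments $\hat M^{(t)}$ as the empirical first, second, and $t$-th moments of $Y$ weighted by $\pE[w_i]$ (normalized).

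For the privacy side, I would first run a stable outlier-rate selection: score a quantized grid of rates $\eta'$ around $2\eta$ by the largest radius $L$ for which the optimum of the $\Norm{\pE[w]}_2^2$ objective is $\widetilde O(L/n)$-flat on $[\eta' - L/n, \eta'+ L/n]$, and apply the approximate-DP selection mechanism of \cref{thm:dp-apx-selection} with threshold $\kappa = \widetilde{\Omega}(1)$, rejecting if no candidate scores above $\kappa$. A straightforward Markov-type argument on the monotone sequence of optima shows that some rate in the grid attains a large flat radius, giving high success probability in the utility case. Next, for adjacent $Y,Y'$ on which the algorithm does not reject at the selected $\eta'$, zeroing out the $w_i$ at the differing index exhibits a near-feasible weight for the other database; combined with the flatness of the objective in $\eta'$, this implies that the two optimal objectives differ by $\widetilde O(1/n)$. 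Proposition~\ref{prop:pythagorean-theorem-convex-set} applied to the convex set of achievable $\pE[w]$ vectors then upgrades this to $\Norm{\pE[w](Y) - \pE[w](Y')}_2 = \widetilde O(1/\sqrt n)$, and \cref{lem:normalized-vs-unnormalized} converts this to an $\widetilde O(1/\sqrt n)$ $\ell_1$ bound on the normalized weight distributions on $Y$ and $Y'$.

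Feeding this $\ell_1$ stability into \cref{fact:param-closeness-from-tv-closeness} (whose hypotheses hold because the $\pE[w]$ pseudo-distribution satisfies the certifiable-subgaussianity axioms by construction, which one can verify gives the needed inequalities on the induced moments) yields parameter stability in the correct affine-invariant norms: $\hat\mu, \hat\Sigma, \hat M^{(t)}$ computed on $Y$ and $Y'$ are $\widetilde O(n^{-1/2 + o(1)})$ close in Mahalanobis, relative-Löwner, and relative-injective-norm senses respectively. With that in hand, I would release $\hat\mu + \hat\Sigma^{1/2} g$ for a scaled Gaussian $g$, $\hat\Sigma + \hat\Sigma^{1/2} Z \hat\Sigma^{1/2}$ for a symmetric Gaussian $Z$ of appropriate scale, and analogous estimate-scaled Gaussian tensors for the $\hat M^{(t)}$. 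Privacy of each release follows from the estimate-dependent Gaussian hockey-stick bounds (the Lemmas on \emph{gaussianhockey} and \emph{hockeysticktensored} referenced in the overview), which only require polynomially vanishing relative Frobenius distance between the non-private estimates on $Y$ and $Y'$; composing rate selection, mean release, covariance release, and moment releases via \cref{lem:composition} gives overall $(\epsilon,\delta)$-DP.

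For utility, I would use that on any $\eta$-corruption $Y$ of a $2k$-certifiably $C_0$-subgaussian sample $X$, the indicator $w^*_i = \mathbf{1}[y_i = x_i]$ together with $x' = x$ is a genuine feasible solution to the SoS program at outlier rate $\eta$ with $\Norm{w^*}_2^2 \le (1-\eta)n$, which ensures that the stable rate-selection subroutine succeeds and that the optimal pseudo-distribution's weight vector is close in $\ell_1$ to $w^*/\Norm{w^*}_1$; one more application of \cref{fact:param-closeness-from-tv-closeness} then gives the stated utility bounds. The sample complexity falls out of requiring the additive noise scales to be dominated by the targeted accuracy. The main technical obstacle I anticipate is the second bullet noted at the end of the overview: rigorously verifying that on a worst-case $Y$ the chosen pseudo-distribution's weights induce a \emph{genuine} certifiably-subgaussian distribution on $Y$ (so that \cref{fact:param-closeness-from-tv-closeness} applies), rather than merely one passing the degree-$2k$ SoS axioms; I expect this to require adapting the analyses of~\cite{KS17,BK20} to extract a ``worst-case'' subgaussianity certificate on the induced empirical distribution from the pseudo-distribution, likely via a quantitative perturbation argument about what the optimum weights look like under small changes in the parameters.
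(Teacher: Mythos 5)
Your overall pipeline — witness-producing SoS relaxation, $\Norm{\pE[w]}_2^2$ minimization, exponential-mechanism-based stable outlier-rate selection, and estimate-dependent Gaussian noise — matches the paper's construction. However, there is a genuine gap in how you handle what you correctly flag as the ``main technical obstacle'': ensuring that the normalized weights returned on a \emph{worst-case} input $Y$ induce a genuinely certifiably-subgaussian distribution on $Y$, which is a hypothesis of \cref{fact:param-closeness-from-tv-closeness} and hence indispensable for converting $\ell_1$-stability of the weights into stability of the parameters in Mahalanobis/spectral norm. You propose to resolve this by ``extracting a worst-case subgaussianity certificate from the pseudo-distribution'' via a perturbation argument adapted from~\cite{KS17,BK20}. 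The paper explicitly warns that establishing such a certificate on arbitrary $Y$ ``appears challenging,'' and its proof does not do this. Instead, the algorithm inserts an explicit \emph{witness-checking step}: it privately randomizes the subgaussianity threshold $C' = C + \gamma$ by a truncated Laplace noise $\gamma$, checks whether the weight vector induces a $2k$-certifiably $C'$-subgaussian distribution on $Y$, and \emph{rejects} otherwise. Privacy of this check comes not from proving the witness property in the worst case, but from a sensitivity bound (\cref{lem:witness-check-succeeds}): if $|Y \cap Y'| \ge n-1$ and the weights are $\ell_1$-close, then the optimal subgaussianity constants $C^*(Y), C^*(Y')$ differ by at most $1$, so the truncated Laplace mechanism (\cref{lem:tlap-dp}) makes the check $(\eps',\delta')$-DP under conditional composition (\cref{lem:composition}). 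The utility argument is then the converse direction of your Lemma-4.7-style analysis: when $Y$ actually is an $\eta$-corruption of a good set $X$, the weights \emph{do} pass the check with high probability. Without this check-and-reject step, your proof has no way to invoke \cref{fact:param-closeness-from-tv-closeness} on a worst-case pair $Y,Y'$, so the privacy of the noise-addition step is unsupported.

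Two smaller remarks. First, your intermediate statement that the optimal potentials on adjacent $Y,Y'$ differ by $\widetilde O(1/n)$ is off: the flatness guaranteed by the stable selection step, together with the zeroing-out argument, gives a potential gap of $O(L) = \widetilde O_n(1/\eps)$, which after the Pythagorean inequality, Cauchy--Schwarz, and \cref{lem:normalized-vs-unnormalized} produces the $\widetilde O(\sqrt{L/n})$ $\ell_1$ bound you eventually state — your conclusion is right, but the stepping stone is not. Second, your passing claim that the SoS axioms being satisfied by $\tzeta$ already ``gives the needed inequalities on the induced moments'' conflates satisfaction of the pseudo-distribution constraints with certifiable subgaussianity of the reweighted empirical distribution on $Y$; that implication is exactly what does not hold on worst-case inputs and is what the reject step circumvents.
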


Observe that the privacy guarantees of the algorithm are (necessarily) \emph{worst-case}. The utility guarantees, however, hold only under the assumption that $Y$ is an $\eta$-corruption of a good set $X$.

The above theorem can also be translated into utility guarantees for points sampled from a given distribution by recalling the well-known fact that points sampled from a certifiably subgaussian distribution are good with high probability:
\begin{fact}[See Section 5 in~\cite{KS17}] \label{fact:samplegood} 
Suppose $\cD$ is a certifiably $C$-subgaussian distribution with mean $\mu_*$ and covariance $\Sigma_* \succeq 2^{-\poly(d)}I$ and $t$-moment tensors $M^{(t)}$ for $t \in \N$. For any $k \in \N$, let $X = \{x_1, x_2,\ldots, x_n\}$ be an i.i.d. sample from $\cD$ of size $n \geq n_0 = O(d^{2k}/\eta^2)$. Then, for any $t \in \N$ such that $t$ divides $k$, with probability at least $0.99$ over the draw of $X$, the following all hold:
\begin{enumerate}
\item $X$ is $2k$-certifiably $2C$-subgaussian.
\item $\Norm{\Sigma_*^{-1/2}(\mu(X)-\mu_*)}_2 \leq \eta$.
\item $\Sigma(X) \in (1 \pm \eta) \Sigma_*$.
\item $\sststile{2k}{v} \Set{ \iprod{v^{\otimes t}, M^{(t)}(X)} \in (1 \pm \eta) \iprod{v^{\otimes t}, M^{(t)}_*} }$.
\end{enumerate}
\end{fact}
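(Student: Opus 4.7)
The plan is to establish each of the four conclusions by standard concentration, where the only non-obvious point is carrying the inequalities into the sum-of-squares proof system (items (1) and (4)). All four conclusions will follow from the fact that for $n \gtrsim d^{2k}/\eta^2$, the empirical moment tensors of degrees up to $2k$ concentrate multiplicatively around the true moment tensors in the appropriate ``reshaped'' operator norm.

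First I would handle the mean and covariance bounds (items (2) and (3)). Since $\cD$ is in particular $C$-subgaussian (drop the ``certifiable''), one-dimensional subgaussian concentration for each direction combined with a net argument (or directly matrix-Bernstein applied to $\Sigma_*^{-1/2}(x_i-\mu_*)$) gives $\|\Sigma_*^{-1/2}(\mu(X)-\mu_*)\|_2 \leq \eta$ and $\Sigma(X) \in (1 \pm \eta)\Sigma_*$ once $n = \widetilde\Omega(d/\eta^2)$, which is dominated by the $d^{2k}/\eta^2$ sample bound in the hypothesis.

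The main obstacle is items (1) and (4), which require \emph{SoS-certified} inequalities for the empirical distribution. The key idea is to ``reshape'' the degree-$2k$ moment tensors into $d^k \times d^k$ matrices. For any $v \in \R^d$, $\langle x - \mu_*, v\rangle^{2k} = \langle (v^{\otimes k}), \big[(x-\mu_*)^{\otimes 2k}\big]_{\mathrm{reshape}} (v^{\otimes k})\rangle$, so if we let $M = \E_\cD[(x-\mu_*)^{\otimes 2k}]_{\mathrm{reshape}}$ and $\hat M = \frac{1}{n}\sum_i [(x_i-\mu_*)^{\otimes 2k}]_{\mathrm{reshape}}$, then a bound of the form $\hat M \preceq (1+\eta) M$ in the Löwner order immediately gives a degree-$2k$ SoS proof (via Fact~\ref{fact:operator_norm}) that $\frac{1}{n}\sum_i \langle x_i-\mu_*, v\rangle^{2k} \leq (1+\eta)\,\E_\cD\langle x-\mu_*, v\rangle^{2k}$. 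Analogous statements hold for the second moment (with $t=1$) and for all $t$ dividing $k$, so that combining them with the certifiable $C$-subgaussianity of $\cD$ (applied on the RHS) yields certifiable $2C$-subgaussianity of $X$. I would prove $\hat M \in (1\pm\eta) M$ by matrix Bernstein applied to the $d^k\times d^k$ random matrices $[(x_i-\mu_*)^{\otimes 2k}]_{\mathrm{reshape}} - M$, using subgaussianity of $\cD$ to obtain the required variance and tail bounds; the effective dimension is $d^{2k}$, which explains the $n \gtrsim d^{2k}/\eta^2$ sample requirement.

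Finally, I would replace $\mu_*$ with $\mu(X)$ in the empirical inequality using the SoS almost-triangle inequality (Fact~\ref{fact:sos-almost-triangle}) together with the already-established mean bound $\|\Sigma_*^{-1/2}(\mu(X)-\mu_*)\|_2 \leq \eta$, which costs only a constant factor in the subgaussianity constant. Item (4) follows from the same reshape-and-concentrate approach applied separately to the $t$-th moment tensor for each even $t$ dividing $k$: one reshapes $x^{\otimes t}$ into a $d^{t/2}\times d^{t/2}$ matrix, applies matrix Bernstein, and reads off an SoS certificate of the multiplicative closeness of $\langle v^{\otimes t}, M^{(t)}(X)\rangle$ to $\langle v^{\otimes t}, M^{(t)}_*\rangle$. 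Taking a union bound over the $O(k)$ concentration events and adjusting constants, all four items hold simultaneously with probability at least $0.99$, as claimed.
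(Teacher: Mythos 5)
The paper itself does not reprove this fact; it simply cites Section~5 of \cite{KS17}, so there is no in-paper argument to compare against line by line. Your high-level plan—reshape the degree-$2k$ empirical moment tensor into a $d^k\times d^k$ PSD matrix, prove a L\"owner comparison by matrix concentration, and read off a degree-$2k$ SoS certificate—is indeed the right template and is essentially what \cite{KS17} does. However, the specific way you propose to get the L\"owner bound does not work. Applying matrix Bernstein to $\hat M - M$ gives an \emph{additive} operator-norm bound $\|\hat M - M\|_{\mathrm{op}}\le \epsilon$; to upgrade this to $\hat M \preceq (1+\eta)M$ you would need $\epsilon \le \eta\,\lambda_{\min}(M)$, which injects the condition number of $M$ (equivalently, of $\Sigma_*^{\otimes k}$, so roughly $\mathrm{cond}(\Sigma_*)^{2k}$) into the sample complexity. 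Since the hypotheses only require $\Sigma_*\succeq 2^{-\poly(d)}I$, this term can be $2^{\Omega(k\cdot\poly(d))}$, which destroys the claimed $O(d^{2k}/\eta^2)$ bound. To get a condition-number-free statement you must whiten first—apply matrix Bernstein to $\Sigma_*^{-k/2}\hat M\,\Sigma_*^{-k/2}$ against $\Sigma_*^{-k/2} M\,\Sigma_*^{-k/2}\preceq (Ck)^k I$—which is precisely where the certifiable subgaussianity hypothesis is used on the population side. Even then, the per-sample matrices have operator norm $\|\Sigma_*^{-1/2}(x_i-\mu_*)\|_2^{2k}$, which is unbounded and only has sub-exponential tails, so a truncation argument (or an explicit higher-moment bound) is needed before Bernstein applies; this is the actual technical work in \cite{KS17}, and ``using subgaussianity of $\cD$ to obtain the required variance and tail bounds'' is glossing over it.

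There is also a smaller but real issue with the mean shift. Fact~\ref{fact:sos-almost-triangle} applied to $\langle x_i - \mu(X), v\rangle = \langle x_i - \mu_*, v\rangle + \langle \mu_* - \mu(X), v\rangle$ yields a factor of $2^{2k-1}$, not a ``constant factor'': it would turn $(Ck)^k$ into $2^{2k-1}(Ck)^k + (\text{small})$, which is far worse than the $2C$ in the conclusion. What you actually need is a binomial expansion of $\langle x_i-\mu(X),v\rangle^{2k}$ around $\mu_*$, bounding the cross terms by SoS H\"older (Fact~\ref{fact:sos-holder}) together with the smallness of $\|\Sigma_*^{-1/2}(\mu(X)-\mu_*)\|_2\le\eta$; this is exactly the calculation carried out in the paper's \cref{lem:witness-property-analysis}, and only then is the resulting constant $(1+O(k\eta^{1-1/2k}))C$, as desired. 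So while the outline captures the right mechanism, the two key steps—condition-free L\"owner concentration via whitening and truncation, and a careful mean-shift expansion—are both missing.
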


We note that our main theorem for private robust moment estimation, \cref{thm:diff-priv-robust-moment-samples}, is an immediate consequence of \cref{thm:diff-priv-robust-moment-estimation-section} and \cref{fact:samplegood}.

For the rest of the section, we will work to prove \cref{thm:diff-priv-robust-moment-estimation-section}. In \cref{subsec:witnessalg}, we will introduce a witness-producing robust moment estimation algorithm that will be used as a subroutine for our main algorithm and present relevant utility guarantees. In \cref{subsec:privaterobustalg}, we will then introduce our main algorithm. After that, we will prove the necessary privacy guarantees in \cref{subsec:privacyanalysis}. Finally, we will put together the pieces to prove our main theorem, \cref{thm:diff-priv-robust-moment-estimation-section}, in \cref{subsec:mainproof}.

\subsection{Witness-Producing Version of Robust Moment Estimation Algorithm} \label{subsec:witnessalg}
As a key building block, we will use the following (non-private) version of the robust moment estimation algorithm of~\cite{KS17} that uses the same constraint system $\cA$ as in~\cite{KS17}. Our algorithm itself, however, makes one key change (we call our version ``witness-producing'' for reasons that will soon become clear) to that of~\cite{KS17} in order to obtain a private robust moment estimation algorithm. Instead of outputting estimates of the moments of the unknown distribution, our algorithm  outputs a sequence of non-negative weights $p_1, p_2,\ldots, p_n$ forming a probability distribution on the input set of points $Y$. The estimates can then be obtained by taking moments of the finite set $Y$ with respect to the probability distribution on $Y$ defined by the weights $p_i$s. This simple change is crucial to our \emph{worst-case} analysis of the resulting algorithm (i.e. even when the distributional assumption that $Y$ is an $\eta$-corruption of some good set $X$ is not met) and obtaining our privacy guarantees. As we discuss, our blueprint for modifying convex optimization based robust estimation algorithms appears to broadly applicable beyond the specific setting of robust moment estimation.

The underlying constraint system $\cA$ is shown below, and the witness-producing robust moment estimation algorithm is shown as Algorithm~\ref{algo:robust-moment-estimation-witness-producing}.

\begin{mdframed}[frametitle={$\cA_{C,k,\eta,n}(\{y_1,y_2,\dots,y_n\})$: Constraint System for $\eta$-Robust Moment Estimation}]
        \begin{enumerate}
            \item $w_i^2 = w_i$ for each $1 \leq i \leq n$,
            \item $\sum_{i = 1}^n w_i \geq (1-\eta)n$,
            \item $\mu' = \frac{1}{n} \sum_i x_i'$,  
            \item $w_i (x_i' - y_i) = 0$ for $1 \leq i \leq n$, 
            \item $\frac{1}{n} \sum_{i = 1}^n \iprod{x_i'-\mu',v}^k \leq (Ck)^{k/2} \Paren{\frac{1}{n} \sum_{i = 1}^n \iprod{x_i'-\mu',v}^2}^{k/2}$. 
        \end{enumerate}
        \label{box:constraints}
\end{mdframed}

\begin{mdframed}
      \begin{algorithm}[Witness-Producing Robust Moment Estimation]
        \label{algo:robust-moment-estimation-witness-producing}\mbox{}
        \begin{description}
        \item[Given:]
        A set of points $Y = \{y_1, y_2, \ldots, y_n\} \subseteq \bbQ^d$, $\eta > 0$, a parameter $k \in \N$.
        \item[Output:]
        Either ``reject'' or non-negative weights $p_1, p_2, \ldots, p_n$ s.t. $p_i \leq \frac{1}{(1-\eta)n} \text{  } \forall i$ and $\sum_i p_i = 1$. 
        \item[Operation:]
         \leavevmode
         \begin{enumerate}
         \item Find a pseudo-distribution $\tzeta$ of degree $O(k)$ (\avnote{Is this $2k$?}) satisfying the constraint system $\cA_{C,k,\eta,n}(Y)$. If such a pseudo-distribution does not exist, then return ``reject.'' 
         \item Output weights $p \in [0,1]^n$ defined by $p_i = \frac{\pE_{\tzeta}[w_i]}{\sum_{i = 1}^n \pE_{\tzeta}[w_i]}$ for each $i$.  
            \end{enumerate} 
        \end{description}
      \end{algorithm}
\end{mdframed}

\paragraph{Analysis of the witness-producing robust estimation algorithm}
Robust estimation algorithms that rely on the use of semidefinite programming are all analyzed under distributional assumptions on the input set of points. Roughly speaking, such algorithms search over set of points that have a large enough intersection with the input corrupted sample and satisfy certain relevant property of the underlying family of distributions. In order to obtain privacy guarantee that holds for worst-case inputs, we need to upgrade the analyses of such algorithms so that they not only provide estimates of the target parameters, but also explicitly produce ``witnesses''---these are subsets of the input corrupted sample that define distributions with the estimated parameters and further, satisfy the relevant property of the underlying family of distributions. 

In this section, we verify that such a stronger guarantee can be obtained for robust moment estimation algorithm of~\cite{KS17}. Formally, their algorithm succeeds as long as the input is an $\eta$-corruption of a certifiably subgaussian set.

The following guarantees for the algorithm above were shown in~\cite{KS17}. 
\begin{fact}[Lemmas 4.4, 4.5, and 4.8 in~\cite{KS17}] \label{fact:ks-analysis}
Let $X \subseteq \R^d$ be a set of size $n$ that is $2k$-certifiably $C$-subgaussian with mean $\mu_*$, covariance $\Sigma_*$ and $t$-th moment $M^{(t)}_*$ for $t$ evenly dividing $2k$. Let $Y$ be an $\eta$-corruption of $X$. Then, for $\mu' = \frac{1}{n} \sum_i x_i'$, $\Sigma' = \frac{1}{n} \sum_i (x_i-\mu')(x_i-\mu')^{\top}$, and ${M^{(t)}}'= \frac{1}{n} \sum_i {x_i'}^{\otimes t}$, we have:
\[
\cA \sststile{2k}{u} \Set{ \iprod{\mu'-\mu_*,u}^{2k} \leq O(C^{k}k^k) u^{\top} \Sigma_* u^{k}}\mcom
\]
\[
\cA \sststile{2k}{u} \Set{ \iprod{\Sigma'-\Sigma_*,u^{\otimes 2}}^{k} \leq O(C^{k}k^{k}) u^{\top} \Sigma_* u^{k}}\mcom
\]
\[
\cA \sststile{2k}{u} \Set{ \iprod{{M^{(t)}}'-M^{(t)}_*,u^{\otimes t}}^{2k/t} \leq O(C^{k}k^{k}) u^{\top} \Sigma_* u^{k}}\mper
\]
\end{fact}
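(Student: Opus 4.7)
My plan is to follow the argument of Lemmas~4.4, 4.5, and 4.8 of~\cite{KS17}, verifying that it applies to the constraint system $\cA$ as stated here. Throughout the argument, I will introduce an auxiliary ``ground-truth'' indicator vector $w^* \in \{0,1\}^n$ defined by $w^*_i = \Ind[y_i = x_i]$; note that $w^*$ is a fixed $\{0,1\}$ vector used inside the SoS proof (not an indeterminate of $\cA$). The hypothesis that $Y$ is an $\eta$-corruption of $X$ gives $\sum_i (1 - w^*_i) \leq \eta n$, which combined with the $\cA$-constraints $w_i(x'_i - y_i) = 0$ and $\sum_i w_i \geq (1-\eta)n$ yields two polynomial consequences I will use repeatedly: the identity $w_i w^*_i (x'_i - x_i) = 0$ (since the product forces $x'_i = y_i = x_i$) and the scalar inequality $\frac{1}{n}\sum_i (1 - w_i w^*_i) \leq 2\eta$.

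For the mean bound, I first observe that
\[
\iprod{\mu' - \mu_*, u} \;=\; \frac{1}{n}\sum_{i=1}^n (1 - w_i w^*_i)\iprod{x'_i - x_i, u},
\]
since indices with $w_i w^*_i = 1$ contribute zero. Applying SoS H\"older (\cref{fact:sos-holder}) with $p = 2k$, $f_i = (1-w_i w^*_i)\iprod{x'_i - x_i, u}$, $g_i = (1-w_i w^*_i)$, and using the Boolean identity $(1-w_i w^*_i)^2 = (1-w_i w^*_i)$ to collapse $f_i^{2k}$ and $g_i^{2k}$, produces
\[
\iprod{\mu' - \mu_*, u}^{2k} \;\leq\; (2\eta)^{2k-1} \cdot \frac{1}{n}\sum_{i=1}^n \iprod{x'_i - x_i, u}^{2k}.
\]
The SoS almost-triangle inequality (\cref{fact:sos-almost-triangle}) then splits $x'_i - x_i = (x'_i - \mu') + (\mu' - \mu_*) - (x_i - \mu_*)$; the first resulting average is bounded via constraint~(5) of $\cA$ (bootstrapped from degree $k$ to degree $2k$ using another round of SoS H\"older together with \cref{lem:sos-cancel}), the third average is bounded directly by the $2k$-certifiable subgaussianity of $X$, and the middle, self-referential $\iprod{\mu' - \mu_*, u}^{2k}$ term is absorbed on the left for $\eta$ sufficiently small.

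For the covariance and the $t$-th moment bounds I would follow the same skeleton, replacing the scalar polynomial $\iprod{x'_i - x_i, u}$ by the $s$-th power difference $\iprod{x'_i, u}^s - \iprod{x_i, u}^s$, for $s = 2$ and $s = t$ respectively. The identity $w_i w^*_i(x'_i - x_i) = 0$ still propagates to $w_i w^*_i\bigl(\iprod{x'_i,u}^s - \iprod{x_i,u}^s\bigr) = 0$, so the analogous H\"older step now yields a factor $(2\eta)^{2k/s - 1}$ times an average of degree-$2k$ monomials in $u$, each of which is controlled either by the subgaussianity of $X$ or by constraint~(5) applied to $X'$. The mean-shift corrections that appear in the centered covariance version are handled using the mean bound just established together with \cref{lem:sos-cancel}, yielding the stated degree-$2k$ polynomial bounds with the leading factor $O(C^k k^k)(u^\top \Sigma_* u)^k$.

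\textbf{Main obstacle.} The delicate step is extracting degree-$2k$ control on the moments of $x'_i$ from constraint~(5) of $\cA$, which only directly supplies a degree-$k$ subgaussianity certificate; this has to be boosted to degree $2k$ via an iterated SoS H\"older plus cancellation argument, mirroring the bootstrap carried out in~\cite{KS17}. A secondary bookkeeping issue is ensuring that every intermediate manipulation stays within total SoS degree $2k$ in $u$, so that the resulting proof meets the degree bound claimed in the statement of the fact.
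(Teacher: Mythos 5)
Your outline reconstructs the argument of Lemmas 4.4, 4.5, and 4.8 of~\cite{KS17}, which this paper cites as a fact rather than reproving, and the ingredients you list --- the fixed ground-truth indicator $w^*_i$, the identity $w_iw^*_i(x'_i - x_i) = 0$ forced by constraint~(4), the SoS H\"older step at $p = 2k$ extracting a $(2\eta)^{2k-1}$ factor, the almost-triangle split $x'_i - x_i = (x'_i - \mu') + (\mu' - \mu_*) - (x_i - \mu_*)$, and the absorption of the self-referential $\iprod{\mu'-\mu_*,u}^{2k}$ term for small $\eta$ --- are precisely those of that proof, so your route matches. The ``main obstacle'' you flag (constraint~(5) apparently controlling only the $k$-th rather than the $2k$-th moment) is almost certainly a typographical slip in this paper's rendering of the constraint system: it should read $\frac{1}{n}\sum_i\iprod{x'_i-\mu',v}^{2k} \leq (Ck)^{k}\bigl(\frac{1}{n}\sum_i\iprod{x'_i-\mu',v}^2\bigr)^{k}$, consistent with \cref{def:cert-subgaussianity} and with the way the constraint is invoked to bound a genuine $2k$-th moment by $(Ck)^k$ times the $k$-th power of the second moment inside the proof of \cref{lem:witness-property-analysis}; with that reading the $k$-to-$2k$ bootstrap you anticipate becomes unnecessary and the whole argument closes at SoS degree $2k$ in $u$ as claimed.
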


\begin{lemma}[Guarantees for Witness-Producing Robust Moment Estimation Algorithm] \label{lem:witness-producing-robust-moment-estimation}
Given a subset of of $n$ points $Y \subseteq \bbQ^d$ whose entries have bit complexity $B$, Algorithm~\ref{algo:robust-moment-estimation-witness-producing} runs in time $(Bn)^{O(k)}$ and either (a.) outputs ``reject,'' or (b.) returns a sequence of weights $0 \leq p_1, p_2, \ldots, p_n$ satisfying $p_1 + p_2 + \cdots + p_n = 1$.

Moreover, if $X \subseteq \R^d$ is $2k$-certifiably $C$-subgaussian with mean $\mu_*$, covariance $\Sigma_*$ and in general, $t$-th moment tensor $M^{(t)_*}$ such that $|Y \cap X| \geq (1-\eta)n$, then \cref{algo:robust-moment-estimation-witness-producing} never rejects, and the corresponding estimates $\hat{\mu} = \frac{1}{n} \sum_i p_i y_i$ and $\hat{\Sigma} = \sum_{i = 1}^n p_i (y_i -\hat{\mu})(y_i - \hat{\mu})^{\top}$ satisfy the following guarantees for $\beta_t = O(C^{t/2}k^{t/2})\eta^{1-t/2k}$ for $t \leq k$:
\begin{enumerate}
   \item \textbf{Mean Estimation: } \[
    \forall u \in \R^d, \text{  } \iprod{\hat{\mu}-\mu_*,u} \leq O(\sqrt{Ck}) \eta^{1-1/2k} \sqrt{ u^{\top} \Sigma_*u}\mcom
    \]
    \item \textbf{Covariance Estimation: }\[
     (1-\beta_2) \Sigma_* \preceq \hat{\Sigma} \preceq (1+\beta_2)\Sigma_* \mcom
    \]
    \item \textbf{Moment Estimation: } For all even $t < 2k$ such that $t$ divides $2k$,
    \[
    \forall u \in \R^d, \text{  } (1-\beta_t) \iprod {u^{\otimes t}, M^{(t)}_*} \leq  \iprod{u^{\otimes t}, \hat{M}^{(t)}} \leq (1+\beta_t) \iprod {u^{\otimes t}, M^{(t)}_*}
    \]
    \item \textbf{Witness: } For $C' \leq C(1 + O(\eta^{1-1/k}))$, 
    \[
   \sststile{}{} \Set{ \frac{1}{n} \sum_{i=1}^n p_i \iprod{y_i - \hat{\mu}}^{2k} \leq  (C'k)^k \Paren{\frac{1}{n} \sum_{i=1}^n p_i \iprod{y_i - \hat{\mu}}^{2}}^k }
    \]
\end{enumerate}
\end{lemma}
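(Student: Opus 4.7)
The plan is to reduce the estimation guarantees to the SoS inequalities of \cref{fact:ks-analysis} (the KS analysis), adding two ingredients: (i) a translation between the pseudo-expectations of the $\tfrac{1}{n}$-averages $\mu',\Sigma',{M^{(t)}}'$ used in \cite{KS17} and the normalized $p$-weighted quantities $\hat\mu,\hat\Sigma,\hat M^{(t)}$ on $Y$, and (ii) a new argument to establish the witness property. The runtime and output-format claims are routine: \cref{fact:finding-pseudo-distributions} gives an SDP solver of complexity $(Bn)^{O(k)}$, while the Booleanity constraint $w_i^2 = w_i$ together with $\sum_i w_i \geq (1-\eta)n$ force $\pE_{\tzeta}[w_i]\in[0,1]$ and $\sum_i \pE_{\tzeta}[w_i]\geq(1-\eta)n$, so the output weights are nonnegative, sum to one, and are each bounded by $1/((1-\eta)n)$. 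For non-rejection, the Dirac distribution supported on $w_i = \1[y_i \in X\cap Y]$ and $x_i' = x_i$ satisfies every constraint of $\cA$ exactly; the only nontrivial verification is the subgaussianity constraint, which is precisely the $2k$-certifiable $C$-subgaussianity certificate of $X$ itself.

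For the parameter estimation, the strategy is to take pseudo-expectations of the SoS inequalities provided by \cref{fact:ks-analysis}, and then use the polynomial identity $w_i(x_i'-y_i)=0$ (which is a constraint of $\cA$) to replace $x_i'$ by $y_i$ wherever it is multiplied by $w_i$. This gives $\pE_{\tzeta}[w_i(x_i')^{\otimes t}] = \pE_{\tzeta}[w_i]\,y_i^{\otimes t}$, and hence $\sum_i p_i y_i^{\otimes t}$ equals $\pE_{\tzeta}[Z^{-1}\sum_i w_i (x_i')^{\otimes t}]$ with $Z = \sum_i \pE_{\tzeta}[w_i] \in [(1-\eta)n,\,n]$. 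One must then relate this $w$-weighted, $Z$-normalized average to the $\tfrac{1}{n}$-normalized unweighted averages $\mu',\Sigma',{M^{(t)}}'$ controlled by \cref{fact:ks-analysis}. Since a $(1-\eta)$ fraction of the weights are close to $1$ in pseudo-expectation, the difference amounts to a residual mass on at most $\eta n$ of the $x_i'$'s, which can itself be bounded using the subgaussianity SoS constraint. Pulling the bounds through SoS H\"older (\cref{fact:sos-holder}) yields the claimed Mahalanobis bound $O(\eta^{1-1/2k})$ for the mean, the multiplicative spectral bound $O(\eta^{1-1/k})$ for the covariance, and the corresponding affine-invariant bound for higher moments.

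For the witness property, the approach is to combine the subgaussianity constraint of $\cA$ (which controls the $2k$-th moments of the $x_i'$'s in SoS) with the covariance estimate just established. Using $w_i x_i' = w_i y_i$, the subgaussianity certificate on the $x_i'$'s can be converted into an SoS inequality in the indeterminate $v$ of the form
\[
\frac{1}{n}\sum_i \pE_{\tzeta}[w_i]\,\iprod{y_i-\hat\mu, v}^{2k} \;\leq\; (Ck)^k \Bigl(\frac{1}{n}\sum_i \pE_{\tzeta}[w_i]\,\iprod{y_i-\hat\mu, v}^{2}\Bigr)^k,
\]
up to a multiplicative $(1+O(\eta^{1-1/k}))$ error arising from substituting $\hat\mu$ for $\pE_{\tzeta}[\mu']$ (absorbed by the mean estimate together with the SoS cancellation lemma, \cref{lem:sos-cancel}); dividing both sides by $Z$ rewrites the inequality in the $p_i$ normalization and folds the factor into $C' = C(1+O(\eta^{1-1/k}))$. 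The main obstacle I expect is the second step: because we pass from $\tfrac{1}{n}$-normalized moments to $p$-normalized moments, the correction terms must be controlled in the correct affine-invariant norms rather than in an additive Euclidean sense, which requires chaining SoS H\"older with the cancellation lemma so that the multiplicative form of the perturbation is preserved throughout.
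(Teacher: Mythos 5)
Your proposal is correct and follows essentially the same strategy the paper uses: establish non-rejection by exhibiting the Dirac feasible point on the inlier indicators of $X\cap Y$, derive the mean/covariance/moment bounds by taking pseudo-expectations of the SoS inequalities in \cref{fact:ks-analysis} and controlling the passage from the $\tfrac{1}{n}$-normalized averages $\mu',\Sigma',{M^{(t)}}'$ to the $p$-normalized averages via the $w_i(x_i'-y_i)=0$ constraint and a Cauchy--Schwarz bound on the residual $(1-w_i)$ mass, and then verify the witness property by rewriting the subgaussianity constraint around $\hat\mu$ and absorbing the resulting multiplicative perturbation into $C'$. The one small point of divergence is in the denominator bound for the witness property: you propose to re-use the already-established covariance estimate $(1-\beta_2)\Sigma_* \preceq \hat\Sigma$ directly (which does yield a degree-2 SoS certificate in $u$), whereas the paper's Lemma~\ref{lem:witness-property-analysis} re-derives the lower bound on $\tfrac{1}{n}\sum_i \pE_{\tzeta}[w_i]\iprod{y_i-\hat\mu,u}^2$ from scratch inside the SoS proof via a separate Cauchy--Schwarz argument on the $(1-w_i)$ mass; the two routes give the same conclusion and yours is a mild streamlining rather than a different argument.
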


The first three properties follow easily from an analysis similar to the one in~\cite{KS17}. We verify the last property  below. 
\begin{lemma} \label{lem:witness-property-analysis}
Let $\tzeta$ be a pseudo-distribution of degree $O(k)$ consistent with $\cA$ on input $Y$ with outlier rate $\eta \ll 1/k$. 
Suppose there exists a $2k$-certifiably $C_1$-subgaussian distribution $X \subseteq \R^d$ with mean $\mu_*$ of size $n$ such that $|Y \cap X|\geq (1-\eta)n$. Then, for $\eta \leq \eta_0$ for some absolute constant $\eta_0$ and for $\hat{\mu} =\frac{1}{W} \sum_{i = 1}^n  \pE_{\tzeta}[w_i] y_i$ where $W = \sum_{i =1}^n \pE[w_i]$, we have: 
\[
\sststile{2k}{u} \Set{\frac{1}{W} \sum_{i = 1}^n \pE_{\tzeta}[w_i] \iprod{y_i-\hat{\mu},u}^{2k} \leq (C' k)^k\Paren{\frac{1}{W} \sum_{i = 1}^n \pE_{\tzeta}[w_i] \iprod{y_i-\hat{\mu},u}^{2}}^k}\mcom
\]
for $C' \leq C (1+ O(\eta^{1-1/2k})k) \leq C+1$ for small enough $\eta$. 

\end{lemma}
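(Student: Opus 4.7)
The goal is to transfer the subgaussianity of the \emph{uniform} distribution on $\{x_i'\}$, which is directly encoded by constraint~5 of $\cA$, to subgaussianity of the \emph{weighted} distribution on $\{y_i\}$ with weights $p_i = \pE_{\tzeta}[w_i]/W$. The key bridging identity, obtained by combining $w_i^2 = w_i$ (hence $w_i^j = w_i$ for all $j \geq 1$) with $w_i(x_i' - y_i) = 0$, is
\[
w_i\iprod{y_i-\hat\mu,u} = \iprod{w_i y_i - w_i\hat\mu,u} = \iprod{w_i x_i' - w_i\hat\mu,u} = w_i\iprod{x_i'-\hat\mu,u},
\]
so raising to the $2k$-th power and using $w_i^{2k}=w_i$ gives $w_i\iprod{y_i-\hat\mu,u}^{2k} = w_i\iprod{x_i'-\hat\mu,u}^{2k}$ modulo $\cA$. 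Taking pseudo-expectations, the weighted $2k$-th central moment $M_w(u) := \frac{1}{W}\sum_i \pE_{\tzeta}[w_i]\iprod{y_i-\hat\mu,u}^{2k}$ equals $\frac{1}{W}\pE_{\tzeta}\!\left[\sum_i w_i\iprod{x_i'-\hat\mu,u}^{2k}\right]$, and analogously for the weighted second moment $V_w(u)$. Thereafter all SoS manipulations take place inside the pseudo-expectation, where the constraints of $\cA$ are available.

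\textbf{Upper bound on $M_w$.} Since $1-w_i = (1-w_i)^2$ modulo $w_i^2=w_i$, the inequality $\sum_i w_i q_i \leq \sum_i q_i$ is SoS for any nonnegative SoS polynomial $q_i$; combined with $W \geq (1-\eta)n$ (constraint~2), this reduces the upper bound to $\frac{n}{W}\cdot\frac{1}{n}\sum_i\iprod{x_i'-\hat\mu,u}^{2k}$. To change the center from $\hat\mu$ to $\mu'$, I would expand $\iprod{x_i'-\hat\mu,u}^{2k} = (\iprod{x_i'-\mu',u} + \iprod{\mu'-\hat\mu,u})^{2k}$ binomially, handle odd cross terms via Proposition~\ref{prop:even-vs-odd-monomials}, and bound the pure $\iprod{\mu'-\hat\mu,u}^{2k}$ contribution using Fact~\ref{fact:ks-analysis} (which gives $\iprod{\mu'-\mu_*,u}^{2k} \leq O(C^k k^k)(u^\top\Sigma_*u)^k$ and an analogous bound on $\iprod{\hat\mu-\mu_*,u}^{2k}$ via the same KS17 analysis applied to the weighted distribution). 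Constraint~5 then yields $\frac{1}{n}\sum_i\iprod{x_i'-\mu',u}^{2k} \leq (Ck)^k V_1(u)^k$, where $V_1(u) := \frac{1}{n}\sum_i\iprod{x_i'-\mu',u}^2$.

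\textbf{Lower bound on $V_w$.} For the matching lower bound on the weighted second moment, I would show $V_1 \leq (1 + O(\eta^{1-1/k}Ck))\,V_w$ via an SoS Hölder argument (Fact~\ref{fact:sos-holder}): the contribution of the ``deleted'' mass $(1-w_i)$ to $V_1 - V_w$ is at most
\[
\frac{1}{n}\sum_i(1-w_i)\iprod{x_i'-\mu',u}^2 \leq \Big(\tfrac{1}{n}\textstyle\sum_i(1-w_i)\Big)^{1-1/k}\Big(\tfrac{1}{n}\textstyle\sum_i\iprod{x_i'-\mu',u}^{2k}\Big)^{1/k} \leq \eta^{1-1/k}\cdot(Ck)\cdot V_1,
\]
where the last step uses constraint~5. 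After also translating the center from $\mu'$ to $\hat\mu$ (again by binomial expansion, controlled by the Mahalanobis bound on $\iprod{\mu'-\hat\mu,u}$), combining the two bounds and taking $k$-th roots gives $M_w(u) \leq (C'k)^k V_w(u)^k$ as SoS in $u$ with $C' \leq C(1 + O(\eta^{1-1/2k})k)$.

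\textbf{Main obstacle.} The principal subtlety is the change-of-center step: a crude application of the SoS almost-triangle inequality (Fact~\ref{fact:sos-almost-triangle}) introduces a $2^{2k-1}$ factor that destroys the claimed constant. This must be replaced by the finer binomial expansion above, where odd cross terms are converted to even monomials via Proposition~\ref{prop:even-vs-odd-monomials} (and Proposition~\ref{prop:mixed-vs-pure-monomials} for the pure bounds), and then absorbed into the $(Ck)^k$ factor using the tight Mahalanobis estimate $\iprod{\mu'-\hat\mu,u}^{2k} = O(C^k k^k)\eta^{2k-1}(u^\top\Sigma_*u)^k$ from Fact~\ref{fact:ks-analysis}. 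The exponent $\eta^{1-1/2k}$ in the final constant arises precisely from propagating this Mahalanobis bound through the expansion.
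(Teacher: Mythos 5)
Your proposal follows essentially the same route as the paper's proof: the implicit bridging identity $w_i\iprod{y_i-\hat\mu,u}^{2k}=w_i\iprod{x_i'-\hat\mu,u}^{2k}$ (from $w_i x_i'=w_i y_i$ and $w_i^{2k}=w_i$), dropping the $w_i$ to pass to the uniform average, the binomial change of center from $\hat\mu$ to $\mu'$ with odd cross terms handled by \cref{prop:even-vs-odd-monomials} and mixed even terms by \cref{prop:mixed-vs-pure-monomials}, the Mahalanobis bound $\pE[\iprod{\mu'-\hat\mu,u}^{2k}]\lesssim (Ck)^k\eta^{2k-1}(u^\top\Sigma_*u)^k$ to absorb the error into the constant, constraint~5 for the $j=0$ term, and then a matching lower bound on the weighted second moment by controlling the deleted-mass contribution. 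The only substantive difference is in that last step: you invoke SoS \Hoelder with exponent $k$ (yielding an $\eta^{1-1/k}$-type loss via the $2k$-th moment), whereas the paper uses SoS Cauchy--Schwarz with the fourth moment (yielding $O(\sqrt\eta)$); both are adequate for the claimed $C'\le C+1$ at small $\eta$, so this is a cosmetic choice rather than a different argument. You have also correctly identified the key pitfall — that a naive almost-triangle inequality would introduce an unaffordable $2^{2k-1}$ factor — which is exactly the subtlety the paper's binomial expansion is designed to avoid.
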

\begin{proof}
We have:
\begin{equation*}
\Paren{\frac{1}{n} \sum_{i = 1}^n \pE_{\tzeta}[w_i] \iprod{y_i - \hat{\mu},u}}^{2k} = \frac{1}{n} \sum_{i = 1}^n \pE_{\tzeta}[w_i\iprod{x_i'-\hat{\mu},u}^{2k}] \leq \frac{1}{n} \sum_{i = 1}^n \pE_{\tzeta}[ \iprod{x_i'-\mu'+ \mu'-\hat{\mu},u}^{2k}]
\end{equation*}
The first term on the right-hand side above is at most $(Ck)^k \pE_{\tzeta}[(\frac{1}{n} \sum_{i = 1}^n \iprod{x_i'-\mu',u}^{2})^k] \leq (C(1+ O(\eta^{1-1/2k})k)^k u^{\top}\Sigma_* u^k$ using certifiable subgaussianity constraints and Fact~\ref{fact:ks-analysis}. 

Let us analyze the 2nd term above. 

\begin{align}
&\frac{1}{n} \sum_{i = 1}^n \pE_{\tzeta}[ \iprod{x_i'-\mu'+ \mu'-\hat{\mu},u}^{2k}] \\
&= \frac{1}{n} \sum_{i = 1}^n \pE_{\tzeta}[ \iprod{x_i'-\mu',u}^{2k}] + 2k \frac{1}{n} \sum_{i = 1}^n \pE_{\tzeta}[ \iprod{x_i'-\mu',u}^{2k-2} \iprod{\mu'-\hat{\mu}, u}^2 \\
&+ \sum_{j = 2}^{2k} {{2k} \choose j} \frac{1}{n} \sum_{i = 1}^n \pE_{\tzeta}[ \iprod{x_i'-\mu',u}^{2k-j} \iprod{\mu'-\hat{\mu}, u}^j]\\
&\leq \frac{1}{n} \sum_{i = 1}^n \pE_{\tzeta}[ \iprod{x_i'-\mu',u}^{2k}] + 2k \frac{1}{n} \sum_{i = 1}^n (\pE_{\tzeta}[ \iprod{x_i'-\mu',u}^{2k})^{(2k-2)/2k} (\pE_{\tzeta}\iprod{\mu'-\hat{\mu}, u}^{2k})^{1/2k} \\
&+ \sum_{j = 2}^{2k} {{2k} \choose j} \frac{1}{n} \sum_{i = 1}^n \pE_{\tzeta}[ \iprod{x_i'-\mu',u}^{2k-j} \iprod{\mu'-\hat{\mu}, u}^j] \label{eq:first-bound-transference}
\end{align}
Here, in the 2nd inequality, we used the Hölder's inequality for pseudo-distributions. Let us analyze the 2nd term in the right-hand side above by observing the following that uses the bounds from Fact~\ref{fact:ks-analysis}:
\begin{align}
&\pE_{\tzeta}[\iprod{\mu'-\hat{\mu},u}^{2k}] \leq 2^{2k} [\pE_{\tzeta}[\iprod{\mu'-\mu_*,u}^{2k}] + 2^{2k} \iprod{\mu_*-\hat{\mu}, u}^{2k}\\
&\leq 2^{2k} (Ck)^k \eta^{2k-1}  u^{\top} \Sigma_* u^k + 2^{2k} (Ck)^k \eta^{2k-1} (1+\beta_2)^k u^{\top} \Sigma_* u^k
\end{align}

This allows us to infer that the 2nd term in \eqref{eq:first-bound-transference} is at most $\pE_{\tzeta}[\frac{1}{n} \sum_{i = 1}^n \iprod{x_i'-\mu',u}^{2k}]^{(2k-2)/2k} \cdot (5Ck)^{1/2} \eta^{1-1/2k} \sqrt{u^{\top} \Sigma_* u} \leq  O(k) (Ck)^k \eta^{1-1/2k} u^{\top} \Sigma_* u^k$ using certifiable subgaussianity constraints and Fact~\ref{fact:ks-analysis}. 

Let's now analyze the terms corresponding to $j\geq 2$ in the right-hand side of \eqref{eq:first-bound-transference}. Each of these terms corresponds to a ``mixed monomial'' in $\iprod{x_i'-\mu',u}$ and $\iprod{\mu'-\mu,u}$. Let us first analyze the even individual degree terms. 

First observe that by Hölder's inequality for pseudo-distributions again, we have:
\begin{equation} \label{eq:basic-even-bound}
\frac{1}{n} \sum_{i = 1}^n \pE_{\tzeta}[ \iprod{x_i'-\mu',u}^{2k-2} \iprod{\mu'-\hat{\mu}, u}^{2}] \leq  \iprod{x_i'-\mu',u}^{2k})^{(k-1)/k} (\pE_{\tzeta}\iprod{\mu'-\hat{\mu}, u}^{2k})^{1/k}\mper
\end{equation}
By an analysis similar to the case of the first term on the right-hand side of \eqref{eq:first-bound-transference} above, we obtain that the right-hand side is at most: $O(1) (Ck)^k (\eta^{1-1/2k})^2 u^{\top} \Sigma_* u^k$.

Next, let's analyze all terms corresponding to even $j$. By Proposition~\ref{prop:mixed-vs-pure-monomials}, we have:

\begin{align*}
\frac{1}{n} \sum_{i = 1}^n \pE_{\tzeta}[ \iprod{x_i'-\mu',u}^{2k-2j} \iprod{\mu'-\hat{\mu}, u}^{2j}] &\leq  \frac{1}{n} \sum_{i = 1}^n (\pE_{\tzeta}[ \iprod{\mu'-\hat{\mu}, u}^{2} \iprod{x_i'-\mu',u}^{2k-2j} \iprod{\mu'-\hat{\mu}, u}^{2j-2}]\\
&\leq \frac{2k}{n} \sum_{i = 1}^n (\pE_{\tzeta}[ \iprod{\mu'-\hat{\mu}, u}^{2} (\iprod{x_i'-\mu',u}^{2k-2} + \iprod{\mu'-\hat{\mu}, u}^{2k-2})]
\end{align*}
The first term can now be upper bounded by the bound for \eqref{eq:basic-even-bound} and the 2nd term by an application of Fact~\ref{fact:ks-analysis}. 

The case of odd terms is similar with the first step using Proposition~\ref{prop:even-vs-odd-monomials}. 

Altogether, we obtain an upper bound of $(C(1+ O(\eta^{1-1/2k})k)^k u^{\top}\Sigma_* u^k$. 




On the other hand, using the sum-of-squares version of the Cauchy-Schwarz inequality along with the almost triangle inequality and invoking Fact~\ref{fact:ks-analysis} we have:
\begin{align*}
\cA &\sststile{2k}{u} \Biggl\{ \Paren{\frac{1}{n} \sum_{i = 1}^n (1-w_i) \iprod{x_i'-\hat{\mu},u}^2}^2 \leq \Paren{\frac{1}{n} \sum_{i = 1}^n (1-w_i)^2} \frac{1}{n} \sum_{i = 1}^n \iprod{x_i'-\hat{\mu},u}^4\\
 &\leq 16\eta C^2 \Paren{\frac{1}{n} \sum_{i = 1}^n \iprod{x_i'-\mu',u}^2 + \frac{1}{n} \sum_{i = 1}^n \iprod{\mu'-\hat{\mu},u}^2}^2 \leq 20 \eta C^2 (1+\beta_2)^2 u^{\top} \Sigma_* u^2 \Biggr\}
\end{align*}

Thus, 
\begin{align*}
\cA &\sststile{2k}{u} \Biggl\{ \Paren{\frac{1}{n} \sum_{i = 1}^n \pE_{\tzeta}[w_i \iprod{y_i-\hat{\mu},u}^2}^2= \Paren{\frac{1}{n} \sum_{i = 1}^n \pE_{\tzeta}[\iprod{y_i-\hat{\mu},u}^2}^2] - \Paren{\frac{1}{n} \sum_{i = 1}^n \pE_{\tzeta}[(1-w_i) \iprod{y_i-\hat{\mu},u}^2}^2 n\\
&\geq (1-O(Ck)\eta^{1-1/k}-80 \eta C^2 ) u^{\top} \Sigma_* u  \mper
\Biggr\}
\end{align*}
The lemma now follows immediately for small enough fixed constant $\eta$.

\end{proof}

\subsection{Private Robust Moment Estimation} \label{subsec:privaterobustalg}
We are now ready to present our main algorithm for private robust moment estimation. Our algorithm uses the witness-producing algorithm (Algorithm~\ref{algo:robust-moment-estimation-witness-producing}) as a major building block while augmenting it to search for pseudo-distributions that, in addition to satisfying the relevant set of constraints, also minimize an appropriate strongly convex potential function. We define the relevant potential function $\Pot$ below in \cref{def:potentialfunc}.

\begin{definition}[Potential Function] \label{def:potentialfunc}
Let $C > 0$ and $n, k\in\N$.
For any pseudo-distribution $\tzeta$ of degree $2$ consistent with $\cA_{C,k,\eta,n}(Y)$ for outlier rate $\eta$ and input $Y\subseteq \R^d$, let $\Pot_{\eta,\tzeta}^{C,k,n}(Y)$ be defined as $\Norm{\pE_{\tzeta}[w]}_2^2$. Furthermore, let $\Pot_{\eta}^{C,k,n}(Y) = \min_{\tzeta \text{ sat } \cA_{C,k,\eta,n}(Y)} \Pot_{\eta,\tzeta}^{C,k,n}(Y)$ be the minimum value of the potential as $\tzeta$ ranges over all pseudo-distributions of degree $2t$ consistent with $\cA_{C,k,\eta,n}(Y)$.  If no such pseudo-distribution exists, set $\Pot_{\eta}(Y) = \infty$.

When $C,n,k$ are understood from context, we may suppress these parameters and simply write $\Pot_\eta$ and $\Pot_{\eta,\tzeta}$.
\end{definition}

Now, we are ready to describe our main private robust moment algorithm, which is listed as Algorithm~\ref{algo:private-robust-moment-estimation}. The algorithm consists of three main steps. In the first step, the randomized DP selection algorithm (\Cref{thm:dp-apx-selection}) is used to pick an outlier rate (according to a suitable scoring function, as defined below in \Cref{def:em-score}). The second step invokes the witness-producing algorithm (Algorithm~\ref{algo:robust-moment-estimation-witness-producing}) with the outlier rate chosen in step 1, after which one checks that the outputted weights induce a certifiably subgaussian distribution on the input dataset $Y$. Finally, in the last step, one takes the estimates of the mean, covariance, and higher moments provided by the resulting weight vector and adds suitable noise to guarantee differential privacy.

\begin{mdframed}
      \begin{algorithm}[Private Robust Moment Estimation]
        \label{algo:private-robust-moment-estimation}\mbox{}
        \begin{description}
        \item[Given:]
        A set of points $Y = \{y_1, y_2, \ldots, y_n\} \subseteq \bbQ^d$, parameters $C, \eta, \epsilon,\delta > 0$, $L, k \in \N$.
        \item[Output:]
        Estimates $\hat{\mu}$, $\hat{\Sigma}$, and $\hat{M}^{(t)}$ ($3\leq t\leq k$) for mean, covariance, and $t$-moments.
        \item[Operation:]\mbox{}
        \begin{enumerate}
         \item  \textbf{Stable Outlier Rate Selection: } Use the $(\eps/3, \delta/3)$-DP \SelectionAlg with $\kappa = L/2$ to sample an integer $\tau \in [\eta n]$ with the scoring function as defined in~\Cref{def:em-score}. If $\tau = \perp$, then reject and halt. Otherwise, let $\eta' = \tau / n$. \label{step:outlier-rate-selection}
      	  \item \textbf{Witness Checking: }Compute a pseudo-distribution $\tzeta$ of degree $2k$ satisfying $\cA_{C,k,\eta',n}(Y)$ and minimizing $\Pot_{\eta',\tzeta}(Y)$. Let $\gamma \sim \tLap\left(-\left(1 + \frac{3\ln\left(3/\delta\right)}{\eps}\right),3/\eps\right)$ and $C' = C+\gamma$. Check that the weight vector $p= \pE_{\tzeta}[w]$ induces a $C'$-certifiably subgaussian distribution on $Y$. If not, reject immediately. Otherwise, let $\widetilde{\mu} = \pE_{\tzeta}[\mu]$, $\widetilde{\Sigma} = \pE_{\tzeta}[\Sigma]$, and $\widetilde{M}^{(t)} = \pE_{\tzeta}[M^{(t)}]$ (for all even $t < 2k$ such that $t$ divides $2k$) be the mean, covariance, and $t^\text{th}$ moment estimates, respectively, that are induced by the pseudo-distribution $\tzeta$.  \label{step:witness-checking} 
          \item \textbf{Noise Addition: } \label{step:noise-addition} Let $\gamma_1 = O(C'k) (L/n)^{\frac{1}{2}\left(1-\frac{1}{2k}\right)}$ and $\gamma_t = O((C'k)^{t/2}) (L/n)^{\frac{1}{2}\left(1-\frac{t}{2k}\right)}$ for $t \geq 2$. Let $z \sim \cN(0,\sigma_1)^d$ and $Z \sim \cN(0,\sigma_2)^{{d+1}\choose 2}$, where we interpret $Z$ as a symmetric $d \times d$ matrix with i.i.d. entries in the upper triangular portion. Similarly, for $t \geq 2$, let $Z^{(t)} \sim \cN(0,\sigma_t)^{{d+(t-1)} \choose t}$, where we interpret $Z$ as a symmetric $\underbrace{d\times d \times \cdots d}_{\text{$t$ times}}$ tensor with ${d+(t-1)} \choose t$ independent ``upper-triangular'' entries. Moreover, let 
          \[
          \begin{cases}
          \sigma_j = 6k\epsilon^{-1}\gamma_j d^{\frac{t-1}{2}} \sqrt{2\ln(7.5k/\delta)}, \quad\text{for $j=1,2$} \\
          \sigma_j = 6k\epsilon^{-1}\gamma_j (C'k)^t d^{\frac{t-1}{2}} \sqrt{2\ln(7.5k/\delta)}, \quad\text{for $j > 2$}
          \end{cases}\mper
          \]
          Then, output:
          \begin{itemize}
           \item $\hat{\mu} = \widetilde{\mu} + \widetilde{\Sigma}^{1/2} z$.
           \item $\hat{\Sigma} = \widetilde{\Sigma} + \widetilde{\Sigma}^{1/2}Z \widetilde{\Sigma}^{1/2}$.
           \item $\hat{M}^{(t)} = \widetilde{M}^{(t)} + ((\widetilde{\Sigma} + \widetilde{\mu}\widetilde{\mu}^T)^{1/2})^{\otimes t} Z^{(t)}$, for all even $t < 2k$ such that $t$ divides $2k$.
          \end{itemize}
        \end{enumerate}

        \end{description}
      \end{algorithm}
\end{mdframed}

\subsection{Privacy Analysis} \label{subsec:privacyanalysis}
Our analysis of the privacy of \Alg is based on a sequence of claims about each of the steps of \Alg that cumulatively establish the stability of the behavior of \Alg on adjacent inputs $Y,Y'$. We will rely on the following simple but key observation in our analysis. 
It is easy to verify using the definition of pseudo-distributions. 
\begin{lemma}[Adjacent Pseudo-distributions] \label{lem:adjacent-pseudo-distribution}
Let $\tzeta$ be a pseudo-distribution of degree $2k$ that satisfies all the constraints in \ref{box:constraints} on input $Y = \{y_1, y_2, \ldots, y_n\}$ with outlier rate $\eta$. Let $Y' \subseteq \R^d$ be adjacent to $Y$. Define an \emph{adjacent pseudo-distribution} $\tzeta'$ (that ``zeroes out $w_i$'') by $\pE_{\tzeta'}[ w_S p(X',\cdots)] = \pE_{\tzeta}[w_S p(X',\cdots)]$ if $i \not \in S$ and $\pE_{\tzeta'}[ w_S p(X',\cdots)] = 0$ if $i \in S$ for every polynomial $p$ in $X'$ and other auxiliary indeterminates in $\cA$. Then, $\tzeta'$ is a pseudo-distribution of degree $2k$ that satisfies all the constraints in \ref{box:constraints} on both inputs $Y'$ and $Y$ with outlier parameter $\eta+1/n$. 
\end{lemma}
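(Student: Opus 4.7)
The plan is to reformulate the definition of $\tzeta'$ cleanly and then verify the pseudo-distribution property and each of the five constraints in turn. Using the Boolean relation $w_j^2=w_j$ (satisfied by $\tzeta$) to reduce any polynomial to the form $\sum_S w_S q_S$ with $q_S$ free of $w_j$'s, the given rule extends by linearity to
\[
\pE_{\tzeta'}[q] \;=\; \pE_{\tzeta}\bigl[q|_{w_i=0}\bigr]
\]
for every polynomial $q$ in the indeterminates of $\cA$; that is, $\pE_{\tzeta'}$ is $\pE_{\tzeta}$ precomposed with the substitution $w_i\mapsto 0$. From this, $\pE_{\tzeta'}[1]=1$ is immediate, and positivity $\pE_{\tzeta'}[p^2]=\pE_{\tzeta}[(p|_{w_i=0})^2]\ge 0$ for any $p$ of degree at most $k$ follows because $p|_{w_i=0}$ has degree at most $k$ and $\tzeta$ is a degree-$2k$ pseudo-distribution.

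For the constraints I would take $i$ to be the unique index where $Y$ and $Y'$ differ (the intended reading). Constraint 1 ($w_j^2=w_j$) is trivial for $j=i$ (both sides vanish under the substitution) and inherited from $\tzeta$ for $j\neq i$. Constraint 3 ($\mu'=\tfrac{1}{n}\sum_j x_j'$) and constraint 5 (the certifiable subgaussianity inequality) involve none of the $w_j$'s and so are inherited from $\tzeta$; here one uses that if $s$ is SoS then so is $s|_{w_i=0}$, since substituting a value into a sum of squares still yields a sum of squares. Constraint 4 ($w_j(x_j'-y_j)=0$) is trivial for $j=i$ since $w_i$ is zeroed out, and for $j\neq i$ the constraint is identical on $Y$ and $Y'$ (they agree at index $j$), so it is inherited from $\tzeta$.

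The main content is constraint 2, where the $1/n$ slack appears. For any SoS polynomial $s$ of appropriate degree, using the reformulation and the identity $(1-\eta)n-(1-\eta-\tfrac{1}{n})n=1$,
\[
\pE_{\tzeta'}\!\left[s\!\left(\sum_j w_j - (1-\eta-\tfrac{1}{n})n\right)\right] \;=\; \pE_{\tzeta}\!\left[(s|_{w_i=0})\!\left(\sum_j w_j-(1-\eta)n\right)\right] + \pE_{\tzeta}\!\bigl[(s|_{w_i=0})(1-w_i)\bigr].
\]
The first term is $\ge 0$ since $\tzeta$ satisfies the original $\eta$-outlier constraint and $s|_{w_i=0}$ is still SoS. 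For the second term, the Boolean identity $w_i(1-w_i)=w_i-w_i^2$, which vanishes in pseudo-expectation under $\tzeta$, yields $\pE_{\tzeta}[r(1-w_i)]=\pE_{\tzeta}[r(1-w_i)^2]$ for any polynomial $r$; hence the second term equals $\pE_{\tzeta}[(s|_{w_i=0})(1-w_i)^2]\ge 0$, being the pseudo-expectation of a product of an SoS polynomial with a square. The only real obstacle I anticipate is routine bookkeeping: checking that the substitution $w_i\mapsto 0$ preserves the SoS property and that all auxiliary polynomials fit within the degree-$2k$ budget demanded by the definition of ``satisfies the constraint.''
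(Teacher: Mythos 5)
Your proof is correct, and since the paper offers no proof of this lemma beyond the remark that ``it is easy to verify using the definition of pseudo-distributions,'' your direct verification is a faithful fleshing-out of exactly what the authors leave implicit. The key steps---recasting $\tzeta'$ as precomposition with the substitution $w_i\mapsto 0$, and handling constraint~2 by splitting off $\pE_{\tzeta}[(s|_{w_i=0})(1-w_i)]$ and converting $(1-w_i)$ to $(1-w_i)^2$ via the Boolean constraint---are correct, the degree bookkeeping does in fact close (e.g.\ $(p|_{w_i=0})^2$ has degree $\le 2k-2$, so the $\{w_i^2=w_i\}$ constraint applies, and $(p|_{w_i=0}(1-w_i))^2$ stays within degree $2k$), and the substitution manifestly preserves SoS.
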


This allows us to conclude the following basic calculus of our potential function:
\begin{lemma}[Basic Facts about $\Pot$] \label{lem:basic-props-of-pot}
Suppose that for some $Y \subseteq \R^d$ of size $n$, some $t \in \N$ and $\eta' \in [0,\eta_0/4]$, there is a pseudo-distribution of degree $2t$ consistent with $\cA$ on input $Y$. Then, for every $\eta\geq \eta'$, the following holds:
\begin{enumerate}
    \item \textbf{Monotonicity:} $\Pot_{\eta+1/n}(Y) \leq \Pot_{\eta}(Y)$. In particular, $\Pot$ is monotonically decreasing as its subscript increases.
    \item \textbf{Lower Bound: } $\Pot_{\eta}(Y) \geq (1-\eta)^2 n$. 
    \item \textbf{Upper Bound: } $\Pot_{\eta}(Y) \leq (1-\eta)n$.
\end{enumerate}
\end{lemma}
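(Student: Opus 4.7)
The plan is to verify each of the three claims in turn, with each following from direct consequences of the constraint system, together with the adjacency construction of \cref{lem:adjacent-pseudo-distribution}.

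For monotonicity, I will observe that among the constraints in $\cA_{C,k,\eta,n}(Y)$, only $\sum_i w_i \ge (1-\eta)n$ depends on $\eta$, and this constraint weakens as $\eta$ grows. Consequently, the set of feasible degree-$2t$ pseudo-distributions at outlier rate $\eta+1/n$ contains the corresponding set at rate $\eta$, so the minimum of $\Norm{\pE_{\tzeta}[w]}_2^2$ over the larger set can only be smaller than the minimum over the smaller one, immediately yielding $\Pot_{\eta+1/n}(Y)\le \Pot_{\eta}(Y)$.

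For the lower bound, the key step is first to verify that every feasible $\tzeta$ satisfies $\pE_{\tzeta}[w_i]\in[0,1]$: nonnegativity follows from $\pE_{\tzeta}[w_i]=\pE_{\tzeta}[w_i^2]\ge 0$ (pseudo-expectations of squares are nonnegative), and the upper bound $\pE_{\tzeta}[w_i]\le 1$ follows from pseudo-Cauchy-Schwarz applied with the Boolean constraint $w_i^2=w_i$, giving $(\pE_{\tzeta}[w_i])^2\le \pE_{\tzeta}[w_i^2]=\pE_{\tzeta}[w_i]$. Combining the outlier constraint $\sum_i\pE_{\tzeta}[w_i]\ge (1-\eta)n$ with scalar Cauchy-Schwarz,
\[
\sum_i(\pE_{\tzeta}[w_i])^2\ge \frac{1}{n}\Bigl(\sum_i\pE_{\tzeta}[w_i]\Bigr)^2 \ge (1-\eta)^2 n,
\]
and taking the infimum over feasible $\tzeta$ gives the claimed lower bound.

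For the upper bound, the plan is to explicitly construct a feasible pseudo-distribution at rate $\eta$ with small potential by modifying $\tzeta_0$ via the adjacency operation of \cref{lem:adjacent-pseudo-distribution}. Starting from $\tzeta_0$ (feasible at rate $\eta'\le\eta$), iteratively zero out the index $i$ achieving the largest current $\pE_{\tzeta}[w_i]$; each such step produces a new feasible pseudo-distribution at rate increased by $1/n$, so one may perform up to $(\eta-\eta')n$ such steps while remaining feasible at rate $\eta$. This greedy choice drives $\sum_i \pE_{\tzeta}[w_i]$ down to within a single unit of the threshold $(1-\eta)n$, after which applying $(\pE_{\tzeta}[w_i])^2 \le \pE_{\tzeta}[w_i]$ gives $\Pot_{\eta,\tzeta}(Y) = \sum_i (\pE_{\tzeta}[w_i])^2 \le \sum_i \pE_{\tzeta}[w_i] \le (1-\eta)n$ up to negligible additive slack. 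The main delicate point is tightening the $O(1)$ slack from the discreteness of the zero-outs, especially in the worst case where many $\pE_{\tzeta_0}[w_i]$ cluster near $1$; this is not expected to cause difficulty for the subsequent stability arguments, which only use the bound up to absolute constants.
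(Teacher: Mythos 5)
Your treatment of monotonicity and the lower bound is correct. For monotonicity you observe directly that the feasible set grows with $\eta$ (only the constraint $\sum_i w_i \geq (1-\eta)n$ depends on $\eta$, and it relaxes), which is cleaner than the paper's indirect appeal to the adjacency construction of \cref{lem:adjacent-pseudo-distribution}; both are valid. The lower bound is the same Cauchy-Schwarz computation the paper uses.

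The upper bound is where there is a genuine gap. Your greedy analysis claims that after at most $(\eta-\eta')n$ zero-outs, $\sum_i \pE_{\tzeta}[w_i]$ is within a single unit of $(1-\eta)n$. This fails. If, for instance, $\pE_{\tzeta_0}[w_i] = 1-\eta'$ for every $i$ (so $\sum_i \pE_{\tzeta_0}[w_i] = (1-\eta')n$, consistent with feasibility at rate $\eta'$), then after $(\eta-\eta')n$ greedy zero-outs the sum equals $(1-\eta')(1-\eta+\eta')n = (1-\eta)n + \eta'(\eta-\eta')n$. The surplus $\eta'(\eta-\eta')n$ grows linearly in $n$, not an $O(1)$ discreteness slack. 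Relatedly, capping the number of zero-outs at $(\eta-\eta')n$ is the wrong bookkeeping: each zero-out decreases $\sum_j w_j$ by $\pE[w_i]$, which may be much less than $1$, so it is the running value of $\sum_j \pE[w_j]$ (not the step count) that gates feasibility at rate $\eta$.

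A clean fix, which is presumably what the paper has in mind (its own proof is also terse, asserting $\sum_i \pE_{\tzeta}[w_i] = (1-\eta)n$ without justification): iterating \cref{lem:adjacent-pseudo-distribution} over all indices produces a valid pseudo-distribution $\tzeta_{\emptyset}$ with all $\pE[w_i]=0$ that satisfies every constraint of $\cA$ except the outlier-rate constraint. The convex combination $\tzeta_\lambda = (1-\lambda)\tzeta_0 + \lambda \tzeta_{\emptyset}$ satisfies $\pE_{\tzeta_\lambda}[w_i] = (1-\lambda)\pE_{\tzeta_0}[w_i] \in [0,1]$ and $\sum_i \pE_{\tzeta_\lambda}[w_i] = (1-\lambda)\sum_i \pE_{\tzeta_0}[w_i]$; choosing $\lambda$ so that this sum is exactly $(1-\eta)n$ (possible since $\sum_i \pE_{\tzeta_0}[w_i] \geq (1-\eta')n \geq (1-\eta)n$) yields a $\tzeta_\lambda$ feasible at rate $\eta$ with $\Norm{\pE_{\tzeta_\lambda}[w]}_2^2 \leq \sum_i \pE_{\tzeta_\lambda}[w_i] = (1-\eta)n$, with no residual slack. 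Equivalently, you can keep your greedy but stop only when one more zero-out would break $\sum_i \pE[w_i] \geq (1-\eta)n$, and take a fractional last step (a convex combination of the $m$- and $(m+1)$-zeroed pseudo-distributions) to land on $(1-\eta)n$ exactly.
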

\begin{proof}
The first fact follows immediately from Lemma~\ref{lem:adjacent-pseudo-distribution}. For the second, observe that any pseudo-distribution $\tzeta$ of degree $2t$ consistent with $\cA$ on input $Y$ with outlier rate $\eta$ must satisfy $\sum_{i = 1}^n \pE_{\tzeta}[w_i] \geq (1-\eta)n$. Thus, by Cauchy-Schwarz inequality, $\sum_{i=1}^n \pE[w_i]^2 \geq (\sum_{i=1}^n \pE[w_i])^2/n = (1-\eta)^2 n$. This completes the proof. For the last part, observe that $\pE_{\tzeta}[w_i] \leq 1$ for every $i$. Thus, $\sum_{i = 1}^n \pE_{\tzeta}[w_i]^2 \leq \sum_{i = 1}^n \pE_{\tzeta}[w_i] = (1-\eta)n$. 
\end{proof}

\paragraph{Analysis of stable outlier rate selection} 
The goal of the first step of \Alg is to find an outlier rate $\eta'$ such that the strongly convex potential function $\Pot(\tzeta)$ on the pseudo-distribution we will eventually compute (in Step 3) is close on adjacent input points $Y,Y'$. We will later use the strong convexity of the $\Pot$ and the closeness guarantee on $\Pot$ on $Y,Y'$ to infer that the weight vector $p(Y)$ and $p(Y')$ output by the algorithm themselves are close. 

Our key algorithmic trick to ensure the closeness of the strongly convex potential $\Pot$ is to find a ``stable interval'' $[\eta' - 0.5L/n, \eta' + 0.5L/n]$ of outlier rates $\eta''$ such that strongly convex potential function at near-optimal solutions must vary slowly as $\eta''$ varies in the the interval. We find such an interval via a variant of the exponential mechanism. 



\begin{definition}[Stability] \label{def:stability}
Fix $L \in \N$. Let $\tau, \gamma \in \{0, \dots, n\}$ such that $\gamma \leq \tau, n - \tau$. Suppose for some $Y \subseteq \R^d$ of size $n$, the constraint system $\cA((\tau-\gamma)/n)$ is feasible. We define the stability of the $2\gamma$ length interval centered at $\tau$ to be 
\begin{align*}
\stab_Y(\tau,\gamma) = \Pot_{(\tau-\gamma)/n}(Y)- \Pot_{(\tau+\gamma)/n}(Y)
\end{align*}
\end{definition}

Observe that if there is a pseudo-distribution consistent with $\cA$ on $Y$ with outlier rate $(\tau-\gamma)/n$ then there is a pseudo-distribution consistent with $\cA$ on $Y$ with any outlier rate $\geq (\tau-\gamma)/n$. Thus, stability above is well-defined. 

\begin{definition}[Score Function] \label{def:em-score}
Fix $n,k\in\N$ and $C > 0$. Let $Y \subseteq \R^d$ be a set of size $n$. For a parameter $L$, we define the following score function for every integer $\tau\in [n]$:
\begin{align*}
\score_{n,C,k}(\tau, Y) = \begin{cases}
0 & \text{ if } \Alg(Y,\tau/n) \text{ is infeasible}, \\
\max_{\gamma \atop \cA_{C,k,(\tau-\gamma)/n,n}(Y) \text{ is feasible}} \min\{\gamma, 20L - \stab_Y(\tau, \gamma)\} &\text{ otherwise}.
\end{cases}
\end{align*}
In the second case, we define $\gamma^*_Y(\tau) := \arg\max_{\gamma \atop \cA_{C,k,(\tau-\gamma)/n,n}(Y) \text{ is feasible}} \min\{\gamma, L - \stab_Y(\tau, \gamma)\}$.
\end{definition}

\begin{lemma} \label{lem:comparison-of-stability}
Let $\tau, \gamma \in [n]$ such that $\gamma \leq \tau, n - \tau$. Suppose for some $Y \subseteq \R^d$ of size $n$, the constraint system $\cA((\tau-\gamma)/n)$ is feasible for both $Y$. Let $Y'$ be any collection of $n$ points in $\R^d$ differing from $Y$ in at most one point. Then, for any $\tau, \gamma$, 
\[
\stab_{Y'}(\tau,\gamma-1) \leq \stab_{Y}(\tau,\gamma)
\]
\end{lemma}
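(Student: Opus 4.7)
The plan is to reduce the comparison to a single ``one-step'' inequality: for any adjacent datasets $Y,Y'$ and any outlier rate $\eta$ at which $\cA$ is feasible on $Y$,
\begin{align*}
\Pot_{\eta+1/n}(Y') \leq \Pot_\eta(Y).
\end{align*}
This follows immediately from \cref{lem:adjacent-pseudo-distribution}: let $\tzeta$ be a pseudo-distribution realizing $\Pot_\eta(Y)$ and let $i^*$ be the index where $Y$ and $Y'$ disagree. Zeroing out $w_{i^*}$ produces a pseudo-distribution $\tzeta'$ that satisfies $\cA$ on $Y'$ with outlier rate $\eta+1/n$ and whose weight vector agrees with $\pE_{\tzeta}[w]$ outside coordinate $i^*$ and is $0$ there. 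Hence $\Pot_{\eta+1/n,\tzeta'}(Y') = \sum_{i\neq i^*}\pE_{\tzeta}[w_i]^2 \leq \Norm{\pE_{\tzeta}[w]}_2^2 = \Pot_\eta(Y)$, and taking the minimum on the left over all feasible pseudo-distributions gives the claim.

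With this one-step inequality in hand, the lemma follows from two applications, one at each endpoint of the stability interval, with the roles of $Y$ and $Y'$ swapped between them. First apply it with $\eta = (\tau-\gamma)/n$ to get $\Pot_{(\tau-\gamma+1)/n}(Y') \leq \Pot_{(\tau-\gamma)/n}(Y)$. Then swap the roles of $Y$ and $Y'$ (which is legitimate since ``adjacent'' is symmetric) and apply it with $\eta = (\tau+\gamma-1)/n$ to get $\Pot_{(\tau+\gamma)/n}(Y) \leq \Pot_{(\tau+\gamma-1)/n}(Y')$. Subtracting the second from the first yields exactly
\begin{align*}
\Pot_{(\tau-\gamma+1)/n}(Y') - \Pot_{(\tau+\gamma-1)/n}(Y') \leq \Pot_{(\tau-\gamma)/n}(Y) - \Pot_{(\tau+\gamma)/n}(Y),
\end{align*}
which is $\stab_{Y'}(\tau,\gamma-1) \leq \stab_Y(\tau,\gamma)$.

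A minor bookkeeping check is feasibility at each of the four rates involved. The hypothesis gives feasibility of $\cA$ on $Y$ at rate $(\tau-\gamma)/n$, monotonicity (\cref{lem:basic-props-of-pot}) propagates feasibility to all larger rates on $Y$, and the zero-out construction transfers feasibility from $Y$ to $Y'$ at a rate larger by $1/n$. I do not anticipate any real obstacle here; the content of the proof is simply the observation that a single step from $Y$ to $Y'$ costs $1/n$ in outlier rate, and using two such steps (one at each endpoint, in opposite directions) contracts the interval by $2/n$ total, which is exactly the shrinkage from $\gamma$ to $\gamma-1$ on both sides.
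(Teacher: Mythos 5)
Your proof is correct and follows essentially the same route as the paper's: both establish the two inequalities $\Pot_{(\tau-\gamma+1)/n}(Y') \leq \Pot_{(\tau-\gamma)/n}(Y)$ and $\Pot_{(\tau+\gamma)/n}(Y) \leq \Pot_{(\tau+\gamma-1)/n}(Y')$ via the zero-out construction of \cref{lem:adjacent-pseudo-distribution} (applied once in each direction) and then subtract. Your extraction of an explicit ``one-step'' inequality and the explicit feasibility bookkeeping are minor organizational refinements; the substance is identical.
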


\begin{proof}
Using Lemma~\ref{lem:adjacent-pseudo-distribution} and noting that if $\tzeta'$ is adjacent to $\tzeta$ then $\Norm{\pE_{\tzeta'}[w]}_2^2 \leq \Norm{\pE_{\tzeta}[w]}_2^2$, we have:
\[
 \Pot_{(\tau-\gamma+1)/n}(Y') \leq \Pot_{(\tau-\gamma)/n}(Y),
\]
and
\[
 \Pot_{(\tau+\gamma)/n}(Y) \leq \Pot_{(\tau+\gamma-1)/n}(Y').
\]
Combining the two equations yields
\begin{align*}
 \stab_{Y'}(\tau,\gamma-1) &= \Pot_{(\tau-\gamma+1)/n}(Y')-\Pot_{(\tau+\gamma-1)/n}(Y')\\
 &\leq \Pot_{(\tau-\gamma)/n}(Y)-\Pot_{(\tau+\gamma)/n}(Y)
 = \stab_{Y}(\tau,\gamma). \qedhere
\end{align*}
\end{proof}

\begin{lemma}[Sensitivity of Score Function] \label{lem:sensitivity-score-function}
Let $Y,Y'$ be set of $n$ points in $\R^d$ differing at most in one point, and $\tau \in [n]$. Then, for every $\tau >0$,
\begin{equation}
 |\score(\tau, Y)-\score(\tau, Y')| \leq 2. \label{eq:scoresensitivity}
\end{equation}
\end{lemma}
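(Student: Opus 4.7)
The plan is to case-split on the feasibility of $\cA_{C,k,\tau/n,n}$ at the two inputs and reduce each case to the adjacency lemma (\cref{lem:adjacent-pseudo-distribution}) together with the comparison lemma (\cref{lem:comparison-of-stability}). The three cases I would handle in order are: (i) $\cA_{\tau/n}$ is feasible on both $Y$ and $Y'$; (ii) it is feasible on exactly one of them; (iii) it is infeasible on both. Case (iii) is immediate since both scores vanish by definition.

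For case (ii), suppose $\cA_{\tau/n}(Y')$ is infeasible and $\cA_{\tau/n}(Y)$ is feasible; the reverse is symmetric. Then $\score(\tau,Y')=0$ by definition. For $Y$, I would invoke the contrapositive of the adjacency lemma: any pseudo-distribution witnessing $\cA_{(\tau-1)/n}(Y)$ could, after zeroing out $w_i$ at the index where $Y$ and $Y'$ differ, be adjusted into a witness for $\cA_{\tau/n}(Y')$, contradicting infeasibility. Hence $\cA_{(\tau-1)/n}(Y)$ is infeasible, and by monotonicity of feasibility in the outlier rate, so is $\cA_{(\tau-\gamma)/n}(Y)$ for every $\gamma \geq 1$. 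Therefore only $\gamma = 0$ is admissible in the maximum defining $\score(\tau,Y)$, yielding $\score(\tau,Y) = \min\{0, 20L - \stab_Y(\tau,0)\} = 0$ since $\stab_Y(\tau,0) = 0$.

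The main case is (i). Without loss of generality assume $\score(\tau,Y') \geq \score(\tau,Y)$, and let $a = \score(\tau,Y')$ be attained by some admissible $\gamma'$. If $\gamma' = 0$ then $a = 0$ and the bound is trivial since $\score(\tau,Y) \geq 0$. Otherwise, the plan is to use $\gamma = \gamma' - 1$ as a candidate for the maximum on $Y$. The adjacency lemma applied to a pseudo-distribution witnessing $\cA_{(\tau-\gamma')/n}(Y')$ produces a witness for $\cA_{(\tau-\gamma'+1)/n}(Y) = \cA_{(\tau-\gamma)/n}(Y)$, so $\gamma$ is admissible. The comparison lemma, with the roles of $Y$ and $Y'$ swapped (which is valid since the hypothesis is symmetric in the two datasets), gives $\stab_Y(\tau,\gamma'-1) \leq \stab_{Y'}(\tau,\gamma')$, so $20L - \stab_Y(\tau,\gamma'-1) \geq a$. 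Combined with $\gamma'-1 \geq a-1$, this yields $\score(\tau,Y) \geq \min\{\gamma'-1,\, 20L - \stab_Y(\tau,\gamma'-1)\} \geq a - 1$. In fact this gives a sensitivity of at most $1$, comfortably within the claimed bound of $2$.

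The main obstacle will be keeping the directions of the adjacency and comparison lemmas straight. Adjacency transfers feasibility at outlier rate $\eta$ on $Y'$ to feasibility at $\eta + 1/n$ on $Y$, which is precisely what makes the shift $\gamma' \mapsto \gamma' - 1$ land on a feasible outlier rate. Similarly, the comparison lemma must be invoked with the correct orientation so that the inequality on $\stab$ produces a \emph{lower} bound on $20L - \stab_Y(\tau,\gamma'-1)$, rather than an upper bound. Once this bookkeeping is straight, the three cases combine routinely to give the sensitivity bound.
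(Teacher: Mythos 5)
Your proposal is correct and follows essentially the same route as the paper's own proof: case-split on feasibility, transfer feasibility between adjacent datasets via \cref{lem:adjacent-pseudo-distribution}, and bound the gap in $\stab$ via \cref{lem:comparison-of-stability}. Your case analysis (on feasibility of $\cA_{\tau/n}$ rather than $\cA_{(\tau-1)/n}$, as the paper does) is slightly cleaner and in fact yields the tighter bound $|\score(\tau,Y)-\score(\tau,Y')| \leq 1$, which of course implies the claimed $\leq 2$.
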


\begin{proof}
It suffices to prove that $\score(\tau, Y') \geq \score(\tau, Y)-2$. A symmetric argument then proves that $\score(\tau, Y) \geq \score(\tau, Y')-2$, which establishes \eqref{eq:scoresensitivity}.

Consider the following two cases:
\begin{itemize}
\item $\Alg(Y,(\tau - 1)/n)$ is infeasible for $Y$ or $Y'$. In this case, we have $\score(\tau, Y) \leq 2$, which implies the desired bound.
\item $\Alg(Y,(\tau - 1)/n)$ is feasible for both $Y$ and $Y'$. 
\end{itemize}

Let $\gamma^* := \gamma^*_Y(\tau)$.
From Lemma~\ref{lem:comparison-of-stability}, we know that $\stab(\tau,\gamma^*-1, Y') \leq \stab_Y(\tau,\gamma^*) + 2$. Thus, it follows that
\begin{align*}
 \score(\tau, Y') &\geq \min\{\gamma^* - 1, 20L - \stab_{Y'}(\tau, \gamma^* - 1) \}\\
 &\geq \min\{\gamma^* - 1, 20L - \stab_{Y}(\tau, \gamma^*)\}
 &\geq \score_Y(\tau) - 1,
\end{align*}
as desired.
\end{proof}

\begin{lemma}[Existence of a Good Stable Interval] \label{lem:existence-of-good-interval}
Suppose $\cA(\eta/2)$ is feasible on $Y$. For every $L \in [0, 0.25 \eta n]$, there is a $\tau \in [0,\eta n]$ such that $\score(\tau, Y) \geq L$. 
\end{lemma}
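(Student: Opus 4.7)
The plan is a telescoping/pigeonhole argument over a family of disjoint sub-intervals of the outlier-rate range $[\eta/2, \eta]$. The key input is \cref{lem:basic-props-of-pot}: the map $\eta' \mapsto \Pot_{\eta'}(Y)$ is monotonically decreasing, and on $[\eta/2, \eta]$ its total drop is at most $(1-\eta/2)n - (1-\eta)^2 n \leq \tfrac{3}{2}\eta n$. If we can cut this range into $\Theta(\eta n / L)$ disjoint sub-intervals of outlier-rate width $2L/n$, an averaging argument forces at least one of them to contribute an $O(L)$ drop in $\Pot$, which is exactly the stability the score function rewards.

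Concretely, I would set $\tau_i := \eta n / 2 + L + 2L i$ for $i = 0, 1, \ldots, K-1$ with $K := \lfloor \eta n / (4L) \rfloor$, chosen so that consecutive intervals $[\tau_i - L, \tau_i + L]$ abut (i.e., $\tau_i + L = \tau_{i+1} - L$) and all lie inside $[\eta n/2, \eta n]$. The hypothesis $L \leq 0.25\,\eta n$ ensures $K \geq 1$. Since $\cA(\eta/2)$ is feasible on $Y$ and feasibility of $\cA$ is monotone in the outlier rate (a pseudo-distribution satisfying $\cA$ at rate $\eta'$ remains feasible at any larger rate), $\cA((\tau_i - L)/n)$ is feasible for every $i$, so $\gamma = L$ is an admissible choice in the $\max$ of \cref{def:em-score} and each $\stab_Y(\tau_i, L)$ is well-defined.

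The central step is the telescoping identity
\[
\sum_{i=0}^{K-1} \stab_Y(\tau_i, L) \;=\; \Pot_{\eta/2}(Y) \;-\; \Pot_{(\tau_{K-1}+L)/n}(Y) \;\leq\; \tfrac{3}{2}\,\eta n,
\]
which follows from $\tau_i + L = \tau_{i+1} - L$ and the bounds on $\Pot$ in \cref{lem:basic-props-of-pot}. Pigeonhole then delivers an index $i^*$ with $\stab_Y(\tau_{i^*}, L) \leq 3\eta n /(2K) = O(L)$, and one can check the hidden constant is at most $6$ in the regime $K \geq 2$ where $L \leq \eta n / 8$. Plugging $\tau := \tau_{i^*}$ and $\gamma := L$ into \cref{def:em-score} then yields $\score(\tau, Y) \geq \min\{L,\, 20L - O(L)\} \geq L$, as required.

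The only obstacle I anticipate is the bookkeeping of constants, together with the corner case $K = 1$ where $L$ sits close to the upper limit $0.25\,\eta n$; in that regime one handles the bound directly, using that the entire telescoping sum is itself already at most $\tfrac{3}{2}\eta n \leq 19 L$, so a single interval suffices. Everything else is a routine monotone-potential averaging argument.
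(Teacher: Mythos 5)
Your proof is correct and takes essentially the same route as the paper's: a telescoping/pigeonhole argument over abutting width-$2L/n$ sub-intervals of $[\eta/2,\eta]$, using the monotonicity and total-drop bounds from \cref{lem:basic-props-of-pot}, then plugging $\gamma = L$ into the score. One small arithmetic slip: in the regime $K \geq 2$ the pigeonhole bound $3\eta n/(2K)$ is at most $12L$ (not $6L$, since $K = \lfloor \eta n/(4L)\rfloor$ can be as small as half of $\eta n/(4L)$), but this is harmless since any constant below $19$ still yields $\score \geq L$; the paper simply uses $12L$ uniformly, which also absorbs your $K=1$ corner case.
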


\begin{proof}
Consider $\Pot_{\eta/2}, \Pot_{\eta/2 + 2L/n}, \dots, \Pot_{\eta/2 + 2Lr/n}$ where $r := \lfloor 0.25\eta n / L \rfloor$. Observe that $\Pot_{\eta/2}(Y)-\Pot_{\eta}(Y) \leq (1-\eta/2)n - (1-\eta)^2n \leq 1.5 \eta n$. Therefore, there must exists $r^* \in [r]$ such that
$$\Pot_{\eta/2 + 2L(r^* - 1)/n} - \Pot_{\eta/2 + 2Lr^*/n} \leq \frac{1.5\eta n}{r} \leq 12L.$$
Let $\tau = \eta/2 + (2Lr^* - 1)/n$ and $\gamma = L$. Then, we have $\stab(\tau, \gamma) \leq 12L$ and, thus,
\begin{align*}
\score(\tau, Y) \geq \max\{\gamma, 20L - 12L\} \geq L.
\end{align*}
\end{proof}

\begin{lemma}[Utility of Score Function] \label{lem:util-score}
Suppose $\cA(\eta/2)$ is feasible on $Y$. Let $\epsilon, \delta, \beta \in (0, 1]$. For every $L \in [0, 0.25 \eta n]$, if $L \geq O\left(\frac{1}{\eps} \cdot \log\left(\frac{n}{\beta \delta}\right)\right)$, then with probability $1 - \beta$, \Cref{thm:dp-apx-selection}, invoked with the score function in \cref{def:em-score} and $\kappa = L/2$, does not reject, and the output $\tau$ satisfies $\stab_{Y}(\tau, L/2) < 20 L$.
\end{lemma}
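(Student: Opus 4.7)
The plan is to invoke the DP selection mechanism (\Cref{thm:dp-apx-selection}) with the score function in \Cref{def:em-score}, feeding it two ingredients we already have on hand: existence of a high-score candidate (\Cref{lem:existence-of-good-interval}) to rule out rejection, and the sensitivity bound (\Cref{lem:sensitivity-score-function}) to pin down the additive slack. Once the mechanism returns some $\tau \ne \perp$, it remains only to unpack the definition of $\score$ to extract the promised stability bound on $\stab_Y(\tau, L/2)$.

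Concretely, I would first apply \Cref{lem:existence-of-good-interval}: since $L \leq 0.25 \eta n$ and $\cA(\eta/2)$ is feasible on $Y$, some $\tau^* \in [0, \eta n]$ achieves $\score(\tau^*, Y) \geq L$. With candidate set $\cC = \{1, \dots, \lceil \eta n \rceil\}$ of size $|\cC| \leq n$ and sensitivity $\Delta = 2$ from \Cref{lem:sensitivity-score-function}, the non-rejection hypothesis of \Cref{thm:dp-apx-selection} reads
\[
L \;\geq\; \tfrac{L}{2} \;+\; O\!\left(\tfrac{1}{\eps} \log\!\tfrac{n}{\beta \delta}\right),
\]
which is exactly what our assumption $L \geq \Omega(\epsilon^{-1} \log(n/(\beta\delta)))$ provides (after absorbing constants). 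Condition~\ref{cond:selectrejectprob} of \Cref{thm:dp-apx-selection} then guarantees the mechanism does not reject except with probability $\beta$, and condition~\ref{cond:selectgeqkappa} gives that the returned $\tau$ satisfies $\score(\tau, Y) \geq \kappa = L/2$.

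Next, I would convert the score lower bound into the stability statement. By \Cref{def:em-score}, $\score(\tau, Y) \geq L/2$ means there exists $\gamma$ with $\cA_{C,k,(\tau-\gamma)/n,n}(Y)$ feasible and $\min\{\gamma,\; 20L - \stab_Y(\tau, \gamma)\} \geq L/2$, so simultaneously $\gamma \geq L/2$ and $\stab_Y(\tau, \gamma) \leq 19.5L$. The final ingredient is that $\stab_Y(\tau, \cdot)$ is non-decreasing in its second argument: increasing $\gamma$ only moves the lower endpoint $(\tau-\gamma)/n$ down and the upper endpoint $(\tau+\gamma)/n$ up, so by the monotonicity of $\Pot$ in \Cref{lem:basic-props-of-pot} both $\Pot_{(\tau-\gamma)/n}(Y)$ grows and $\Pot_{(\tau+\gamma)/n}(Y)$ shrinks. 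Hence $\stab_Y(\tau, L/2) \leq \stab_Y(\tau, \gamma) \leq 19.5 L < 20L$, as required.

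There is no real obstacle here; all the deep work has been packaged into the supporting lemmas, and the proof is essentially parameter-matching. The only place where one must be a bit careful is the choice of the constant $20$ in \Cref{def:em-score} versus the threshold $\kappa = L/2$: we need enough slack so that $\stab_Y(\tau, \gamma) \leq 20L - L/2 = 19.5L$ still beats the desired bound of $20L$, which it plainly does. Any other constant could be used as long as it strictly exceeds the score threshold.
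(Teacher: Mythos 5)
Your proof is correct and follows the same route the paper's (one-line) proof gestures at: invoke \cref{lem:existence-of-good-interval} to produce a high-score candidate, feed the sensitivity bound of \cref{lem:sensitivity-score-function} and $|\cC| \leq n$ into \cref{thm:dp-apx-selection} to rule out rejection with probability $1-\beta$, use condition~\ref{cond:selectgeqkappa} to get $\score(\tau,Y) \ge L/2$, and then unpack the definition of $\score$. You correctly supply the one piece the paper leaves implicit — that $\stab_Y(\tau,\cdot)$ is non-decreasing in its second argument by the monotonicity of $\Pot$ (\cref{lem:basic-props-of-pot}), which is exactly what lets you pass from $\stab_Y(\tau,\gamma) \leq 19.5L$ at the maximizing $\gamma \geq L/2$ to $\stab_Y(\tau,L/2) < 20L$ — and also tacitly rely on feasibility of $\cA((\tau-L/2)/n)$, which holds because feasibility is monotone in the outlier rate and $L/2 \leq \gamma$. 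No gaps.
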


\begin{proof}
This follows from the guarantee of \SelectionAlg (\Cref{thm:dp-apx-selection}), \Cref{lem:existence-of-good-interval} and the definition of $\score$.
\end{proof}

\begin{lemma}[Potential Stability Under Good Coupling] \label{lem:pot-stability}
Let $\eta, \epsilon, \delta > 0$ and $k,L\in \N$ be given input parameters such that $0.25\eta n \geq L = \Omega\left(\frac{1}{\epsilon}\cdot \log\left(\frac{n}{\beta\delta}\right)\right)$. Let $Y,Y'$ be adjacent subsets of $\bbQ^d$. Suppose \Alg does not halt and chooses $\eta' = \tau / n$ in Step~\ref{step:outlier-rate-selection} on input $Y$ and $Y'$. Then, 
\[
\Abs{\Pot_{\eta'}(Y) - \Pot_{\eta'}(Y')} \leq 20L \mper
\]
Consequently, if $p,p'$ are scalings of $\pE_{\tzeta}[w]$ and $\pE_{\tzeta'}[w]$ so that $\norm{p}_1 = \norm{p'}_1 = 1$, then, 
\[
\Norm{p-p'}_1 \leq 120 \sqrt{L/n} \mper
\] 
\end{lemma}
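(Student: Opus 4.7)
The plan is to first establish the potential stability $|\Pot_{\eta'}(Y) - \Pot_{\eta'}(Y')| \leq 20L$ and then extract closeness of the weight vectors via the strong convexity of the $\ell_2^2$ objective. Since \Alg did not halt on $Y$, condition~\ref{cond:selectgeqkappa} of \Cref{thm:dp-apx-selection} (applied with $\kappa = L/2$) forces $\score(\tau, Y) \geq L/2$. Unwinding \cref{def:em-score}, this yields a radius $\gamma^* \geq L/2$ such that $\cA_{C,k,(\tau-\gamma^*)/n,n}(Y)$ is feasible and $\stab_Y(\tau,\gamma^*) = \Pot_{(\tau-\gamma^*)/n}(Y) - \Pot_{(\tau+\gamma^*)/n}(Y) \leq 20L - L/2$. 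By monotonicity of $\Pot$ in its subscript (\cref{lem:basic-props-of-pot}), $\Pot_{\eta'}(Y)$ lies in the length-$(19.5L)$ interval $I := [\Pot_{(\tau+\gamma^*)/n}(Y), \Pot_{(\tau-\gamma^*)/n}(Y)]$. Two invocations of the adjacent pseudo-distribution construction (\cref{lem:adjacent-pseudo-distribution}), one in each direction, transport the endpoints to $Y'$: $\Pot_{(\tau-\gamma^*+1)/n}(Y') \leq \Pot_{(\tau-\gamma^*)/n}(Y)$ and $\Pot_{(\tau+\gamma^*-1)/n}(Y') \geq \Pot_{(\tau+\gamma^*)/n}(Y)$. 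A further monotonicity step (valid because $\gamma^* \geq L/2 \geq 1$) then places $\Pot_{\eta'}(Y')$ inside the same interval $I$, establishing the first claim.

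For the normalized $\ell_1$-closeness, let $w = \pE_{\tzeta}[w]$ and $w' = \pE_{\tzeta'}[w]$ be the unique optimizers of the two strongly convex optimization problems, so $\|w\|_2^2 = \Pot_{\eta'}(Y)$ and $\|w'\|_2^2 = \Pot_{\eta'}(Y')$. The naive attempt to compare $w$ and $w'$ via Proposition~\ref{prop:pythagorean-theorem-convex-set} runs into an immediate obstacle: the two minimizers live over \emph{different} feasible sets $K_Y(\eta')$ and $K_{Y'}(\eta')$, so no single Pythagorean inequality applies. The key device is to pass through a common relaxed set $K := K_Y(\eta'+1/n)$, the convex set of all $\pE[w]$ vectors from pseudo-distributions consistent with $\cA_{C,k,\eta'+1/n,n}(Y)$. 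Clearly $w \in K_Y(\eta') \subseteq K$, and \cref{lem:adjacent-pseudo-distribution}, applied to $\tzeta'$ by zeroing out the single differing coordinate $i$, produces a vector $w'_{\mathrm{adj}} \in K$ with $\|w'_{\mathrm{adj}}\|_2^2 \leq \|w'\|_2^2$ (zeroing out a non-negative coordinate can only shrink the norm). Letting $w^K$ denote the $\ell_2^2$-minimizer in $K$, so that $\|w^K\|_2^2 = \Pot_{\eta'+1/n}(Y)$, Proposition~\ref{prop:pythagorean-theorem-convex-set} applied twice yields $\|w - w^K\|_2^2 \leq \Pot_{\eta'}(Y) - \Pot_{\eta'+1/n}(Y)$ and $\|w'_{\mathrm{adj}} - w^K\|_2^2 \leq \Pot_{\eta'}(Y') - \Pot_{\eta'+1/n}(Y)$. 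Using monotonicity and the interval containments from the first part (which also sandwich $\Pot_{\eta'+1/n}(Y)$ inside $I$), both right-hand sides are $O(L)$. The triangle inequality then produces $\|w - w'_{\mathrm{adj}}\|_2 = O(\sqrt L)$, and adding back the single-coordinate gap $\|w' - w'_{\mathrm{adj}}\|_2 \leq 1$ gives $\|w - w'\|_2 = O(\sqrt L)$.

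The final step is routine. Cauchy--Schwarz converts this into $\|w - w'\|_1 \leq \sqrt{n}\,\|w - w'\|_2 = O(\sqrt{Ln})$. Since each of $\|w\|_1$ and $\|w'\|_1$ is at least $(1 - \eta')n \geq n/2$, \cref{lem:normalized-vs-unnormalized} (whose smallness hypothesis $\beta \leq 1/10$ holds under the standing assumption $L \leq 0.25 \eta n$ for sufficiently small $\eta_0$) yields $\|p - p'\|_1 \leq 120\sqrt{L/n}$, as claimed. The conceptual heart of the argument, and the main obstacle it overcomes, is precisely the mismatch between $K_Y(\eta')$ and $K_{Y'}(\eta')$: the two design choices baked into the algorithm---the stable outlier-rate interval selected in Step~\ref{step:outlier-rate-selection} and the use of a strongly convex surrogate for entropy in Step~\ref{step:witness-checking}---are exactly what allow the triangulation through the common anchor $w^K \in K$ to convert the $O(L)$-stability of the potential into a polynomially vanishing ($O(\sqrt{L/n})$) closeness of the weight vectors themselves.
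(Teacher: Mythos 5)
Your proof is correct and follows the same underlying strategy as the paper's: establish the $20L$ bound on the potential gap, then convert it to $\ell_1$-closeness of the (normalized) weight vectors via strong convexity of the $\ell_2^2$ objective and Cauchy--Schwarz. Where you go beyond the paper is the second step. The paper's proof jumps from $|\Pot_{\eta'}(Y) - \Pot_{\eta'}(Y')| \le 20L$ to $\|\pE_{\tzeta}[w] - \pE_{\tzeta'}[w]\|_1^2 \le 20nL$ with nothing more than the phrase ``by Cauchy--Schwarz,'' but that inference is not immediate: $w$ and $w'$ minimize $\|\cdot\|_2^2$ over \emph{different} feasible sets $K_Y(\eta')$ and $K_{Y'}(\eta')$, so Proposition~\ref{prop:pythagorean-theorem-convex-set} cannot be applied directly to the pair $(w,w')$, and closeness of the two optimal \emph{values} does not by itself imply closeness of the two optimal \emph{points}. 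Your triangulation through the common relaxed set $K_Y(\eta'+1/n)$—anchoring at its minimizer $w^K$, placing both $w$ and the zeroed-out $w'_{\mathrm{adj}}$ inside $K$, and invoking the Pythagorean inequality twice—is exactly the missing argument, and your bookkeeping (the $\|w' - w'_{\mathrm{adj}}\|_2 \le 1$ correction, the Cauchy--Schwarz step, the appeal to Lemma~\ref{lem:normalized-vs-unnormalized}) is sound. The hidden constant in your $O(\sqrt L)$ comes out to roughly $2\sqrt{19.5}+1 < 10$, which still lands comfortably inside the stated $120\sqrt{L/n}$ via the factor $6$ in Lemma~\ref{lem:normalized-vs-unnormalized}; it is worth noting that the paper's intermediate claim $\|w-w'\|_1^2 \le 20nL$ is tighter than what the triangulation actually yields, but its final constant of $120$ has enough slack that the conclusion survives.
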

\begin{proof}
It is enough to prove that $\Pot_{\eta'}(Y) - \Pot_{\eta'}(Y') \leq 20L$ as a symmetric argument proves the other direction and completes the proof. 

Let $\tzeta$ be the pseudo-distribution that minimizes $\Norm{\pE_{\tzeta}[w]}_2^2$ while satisfying $\cA$ on $Y'$ with outlier rate $\eta'$ (computed in Step 3 of the algorithm on input $Y'$). Suppose $Y$ and $Y'$ differ on $i$-th sample point. Let $\tzeta_{adj}$ be the adjacent pseudo-distribution obtained by zeroing out $w_i$. Then, from~\Cref{lem:adjacent-pseudo-distribution}, we know that $\tzeta_{adj}$ is consistent with $\cA$ on input $Y$ with outlier rate $\eta'+ 1/n$. Further, $\Norm{\pE_{\tzeta_{adj}}[w]}_2^2 \leq \Norm{\pE_{\tzeta}[w]}_2^2$. Thus, $\Pot_{\eta' + 1/n}(Y) \leq \Pot_{\eta'}(Y')$. Further,~\Cref{lem:util-score} implies that $\Abs{\Pot_{\eta'+ 1/n}(Y)-\Pot_{\eta'}} \leq 20 L$. Therefore, we have $\Pot_{\eta'}(Y) - \Pot_{\eta'}(Y') \leq 20L$ as desired.

Now, by Cauchy-Schwarz inequality, we immediately obtain that:
\[
\Norm{\pE_{\tzeta}[w]-\pE_{\tzeta'}[w]}_1^2 \leq 20nL
\]
Thus, from Lemma~\ref{lem:normalized-vs-unnormalized}, we have that:
\[
\Norm{p-p'}_1 \leq 120 \sqrt{L/n} \mper
\] 
\end{proof}

\paragraph{Parameter closeness from potential stability}
The following lemma observes that if a sequence of weights $p_i(Y)$ induces a $2k$-certifiably $C'$-subgaussian distribution on $Y$ and $p_i'(Y)$ is a sequence of weights on an adjacent $Y$ such that $p_i(Y)$ is not too far from $p_i(Y')$, then, $p_i(Y')$ must also induce a $2k$-certifiably $C'+1$-subgaussian distribution on $Y'$. 

\begin{lemma} \label{lem:witness-check-succeeds}
Let $0 \leq p_i(Y) \leq \frac{1}{(1-2\eta')}$ be a sequence of non-negative weights adding up to $n$ that induce a $2k$-certifiable $C'$-subgaussian distribution on $Y$. Let $p_i(Y')$ be a sequence of non-negative weights adding up to $n$ on $Y'$ adjacent to $Y$ such that $\Norm{p(Y)-p(Y')}_1 \leq \beta$ for $\beta \leq \eta_0$. Then, for small enough absolute constant $\eta'>0$, $p_i(Y')$ induces a $2k$-certifiable $(C'+1)$-subgaussian distribution on $Y$. 
\end{lemma}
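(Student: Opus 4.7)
The plan is to produce an SoS certificate of degree $2k$ for the target subgaussianity inequality on $(p(Y'), Y')$ by carefully perturbing the given certificate on $(p(Y), Y)$. Set notation: let $r = p(Y) - p(Y') \in \R^n$, so $\|r\|_1 \leq \beta$ and $\sum_i r_i = 0$ (since both sums equal $n$); let $j$ be the unique coordinate at which $y_j \neq y_j'$; and write $\mu = \frac{1}{n}\sum_i p_i(Y) y_i$ and $\mu' = \frac{1}{n}\sum_i p_i(Y') y_i'$. Using $\sum_i r_i = 0$ yields the clean identity
\[
\mu' - \mu \;=\; -\frac{1}{n}\sum_i r_i (y_i - \mu) \;+\; \frac{p_j(Y')}{n}(y_j' - y_j),
\]
which isolates the two distinct sources of drift: a \emph{weight-shift} contribution of $\ell_1$ size $O(\beta)$, and a single-coordinate change at index $j$.

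The core step is to expand the target polynomial $\frac{1}{n}\sum_i p_i(Y')\langle y_i' - \mu', v\rangle^{2k}$ in $v$ by writing $\langle y_i - \mu', v\rangle = \langle y_i - \mu, v\rangle + \langle \mu - \mu', v\rangle$ for $i \neq j$ and handling $i = j$ separately, then applying the SoS almost-triangle inequality (Fact~\ref{fact:sos-almost-triangle}) so that the whole expression decomposes into a ``main'' piece $\frac{1}{n}\sum_i p_i(Y)\langle y_i - \mu, v\rangle^{2k}$ plus cross terms involving $\langle \mu - \mu', v\rangle^\ell$, $(r_i)$, and the $j$-th contribution. The main piece is bounded by the given source certificate as $(C'k)^k b(v)^k$. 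Each cross term is controlled via SoS H\"older (Fact~\ref{fact:sos-holder}) together with iterated SoS Cauchy--Schwarz, which factors each into (a) moments of the $(p(Y), Y)$ distribution that are controllable through the source subgaussianity and (b) small scalar factors of the form $O(\beta^{c})$ for some $c \geq 1/2k$. A parallel perturbation argument shows $b(v)^k \leq (1 + O(\beta)) b'(v)^k$ in SoS, so all bounds can be normalized to use $b'(v)^k$. Since $((C'+1)k)^k - (C'k)^k \geq k^2(C'k)^{k-1}$, choosing $\eta_0$ small enough in terms of $k, C'$ guarantees that the total perturbation is absorbed into this slack.

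The hardest part will be bounding the single-index contribution $\frac{p_j(Y')}{n}\langle y_j' - \mu', v\rangle^{2k}$, because the source certificate provides no direct information about the possibly adversarial point $y_j'$. The key will be to exploit the self-bound $\frac{p_j(Y')}{n}\langle y_j' - \mu', v\rangle^2 \leq b'(v)$ (since it is a single summand of $b'(v)$) together with SoS cancellation (Lemma~\ref{lem:sos-cancel}) to dominate the $2k$-th power summand by a multiple of $b'(v)^k$ whose coefficient is suppressed by the $O(1/n)$ weight. An analogous self-bound will control the $\frac{p_j(Y')}{n}(y_j' - y_j)$ contribution to the mean shift in each cross term. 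Combining the main bound, the normalized cross-term estimates, and the single-index self-bounds, and using the room afforded by the relaxed constant $C' \to C' + 1$, will close out the SoS certificate and establish $2k$-certifiable $(C'+1)$-subgaussianity of $(p(Y'), Y')$ as claimed.
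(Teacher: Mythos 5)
Your approach of directly perturbing the source certificate correctly isolates the single differing index $j$ as the crux, but the proposed self-bound does not close the gap there, and in fact the lemma cannot be proved from the stated hypotheses alone. From $a(v) := \frac{p_j(Y')}{n}\langle y_j'-\mu',v\rangle^2 \leq b'(v)$ (a single summand dominated by the full sum), the most you can extract for the $2k$-th moment is
\[
\frac{p_j(Y')}{n}\langle y_j'-\mu',v\rangle^{2k} \;=\; \Paren{\frac{n}{p_j(Y')}}^{k-1} a(v)^k \;\leq\; \Paren{\frac{n}{p_j(Y')}}^{k-1} b'(v)^k\mper
\]
The prefactor $\paren{n/p_j(Y')}^{k-1}$ is of order $n^{k-1}$ whenever $p_j(Y')$ is bounded away from zero; it is \emph{not} ``suppressed by the $O(1/n)$ weight'' as you claim --- the $1/n$ weight is exactly what produces the $n^{k-1}$ blowup once you trade the isolated square $a$ for $b'$. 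So this term cannot be absorbed into the slack $((C'+1)k)^k - (C'k)^k$, which is independent of $n$. Worse, read literally, the hypotheses of the lemma do not imply its conclusion: take $p_j(Y)=p_j(Y')=1$ and move $y_j'$ to distance $R$ with $n^{1/2k}\ll R\ll n^{1/2}$ in some fixed direction. Then the second moment $b'(v)$ stays $\Theta(1)$ but the $2k$-th moment is at least $R^{2k}/n \gg 1$, so $2k$-subgaussianity of $(p(Y'),Y')$ with any absolute constant fails for $n$ large. The lemma is implicitly about the weights produced by the witness-producing algorithm on $Y'$, i.e.\ weights of the form $\pE_{\tzeta'}[w]$ for a pseudo-distribution $\tzeta'$ satisfying $\cA$ on $Y'$; that structural assumption is essential and your perturbation argument never uses it.

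The paper's proof takes a genuinely different (and, given the above, unavoidable) route. It does not try to perturb the SoS certificate for $(p(Y),Y)$ term by term. Instead, it observes that the distribution $(p(Y),Y)$ is itself a $2k$-certifiably $C'$-subgaussian distribution which (after dropping index $j$) is within TV distance $\beta + 2/n$ of the uniform distribution on $Y'$, so it can serve as the good witness set $X$ required by Lemma~\ref{lem:witness-property-analysis} applied to $Y'$. Since that lemma is stated for a uniform (flat) witness, the non-uniform weights $p(Y)$ are handled by the sample-duplication trick (write $p_i = r_i/s$ and expand to $ns$ samples with $r_i$ copies of $y_i$), and the pseudo-distribution $\tzeta'$ on $Y'$ is expanded correspondingly by duplicating the $w_i$ variables. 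The conclusion then follows from the transference argument of Lemma~\ref{lem:witness-property-analysis}, which crucially uses the constraints in $\cA$ satisfied by $\tzeta'$ (in particular the subgaussianity of the auxiliary witness variables $x_i'$, and the fact that $\pE_{\tzeta'}[1-w_j]$ is controlled) to tame the potentially adversarial point $y_j'$. To repair your approach you would need to bring the $\cA$-constraints on $\tzeta'$ into the argument rather than relying only on the source certificate on $(p(Y),Y)$ and the $\ell_1$-closeness of the two weight vectors.
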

\begin{proof}[Proof Sketch]
Let's first describe the idea of the proof: the proof of Lemma~\ref{lem:witness-property-analysis} requires the existence of a certifiably subgaussian distribution that was close (in total variation distance) to the input $Y$. Since $Y$ is adjacent to $Y'$, the $2k$-certifiably $C'$-subgaussian distribution is $1-\beta-2/n$-close (the additive $2/n$ comes from ``removing'' the index of the point where $Y$ and $Y'$ differ) in total variation distance to $Y$. Thus, the idea is to use the certifiably subgaussian distribution supported on $Y$ in lieu of $X$ to repeat the argument. In order to apply Lemma~\ref{lem:witness-property-analysis}, we need a ``flat'' distribution---but this is easily achieved. Given a distribution with weights (without loss of generality, say, rational numbers $r_i/s$), we can consider a sample expansion to $ns$ samples that has $r_i$ copies of sample $y_i$ for each $i$ and an analogous transformation to $Y'$. And finally, given a pseudo-distribution on $w_1, w_2, \ldots, w_n$ on $Y \cap Y'$, we can transform to a pseudo-distribution on $ns$ variables by each ``copying'' $w_i$ for $i$ such that $y_i = y_i'$ $r_i$ times. 
\end{proof}

As an immediate corollary of Lemma~\ref{lem:pot-stability} and Lemma~\ref{lem:witness-check-succeeds}, we obtain:

\begin{corollary}[Parameter Closeness from Stability of Potential] \label{cor:param-closeness}
Let $\eta, \epsilon, \delta > 0$ and $k, L\in \N$ be given input parameters to \cref{algo:private-robust-moment-estimation} such that $0.25\eta n \geq L = \Omega\left(\frac{1}{\epsilon}\cdot \log\left(\frac{n}{\beta\delta}\right)\right)$. Also, let $Y,Y'$ be adjacent subsets of $\bbQ^d$. 
Suppose \Alg does not reject in any of the 3 steps, uses the constant $C'$ in Step 2 and chooses $\eta'$ in Step~\ref{step:outlier-rate-selection} on input $Y$ and $Y'$. 


Then, for every $u \in \R^d$ and $\theta = \sqrt{L/n}$, we have:
\[
\iprod{\mu_p-\mu_{p'},u} \leq O(C'k)\theta^{1-1/2k} \sqrt{u^{\top} \Sigma_p u} \mcom
\]
\[
(1-O(C'k)\theta^{1-1/k}) \Sigma_{p} \preceq \Sigma_{p'} \preceq (1+O(C'k)\theta^{1-1/k}) \Sigma_{p} \mcom
\]
and, for every $t \leq k$ such that $t$ divides $2k$,
\[
(1-O(C'^{t/2}k^{t/2}))\theta^{1-t/2k}) \iprod{u^{\otimes t}, M^{(t)}_p} \leq \iprod{u^{\otimes t}, M^{(t)}_{p'}} \leq (1+O(C'^{t/2}k^{t/2})\theta^{1-t/2k}) \iprod{u^{\otimes t}, M^{(t)}_p} \mcom
\]
\end{corollary}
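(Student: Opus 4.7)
The plan is to combine the $\ell_1$-stability of the weight vectors (\cref{lem:pot-stability}), the transfer of certifiable subgaussianity between adjacent inputs (\cref{lem:witness-check-succeeds}), and the parameter closeness statement from weight-vector closeness (\cref{fact:param-closeness-from-tv-closeness}). First, I would invoke \cref{lem:pot-stability} directly: the hypothesis $0.25\eta n \geq L = \Omega((1/\epsilon)\log(n/(\beta\delta)))$ is part of the corollary, and \Alg chooses the same $\eta'$ in Step~\ref{step:outlier-rate-selection} on both $Y$ and $Y'$ by the conditioning in the statement, so the lemma immediately yields
\[
\Norm{p - p'}_1 \leq 120\sqrt{L/n} = 120\theta,
\]
where $p$ and $p'$ are the $\ell_1$-normalized versions of $\pE_{\tzeta}[w]$ and $\pE_{\tzeta'}[w]$, respectively.

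Next, I would transfer certifiable subgaussianity between the two datasets and apply the parameter closeness fact. Since \Alg does not reject in Step~\ref{step:witness-checking} on either input, $p$ induces a $2k$-certifiably $C'$-subgaussian distribution on $Y$ and $p'$ induces a $2k$-certifiably $C'$-subgaussian distribution on $Y'$. Viewing each weight vector as a distribution on the common superset $Y \cup Y'$ of size at most $n+1$ (placing zero weight on the point from the complementary set), \cref{lem:witness-check-succeeds} upgrades both to $2k$-certifiably $(C'+1)$-subgaussian distributions on this common multiset; the lift perturbs the $\ell_1$ distance by an additive $O(1/n)$, still $O(\theta)$. Now \cref{fact:param-closeness-from-tv-closeness} applies with $\tau = O(\theta)$, $C_1 = C'$, and $C_2 = C'+1$, so the combined constant is $O(C')$. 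Its three inequalities translate directly into the three bounds in the corollary: the Mahalanobis-type bound on the mean at rate $\theta^{1-1/2k}$, the multiplicative spectral bound on the covariance at rate $\theta^{1-1/k}$, and, for each even $t$ dividing $2k$, the multiplicative moment tensor bound at rate $\theta^{1-t/2k}$.

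The main obstacle I anticipate is making the bookkeeping around the single differing point of $Y$ and $Y'$ fully rigorous: $p$ and $p'$ are naturally indexed by different sample sets, whereas \cref{fact:param-closeness-from-tv-closeness} takes two weight vectors on a single common set. The conceptual fix is the lift to $Y \cup Y'$ sketched above, but some care is needed to verify that the certifiable-subgaussianity transfer in \cref{lem:witness-check-succeeds} pulls back cleanly to this common multiset, that the absorbed $O(1/n)$ slack does not contaminate the leading $\theta$-dependence in the final bounds, and that the weights on either side remain bounded by $\frac{1}{(1-2\eta')n}$ as required by the hypothesis of \cref{lem:witness-check-succeeds}.
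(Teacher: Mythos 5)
Your proposal is correct, but it takes a different route from the paper's. The paper's proof introduces an explicit intermediary: it sets $\tzeta_{adj}$ to be the pseudo-distribution obtained by zeroing out the coordinate $w_{i^*}$ where $Y$ and $Y'$ differ, normalizes it to $p_{adj}$, observes that $p_{adj}$ is simultaneously a valid weight vector on $Y$ and on $Y'$ (since it places zero mass on the differing point), bounds $\Norm{p-p_{adj}}_1 \leq 2/n$ and $\Norm{p_{adj}-p'}_1 \leq O(\sqrt{L/n})$, and then applies \cref{fact:param-closeness-from-tv-closeness} twice — once to $(p,p_{adj})$ on $Y$ and once to $(p_{adj},p')$ on $Y'$ — before recombining the parameter bounds via a triangle inequality. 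You instead lift $p$ and $p'$ to the common $(n+1)$-point multiset $Y\cup Y'$ by assigning zero mass to the one missing point on each side, absorb the additive $O(1/n)$ into the $O(\theta)$ bound, and apply \cref{fact:param-closeness-from-tv-closeness} once. Both routes are valid; yours avoids the intermediary and the final triangle inequality on Mahalanobis/spectral/moment distances, which is arguably cleaner. One small inefficiency in your writeup: the invocation of \cref{lem:witness-check-succeeds} to ``upgrade'' the lifted distributions to $(C'+1)$-subgaussianity is unnecessary. The lift that assigns zero weight to the extra point leaves the underlying probability measure literally unchanged, so the lifted distributions on $Y\cup Y'$ are still $2k$-certifiably $C'$-subgaussian with the \emph{same} SoS certificate — you can apply \cref{fact:param-closeness-from-tv-closeness} directly with $C_1 = C_2 = C'$. (It is the paper's version, using the intermediate $p_{adj}$, that implicitly needs a witness-transfer step à la \cref{lem:witness-check-succeeds} to establish that $p_{adj}$ is certifiably subgaussian, since $p_{adj}$ is a genuinely different weight vector from $p$.) The remaining bookkeeping points you flag — that the lift increases the $\ell_1$ distance by only $p_{i^*}+p'_{i^*}=O(1/n)\ll\theta$, and that the weights respect the cap $\frac{1}{(1-\eta')n}$ — are routine and correct as you sketch them.
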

\begin{proof}
Let $\tzeta_{adj}$ be the adjacent pseudo-distribution of degree $2k$ to $\tzeta$ obtained by zeroing out $w_i$ where $i$ is the index of the point that $Y$ and $Y'$ differ on. Then, from~\Cref{lem:adjacent-pseudo-distribution}, we know that $\tzeta_{adj}$ satisfies $\cA$ on both inputs $Y,Y'$ with outlier rate $\eta'+1/n$ and $|\Norm{\pE_{\tzeta_{adj}}[w]-\pE_{\tzeta}[w]}_2^2|\leq 1$, $|\Norm{\pE_{\tzeta_{adj}}[w]-\pE_{\tzeta'}[w]}_2^2|\leq 1$. Let $p_{adj}$ be the scaling of $\pE_{\tzeta_{adj}}[w]$ so that $\norm{p_{adj}}_1 = 1$. Then, clearly, $\norm{p-p_{adj}}_1 \leq 2/n$ (since $\eta' \ll 1/2$). Further, applying Lemma~\ref{lem:pot-stability} and triangle inequality, we have that $\norm{p_{adj}-p'}_1 \leq O(\sqrt{L/n})$. Applying Fact~\ref{fact:param-closeness-from-tv-closeness} to $p_{adj}$ and $p$ on $Y$ and $p_{adj}$ and $p'$ on $Y'$ and using triangle inequality completes the proof.  
\end{proof}

\paragraph{Noise injection in estimate-dependent norms} Our final ingredient for obtaining privacy guarantees for our robust estimation algorithms is a new noise injection mechanism where the distribution of noise depends on the covariance estimated by our algorithm. 

\begin{lemma} \label{lem:gaussianhockey}
 Suppose $\epsilon, \delta > 0$. Let $A$ be an invertible $d\times d$ matrix that satisfies $(1-\beta)I \preceq A A^T \preceq (1+\beta) I$, where $\beta \leq \frac{\epsilon}{3d\ln(d/\delta)}$. Let $z\in\R^d$ be a vector whose entries are i.i.d. from $\cN(0,1)$. Then,
 \[
  D_{e^\epsilon}(z, Az) \leq \delta.
 \]
\end{lemma}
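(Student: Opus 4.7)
The plan is to convert the divergence bound into a high-probability statement about the log-density ratio. Let $p$ denote the density of $z \sim \cN(0, I)$ and $q$ the density of $Az \sim \cN(0, \Sigma)$ with $\Sigma = AA^\top$. By the definition of hockey-stick divergence and the standard computation $D_{e^\epsilon}(p,q) = \int [p - e^\epsilon q]_+ \, dx = \Pr_{x\sim p}[p(x) > e^\epsilon q(x)] - e^\epsilon \Pr_{x\sim q}[p(x) > e^\epsilon q(x)]$, it suffices to prove that $\Pr_{x \sim p}\bigl[\log(p(x)/q(x)) > \epsilon\bigr] \leq \delta$.

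Next I would compute the log-ratio explicitly. A direct calculation gives
\[
\log\frac{p(x)}{q(x)} = \tfrac{1}{2}\log\det\Sigma + \tfrac{1}{2}\, x^\top(\Sigma^{-1} - I)x.
\]
Diagonalizing $\Sigma = U\Lambda U^\top$ with eigenvalues $\lambda_1, \dots, \lambda_d \in [1-\beta, 1+\beta]$, and letting $w = U^\top x$ (whose coordinates are i.i.d.\ $\cN(0,1)$ under $p$), the log-ratio rewrites as $\tfrac{1}{2}\sum_i \log\lambda_i + \tfrac{1}{2}\sum_i (1/\lambda_i - 1)\, w_i^2$.

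Now I would bound the two pieces separately. For $\beta \leq 1/2$ the inequalities $|\log(1 \pm \beta)| \leq 2\beta$ and $|(1\pm\beta)^{-1} - 1| \leq 2\beta$ hold, so $|\log\lambda_i| \leq 2\beta$ and $|1/\lambda_i - 1| \leq 2\beta$, giving $\tfrac{1}{2}|\sum_i \log\lambda_i| \leq d\beta$. For the random sum, apply the standard Gaussian tail bound $\Pr[w_i^2 > 2\ln(2d/\delta)] \leq \delta/d$ coordinate-wise and take a union bound over $i \in [d]$: with probability at least $1 - \delta$ every $w_i^2 \leq 2\ln(2d/\delta)$. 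On this event,
\[
\tfrac{1}{2}\Bigl|\sum_i (1/\lambda_i - 1)\, w_i^2 \Bigr| \leq \tfrac{1}{2} \cdot d \cdot 2\beta \cdot 2\ln(2d/\delta) = 2d\beta\ln(2d/\delta).
\]
Combining the two contributions yields $|\log(p(x)/q(x))| \leq 3d\beta\ln(d/\delta) \leq \epsilon$ by the hypothesis $\beta \leq \epsilon/(3d\ln(d/\delta))$. This gives the desired bound $\Pr_{x \sim p}[\log(p(x)/q(x)) > \epsilon] \leq \delta$, which concludes the proof.

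The main technical choice (and the most delicate point) is the concentration tool for the quadratic form $x^\top(\Sigma^{-1} - I) x$. A Hanson--Wright-style bound would yield a sharper dependence of roughly $\sqrt{d}\beta\sqrt{\ln(1/\delta)}$ rather than $d\beta\ln(d/\delta)$; however, the crude coordinate-wise tail bound plus union bound is already tight enough to match the $3d\ln(d/\delta)$ factor in the hypothesis, and it keeps the argument elementary. A secondary care point is checking that $\beta$ is small enough for the first-order linearizations of $\log\lambda_i$ and $1/\lambda_i - 1$ to apply, which is automatic given the hypothesis on $\beta$.
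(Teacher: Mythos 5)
Your proof is correct and follows essentially the same approach as the paper: both reduce to bounding the log-density ratio on the high-probability event that $\|z\|_\infty$ (equivalently, after rotation, $\|U^\top z\|_\infty$) is at most $\sqrt{2\ln(d/\delta)}$, control the normalization via $\det(A)$, and control the quadratic form via the spectral bound on $AA^\top$. Your diagonalization step is a cosmetic rewriting of the paper's direct estimate $u^\top\bigl((AA^\top)^{-1}-I\bigr)u \leq \tfrac{\beta}{1-\beta}\|u\|_2^2$; the only difference is that your final combination carries a little more constant slack (needing $\ln(d/\delta) \geq 1 + 2\ln 2$ rather than $\geq 1/2$), which is immaterial in the intended parameter regime.
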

\begin{proof}
Note that the probability distribution function of $Az$ at $u\in\R^d$ is
\[
 \frac{1}{\det(A)} \frac{1}{(\sqrt{2\pi})^d} e^{-\Norm{A^{-1} u}_2^2/2}.
\]
Moreover, $\det(A) \leq (1+\beta)^{d/2}$, since $\det(A)^2 = \det(A)\det(A^T) = \det(AA^T) \leq (1+\beta)^d$. Thus, the ratio of the probability densities of $z$ and $Az$ at $u$ is
\begin{align*}
 \det(A) e^{\Norm{A^{-1} u}_2^2/2 - \Norm{u}_2^2/2} &\leq (1+\beta)^{d/2} e^{\Norm{A^{-1} u}_2^2/2 - \Norm{u}_2^2/2}\\
 &\leq (1+\beta)^{d/2} e^{\frac{1}{2} u^T ((AA^T)^{-1} - I) u}\\
 &\leq (1+\beta)^{d/2} e^{\frac{1}{2}(1-\beta)^{-1}\Norm{u}_2^2 - \frac{1}{2}\Norm{u}_2^2}\\
 &\leq (1+\beta)^{d/2} e^{\frac{\beta}{2(1-\beta)}\Norm{u}_2^2}.
\end{align*}
Thus, note that if $\Norm{u}_\infty \leq \sqrt{2 \ln(d/\delta)}$, then $\Norm{u}_2 \leq \sqrt{d} \cdot \Norm{u}_\infty \leq \sqrt{2d \ln(d/\delta)}$, and so,
\[
 \det(A) e^{\Norm{A^{-1}u}_2^2/2 - \Norm{u}_2^2/2} \leq (1+\beta)^{d/2} e^{\frac{\beta}{1-\beta} d\ln(d/\delta)} < e^{\epsilon},
\]
since $\beta \leq \frac{\epsilon}{3d\ln(d/\delta)}$.

Moreover, by standard tail bounds of the normal distribution, we have that $\Norm{z}_\infty > \sqrt{2\ln(d/\delta)}$ with probability at most $\delta$. This proves the claim. \pasin{This seems a tad strange; can't we use tail bound directly on Euclidean norm, and not have a $d$ inside $\ln$ factor?}\avnote{Indeed, I think we can use concentration of norm properties to get that $\Norm{z}_2 < c(\sqrt{d} + \sqrt{\ln(1/\delta)})$ with probability $\geq 1-\delta$, and this will allow us to get away with $\beta < \frac{\epsilon}{d + \ln(1/\delta)}$ instead of $\beta < \frac{\epsilon}{d\ln(d/\delta)}$. In the end, this will give a looser upper bound condition on $\gamma_2$ that needs to be satisfied (see the sentence before \eqref{eq:meandiv2}); however, even with this improvement, the bottleneck will still be the condition on $\gamma_2$ coming from the covariance noise (see \eqref{eq:divergence2}).} 
\end{proof}

\begin{lemma} \label{lem:hockeysticktensored}
 Suppose $\epsilon, \delta > 0$. Let $A$ be a $d\times d$ matrix that satisfies $\Norm{AA^T-I}_2 \leq \beta$.
 
 Let $t \in \N$. Moreover, let $Z\in\R^{d^t}$ be a random vector indexed by $[d]^t$, whose entries $Z_{i_1,i_2,\dots, i_t}$, for $1\leq i_1\leq i_2\leq \cdots \leq i_t \leq d$, are i.i.d. from $\cN(0,1)$, and moreover, $Z_{i_1,i_2,\dots,i_t} = Z_{i_{\pi(1)}, i_{\pi(2)}, \dots, i_{\pi(t)}}$ for any $i=(i_1,i_2,\dots, i_d)$ and permutation $\pi$.
 
 If $\beta \leq \frac{\epsilon}{8t^2 d^t \ln(d^t/\delta)}$, then
 \[
  D_{e^{\epsilon}}(Z, A^{\otimes t} Z) \leq \delta.
 \]
\end{lemma}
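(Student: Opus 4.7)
The plan is to follow the template of the proof of Lemma~\ref{lem:gaussianhockey} closely, with $A$ replaced by $A^{\otimes t}$ acting on the symmetric tensor subspace $V \subseteq \R^{d^t}$ of dimension $N := \binom{d+t-1}{t} \leq d^t$. Two new ingredients are needed: a spectral bound on $(A^{\otimes t})(A^{\otimes t})^\top$ coming from the hypothesis on $A$, and careful bookkeeping of the symmetric tensor structure of $Z$.

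First, I would establish the algebraic identity $(A^{\otimes t})(A^{\otimes t})^\top = (AA^\top)^{\otimes t}$. Since $AA^\top$ has all eigenvalues in $[1-\beta,1+\beta]$, the eigenvalues of $(AA^\top)^{\otimes t}$ lie in $[(1-\beta)^t,(1+\beta)^t]$, and under the hypothesis (which forces $t\beta \ll 1$) we have $(1+\beta)^t - 1 \leq 2t\beta$ and $1-(1-\beta)^t \leq 2t\beta$, so $\|(A^{\otimes t})(A^{\otimes t})^\top - I\|_2 \leq 2t\beta$. Second, I would observe that $A^{\otimes t}$ preserves $V$: if $Z$ is symmetric then so is $A^{\otimes t}Z$, by a direct computation that exploits both the symmetry of $Z$ and the product structure of $A^{\otimes t}$. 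Hence both $Z$ and $A^{\otimes t}Z$ are supported on $V$, and the hockey-stick divergence can be computed there.

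Next, I would mimic the density-ratio computation from Lemma~\ref{lem:gaussianhockey}. Parametrize $V$ by the sorted-tuple coordinates $\zeta \in \R^N$ in which $\zeta \sim \cN(0,I_N)$, and write $A^{\otimes t}|_V$ as a linear map on these coordinates. The ratio $p_Z(u)/p_{A^{\otimes t} Z}(u)$ for $u \in V$ then takes the form
\[
\det\!\bigl(A^{\otimes t}|_V\bigr)\cdot \exp\!\left(\tfrac12 u^\top\bigl[((A^{\otimes t})(A^{\otimes t})^\top)^{-1} - I\bigr]u\right),
\]
which, using the spectral bound above, is at most $(1+2t\beta)^{N/2}\exp\!\bigl(\tfrac{t\beta}{1-2t\beta}\|u\|_2^2\bigr)$. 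To control $\|u\|_2^2$ for ``typical'' $u$, I would use a union bound: each of the $d^t$ entries of $Z$ is marginally $\cN(0,1)$, so $\|Z\|_\infty \leq \sqrt{2\ln(d^t/\delta)}$ except with probability $\delta$; on this event $\|u\|_2^2 \leq 2 d^t\ln(d^t/\delta)$. Combining these estimates, on the good event the density ratio is at most $\exp\!\bigl(O(t\beta \cdot d^t \ln(d^t/\delta))\bigr)$, which is at most $e^\epsilon$ under the hypothesis $\beta \leq \epsilon/(8t^2 d^t \ln(d^t/\delta))$; the extra factor of $t$ (versus the naive $1/t$) provides ample slack to absorb implicit constants and lower-order terms.

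The main obstacle is the symmetric tensor structure of $Z$: because $Z$ is not an isotropic Gaussian on $\R^{d^t}$, I must verify that the restricted density-ratio computation on $V$ still produces a clean expression of the form used above, with the effective matrix $B$ on $\R^N$ induced by $A^{\otimes t}|_V$ satisfying $\|BB^\top - I\|_2 \leq O(t\beta)$. This will be a straightforward (if notationally heavy) linear-algebraic check, using that the embedding $\R^N \to V \subseteq \R^{d^t}$ has a well-defined left inverse on the symmetric subspace and that $A^{\otimes t}$ intertwines these identifications.
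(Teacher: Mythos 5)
Your proposal mirrors the paper's proof: pass to representative (sorted-tuple) coordinates on the symmetric subspace so that $Z$ becomes an isotropic Gaussian on $\R^N$, then compute the density ratio under the effective map $B$ induced by $A^{\otimes t}$ on those coordinates, bounding it with a spectral estimate and an $\ell_\infty$ tail bound. The step you single out---showing $\Norm{BB^\top - I}_2 \le O(t\beta)$ for the effective $B = \Pi A^{\otimes t}P$, with $P$ the embedding that fills in repeated entries and $\Pi$ the projection to representatives---is the same point where the paper invokes singular-value interlacing for ``the restriction of $A^{\otimes t}$ to the symmetric subspace.''

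That step is not the ``straightforward linear-algebraic check'' you expect, and in fact it fails. The map $P$ is not an isometry: $P^\top P = \diag(m_\alpha)$, where $m_\alpha$ is the number of distinct permutations of the sorted multi-index $\alpha$ (ranging from $1$ to $t!$). Consequently $B = D^{-1/2}\bar M D^{1/2}$, where $D = P^\top P$ and $\bar M$ is the orthonormal restriction of $A^{\otimes t}$ to the symmetric subspace; $B$ is only similar to $\bar M$, so they share eigenvalues but not singular values, and interlacing applies to $\bar M$, not to $B$. A concrete failure: for $d = t = 2$ and $A = \tfrac{1}{\sqrt2}\big(\begin{smallmatrix}1&1\\-1&1\end{smallmatrix}\big)$ (so $\beta = 0$), one has $(A^{\otimes 2}Z)_{11} = \tfrac12(Z_{11}+2Z_{12}+Z_{22})$, which has variance $3/2$ rather than $1$; computing $B$ explicitly gives $BB^\top = \big(\begin{smallmatrix}3/2 & 0 & -1/2 \\ 0 & 1/2 & 0 \\ -1/2 & 0 & 3/2\end{smallmatrix}\big)$, with eigenvalues $\{2,1,1/2\}$, so $\Norm{BB^\top - I}_2 = 1$, not $0$. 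Thus $Z$ and $A^{\otimes t}Z$ have genuinely different distributions and $D_{e^\epsilon}(Z,A^{\otimes t}Z)$ is bounded away from zero for each fixed $\epsilon$, so the lemma's conclusion fails for small $\delta$. The paper's own proof has the same gap (it applies singular-value interlacing to what is effectively a non-orthogonal compression), so the issue is not unique to your write-up; but a correct argument would need either to rescale the entries of $Z$ by $1/\sqrt{m_\alpha}$ so that $Z$ is isotropic with respect to the ambient inner product on the symmetric subspace, or to strengthen the hypothesis to $\Norm{A - I}_2 \le \beta$.
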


\begin{proof}
 Let $K = \sqrt{2 \ln(d^t/\delta)}$. By standard tail bounds, note that
\begin{equation} 
 \Pr[\Norm{Z}_\infty > K] \leq \delta. \label{eq:ztail}
\end{equation} 
 Let $S$ be the subspace of $\R^{d^t}$ consisting of all symmetric tensors, i.e.,
 \[
 S = \left\{u\in\R^{d^t}: u_{i_1, i_2, \dots, i_t} = u_{\left(i_{\pi(1)}, i_{\pi(2)}, \dots, i_{\pi(t)}\right)}, \,\forall i = (i_1,i_2,\dots,i_t)\in [d]^t, \text{$\pi$ a permutation on $[t]$}\right\}.
 \]
 Note that $S$ is an $d'$-dimension invariant subspace of $A^{\otimes t}$, where $d' = {d+t-1 \choose t} \leq d^t$. Moreover, let $R \subseteq [d]^t$ be a representative set of indices of size $|R| = d'$, i.e., $R$ satisfies the property that for any $(i_1,i_2,\dots, i_t)\in [d]^t$, there exists a permutation $\pi$ on $[t]$ such that $(i_{\pi(1)}, i_{\pi(2)}, \dots, i_{\pi(t)}) \in R$.
 
 Now, let $M = A^{\otimes t}|_S$ be the restriction of $A^{\otimes t}$ to the subspace $S$. Moreover, let $Z_R \in \R^{d'}$ denote the projection of $Z$ to indices in $R$.
 
 Note that the probability distribution of $Z$ can be equivalently viewed as the probability distribution of $Z_R$, since $Z$ is uniquely determined by the projection $Z_R$. Let $p$ be the probability density function of $Z_R$ over $\R^{d'}$. Then, note that the probability distribution of $M Z_R$ is $q$, given by 
 \[
  q(v) = \frac{1}{\det(M)} \cdot p(M^{-1} v).
 \]
 for $v\in\R^{d'}$. By standard properties, we know that the $i^\text{th}$ singular value of $M$ is bounded from above by the $i^\text{th}$ singular value of $A^{\otimes t}$ and bounded from below by the $(i+d^t-d')^\text{th}$ singular value of $A^{\otimes t}$.  Moreover, by $\Norm{AA^T-I}_2\leq \beta$, we know that the singular values of $A^{\otimes t}$ lie in $[(1-\beta)^{t/2}, (1+\beta)^{t/2}]$.  Hence, the singular values of $M$ also lie in $[(1-\beta)^{t/2}, (1+\beta)^{t/2}]$, which, together with $\beta t\leq \frac{1}{4}$, implies that
 \begin{equation}
  \Norm{MM^T-I}_2 \leq 2\beta t \label{eq:mmtbd}
 \end{equation}
 and so,
 \begin{equation}
  \det(M)^2 = \det(M)\det(M^T) = \det(MM^T) \leq (1+2\beta t)^{t\cdot d'} \leq (1+2\beta t)^{td^t}, \label{eq:detm-bound}
 \end{equation}
and so, $\det(M) \leq (1+2\beta t)^{td^t/2}$.
 
Let $u\in\R^{d^t}$. Note that $\|u\|_\infty\leq K$ if and only if $v\in\R^{d'}$ given by $v = u|_R$ also satisfies $\|v\|_\infty\leq K$. Moreover, note that if $\|v\|_\infty \leq K$, then
 \begin{align}
  \frac{p(v)}{q(v)} &\leq \det(M) \cdot \frac{p(v)}{p(M^{-1} v)} \nonumber\\
  &\leq (1+2\beta t)^{td^t/2} \cdot \exp\left(\frac{1}{2}\left( \Norm{M^{-1} v}_2^2 - \Norm{v}_2^2 \right)\right) \nonumber\\
  &\leq e^{\beta t^2 d^t} \cdot \exp\left(\frac{1}{2}\left( v^T ((MM^T)^{-1} - I) v\right)\right) \nonumber\\
  &\leq e^{\beta t^2 d^t} \cdot \exp\left(\frac{1}{2}\Norm{v}_2^2 \cdot \Norm{(MM^T)^{-1} - I}_2\right) \label{eq:pdfratio}
 \end{align}\
By \eqref{eq:mmtbd}, we have that $\Norm{(MM^T)^{-1} - I}_2 \leq \frac{2\beta t}{1-2\beta t} \leq 4\beta t$, since $\beta t \leq \frac{1}{4}$. Therefore, \eqref{eq:pdfratio} is at most
\[
 e^{\beta t^2 d^t} \cdot \exp\left(2K^2 d^t \beta t\right).
\]
Thus, if $\beta \leq \frac{\epsilon}{2K^2 t d^t} = \frac{\epsilon}{8t^2 d^{t} \ln(d^t/\delta)}$, the above quantity is at most $e^{\epsilon}$. This, combined with \eqref{eq:ztail}, proves the desired claim.
\end{proof}

\begin{remark}
 Note that \cref{lem:hockeysticktensored} uses an assumption on the \emph{spectral norm} of $\Norm{AA^T-I}_2$. However, it is also possible to obtain a version of the lemma under an assumption on the \emph{Frobenius norm}, $\Norm{AA^T-I}_F$. In particular, if we assume that, instead, $\Norm{AA^T-I}_F \leq \beta$, then \cref{eq:detm-bound} instead becomes $\det(M) \leq \left(1 + \frac{\beta}{\sqrt{d}}\right)^{td^t/2} \leq e^{\beta t d^{t-\frac{1}{2}}/2}$: This follows from the fact that (a.) the eigenvalues $\lambda_1, \lambda_2, \dots, \lambda_d$ of $AA^T$ satisfy $\sum_{i=1}^d (\lambda_i - 1)^2 \leq \beta^2$, (b.) under the aforementioned constraint, $\lambda_1 \lambda_2 \cdots \lambda_d$ is maximized when $\lambda_1 = \lambda_2 = \cdots = \lambda_d = 1 + \frac{\beta}{\sqrt{d}}$, (c.) the eigenvalues of $(AA^T)^{\otimes t}$ are precisely the $d^t$ $t$-fold products of eigenvalues of $AA^T$.
\end{remark}

\paragraph{Putting things together}

Now, we are ready to prove the main privacy guarantee provided by our robust moment estimation algorithm, \cref{algo:private-robust-moment-estimation}.

\begin{lemma}[Privacy Guarantee] \label{lem:privacy-guarantee}
Suppose $C,\eta,\epsilon,\delta > 0$ and $k\in\N$. Suppose $n\geq n_0 = \widetilde{\Omega}\left(\left(\frac{Ck^4 d^k}{\epsilon}\left(\ln(6kd^k/\delta) + \frac{\epsilon}{6k}\right)\right)^\frac{2k}{k-1}\right)$. Then, \Alg\, (given by \cref{algo:private-robust-moment-estimation}), invoked with $L = O(\log(n/\delta)/\epsilon)$, is $(\epsilon,\delta)$-DP.
\end{lemma}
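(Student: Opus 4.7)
The plan is to decompose the privacy analysis along the three steps of \cref{algo:private-robust-moment-estimation} via the halting composition theorem (\cref{lem:composition}), assigning budget $(\epsilon/3, \delta/3)$ to each step. The first two steps are direct applications of tools already developed. The third is the main obstacle and requires the estimate-dependent noise mechanism.

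\textbf{Step 1 (outlier rate selection).} This is immediate from the guarantees of \SelectionAlg (\cref{thm:dp-apx-selection}) instantiated with the score function from \cref{def:em-score} whose sensitivity is bounded by $2$ via \cref{lem:sensitivity-score-function}. The DP-selection subroutine is invoked with parameters $(\epsilon/3, \delta/3)$ and therefore satisfies the budget without further argument.

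\textbf{Step 2 (witness checking).} Conditioned on Step 1 having output some $\tau$, one computes the pseudo-distribution $\tzeta$ and checks that its induced weight vector $p$ makes $Y$ certifiably $C'$-subgaussian, where $C' = C + \gamma$ and $\gamma \sim \tLap(\cdot)$ with parameter $3/\epsilon$. By \cref{lem:witness-check-succeeds}, applied with the closeness $\Norm{p(Y)-p(Y')}_1 \leq O(\sqrt{L/n}) \leq \eta_0$ guaranteed by \cref{lem:pot-stability} (which in turn uses the stability from Step 1), the minimum certifiable subgaussianity constant for adjacent $Y,Y'$ differs by at most $1$. Hence this threshold check is an $(\epsilon/3,\delta/3)$-DP predicate by the truncated Laplace mechanism (\cref{lem:tlap-dp}) applied with sensitivity $1$.

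\textbf{Step 3 (noise addition).} This is the crux. Conditional on Steps 1 and 2 having succeeded on both $Y$ and $Y'$, \cref{cor:param-closeness} gives us closeness of $(\widetilde{\mu}, \widetilde{\Sigma}, \widetilde{M}^{(t)})$ computed on $Y$ versus $Y'$ in the affine-invariant norms, at rate $\theta = \sqrt{L/n}$. I would further allocate the $(\epsilon/3, \delta/3)$ budget across the $O(k)$ released tensors, asking each release to be $(\epsilon/(3k), \delta/(3k))$-DP (this is precisely why the factor $k$ appears in the scaling of $\sigma_j$). For each release, I would bound the hockey-stick divergence between $\cM(Y)$ and $\cM(Y')$ using \cref{lem:hockeytriangle} against an intermediate distribution of the form ``noise from $Y'$'s scale added to $Y$'s center'', splitting into two pieces:
\begin{itemize}
\item A multiplicative-scale piece: bounded by \cref{lem:gaussianhockey} (for the mean, where $A = \widetilde{\Sigma}(Y)^{-1/2}\widetilde{\Sigma}(Y')^{1/2}$ satisfies $\Norm{AA^T-I} \leq O(\gamma_2)$ via the spectral bound in \cref{cor:param-closeness}) and by \cref{lem:hockeysticktensored} (for covariance and each moment tensor, with the analogous $t$-fold Kronecker factor).
\item An additive-shift piece: a standard Gaussian mechanism calculation using the Mahalanobis/affine-invariant closeness from \cref{cor:param-closeness}, which makes the shift in the whitened coordinates at most $O(\gamma_t)$.
\end{itemize}
Both pieces need the noise scale to dominate $\gamma_t$ up to the factors appearing in Lemmas \ref{lem:gaussianhockey} and \ref{lem:hockeysticktensored} (namely $d^{t-1}\log(d^t/\delta)/\epsilon$). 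The sample lower bound $n \geq n_0$ is precisely chosen so that $\theta = \sqrt{L/n}$ (with $L = O(\log(n/\delta)/\epsilon)$) drives $\gamma_t$ below this threshold for every $t \le 2k$ dividing $2k$.

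\textbf{Main obstacle.} The delicate part is orchestrating the hockey-stick triangle across the moment releases while keeping the parameters in the Mahalanobis norm translated into the ``right'' spectral/Frobenius norm condition required by \cref{lem:hockeysticktensored}. For the mean release, the additive shift $\widetilde{\Sigma}(Y)^{-1/2}(\widetilde{\mu}(Y)-\widetilde{\mu}(Y'))$ is controlled by the Mahalanobis bound of \cref{cor:param-closeness}; for the covariance, the shift in whitened coordinates is $\widetilde{\Sigma}(Y)^{-1/2}(\widetilde{\Sigma}(Y')-\widetilde{\Sigma}(Y))\widetilde{\Sigma}(Y)^{-1/2}$, whose spectral norm is bounded by the multiplicative spectral bound $\gamma_2$; for higher moments, one must verify that the Kronecker square root $(\widetilde{\Sigma}+\widetilde{\mu}\widetilde{\mu}^T)^{1/2\,\otimes t}$ yields a meaningful whitening, at which point the injective-norm closeness of moment tensors from \cref{cor:param-closeness} bounds the shift in the whitened coordinates. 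Putting everything together and applying \cref{lem:composition} with budget $(\epsilon/3, \delta/3) + (\epsilon/3, \delta/3) + (\epsilon/3, \delta/3) = (\epsilon, \delta)$ completes the proof.
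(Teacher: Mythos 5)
Your proposal follows the paper's proof essentially step for step: the same three-way $(\epsilon/3,\delta/3)$ split via \cref{lem:composition}, the same direct appeals to \cref{thm:dp-apx-selection} and \cref{lem:tlap-dp} for Steps 1 and 2 (with sensitivity $1$ for $C^*$ via \cref{lem:witness-check-succeeds}), and the same per-release $(\epsilon/3k, \delta/3k)$ budget in Step 3 combined with the hockey-stick triangle (\cref{lem:hockeytriangle}) splitting each release into a Gaussian-mechanism shift term and a scale term handled by \cref{lem:gaussianhockey}/\cref{lem:hockeysticktensored}, with \cref{cor:param-closeness} supplying the required closeness in whitened coordinates. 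The only cosmetic difference is your choice of intermediate distribution (``$Y$'s center, $Y'$'s scale'' versus the paper's ``$Y'$'s center, $Y$'s scale''), which is immaterial.
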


\begin{proof}
Let $\epsilon' = \epsilon/3$ and $\delta' = \delta/3$. By our adaptive composition theorem under halting (\Cref{lem:composition}), it suffices to show that each step of the algorithm is $(\eps', \delta')$-DP (given the outputs of the previous steps as parameter\footnote{Note that we may also assume that the algorithm has not halted from the previous steps.}). Let $Y$ and $Y'$ be any neighboring datasets.
\begin{itemize}
\item  \textbf{Stable Outlier Rate Selection.} Since this step invokes the $(\epsilon', \delta')$-DP Selection algorithm (\SelectionAlg), it immediately follows from \Cref{thm:dp-apx-selection} that this step is $(\epsilon', \delta')$-DP.
\item \textbf{Witness Checking.} \pasin{I'd suggested phrasing the DP guarantee of this step and the next step in terms of~\Cref{def:dp-cond} because our composition theorem is stated in that term.} Let $C^*(Y)$ denote the smallest $C^*$ for which $p_i(Y)$ induces a $2k$-certifiable $C^*$-subgaussian distribution on $Y$. \Cref{lem:witness-check-succeeds} ensures that $|C^*(Y) - C^*(Y')| \leq \Delta$ for $\Delta =1$. Therefore, we may apply \Cref{lem:tlap-dp} with DP parameters $\epsilon', \delta'$ to conclude that this step is also $(\eps', \delta')$-DP.
\item \textbf{Noise Addition.} Since the algorithm has not halted in the previous step and the truncated Laplace noise is negative, $p_i(Y)$ and $p_i(Y')$ must induce $2k$-certifiable $C'$-subgaussian distributions on $Y$ and $Y'$ respectively. Let $\widetilde{\mu}$ and $\widetilde{\mu}'$ denote the corresponding mean estimates under $p_i(Y)$ and $p_i(Y')$, respectively, and, similarly, let $\widetilde{\Sigma}$ and $\widetilde{\Sigma}'$ denote the corresponding covariance estimates. By \Cref{cor:param-closeness}, we have that, for all $u\in\R^d$,
\begin{align}
 \iprod{\widetilde{\mu} -\widetilde{\mu}', u} &\leq \gamma_1 \sqrt{u^{\top} \widetilde{\Sigma} u} \label{eq:meandotprod}\\
(1-\gamma_2) \widetilde{\Sigma} \preceq \widetilde{\Sigma}' &\preceq (1+\gamma_2) \widetilde{\Sigma} \label{eq:covarbound}
\end{align}
and, for all $2\leq t \leq k$,
\begin{equation}
 (1-\gamma_t) \iprod{u^{\otimes t}, \widetilde{M^{(t)}}} \leq \iprod{u^{\otimes t}, \widetilde{M'^{(t)}}} \leq (1+\gamma_t) \iprod{u^{\otimes t}, \widetilde{M^{(t)}}}, \label{eq:momentbound}
\end{equation}
where $\theta = \sqrt{L/n}$, $\gamma_1 = O(C'k)\theta^{1-1/2k}$, and $\gamma_t = O((C'k)^{t/2})\theta^{1-t/2k}$ for $2 \leq t\leq k$. Moreover, let $B = \widetilde{\Sigma}^{-1/2} \widetilde{\Sigma}'^{1/2}$.

Note that in order to show that the noise addition step is $(\epsilon',\delta')$-DP, it suffices to show that
\begin{align}
 D_{e^{\epsilon''}} (\widetilde{\mu} + \widetilde{\Sigma}^{1/2} z, \widetilde{\mu}' + \widetilde{\Sigma}'^{1/2} z) &\leq \delta'' \label{eq:meandp}\\
 D_{e^{\epsilon''}}(\widetilde{\Sigma} + \widetilde{\Sigma}^{1/2} Z \widetilde{\Sigma}^{1/2}, \widetilde{\Sigma}' + \widetilde{\Sigma}'^{1/2} Z \widetilde{\Sigma}'^{1/2}) &\leq \delta'' \label{eq:covardp}\\
 \forall\, 2 < t \leq k,\quad D_{e^{\epsilon''}}(\widetilde{M^{(t)}} + (\widetilde{\Sigma}^{1/2})^{\otimes t} Z^{(t)}, \widetilde{M'^{(t)}} + (\widetilde{\Sigma}'^{1/2})^{\otimes t} Z^{(t)}) &\leq \delta'' \label{eq:momentdp}
\end{align}
for $\epsilon'' = \epsilon'/k$ and $\delta'' = \delta'/k$, since \cref{fact:dphockey} and standard DP composition~\cite{DworkKMMN06} then imply that the entire noise addition step is $(\epsilon', \delta')$-DP. We now establish each of the above inequalities.

\textbf{Noise addition for mean:} We first show \eqref{eq:meandp}. Note that
\begin{align}
 D_{e^{\epsilon''}} (\widetilde{\mu} + \widetilde{\Sigma}^{1/2} z, \widetilde{\mu}' + \widetilde{\Sigma}'^{1/2} z) &= D_{e^{\epsilon''}}(\widetilde{\Sigma}^{1/2} z, (\widetilde{\mu}' - \widetilde{\mu}) + \widetilde{\Sigma}'^{1/2} z) \nonumber\\
 &= D_{e^{\epsilon''}}(z, \widetilde{\Sigma}^{-1/2}(\widetilde{\mu}'-\widetilde{\mu}) + Bz) \nonumber\\
 &= D_{e^{\epsilon''/2}}(z, z + \widetilde{\Sigma}^{-1/2}(\widetilde{\mu}'-\widetilde{\mu})) \nonumber\\
 &\quad + e^{\epsilon''/2} D_{e^{\epsilon''/2}}(z + \widetilde{\Sigma}^{-1/2}(\widetilde{\mu}'-\widetilde{\mu}), \widetilde{\Sigma}^{-1/2}(\widetilde{\mu}'-\widetilde{\mu}) + Bz) \label{eq:meantriangle} \\
 &= D_{e^{\epsilon''/2}}(z, z + \widetilde{\Sigma}^{-1/2}(\widetilde{\mu}'-\widetilde{\mu})) + D_{e^{\epsilon''/2}}(z, Bz), \label{eq:meanhockey}
\end{align}
where \eqref{eq:meantriangle} follows from \cref{lem:hockeytriangle}. For the first term on the right-hand side of \eqref{eq:meanhockey}, we note that $\Norm{\widetilde{\Sigma}^{-1/2}(\widetilde{\mu}' - \widetilde{\mu})}_2 \leq \gamma_1$ (which follows from plugging in $u = \widetilde{\Sigma}^{-1}(\widetilde{\mu}-\widetilde{\mu}')$ into \eqref{eq:meandotprod}). Thus, by the standard hockey-stick divergence calculation for the Gaussian mechanism~\cite[Appendix A]{DworkR14}, we have that
\begin{equation}
 D_{e^{\epsilon''/2}}(z, z + \widetilde{\Sigma}^{-1/2}(\widetilde{\mu}'-\widetilde{\mu})) < \delta''/2, \label{eq:meandiv1}
\end{equation}
provided that
\[
 \sigma_1 \geq \frac{2\gamma_1 \sqrt{2\ln(2.5/\delta'')}}{\epsilon''},
\]
For the second term in \eqref{eq:meanhockey}, note that \eqref{eq:covarbound} implies that $(1-\gamma_2) I \preceq BB^T \preceq (1+\gamma_2)I$. Moreover, $\gamma_2 \leq \frac{\epsilon''}{3d\ln(2d/\delta'')}$, by the condition $n\geq n_0$. Thus, by \cref{lem:gaussianhockey},
\begin{equation}
 D_{e^{\epsilon'/2}}(z,Bz) \leq \delta''/2. \label{eq:meandiv2}
\end{equation}
Therefore, \eqref{eq:meandiv1}, \eqref{eq:meandiv2},and \eqref{eq:meanhockey} imply \eqref{eq:meandp}, as desired.
\end{itemize} 

\textbf{Noise addition for covariance:}
Next, we establish \eqref{eq:covardp}. Observe that 
\begin{align}
 D_{e^{\epsilon''}}(\widetilde{\Sigma} + \widetilde{\Sigma}^{1/2} Z \widetilde{\Sigma}^{1/2}, \widetilde{\Sigma}' + \widetilde{\Sigma}'^{1/2} Z \widetilde{\Sigma}'^{1/2}) &= D_{e^{\epsilon''}} (I+Z, BB^T + BZB^T)\nonumber\\
 &= D_{e^{\epsilon''}}(Z, (BB^T-I) + BZB^T)\nonumber\\
 &\leq D_{e^{\epsilon''/2}}(Z, Z + (BB^T-I))\nonumber\\
 &\quad + e^{\epsilon''/2} D_{e^{\epsilon''/2}}(Z + (BB^T-I), (BB^T-I) + BZB^T)) \label{eq:hockeytriangleineq} \\
 &\leq D_{e^{\epsilon''/2}}(Z, Z + (BB^T-I)) + e^{\epsilon''/2} D_{e^{\epsilon''/2}}(Z, BZB^T)), \label{eq:hockeytwoterms}
\end{align}
where \eqref{eq:hockeytriangleineq} follows from \cref{lem:hockeytriangle}. To bound the right-hand side of \eqref{eq:hockeytwoterms}, note that the first term is precisely the hockey-stick divergence computation corresponding to the Gaussian mechanism (restricted to the upper triangular portion). Moreover, by \eqref{eq:covarbound},
\begin{equation}
\Norm{BB^T-I}_F \leq \sqrt{d} \cdot \Norm{BB^T-I}_2 \leq \gamma_2\sqrt{d}. \label{eq:bfrob}
\end{equation}
Therefore (\cite[Appendix A]{DworkR14}), as long as
\[
 \sigma_2 \geq \frac{2\gamma_2\sqrt{2d\ln(2.5/\delta'')}}{\epsilon''},
\]
we have that
\begin{equation}
 D_{e^{\epsilon''/2}}(Z, Z + (BB^T-I)) \leq \delta''/2. \label{eq:divergence1}
\end{equation}
For the second term in \eqref{eq:hockeytwoterms}, note that $\sigma_2^{-1} Z$ has entries distributed in $\cN(0,1)$. Moreover, let $Z' = \vectorize(Z)$ be the $d^2$-dimensional vector given by the flattening of $Z$ (see \cref{def:flattening}). By \cref{fact:mixedprod}, we know that $BZB^T = B^{\otimes 2} Z'$. Thus, by \cref{lem:hockeysticktensored} applied with $t = 2$,
\begin{align}
 D_{e^{\epsilon''/2}}(Z, BZB^T)) &= D_{e^{\epsilon''/2}}(Z', B^{\otimes 2}Z') \nonumber\\
 &= D_{e^{\epsilon''/2}}(\sigma_2^{-1} Z', B^{\otimes 2}(\sigma_2^{-1} Z')) \nonumber\\
 &\leq \delta''/2e^{\epsilon''/2}, \label{eq:divergence2}
\end{align}
as long as
\[
 \gamma_2 < \frac{\epsilon''}{32d^2 \ln(2d^2 e^{\epsilon''/2}/\delta'')},
\]
which is true, since $n\geq n_0$ by the conditions of the theorem. Thus, \eqref{eq:divergence1} and \eqref{eq:divergence2} imply that \eqref{eq:hockeytwoterms} is at most $\delta''/2 + e^{\epsilon''/2} (\delta''/2e^{\epsilon''/2}) = \delta''/2$, which establishes \eqref{eq:covardp}.

\textbf{Noise addition for higher-order moments:}
Let $2 < t \leq k$. We write $R = \widetilde{\Sigma} + \widetilde{\mu}\widetilde{\mu}^T$ and $R' = \widetilde{\Sigma}' + \widetilde{\mu}'\widetilde{\mu}'^T$ for simplicity.

Observe that the injective/spectral norm $\Norm{\cdot}_\sigma$ of $(R^{-1/2})^{\otimes t} \left( \widetilde{M'^{(t)}} - \widetilde{M^{(t)}}\right)$ can be bounded as
\begin{align}
 \Norm{(R^{-1/2})^{\otimes t} \left( \widetilde{M'^{(t)}} - \widetilde{M^{(t)}}\right)}_\sigma &= \sup_{\substack{v\in\R^d\\ \Norm{v}_2=1}} \Abs{\left(v^{\otimes t}\right)^T (R^{-1/2})^{\otimes t} \left( \widetilde{M'^{(t)}} - \widetilde{M^{(t)}}\right)} \nonumber\\
 &\leq \sup_{\substack{v\in\R^d\\ \Norm{v}_2=1}} \Abs{\iprod{(R^{-1/2} v)^{\otimes t}, \widetilde{M'^{(t)}} - \widetilde{M^{(t)}}}} \nonumber\\
 &\leq \gamma_t \cdot \sup_{\substack{v\in\R^d\\ \Norm{v}_2=1}} \Abs{\iprod{(R^{-1/2} v)^{\otimes t}, \widetilde{M^{(t)}}}} \nonumber \\
 &\leq \gamma_t \cdot (C'k)^t \cdot \sup_{\substack{v\in\R^d\\ \Norm{v}_2=1}} \Abs{\left(\left(R^{-1/2}v\right)^T R \left(R^{-1/2} v\right)\right)^{t/2}} \label{eq:subgaussian-transform} \\
 &= \gamma_t \cdot (C'k)^t \cdot \sup_{\substack{v\in\R^d\\ \Norm{v}_2=1}} \Norm{v}_2^t \nonumber\\
 &= \gamma_t \cdot (C'k)^t, \nonumber
\end{align}
where \eqref{eq:subgaussian-transform} follows from the $C'$-subgaussianity property of the distribution induced by the weight vector at the end of Step~\ref{step:witness-checking}. Therefore, the Frobenius norm (or Hilbert-Schmidt norm) can be bounded as (see Corollary 4.10 of \cite{Wang2017OPERATORNI})
\begin{equation}
 \Norm{(R^{-1/2})^{\otimes t} \left( \widetilde{M'^{(t)}} - \widetilde{M^{(t)}}\right)}_F \leq d^{\frac{t-1}{2}} \cdot \Norm{(R^{-1/2})^{\otimes t} \left( \widetilde{M'^{(t)}} - \widetilde{M^{(t)}}\right)}_\sigma \leq \gamma_t\cdot (C'k)^t\cdot d^{\frac{t-1}{2}}. \label{eq:frobmomentdiff}
\end{equation}
Moreover, letting $W = R^{-1/2} R'^{1/2}$, we have
\begin{align}
 D_{e^{\epsilon''}}(\widetilde{M^{(t)}} + (R^{1/2})^{\otimes t} Z^{(t)}, \widetilde{M'^{(t)}} + (R'^{1/2})^{\otimes t} Z^{(t)}) &= D_{e^{\epsilon''}} \left((R^{1/2})^{\otimes t} Z^{(t)}, \widetilde{M'^{(t)}} - \widetilde{M^{(t)}} + (R'^{1/2})^{\otimes t} Z^{(t)}\right)\nonumber\\
 &= D_{e^{\epsilon''}}\left(Z^{(t)}, (R^{-1/2})^{\otimes t} \left( \widetilde{M'^{(t)}} - \widetilde{M^{(t)}} \right) + W^{\otimes t} Z^{(t)} \right)\nonumber\\
 &\leq D_{e^{\epsilon''/2}}\left(Z^{(t)}, Z^{(t)} + (R^{-1/2})^{\otimes t} \left( \widetilde{M'^{(t)}} - \widetilde{M^{(t)}} \right) \right)\nonumber\\
 &\quad + e^{\epsilon''/2} D_{e^{\epsilon''/2}}( Z^{(t)} + (R^{-1/2})^{\otimes t} \left( \widetilde{M'^{(t)}} - \widetilde{M^{(t)}} \right), \nonumber\\
 &\quad (R^{-1/2})^{\otimes t} \left( \widetilde{M'^{(t)}} - \widetilde{M^{(t)}} \right) + W^{\otimes t} Z^{(t)})) \label{eq:momenthockeytriangleineq} \\
 &\leq D_{e^{\epsilon''/2}}\left(Z^{(t)}, Z^{(t)} + (R^{-1/2})^{\otimes t} \left( \widetilde{M'^{(t)}} - \widetilde{M^{(t)}} \right)\right) \nonumber\\
 &\quad + e^{\epsilon''/2} D_{e^{\epsilon''/2}}(Z^{(t)}, W^{\otimes t} Z^{(t)}), \label{eq:momenthockeytwoterms}
\end{align}
where again we have used \cref{lem:hockeytriangle} in \eqref{eq:momenthockeytriangleineq}. In order to bound the right-hand side of \eqref{eq:momenthockeytwoterms}, note that the first term is again the hockey-stick divergence computation corresponding to the Gaussian mechanism (restricted according to symmetry conditions). Recalling \eqref{eq:frobmomentdiff}, we see that (\cite[Appendix A]{DworkR14}) as long as 
\[
 \sigma_t \geq \frac{2 \gamma_t (C'k)^t d^{\frac{t-1}{2}} \sqrt{2\ln(2.5/\delta'')}}{\epsilon''},
\]
we have that
\begin{equation}
 D_{e^{\epsilon''/2}}\left(Z^{(t)}, Z^{(t)} + (R^{-1/2})^{\otimes t} \left( \widetilde{M'^{(t)}} - \widetilde{M^{(t)}} \right)\right) \leq \delta''/2. \label{eq:momentdivergence1}
\end{equation}
For the second term in \eqref{eq:momenthockeytwoterms}, note that $\sigma_t^{-1} Z^{(t)}$ has entries distributed in $\cN(0,1)$. Moreover, note that $\Norm{WW^T-I}_F \leq \Norm{BB^T-I}_F \leq \gamma_2\sqrt{d}$ by \eqref{eq:bfrob} and the fact that \eqref{eq:covarbound} implies
\[
 (1-\gamma_2)R \preceq R' \preceq (1+\gamma_2)R.
\]
Thus, by \cref{lem:hockeysticktensored}, we have that
\begin{align}
 D_{e^{\epsilon''/2}} \left(Z^{(t)}, W^{\otimes t} Z^{(t)} \right) &= D_{e^{\epsilon''/2}} \left(\sigma_t^{-1} Z^{(t)}, W^{\otimes t} (\sigma_t^{-1} Z^{(t)})\right) \nonumber\\
 &\leq \delta''/2e^{\epsilon''/2}, \label{eq:momentdivergence2}
\end{align}
as long as
\[
 \gamma_2 < \frac{\epsilon''}{16t^2 d^t \ln(2d^t e^{\epsilon''/2}/\delta'')},
\]
which is true since $n\geq n_0$, by the hypothesis of the lemma. Thus, \eqref{eq:momentdivergence1} and \eqref{eq:momentdivergence2} imply that \eqref{eq:momenthockeytwoterms} is at most $\delta''/2 + e^{\epsilon''/2} (\delta''/2e^{\epsilon''/2}) = \delta''$, which establishes \eqref{eq:momentdp}, as desired.
\end{proof}

\subsection{Proof of \cref{thm:diff-priv-robust-moment-estimation-section}} \label{subsec:mainproof}
We are now ready to prove our main theorem, \cref{thm:diff-priv-robust-moment-estimation-section}.
\begin{proof}[Proof of \cref{thm:diff-priv-robust-moment-estimation-section}]
 Choose $\beta = 1/30$. Choose $L = \Omega\left(\frac{1}{\epsilon} \cdot \log\left(\frac{n}{\beta\delta}\right)\right)$ (according to the condition in \cref{lem:util-score}). Moreover, let $C = C_0 + \frac{3 \ln(3/\delta)}{\epsilon} + \frac{9}{\epsilon} + 1$. Then, we claim that setting $\Alg$ to be \cref{algo:private-robust-moment-estimation} with parameters $C, \eta, \epsilon,\delta, L, k$ satisfies the desired conditions, as long as $\eta \leq \eta_0$, where we set $\eta_0$ later.
 
 Note that the desired privacy guarantees follow immediately from \cref{lem:privacy-guarantee}.
 
 It remains to prove the utility guarantees. Suppose that there indeed exists a good set $X\subseteq \bbQ^d$ with mean $\mu_*$, covariance $\Sigma_*$, and $t$-th moments $M_*^{(t)}$ for $3\leq t\leq k$, such that $|Y \cap X| \geq (1-\eta)n$.
 
 By \cref{thm:dp-apx-selection}, we have that Step \ref{step:outlier-rate-selection} (stable outlier rate selection) rejects and halts with probability at most $\beta = 1/30$, and the resulting output $\tau$ satisfies $\score(\tau,Y) \geq L/2$. In particular, the latter condition implies that there exists some $\gamma \geq L/2$ for which $\cA\left(\frac{\tau-\gamma}{n}\right)$ is feasible. By monotonicity, $\cA\left(\eta'\right)$ is also feasible, where we let $\eta' = \tau/n$.
 
 Hence, the invocation of \cref{algo:robust-moment-estimation-witness-producing} in Step~\ref{step:witness-checking} does not yield ``reject.''  Moreover, note that by \cref{lem:tlap-tail}, we have that $C' = C+\gamma \geq C_0$ with probability at least $29/30$. In this case, the computed weight vector $p$ induces a $C'$-certifiably subgaussian distribution on $Y$. Hence, the probability of rejection in Step~\ref{step:witness-checking} is at most $1/30$.
 
 Let $\widetilde{\mu} = \pE_{\tzeta}[\mu]$, $\widetilde{\Sigma} = \pE_{\tzeta}[\Sigma]$, and $\widetilde{M}^{(t)} = \pE_{\tzeta}[\Sigma]$ (for $2\leq t\leq k$) be the estimates of the mean, covariance, and $t$-th moments, respectively, that are outputted by the \cref{algo:robust-moment-estimation-witness-producing} subroutine in Step~\ref{step:witness-checking} of \cref{algo:private-robust-moment-estimation}. Then, by \cref{lem:witness-producing-robust-moment-estimation}, we have
 \begin{align*}
  \forall u \in \R^d, \text{  } \iprod{\widetilde{\mu}-\mu_*,u} &\leq O\left(\sqrt{Ck}\right) \eta^{1-1/2k} \sqrt{ u^{\top} \Sigma_*u}\\
  (1-\beta_2) \Sigma_* &\preceq \widetilde{\Sigma} \preceq (1+\beta_2)\Sigma_*
 \end{align*}
 and, for all even $2\leq t\leq k$ such that $t$ divides $2k$,
 \[
  \forall u \in \R^d, \text{  } (1-\beta_t) \iprod {u^{\otimes t}, M^{(t)}_*} \leq  \iprod{u^{\otimes t}, \widetilde{M}^{(t)}} \leq (1+\beta_t) \iprod {u^{\otimes t}, M^{(t)}_*},
 \]
where $\beta_t = \beta_t(\eta) = O((Ck)^{t/2}) \eta^{1-t/2k}$. We now set $\eta_0$ such that $\beta_t(\eta_0)\leq \frac{1}{2}$ for all aforementioned $t$. Note that this guarantees that $\beta_t = \beta_t(\eta) \leq \frac{1}{2}$, since we are assuming $\eta\leq\eta_0$.

Now, consider the noise addition step, i.e., Step~\ref{step:noise-addition} of \cref{algo:private-robust-moment-estimation}. Note that by the Cauchy-Schwarz Inequality, for any $u\in\R^d$, we have
\begin{align*}
 \iprod{\widetilde{\Sigma}^{1/2}z, u} &= \iprod{\Sigma_*^{-1/2} \widetilde{\Sigma}^{1/2}z, \Sigma_*^{1/2} u}\\
 &\leq \Norm{\Sigma_*^{-1/2} \widetilde{\Sigma}^{1/2}z}_2 \cdot \Norm{\Sigma_*^{1/2} u}_2\\
 &= (z^T \widetilde{\Sigma}^{1/2} \Sigma_*^{-1} \widetilde{\Sigma}^{1/2} z)\cdot \sqrt{ u^{\top} \Sigma_*u}\\
 &\leq \Norm{z}_2^2 \cdot \left(1 + \Norm{\widetilde{\Sigma}^{1/2} \Sigma_*^{-1} \widetilde{\Sigma}^{1/2} - I}_2 \right) \cdot \sqrt{ u^{\top} \Sigma_*u}\\
 &\leq \Norm{z}_2^2 \cdot \left(1 + 2\beta_2 \right) \cdot \sqrt{ u^{\top} \Sigma_*u}\\
 &\leq 2 \Norm{z}_2^2 \cdot \sqrt{u^T \Sigma_* u},
\end{align*}
since $\beta_2 \leq \frac{1}{2}$ by our choice of $\eta_0$. Now, note that with probability at least $1 - \frac{1}{30k}$, we have that $\Norm{z}_2 \leq O\left(\sigma_1\sqrt{d\ln(kd)}\right)$, in which case it follows that
\[
 \Norm{z}_2^2 = O(d) \cdot \sigma_1^2 \ln(kd) = O(\sqrt{Ck}) \eta^{1-1/2k},
\]
by our choice of $n\geq n_0$. Thus, the mean estimate $\hat{\mu}$ outputted by the Step~\ref{step:noise-addition} satisfies
\begin{align}
 \iprod{\hat{\mu}-\mu_*, u} &= \iprod{\hat{\mu}-\widetilde{\mu}, u} + \iprod{\widetilde{\mu}-\mu_*, u} \nonumber \\
 &= \iprod{\widetilde{\Sigma}^{1/2}z, u} + \iprod{\widetilde{\mu}-\mu_*, u} \nonumber \\
 &= O\left(\sqrt{Ck}\right)\eta^{1-1/2k}\sqrt{u^T\Sigma_* u}. \label{eq:util-mean}
\end{align}

Next, we consider the utility guarantee for the covariance. Note that $\Norm{Z}_2\leq \nu_2 = O\left(\sigma_2 \sqrt{d \ln(kd^2)}\right)$ with probability at least $1 - \frac{1}{30k}$ (this follows from standard spectral properties of Wigner matrices; see, for instance, \cite{taotopics}), in which case, it follows that $-\nu_2 \widetilde{\Sigma} \preceq \widetilde{\Sigma}^{1/2} Z \widetilde{\Sigma}^{1/2} \preceq \nu_2 \widetilde{\Sigma}$. Moreover, by our choice of $n\geq n_0$ as well as $\eta_0$, we have that $\nu_2 \leq \beta_2 \leq \frac{1}{2}$. Thus, it follows that
\begin{align*}
 \hat{\Sigma} &\preceq (1+\beta_2)\widetilde{\Sigma}\\
  &\preceq (1+\beta_2)^2\Sigma_* \\
  &\preceq \left(1+\frac{5}{2}\beta_2\right)\Sigma_* \\
  &\preceq \left(1 + O(Ck)\cdot \eta^{1-1/k}\right)\Sigma_*,
\end{align*}
and by a similar argument, we also have $\hat{\Sigma} \succeq \left(1-O(Ck)\cdot \eta^{1-1/k}\right)\Sigma_*$, thus implying that
\begin{equation}
 \left(1-O(Ck)\cdot \eta^{1-1/k}\right)\Sigma_* \preceq \hat{\Sigma} \preceq \left(1 + O(Ck)\cdot \eta^{1-1/k}\right)\Sigma_*. \label{eq:util-covariance}
\end{equation}

Finally, we consider the utility guarantee for moment estimation. Suppose $2 \leq t \leq k$ and $t$ is an even number dividing $2k$. Let $A = \widetilde{\Sigma} + \widetilde{\mu}\widetilde{\mu}^T$ and $A_* = \Sigma_* + \mu_* \mu_*^T$. Note that for any $2 < t \leq k$, we have $\Norm{Z^{(t)}}_F = O\left(\sigma_t d^{t/2} \sqrt{\ln(kd^t)}\right)$ with probability at least $1 - \frac{1}{30k}$. In this case, note that for any $u\in\R^d$, we have the following (recall that $\Norm{\cdot}_\sigma$ indicates the injective norm of a tensor):
\begin{align}
 \iprod{u^{\otimes t}, (A^{1/2})^{\otimes t} Z^{(t)}} &= \iprod{(A^{1/2} u)^{\otimes t}, Z^{(t)}} \nonumber \\
 &\leq \Norm{Z^{(t)}}_\sigma \cdot \Norm{A^{1/2} u}_2^t \nonumber \\
 &\leq \Norm{Z^{(t)}}_F \cdot \Norm{A^{1/2} u}_2^t \nonumber \\
 &= O(\sigma_t d^{t/2} \sqrt{\ln(kd^t)}) \cdot \Norm{A^{1/2} u}_2^t \nonumber \\
 &= O(\sigma_t d^{t/2} \sqrt{\ln(kd^t)}) \cdot (u^T A u)^{t/2} \nonumber \\
 &= O(\sigma_t (d^{t/2} \sqrt{\ln(kd^t)}) \cdot ((1+\beta_2)u^T A_* u)^{t/2} \nonumber \\
 &= O(\sigma_t (d(1+\beta_2))^{t/2} \sqrt{\ln(kd^t)}) \cdot \Norm{A_*^{1/2} u}_2^t \nonumber \\
 &= O(\sigma_t (de^{\beta_2})^{t/2} \sqrt{\ln(kd^t)}) \cdot \iprod{u^{\otimes t}, M_*^{(t)}} \label{eq:jensen} \\
 &= O((Ck)^{t/2}) \eta^{1-t/2k } \cdot \iprod{u^{\otimes t}, M_*^{(t)}}, \label{eq:btbound}
\end{align}
where \eqref{eq:jensen} follows from Jensen's Inequality, and \eqref{eq:btbound} follows from our choice of $n\geq n_0$. Thus, the moment estimate $\hat{M}^{(t)} = \widetilde{M}^{(t)} + (A^{1/2})^{\otimes t} Z^{(t)}$ outputted by our algorithm satisfies
\begin{align*}
 \iprod{u^{\otimes t}, \hat{M}^{(t)}} &\leq \iprod{u^{\otimes t}, \widetilde{M}^{(t)}} + \iprod{u^{\otimes t}, (A^{1/2})^{\otimes t} Z^{(t)}}\\
 &\leq (1+\beta_t) \iprod{u^{\otimes t}, M_*^{(t)}} + O((Ck)^{t/2}) \eta^{1-t/2k } \cdot \iprod{u^{\otimes t}, M_*^{(t)}}\\
 &= \left(1 + O((Ck)^{t/2}) \eta^{1-t/2k}\right) \iprod{u^{\otimes t}, M_*^{(t)}}.
\end{align*}
In a similar fashion, we also get that $\iprod{u^{\otimes t}, \hat{M}^{(t)}} \geq \left(1 - O((Ck)^{t/2}) \eta^{1-t/2k}\right) \iprod{u^{\otimes t}, M_*^{(t)}}$, thus implying that
\begin{equation}
  \left(1 - O((Ck)^{t/2}) \eta^{1-t/2k}\right) \iprod{u^{\otimes t}, M_*^{(t)}} \leq \iprod{u^{\otimes t}, \hat{M}^{(t)}} \leq \left(1 + O((Ck)^{t/2}) \eta^{1-t/2k}\right) \iprod{u^{\otimes t}, M_*^{(t)}}. \label{eq:util-moment}
\end{equation}

Hence, \eqref{eq:util-mean}, \eqref{eq:util-covariance}, and \eqref{eq:util-moment} imply the desired utility guarantees.

Moreover, recall that the rejection probabilities at Steps~\ref{step:outlier-rate-selection} and \ref{step:witness-checking} are each at most $\frac{1}{30}$, and it is not possible to reject in Step~\ref{step:noise-addition}. Moreover, the $\leq k$ utility guarantees each fail with probability at most $\frac{1}{30k}$. Thus, by a union bound, it follows that the algorithm does not reject and, moreover, outputs estimates satisfying the desired utility guarantees with probability at least $1 - \frac{1}{30} - \frac{1}{30} - k \cdot \frac{1}{30k} = \frac{9}{10}$.

Finally, note that the running time of $(Bn)^{O(k)}$ follows from the time complexity guarantee in \cref{lem:witness-producing-robust-moment-estimation}, as the invocation of Algorithm~\ref{algo:robust-moment-estimation-witness-producing} in Step~\ref{step:witness-checking} is the bottleneck. Steps~\ref{step:outlier-rate-selection} and \ref{step:noise-addition} are easily seen to run in $(Bn)^{O(k)}$ time. This completes the proof.
\end{proof}

\section{Robust Mean and Covariance Estimation for Certifiably Hypercontractive Distributions}
In this section, we observe that we can upgrade our guarantees from the previous section for robust estimation of moments of distributions that have certifiably hypercontractive degree $2$ polynomials. 

\begin{definition}
A distribution $cD$ on $\R^d$ with mean $\mu_*$ and covariance $\Sigma_*$ is said to have $2h$-certifiably $C$-hypercontractive degree $2$ polynomials if for a $d \times d$ matrix-valued indeterminate $Q$ and $\bar{x}=x-\mu_*$,
\[
\sststile{2h}{Q} \Set{\E_{x \sim D} (\bar{x}^{\top}Q\bar{x}-\E_{x \sim D} \bar{x}^{\top}Q\bar{x})^{2h} \leq (Ch)^{2h} \Norm{\Sigma_*^{1/2}Q\Sigma_*^{1/2}}_F^{2h}}\mper
\]
\end{definition}
The Gaussian distribution~\cite{DBLP:conf/soda/KauersOTZ14}, uniform distribution on the hypercube and more generally other product domains and their affine transforms are known to satisfy $2t$-certfiably $C$-hypercontractivity with an absolute constant $C$ for every $t$. 

In order to derive this conclusion, we note the following analog of the witness-producing algorithm and its guarantees:

\paragraph{Witness-producing version of the robust moment estimation algorithm} 
We will use the following (non-private) guarantees for the robust moment estimation algorithm in the previous section that hold for a strengthening of the constraint system $\cA$ with certifiable hypercontractivity constraints. Using the analysis of~\cite{DKKLMS16}, the following guarantees were recently shown in~\cite{kothari2021polynomialtime} for the case when the unknown distribution is Gaussian. 

For any $d \times d$ matrix-valued indeterminate $Q$, let $\bar{x'_i}^{\top}Q\bar{x'_i} = {x'}^{\top}Q x'- \frac{1}{n} \sum_{i = 1}^n {x_i'}^{\top}Qx'$.

\begin{mdframed}[frametitle={$\cA$: Constraint System for $\eta$-Robust Moment Estimation}]
        \begin{enumerate}
            \item $w_i^2 = w_i$ for each $1 \leq i \leq n$,
            \item $\Pi^2 = \frac{1}{n} \sum_{i = 1}^n ({x'}_i-\mu')({x'}_i-\mu')^{\top}$,
            \item $\sum_{i = 1}^n w_i \geq (1-\eta)n$,
            \item $\mu' = \frac{1}{n} \sum_i x_i'$,  
            \item $w_i (x_i' - y_i) = 0$ for $1 \leq i \leq n$, 
            \item $\frac{1}{n} \sum_{i = 1}^n \bar{x'_i}^{\top}Q\bar{x'_i}^{2} \leq C \Norm{\Pi Q \Pi}_F^{2}$. 
        \end{enumerate}
        \label{box:constraints-hypercontractive}
\end{mdframed} 

The following guarantees for the algorithm above were shown in~\cite{BK20}. 
\begin{fact}[\cite{BK20}] \label{fact:ks-analysis-hypercontractive}
Let $X \subseteq \R^d$ be an i.i.d. sample of size $n \geq n_0 = \widetilde{O}(d^2/\eta)$ from $\cN(\mu_*,\Sigma_*)$. Let $Y$ be an $\eta$-corruption of $X$. Then, for $\mu' = \frac{1}{n} \sum_i x_i'$, $\Sigma' = \frac{1}{n} \sum_i (x_i-\mu')(x_i-\mu')^{\top}$, we have:
\[
\cA \sststile{O(k)}{u} \Set{ \iprod{\mu'-\mu_*,u} \leq O(\eta^{1-1/2k}) u^{\top} \Sigma_* u^{2}}\mcom
\]
\[
\cA \sststile{O(k)}{u} \Set{ \iprod{u,\Sigma'-\Sigma_*,u} \leq O(\eta^{1-1/k}) u^{\top} \Sigma_* u}\mcom
\]
\[
\cA \sststile{O(k)}{} \Set{ \Norm{\Sigma_*^{-1/2} \Sigma' \Sigma_*^{-1/2} -I}_F^2 \leq O(\eta^{1-1/k})}\mper
\]

\end{fact}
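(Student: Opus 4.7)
The plan is to adapt the robust moment estimation analysis of \cref{fact:ks-analysis} to the matrix-valued (Frobenius) setting by exploiting the degree-$2$ certifiable hypercontractivity constraint in $\cA$. As a preliminary, I introduce the intersection indicators $s_i := w_i \cdot \mathbf{1}[y_i = x_i]$ and observe, inside $\cA$, that $s_i^2 = s_i$, $\tfrac{1}{n}\sum_i (1 - s_i) \leq 2\eta$, and $s_i \cdot (x'_i - x_i) = 0$ (using $w_i(x'_i - y_i) = 0$ together with $y_i = x_i$ on the support of $s_i$). For $n \geq \widetilde{\Omega}(d^2/\eta)$, standard Gaussian concentration (matrix Bernstein on $\Sigma_*^{-1/2}x_ix_i^\top\Sigma_*^{-1/2}$) gives $\Norm{\Sigma_*^{-1/2}\Sigma(X)\Sigma_*^{-1/2} - I}_F \leq O(\eta)$ with high probability, and the empirical distribution on $X$ inherits certifiable hypercontractivity with a constant close to that of the Gaussian (the quadratic analog of item~(4) of \cref{fact:samplegood}, proved in~\cite{BK20}).

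For the first two inequalities (mean and directional covariance), the derivation tracks \cref{fact:ks-analysis}: decompose $\mu' - \mu(X) = \tfrac{1}{n}\sum_i (1-s_i)(x'_i - x_i)$ and analogously $\iprod{u,(\Sigma' - \Sigma(X))u} = \tfrac{1}{n}\sum_i (1-s_i)\bigl(\iprod{x'_i - \mu', u}^2 - \iprod{x_i - \mu(X), u}^2\bigr)$, and apply the SoS H\"older inequality of \cref{fact:sos-holder} with exponent $2k$. The H\"older factor $\bigl(\tfrac{1}{n}\sum_i (1-s_i)\bigr)^{(2k-1)/(2k)}$ contributes $\eta^{1-1/(2k)}$, while the ``payload'' factor is bounded by the rank-one specialization ($Q = uu^\top$) of the hypercontractivity certificate in $\cA$ together with the sample-side subgaussianity of $X$; squaring and applying \cref{lem:sos-cancel-basic} produces the stated slacks.

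The main new content is the third, Frobenius-type bound, and my plan is to introduce a dummy $d \times d$ matrix indeterminate $Q$ (normalized by $\Norm{Q}_F^2 = 1$) and derive, inside $\cA$ of SoS degree $O(k)$, the inequality
\[
\iprod{Q, \Sigma' - \Sigma(X)} \sle O(\eta^{1-1/(2k)}) \cdot \Norm{\Pi Q \Pi}_F \mper
\]
This follows from SoS H\"older applied to $\tfrac{1}{n}\sum_i (1-s_i)\,\bar{x'_i}^\top Q \bar{x'_i}$, using the degree-$2$ hypercontractivity constraint of $\cA$ iteratively (or its degree-$2k$ variant, if one strengthens $\cA$) to bound $\tfrac{1}{n}\sum_i (\bar{x'_i}^\top Q \bar{x'_i})^{2k}$ by $O(1)\Norm{\Pi Q \Pi}_F^{2k}$. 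Combining with the spectral ``second inequality'' $(1 - O(\eta^{1-1/k}))\Sigma_* \sle \Pi^2 \sle (1 + O(\eta^{1-1/k}))\Sigma_*$ converts $\Norm{\Pi Q \Pi}_F$ into $\Norm{\Sigma_*^{1/2}Q\Sigma_*^{1/2}}_F$, and substituting $Q \to \Sigma_*^{-1/2} \tilde Q \Sigma_*^{-1/2}$ then maximizing over $\tilde Q$ yields the claimed bound on $\Norm{\Sigma_*^{-1/2}\Sigma'\Sigma_*^{-1/2} - I}_F^2$ after squaring and absorbing constants into $O(\cdot)$.

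The main obstacle is the coupling between the $\Pi$-normalized form of the hypercontractivity certificate and the $\Sigma_*$-normalized form of the target: the spectral bound $\Pi^2 \approx \Sigma_*$ needed to pass between the two is itself a consequence of the same SoS argument applied to rank-one $Q$, so the three inequalities must be derived in a specific order (and, strictly speaking, simultaneously) while keeping the SoS degree at $O(k)$ and preventing the H\"older slacks from accumulating across iterations. This degree- and slack-tracking is the technical heart of~\cite{BK20}, and is the only step beyond the outline above that requires genuine care.
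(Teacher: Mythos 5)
The paper offers no proof of this Fact --- it is cited verbatim from \cite{BK20} --- so what you have written is a blind reconstruction, and I can only assess it as such. The overall skeleton is right and matches the standard SoS robust-covariance template: intersection indicators $s_i$, the decomposition $\iprod{Q,\Sigma'-\Sigma(X)} = \tfrac{1}{n}\sum_i(1-s_i)(\cdots)$, SoS H\"older, and the dual characterization $\Norm{M}_F = \sup_{\norm{Q}_F=1}\iprod{Q,M}$. Your observation that the rank-one ($Q=uu^\top$) spectral bound must be established first and then fed into the general-$Q$ Frobenius bound is also the correct reading of how the three statements couple.

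The step that does not hold as written is ``using the degree-$2$ hypercontractivity constraint of $\cA$ iteratively \ldots to bound $\tfrac{1}{n}\sum_i (\bar{x'_i}^\top Q \bar{x'_i})^{2k}$ by $O(1)\Norm{\Pi Q \Pi}_F^{2k}$.'' The constraint system $\cA$ displayed in this section contains only the degree-$4$ ($h=1$) hypercontractivity inequality; $2h$-certifiable $C$-hypercontractivity for $h>1$ is a \emph{strictly stronger axiom} (cf.\ the paper's definition with the $(Ch)^{2h}$ scaling), and it cannot be derived inside SoS by iterating the $h=1$ constraint --- SoS H\"older does not bootstrap $4$th-moment control of a degree-$2$ polynomial into $4k$th-moment control. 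If $\cA$ carries only the displayed constraint, the H\"older exponent must be $p=2$ (Cauchy--Schwarz), yielding a $\sqrt{\eta}$ rate, not $\eta^{1-1/2k}$. Relatedly, the bound you claim to prove, $\Norm{\Sigma_*^{-1/2}\Sigma'\Sigma_*^{-1/2}-I}_F \le O(\eta^{1-1/2k})$, squares to $O(\eta^{2-1/k})$, which is considerably sharper than the Fact's stated $\Norm{\cdot}_F^2 \le O(\eta^{1-1/k})$ and is more than the degree-$4$ axiom can deliver; this mismatch is a sign the H\"older exponent is being over-used. The fix is either to take $\cA$ to explicitly include the $2k$-degree hypercontractivity constraints (as \cite{BK20} does for general subgaussian/hypercontractive families), or to specialize to $p=2$ throughout and obtain the $\sqrt{\eta}$-rate Frobenius guarantee consistent with \cref{lem:witness-producing-robust-moment-estimation-gaussian}.
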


The first two guarantees of the lemma below were shown in~\cite{BK20}. The third guarantee follows from an argument similar to that of Lemma~\ref{lem:witness-property-analysis}. Notice that the key difference in the guarantees below (compared to the ones in Lemma~\ref{lem:witness-producing-robust-moment-estimation}) is the bound on the Frobenius (instead of the weaker spectral) distance between the estimated covariance and true unknown covariance. 

\begin{lemma}[Guarantees for Witness-Producing Robust Moment Estimation Algorithm] \label{lem:witness-producing-robust-moment-estimation-gaussian}
Given a subset of of $n$ points $Y \subseteq \bbQ^d$ whose entries have bit complexity $B$, Algorithm~\ref{algo:robust-moment-estimation-witness-producing} runs in time $(Bn)^{O(1)}$ and either (a.) outputs ``reject,'' or (b.) returns a sequence of weights $0 \leq p_1, p_2, \ldots, p_n$ satisfying $p_1 + p_2 + \cdots + p_n = 1$. 

Moreover, if there exists a set $X \subseteq \R^d$ of points with $4$-certifiably $C$-hypercontractive degree $2$ polynomials with mean $\mu_*$, covariance $\Sigma_*$, then \cref{algo:robust-moment-estimation-witness-producing} does not reject, and the corresponding estimates $\hat{\mu} = \frac{1}{n} \sum_i p_i y_i$ and $\hat{\Sigma} = \sum_{i = 1}^n p_i (y_i -\hat{\mu})(y_i - \hat{\mu})^{\top}$ satisfy the following guarantees:
\begin{enumerate}
   \item \textbf{Mean Estimation: } \[
    \forall u \in \R^d, \text{  } \iprod{\hat{\mu}-\mu_*,u} \leq O(\sqrt{C}) \eta^{3/4} \sqrt{ u^{\top} \Sigma_*u}\mcom
    \]
    \item \textbf{Covariance Estimation: }
    \[
     \Norm{\Sigma_*^{-1/2} \hat{\Sigma} \Sigma_*^{-1/2}-I}_F \leq O(C\eta^{1/2})  \mcom
    \]
    \item \textbf{Witness: } For $C' \leq C(1 + O(\eta^{1/2}))$,
    \[
   \sststile{}{Q} \Set{ \frac{1}{n} \sum_{i=1}^n p_i \Paren{\iprod{y_i - \hat{\mu},Q(y_i - \hat{\mu})}-\frac{1}{n} \sum_{i=1}^n p_i \iprod{y_i - \hat{\mu},Q(y_i - \hat{\mu})}}^2 \leq C' \Norm{\hat{\Sigma}^{1/2}Q\hat{\Sigma}^{1/2}}_F^2 }
    \]
\end{enumerate}
\end{lemma}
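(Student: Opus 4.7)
The mean and covariance bounds (properties 1 and 2) are essentially restatements of the analyses in~\cite{BK20, bakshi2020mixture}: the mean bound follows from the $\ell_1$ norm analog of the SoS identifiability argument (using SoS Cauchy-Schwarz on the hypercontractive constraint system), while the Frobenius covariance bound comes from the Pythagoras-style decomposition in~\cite{BK20} combined with the hypercontractivity inequality applied to the indeterminate $Q = \Sigma_*^{-1/2} Q' \Sigma_*^{-1/2}$. So the bulk of the work is establishing property (3)---the witness inequality---since this is the new statement required to feed into our private framework.

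The plan for (3) is to mimic the structure of Lemma~\ref{lem:witness-property-analysis}, but now for quadratic polynomials in place of linear ones. Let $\tzeta$ be the degree $O(1)$ pseudo-distribution produced by \cref{algo:robust-moment-estimation-witness-producing} on $Y$, let $W = \sum_i \pE_{\tzeta}[w_i]$, $\hat\mu = \frac{1}{W}\sum_i \pE_{\tzeta}[w_i] y_i$, and $\hat\Sigma = \frac{1}{W}\sum_i \pE_{\tzeta}[w_i](y_i - \hat\mu)(y_i - \hat\mu)^\top$. Using $w_i(x_i' - y_i) = 0$, I will rewrite
\[
\frac{1}{W}\sum_i \pE_{\tzeta}[w_i]\bigl(\langle y_i - \hat\mu, Q(y_i-\hat\mu)\rangle - \bar{q}\bigr)^2
= \frac{1}{W}\sum_i \pE_{\tzeta}\bigl[w_i\bigl(\langle x_i' - \hat\mu, Q(x_i'-\hat\mu)\rangle - \bar{q}\bigr)^2\bigr],
\]
where $\bar q$ denotes the weighted mean of the quadratic form. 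Dropping $w_i \leq 1$, I bound this by $\pE_{\tzeta}\bigl[\frac{1}{n}\sum_i \bigl(\langle x_i' - \hat\mu, Q(x_i' -\hat\mu)\rangle - \bar q\bigr)^2\bigr]$, then expand $x_i' - \hat\mu = (x_i' - \mu') + (\mu' - \hat\mu)$ and use SoS almost-triangle to split the sum into (a) the ``pure'' hypercontractive term centered at $\mu'$, which is directly controlled by constraint (6) of $\cA$ by at most $C\|\Pi Q \Pi\|_F^2$, and (b) ``cross'' terms involving $\iprod{\mu'-\hat\mu, Q(x_i'-\mu')}$ and $\iprod{\mu'-\hat\mu, Q(\mu'-\hat\mu)}$. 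The cross terms are controlled using the transference identities of~\Cref{fact:ks-analysis-hypercontractive}: $\iprod{u, (\mu'-\hat\mu)}^2$ is $O(\eta^{1-1/k})\cdot u^\top\Sigma_* u$, and similar $\eta$-vanishing bounds hold for their higher analogs when $Q$ is traced against $\Pi$. All such terms accumulate only an extra $(1 + O(\eta^{1/2}))$ multiplicative slack on the RHS.

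Finally, I replace $\|\Pi Q \Pi\|_F^2$ with $\|\hat\Sigma^{1/2} Q \hat\Sigma^{1/2}\|_F^2$. This is where the argument differs meaningfully from the subgaussian case. The Frobenius norm is not a linear functional of $Q$, so moving from one quadratic form to another requires an SoS-friendly relation between $\Pi^2$ (the pseudo-expected covariance of $x'$) and $\hat\Sigma$. Using $w_i(x_i' - y_i) = 0$ and adding/subtracting a $(1-w_i)$ tail, I can derive an SoS identity of the form $\iprod{u^{\otimes 2}, \hat\Sigma - \Pi^2}^2 \leq O(\eta^{1-1/k})(u^\top \Pi^2 u)^2$ (analogous to the last inequality of~\Cref{fact:ks-analysis-hypercontractive}, applied to $u = Q^{1/2}$ in a symmetric-tensor sense) and, via an SoS change-of-variables $Q \mapsto \hat\Sigma^{-1/2}\Pi Q \Pi \hat\Sigma^{-1/2}$, conclude $\|\Pi Q \Pi\|_F^2 \leq (1+O(\eta^{1/2}))\|\hat\Sigma^{1/2} Q \hat\Sigma^{1/2}\|_F^2$.

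The main obstacle I anticipate is precisely this last step: extracting an SoS proof that transfers the Frobenius norm from $\Pi$ to $\hat\Sigma$ cleanly, since Frobenius norms of products of matrix indeterminates are not directly governed by the scalar SoS tool kit used in~\Cref{lem:witness-property-analysis}. The route I plan is to expand $\|\Pi Q \Pi\|_F^2 = \tr((\Pi Q \Pi)^2)$, reduce to bounds on $\tr((\Pi^2 Q)^2)$ and $\tr((\hat\Sigma Q)^2)$ using an SoS Cauchy--Schwarz inequality in tensor form, and then invoke cancellation within SoS (\Cref{lem:sos-cancel}) on the spectral gap $\hat\Sigma - \Pi^2$ that follows from the hypercontractivity constraint. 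Everything else is a direct analog of~\Cref{lem:witness-property-analysis} with scalar fourth-moment bounds replaced by their matrix/Frobenius counterparts, yielding the claimed constant $C' \leq C(1 + O(\eta^{1/2}))$.
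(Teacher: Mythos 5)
The paper does not actually supply a proof of this lemma: it states only that the first two guarantees come from \cite{BK20} and that the third ``follows from an argument similar to that of Lemma~\ref{lem:witness-property-analysis}.'' Your proposal fills in precisely what the paper leaves implicit, and the overall structure you describe---rewriting $w_i\langle y_i-\hat\mu, Q(y_i-\hat\mu)\rangle^j = w_i\langle x_i'-\hat\mu, Q(x_i'-\hat\mu)\rangle^j$ via the ideal constraints $w_i^2 = w_i$ and $w_i(x_i'-y_i)=0$, then splitting into a pure hypercontractive term and cross terms controlled by the transference identities in Fact~\ref{fact:ks-analysis-hypercontractive}---is exactly the degree-$2$ analog of the subgaussian calculation, so your plan does match the paper's intended route.

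The place where your proposal is still genuinely unresolved is the final step, and you are right to flag it. Two points worth making explicit. First, the comparison $\|\Pi Q\Pi\|_F^2$ versus $\|\hat\Sigma^{1/2}Q\hat\Sigma^{1/2}\|_F^2$ only becomes a scalar inequality after taking a pseudo-expectation in $\tzeta$, and $\pE_{\tzeta}\bigl[\tr(\Pi^2 Q\Pi^2 Q)\bigr]$ is a degree-$4$ pseudo-moment in $x'$; it does \emph{not} factor as $\tr\bigl(\pE_{\tzeta}[\Pi^2] Q\,\pE_{\tzeta}[\Pi^2] Q\bigr)$, so relating it to $\tr(\hat\Sigma Q\hat\Sigma Q)$ (where $\hat\Sigma$ is built from the \emph{degree-$1$} pseudo-moments $\pE_{\tzeta}[w_i]$) is not a change of variables $Q\mapsto \hat\Sigma^{-1/2}\Pi Q\Pi\hat\Sigma^{-1/2}$ inside SoS, since $\Pi$ is an indeterminate, not a fixed matrix. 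Second, the quantity you actually have control over from Fact~\ref{fact:ks-analysis-hypercontractive} (item~3) is the relative Frobenius distance of $\Sigma'=\Pi^2$ to $\Sigma_*$, and from item~2 of the lemma the same for $\hat\Sigma$; the bridge should therefore be an estimate of the form $\bigl|\tr(\Pi^2 Q\Pi^2 Q) - \tr(\hat\Sigma Q\hat\Sigma Q)\bigr| \leq O(\eta^{1/2})\,\tr(\Sigma_* Q\Sigma_* Q)$ derived term-by-term from the two relative-Frobenius bounds together with SoS Cauchy--Schwarz, rather than a conjugation of $Q$. Rewriting $\|\Pi Q\Pi\|_F^2 = \tr(\Pi^2 Q\Pi^2 Q)$ and then differencing against $\tr(\hat\Sigma Q\hat\Sigma Q)$ by inserting $(\Pi^2-\hat\Sigma)$ one factor at a time gives exactly this structure, and the $(1+O(\eta^{1/2}))$ multiplicative slack you quote is what one expects to come out, but these estimates would need to be carried out at the level of pseudo-expectations of degree-$4$ polynomials in $(w,x')$, not as an identity of matrices. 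Your appeal to Lemma~\ref{lem:sos-cancel} on ``the spectral gap $\hat\Sigma - \Pi^2$'' is in the right spirit but as written is not an SoS step, since $\hat\Sigma$ is a constant matrix and $\Pi^2$ is a polynomial expression in indeterminates; you would instead use the scalar inequality $\{\hat\Sigma - \Pi^2 \text{ small in relative Frobenius}\}$ which is itself a consequence of the analysis you have already invoked.
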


We can now use the above witness-producing algorithm to obtain a stronger Frobenius norm estimation guarantee with $(\epsilon,\delta)$-privacy for Gaussian distributions. Notice that the only change from the previous section is in the choice of the constraint system $\cA$ and the corresponding change in the witness checking step. 

\begin{mdframed}
      \begin{algorithm}[Private Robust Moment Estimation]
        \label{algo:private-robust-hypercontractive}\mbox{}
        \begin{description}
        \item[Given:]
        A set of points $Y = \{y_1, y_2, \ldots, y_n\} \subseteq \bbQ^d$, parameters $\eta, \epsilon,\delta > 0$, $L \in \N$.
        \item[Output:]
        Estimates $\hat{\mu}$ and $\hat{\Sigma}$ for mean and covariance. 
        \item[Operation:]\mbox{}
        \begin{enumerate}
         \item  \textbf{Stable Outlier Rate Selection: } Use the $(\eps/3, \delta/3)$-DP \SelectionAlg with $\kappa = L/2$ to sample an integer $\tau \in [\eta n]$ with the scoring function as defined in~\Cref{def:em-score}. If $\tau = \perp$, then reject and halt. Otherwise, let $\eta' = \tau / n$. \label{step:outlier-rate-selection-hypercontractive}
          \item \textbf{Witness Checking: }Compute a pseudo-distribution $\tzeta$ of degree $O(1)$ satisfying $\cA$ on input $Y$ with outlier rate $\eta'$ and minimizing $\Pot_{\eta',\tzeta}(Y)$. Let $\gamma \sim \tLap\left(-\left(1 + \frac{3\ln\left(3/\delta\right)}{\eps}\right),3/\eps\right)$. Check that the weight vector $p=\pE_{\tzeta}[w]$ induces a distribution on $Y$ that has $(C+\gamma)$-certifiably hypercontractive polynomials. If not, reject immediately. Otherwise, let $\widetilde{\mu} = \pE_{\tzeta}[\mu]$ and $\widetilde{\Sigma} = \pE_{\tzeta}[\Sigma]$.
          \item \textbf{Noise Addition: } Let $\gamma_1 = O(C') (L/n)^{\frac{1}{4}}$ and $\gamma_2 = O(C' ) (L/n)^{\frac{1}{4}}$. Let $z \sim \cN(0,\sigma_1)^d$ and $Z \sim \cN(0,\sigma_2)^{{d+1}\choose 2}$, where we interpret $Z$ has a symmetric $d \times d$ matrix with independent lower-triangular entries, and $\sigma_j = 12\epsilon^{-1}\gamma_j \sqrt{2\ln(15/\delta)}$ for $1\leq j\leq 2$. Then, output:
          \begin{itemize}
           \item $\hat{\mu} = \widetilde{\mu} + \widetilde{\Sigma}^{1/2} z$.
           \item $\hat{\Sigma} = \widetilde{\Sigma} + \widetilde{\Sigma}^{1/2} Z \widetilde{\Sigma}^{1/2}$.
          \end{itemize}
        \end{enumerate}

        \end{description}
      \end{algorithm}
\end{mdframed} 

The parameter closeness from potential stability is also upgraded from \cref{cor:param-closeness}:

\begin{lemma}[Parameter Closeness from Stability of Potential]
Let $\eta, \epsilon, \delta > 0$ and $L\in \N$ be given input parameters to \cref{algo:private-robust-hypercontractive} such that $0.25\eta n \geq L = \Omega\left(\frac{1}{\epsilon}\cdot \log\left(\frac{n}{\beta\delta}\right)\right)$. Also, let $Y,Y'$ be adjacent subsets of $\bbQ^d$. 
Suppose \Alg does not reject in any of the 3 steps, uses the constant $C'$ in Step 2 and chooses $\eta'$ in Step~\ref{step:outlier-rate-selection} on input $Y$ and $Y'$. 

Then, for every $u \in \R^d$ and $\theta = \sqrt{L/n}$, we have:
\[
\iprod{\mu_p-\mu_{p'},u} \leq O(C')\theta^{3/4} \sqrt{u^{\top} \Sigma_p u}
\]
and
\[
\Norm{\Sigma_p^{-1/2} \Sigma_{p'} \Sigma_p^{-1/2} - I}_F \leq O(C') \theta^{1/2} \mper
\]
\end{lemma}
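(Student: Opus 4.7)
The proof mirrors the structure of \cref{cor:param-closeness}, but invokes the stronger Frobenius-norm parameter closeness available in the certifiably hypercontractive setting. First, I construct an ``adjacent'' pseudo-distribution bridging the two inputs. Let $\tzeta$ (resp.\ $\tzeta'$) be the degree-$O(1)$ pseudo-distribution satisfying $\cA$ on $Y$ (resp.\ $Y'$) with outlier rate $\eta'$ that minimizes the potential, as computed by \cref{algo:private-robust-hypercontractive}. Let $i$ be the unique index where $Y$ and $Y'$ differ, and define $\tzeta_{adj}$ by zeroing out $w_i$ in $\tzeta$. By \cref{lem:adjacent-pseudo-distribution}, $\tzeta_{adj}$ satisfies $\cA$ on both $Y$ and $Y'$ with outlier rate $\eta'+1/n$, and changes $\pE_{\tzeta}[w]$ only in the $i$-th coordinate; hence $\Norm{\pE_{\tzeta_{adj}}[w]-\pE_{\tzeta}[w]}_2^2\le 1$.

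Second, I argue that the induced normalized weights on the three pseudo-distributions are pairwise close in $\ell_1$. Letting $p$, $p'$ and $p_{adj}$ denote the $\ell_1$-normalizations of $\pE_{\tzeta}[w]$, $\pE_{\tzeta'}[w]$ and $\pE_{\tzeta_{adj}}[w]$ respectively, \cref{lem:normalized-vs-unnormalized} gives $\Norm{p-p_{adj}}_1 \le O(1/n)$. The analysis of \cref{lem:pot-stability} applies verbatim to \cref{algo:private-robust-hypercontractive} (the outlier-rate selection step is unchanged), yielding $\Abs{\Pot_{\eta'}(Y)-\Pot_{\eta'}(Y')}\le 20L$ and, via Pythagorean convexity (\cref{prop:pythagorean-theorem-convex-set}) and Cauchy--Schwarz, $\Norm{p_{adj}-p'}_1 \le O(\sqrt{L/n}) = O(\theta)$. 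Combining by the triangle inequality gives $\Norm{p-p'}_1 \le O(\theta)$. Moreover, because the algorithm did not reject at the witness-checking step on both inputs, the weight vectors $p$ on $Y$ and $p'$ on $Y'$ induce distributions with $C'$-certifiably hypercontractive degree-$2$ polynomials; the adjacent pseudo-distribution $\tzeta_{adj}$ can be viewed as supported on the common set $Y\cap Y'$ and retains the hypercontractivity constraint.

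Third, I apply the Frobenius-norm analog of \cref{fact:param-closeness-from-tv-closeness}, namely the parameter-closeness guarantee underlying \cref{lem:witness-producing-robust-moment-estimation-gaussian} (due to~\cite{bakshi2020mixture,BK20}): if two weight vectors on a common point set both induce $C'$-certifiably hypercontractive distributions and have $\ell_1$-distance $\tau$, then
\[
\iprod{\mu_p-\mu_{p'},u}\le O(\sqrt{C'})\,\tau^{3/4}\sqrt{u^{\top}\Sigma_{p}u}, \qquad \Norm{\Sigma_p^{-1/2}\Sigma_{p'}\Sigma_p^{-1/2}-I}_F \le O(C')\,\tau^{1/2}\mper
\]
Applying this inequality twice---to the pair $(p,p_{adj})$ on $Y$ and to $(p_{adj},p')$ on $Y'$---and chaining the resulting estimates via the triangle inequality (for the mean in Mahalanobis norm, and for the covariance in relative Frobenius norm, absorbing the multiplicative $(1\pm O(C'\theta^{1/2}))$ factors that arise when transferring between $\Sigma_p$ and $\Sigma_{p_{adj}}$ reference frames) yields the stated bounds with $\tau=O(\theta)$.

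\textbf{Main obstacle.} The conceptual content of the proof is entirely in the third step: deriving a \emph{Frobenius-norm} (rather than spectral) parameter-closeness inequality under TV closeness, which requires the full strength of certifiable hypercontractivity of degree-$2$ polynomials as opposed to certifiable subgaussianity of linear forms. Specifically, one must run the SoS argument of~\cite{bakshi2020mixture,BK20} with the matrix-valued indeterminate $Q$, using the constraint $\frac{1}{n}\sum_i\bar{x'_i}^{\top}Q\bar{x'_i}^{2}\le C\Norm{\Pi Q\Pi}_F^2$ to control $\Norm{\Sigma_p^{1/2}Q\Sigma_p^{1/2}}_F$ rather than the weaker quadratic form $v^\top\Sigma_p v$. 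The remaining step of transferring the adjacent pseudo-distribution $\tzeta_{adj}$ between $Y$ and $Y'$ (which formally live on different point sets) is handled, as in \cref{lem:witness-check-succeeds}, by working with a common flat expansion that supports both inputs.
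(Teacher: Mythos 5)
Your proof is essentially the intended one: the paper does not spell out a proof for this lemma (it only remarks that parameter closeness is ``upgraded'' from \cref{cor:param-closeness}), and your argument reconstructs the obvious intended upgrade, mirroring the proof of \cref{cor:param-closeness} step for step while replacing \cref{fact:param-closeness-from-tv-closeness} by its relative-Frobenius analog under certifiable hypercontractivity of degree-$2$ polynomials. You correctly identify that replacement as the only place where new technical content (from~\cite{bakshi2020mixture,BK20}) enters, and the bookkeeping — adjacent pseudo-distribution, potential stability, $\ell_1$-normalization, and chaining through $p_{adj}$ supported on $Y\cap Y'$ — matches the paper's.
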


The following theorem summarizes our privacy and utility guarantees for the algorithm above. We specialize to the ``base case assumption'' of $4$-certifiable $C$-hypercontractivity of degree $2$ polynomials in order to derive explicit bounds here. Our analysis of the algorithm above follows \emph{mutatis mutandis} with the key upgrade being the stronger Frobenius norm guarantees in Lemma~\ref{lem:witness-check-succeeds} that hold under certifiably hypercontractivity constraints in our constraint system $\cA$ (this requires us to use a version of \cref{lem:hockeysticktensored} that makes use of a bound on $\Norm{AA^T-I}_F$ instead of $\Norm{AA^T-I}_2$; see the remark at the end of \cref{lem:hockeysticktensored}). As before, the $\widetilde{\Omega}$ notation hides logarthmic multiplicative factors in $d$, $C$, $1/\eta$, $1/\epsilon$, and $\ln(1/\delta)$.

\begin{theorem}[Private Robust Mean and Covariance Estimation for Certifiably Hypercontractive Distributions] \label{thm:diff-priv-robust-moment-estimation-hypercontractive}
Fix $C_0 > 0$. Then, there exists an $\eta_0 > 0$ such that for any given outlier rate $0 < \eta \leq \eta_0$ and $\epsilon,\delta > 0$, there exists a randomized algorithm $\Alg$ that takes an input of $n \geq n_0 = \widetilde{\Omega}\left(\frac{d^8}{\eta^2} \left(1 + \frac{\ln(1/\delta)}{\epsilon}\right)^4 \cdot C^4\right)$ points $Y = \{y_1, y_2, \dots, y_n\} \subseteq \bbQ^d$ (where $C = C_0 + \frac{3 \ln(3/\delta)}{\epsilon} + \frac{9}{\epsilon} + 1$), runs in time $(Bn)^{O(1)}$ (where $B$ is the bit complexity of the entries of $Y$) and outputs either ``reject'' or estimates $\hat{\mu} \in \bbQ^d$ and $\hat{\Sigma} \in \bbQ^{d \times d}$ with the following guarantees:
\begin{enumerate}
    \item \textbf{Privacy: } $\Alg$ is $(\epsilon,\delta)$-differentially private with respect to the input $Y$, viewed as a $d$-dimensional database of $n$ individuals. 
    \item \textbf{Utility: } Suppose there exists a $4$-certifiably $C_0$-subgaussian set $X = \{x_1, x_2, \dots, x_n\} \subseteq \bbQ^d$ such that $|Y \cap X| \geq (1-\eta_0)n$ with mean $\mu_*$ and covariance $\Sigma_* \succeq 2^{-\poly(d)}I$. Then, with probability at least $9/10$ over the random choices of the algorithm, $\Alg$ outputs estimates $\hat{\mu} \in \bbQ^d$ and $\hat{\Sigma}\in\bbQ^{d \times d}$ satisfying the following guarantees:
    \[
    \forall u \in \R^d, \text{  } \iprod{\hat{\mu}-\mu_*,u} \leq O(\sqrt{C}\eta^{3/4})  \sqrt{ u^{\top} \Sigma_*u}\mcom
    \]
    and,
    \[
    \Norm{\Sigma_*^{-1/2} \hat{\Sigma} \Sigma_*^{-1/2}-I}_F \preceq O(C \sqrt{\eta})   \mper
    \]
 
\end{enumerate}
Moreover, the algorithm succeeds (i.e., does not reject) with probability at least $9/10$ over the random choices of the algorithm.

\end{theorem}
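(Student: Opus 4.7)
The plan is to mirror the proof of Theorem~\ref{thm:diff-priv-robust-moment-estimation-section} step-by-step, substituting the hypercontractive witness-producing guarantees of Lemma~\ref{lem:witness-producing-robust-moment-estimation-gaussian} for the subgaussian ones of Lemma~\ref{lem:witness-producing-robust-moment-estimation}, and substituting the Frobenius-norm variant of Lemma~\ref{lem:hockeysticktensored} (noted in the remark following that lemma) for its spectral-norm version in the noise injection analysis. I will set the parameters as $\beta = 1/30$, $L = \Omega(\epsilon^{-1}\log(n/(\beta\delta)))$, and $C = C_0 + 3\ln(3/\delta)/\epsilon + 9/\epsilon + 1$, and then choose $\eta_0$ small enough that all the subsequent error terms of the form $O(C) \eta^{1/2}$ stay below $1/2$.

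For privacy, I would invoke the adaptive composition theorem (Lemma~\ref{lem:composition}) with the three steps of Algorithm~\ref{algo:private-robust-hypercontractive} each being $(\epsilon/3,\delta/3)$-DP. Step~\ref{step:outlier-rate-selection-hypercontractive} is $(\epsilon/3,\delta/3)$-DP directly by Theorem~\ref{thm:dp-apx-selection}. The witness-checking step is $(\epsilon/3,\delta/3)$-DP by combining Lemma~\ref{lem:witness-check-succeeds} (a hypercontractivity-analog of which follows by an argument identical to the subgaussian case, since both are pseudo-expectation closure properties preserved under small $\ell_1$-perturbations of the weight vector) with the truncated Laplace mechanism (Lemma~\ref{lem:tlap-dp}). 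For the noise injection step, the upgraded parameter-closeness corollary gives us $\Norm{\Sigma_p^{-1/2}\Sigma_{p'}\Sigma_p^{-1/2} - I}_F \leq O(C')\theta^{1/2}$, i.e., a Frobenius bound on the relative covariance difference. Feeding this into the Frobenius-norm version of Lemma~\ref{lem:hockeysticktensored} (applied with $t=2$ to the covariance and $t=1$ to the mean), together with Lemma~\ref{lem:hockeytriangle}, yields the desired hockey-stick divergence bounds provided $n \ge n_0 = \widetilde{\Omega}(d^8/\eta^2 \cdot (1+\ln(1/\delta)/\epsilon)^4 \cdot C^4)$. The $d^8$ rather than the $d^{4k}$ of the subgaussian case is exactly the payoff of using the Frobenius-norm variant, which scales with $d^{t-\frac{1}{2}}$ in the log-determinant rather than $d^t$ times polylog factors.

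For utility, I would again argue that with probability at least $9/10$ none of the three steps rejects. Step~\ref{step:outlier-rate-selection-hypercontractive} succeeds by Lemma~\ref{lem:util-score} (monotonicity and stable-interval existence carry over unchanged since they are properties of the potential $\Pot$ and $\cA$-feasibility alone). The witness-checking step succeeds because $\gamma \geq 0$ with probability at least $29/30$ by Lemma~\ref{lem:tlap-tail}, so $C' \geq C_0$ and Lemma~\ref{lem:witness-producing-robust-moment-estimation-gaussian} guarantees the certifiable hypercontractivity witness check passes. That same lemma gives pre-noise estimates satisfying $\langle\widetilde\mu - \mu_*, u\rangle \leq O(\sqrt{C})\eta^{3/4}\sqrt{u^\top\Sigma_* u}$ and $\Norm{\Sigma_*^{-1/2}\widetilde\Sigma\Sigma_*^{-1/2} - I}_F \leq O(C)\eta^{1/2}$. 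Finally, the added Gaussian noises are bounded with high probability: $\Norm{z}_2 \leq O(\sigma_1\sqrt{d\log(1/\delta)})$ and $\Norm{\widetilde\Sigma^{1/2}Z\widetilde\Sigma^{1/2}}_F \leq O(\sigma_2 d\sqrt{\log(1/\delta)})\Norm{\widetilde\Sigma}$, and by the choice $n \ge n_0$ both of these contribute at most $O(\sqrt{C})\eta^{3/4}$ and $O(C)\eta^{1/2}$ in the respective norms, matching the final utility claim after one triangle inequality.

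The main obstacle, and the one place real care is needed rather than pure copying, is the Frobenius-norm hockey-stick computation for the covariance. In the subgaussian proof the calculation uses $\Norm{BB^\top - I}_F \leq \sqrt d\,\Norm{BB^\top - I}_2$ to upgrade a spectral bound to a Frobenius bound, which is wasteful. Here, because hypercontractivity gives us a Frobenius bound $\Norm{BB^\top - I}_F \leq O(C')\theta^{1/2}$ directly, we must re-run the determinant/quadratic-form calculation in Lemma~\ref{lem:hockeysticktensored} using the sharper bound $\det(M) \leq \exp(\beta t d^{t-1/2}/2)$ from the remark. The second source of care is that the utility bound on the Frobenius norm must be combined additively rather than multiplicatively, namely $\Norm{\Sigma_*^{-1/2}\hat\Sigma\Sigma_*^{-1/2} - I}_F \leq \Norm{\Sigma_*^{-1/2}\widetilde\Sigma\Sigma_*^{-1/2} - I}_F + \Norm{\Sigma_*^{-1/2}\widetilde\Sigma^{1/2}Z\widetilde\Sigma^{1/2}\Sigma_*^{-1/2}}_F$, and the second term is controlled by $(1+O(C)\eta^{1/2})\Norm{Z}_F$ via the spectral equivalence $(1-O(C)\eta^{1/2})\Sigma_* \preceq \widetilde\Sigma \preceq (1+O(C)\eta^{1/2})\Sigma_*$ that the Frobenius bound implies. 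Putting everything together yields Theorem~\ref{thm:diff-priv-robust-moment-estimation-hypercontractive}.
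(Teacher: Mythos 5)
Your proposal is correct and takes the same approach as the paper. The paper gives this theorem essentially as a pointer (``follows \emph{mutatis mutandis}\ldots'' with the Frobenius-norm variant of \cref{lem:hockeysticktensored}), and you have correctly identified and fleshed out the two substitutions---swapping in \cref{lem:witness-producing-robust-moment-estimation-gaussian} for the subgaussian witness lemma, and the Frobenius-norm variant of \cref{lem:hockeysticktensored}---together with the right downstream adjustments to the parameter-closeness, noise-injection, and utility bookkeeping.

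One expository claim deserves correction. You write that ``the $d^8$ rather than the $d^{4k}$ of the subgaussian case is exactly the payoff of using the Frobenius-norm variant.'' Setting $k=2$ in \cref{thm:diff-priv-robust-moment-samples} already yields the exponent $4k=8$, so the $d$-exponent of $n_0$ does not actually change. Indeed, in the proof of \cref{lem:hockeysticktensored} the constraint on $\beta$ is dominated by the quadratic-form term $\exp(2K^2 d^t\beta t)$, which controls $v^\top((MM^\top)^{-1}-I)v$ via the \emph{operator} norm of $MM^\top-I$; a Frobenius assumption on $AA^\top-I$ gives the same operator-norm bound (since $\Norm{\cdot}_2\le\Norm{\cdot}_F$), so this dominant term is unaffected by the remark, which only sharpens the subdominant log-determinant term from $d^t$ to $d^{t-1/2}$. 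The genuine payoff of the hypercontractive route is in the \emph{quality of the output}---a dimension-independent relative Frobenius error on $\hat\Sigma$ (and hence TV-closeness for Gaussians) rather than only a multiplicative spectral one---and in avoiding the $\sqrt d$ inflation in \eqref{eq:bfrob} when calibrating $\sigma_2$, which improves the $C$-dependence but not the $d$-exponent.
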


When specialized to Gaussian distributions, the Frobenius guarantee above is suboptimal---the robust estimation algorithms of~\cite{DKKLMS16} allow estimating the mean and covariance of the unknown Gaussian distribution to an error $\widetilde{O}(\eta)$. We can in fact recover the stronger guarantees by relyong on the analysis in~\cite{kothari2021polynomialtime}[Theorem 1 and 2] of the same constraint system above for the case of Gaussian distributions (in the ``utility case''). This yields the following corollary:



\gaussianest*

\bibliographystyle{alpha}
\bibliography{bib/allrefs,bib/ref,bib/custom2,bib/custom,bib/dblp,bib/scholar,bib/mathreview}

\newcommand{\etalchar}[1]{$^{#1}$}
\begin{thebibliography}{DKK{\etalchar{+}}17b}

\bibitem[AAK21]{Aden-AliAK21}
Ishaq Aden{-}Ali, Hassan Ashtiani, and Gautam Kamath.
\newblock On the sample complexity of privately learning unbounded
  high-dimensional gaussians.
\newblock In Vitaly Feldman, Katrina Ligett, and Sivan Sabato, editors, {\em
  Algorithmic Learning Theory, 16-19 March 2021, Virtual Conference,
  Worldwide}, volume 132 of {\em Proceedings of Machine Learning Research},
  pages 185--216. {PMLR}, 2021.

\bibitem[Abo18]{Abowd18}
John~M. Abowd.
\newblock The u.s. census bureau adopts differential privacy.
\newblock KDD '18, page 2867, New York, NY, USA, 2018. Association for
  Computing Machinery.

\bibitem[AL21]{arxiv-AshtianiL21}
Hassan Ashtiani and Christopher Liaw.
\newblock Private and polynomial time algorithms for learning gaussians and
  beyond.
\newblock {\em CoRR}, abs/2111.11320, 2021.

\bibitem[{App}17]{AppleDP2017}
{Apple Differential Privacy Team}.
\newblock Learning with privacy at scale.
\newblock {\em Apple Machine Learning Journal}, 2017.

\bibitem[BDH{\etalchar{+}}20]{BakshiDHKKK20}
A.~Bakshi, I.~Diakonikolas, S.~B. Hopkins, D.~Kane, S.~Karmalkar, and P.~K.
  Kothari.
\newblock Outlier-robust clustering of gaussians and other non-spherical
  mixtures.
\newblock In {\em 61st {IEEE} Annual Symposium on Foundations of Computer
  Science, {FOCS} 2020}, pages 149--159. {IEEE}, 2020.

\bibitem[BDKU20]{BiswasDKU20}
Sourav Biswas, Yihe Dong, Gautam Kamath, and Jonathan~R. Ullman.
\newblock Coinpress: Practical private mean and covariance estimation.
\newblock In Hugo Larochelle, Marc'Aurelio Ranzato, Raia Hadsell,
  Maria{-}Florina Balcan, and Hsuan{-}Tien Lin, editors, {\em Advances in
  Neural Information Processing Systems 33: Annual Conference on Neural
  Information Processing Systems 2020, NeurIPS 2020, December 6-12, 2020,
  virtual}, 2020.

\bibitem[BEM{\etalchar{+}}17]{BittauEMMRLRKTS17}
Andrea Bittau, {\'{U}}lfar Erlingsson, Petros Maniatis, Ilya Mironov, Ananth
  Raghunathan, David Lie, Mitch Rudominer, Ushasree Kode, Julien Tinn{\'{e}}s,
  and Bernhard Seefeld.
\newblock Prochlo: Strong privacy for analytics in the crowd.
\newblock In {\em Proceedings of the 26th Symposium on Operating Systems
  Principles, Shanghai, China, October 28-31, 2017}, pages 441--459. {ACM},
  2017.

\bibitem[BGS{\etalchar{+}}21]{BrownGSUZ21}
Gavin Brown, Marco Gaboardi, Adam~D. Smith, Jonathan~R. Ullman, and Lydia
  Zakynthinou.
\newblock Covariance-aware private mean estimation without private covariance
  estimation.
\newblock {\em CoRR}, abs/2106.13329, 2021.

\bibitem[BK20a]{BK20}
A.~Bakshi and P.~Kothari.
\newblock Outlier-robust clustering of non-spherical mixtures.
\newblock {\em CoRR}, abs/2005.02970, 2020.

\bibitem[BK20b]{bakshi2020mixture}
Ainesh Bakshi and Pravesh Kothari.
\newblock Outlier-robust clustering of non-spherical mixtures.
\newblock 2020.

\bibitem[BK20c]{BakshiK20}
Ainesh Bakshi and Pravesh Kothari.
\newblock Outlier-robust clustering of non-spherical mixtures.
\newblock {\em CoRR}, abs/2005.02970, 2020.

\bibitem[BKS15]{MR3388192-Barak15}
B.~Barak, J.~A. Kelner, and D.~Steurer.
\newblock Dictionary learning and tensor decomposition via the sum-of-squares
  method [extended abstract].
\newblock In {\em S{TOC}'15---{P}roceedings of the 2015 {ACM} {S}ymposium on
  {T}heory of {C}omputing}, pages 143--151. ACM, New York, 2015.

\bibitem[BKSW19]{Bun0SW19}
Mark Bun, Gautam Kamath, Thomas Steinke, and Zhiwei~Steven Wu.
\newblock Private hypothesis selection.
\newblock In Hanna~M. Wallach, Hugo Larochelle, Alina Beygelzimer, Florence
  d'Alch{\'{e}}{-}Buc, Emily~B. Fox, and Roman Garnett, editors, {\em Advances
  in Neural Information Processing Systems 32: Annual Conference on Neural
  Information Processing Systems 2019, NeurIPS 2019, December 8-14, 2019,
  Vancouver, BC, Canada}, pages 156--167, 2019.

\bibitem[BP20]{bakshi2020robust}
A.~Bakshi and A.~Prasad.
\newblock Robust linear regression: Optimal rates in polynomial time.
\newblock {\em arXiv preprint arXiv:2007.01394}, 2020.

\bibitem[BS19]{BunS19}
Mark Bun and Thomas Steinke.
\newblock Average-case averages: Private algorithms for smooth sensitivity and
  mean estimation.
\newblock In Hanna~M. Wallach, Hugo Larochelle, Alina Beygelzimer, Florence
  d'Alch{\'{e}}{-}Buc, Emily~B. Fox, and Roman Garnett, editors, {\em Advances
  in Neural Information Processing Systems 32: Annual Conference on Neural
  Information Processing Systems 2019, NeurIPS 2019, December 8-14, 2019,
  Vancouver, BC, Canada}, pages 181--191, 2019.

\bibitem[BUV14]{BunUV14}
Mark Bun, Jonathan Ullman, and Salil~P. Vadhan.
\newblock Fingerprinting codes and the price of approximate differential
  privacy.
\newblock In {\em {STOC}}, pages 1--10. {ACM}, 2014.

\bibitem[CDGW19]{ChengDGW19}
Yu~Cheng, Ilias Diakonikolas, Rong Ge, and David~P. Woodruff.
\newblock Faster algorithms for high-dimensional robust covariance estimation.
\newblock In Alina Beygelzimer and Daniel Hsu, editors, {\em Conference on
  Learning Theory, {COLT} 2019, 25-28 June 2019, Phoenix, AZ, {USA}}, volume~99
  of {\em Proceedings of Machine Learning Research}, pages 727--757. {PMLR},
  2019.

\bibitem[CKM{\etalchar{+}}20]{CanonneKMUZ20}
Cl{\'{e}}ment~L. Canonne, Gautam Kamath, Audra McMillan, Jonathan~R. Ullman,
  and Lydia Zakynthinou.
\newblock Private identity testing for high-dimensional distributions.
\newblock In Hugo Larochelle, Marc'Aurelio Ranzato, Raia Hadsell,
  Maria{-}Florina Balcan, and Hsuan{-}Tien Lin, editors, {\em Advances in
  Neural Information Processing Systems 33: Annual Conference on Neural
  Information Processing Systems 2020, NeurIPS 2020, December 6-12, 2020,
  virtual}, 2020.

\bibitem[CWZ19]{CaiWZ19}
T.~Tony Cai, Yichen Wang, and Linjun Zhang.
\newblock The cost of privacy: Optimal rates of convergence for parameter
  estimation with differential privacy.
\newblock {\em CoRR}, abs/1902.04495, 2019.

\bibitem[DFM{\etalchar{+}}20]{DuFMBG20}
Wenxin Du, Canyon Foot, Monica Moniot, Andrew Bray, and Adam Groce.
\newblock Differentially private confidence intervals.
\newblock {\em CoRR}, abs/2001.02285, 2020.

\bibitem[DHKK20]{DiakonikolasHKK20}
Ilias Diakonikolas, Samuel~B. Hopkins, Daniel Kane, and Sushrut Karmalkar.
\newblock Robustly learning any clusterable mixture of gaussians.
\newblock {\em CoRR}, abs/2005.06417, 2020.

\bibitem[DHL19]{DongHL19}
Yihe Dong, Samuel~B. Hopkins, and Jerry Li.
\newblock Quantum entropy scoring for fast robust mean estimation and improved
  outlier detection.
\newblock In Hanna~M. Wallach, Hugo Larochelle, Alina Beygelzimer, Florence
  d'Alch{\'{e}}{-}Buc, Emily~B. Fox, and Roman Garnett, editors, {\em Advances
  in Neural Information Processing Systems 32: Annual Conference on Neural
  Information Processing Systems 2019, NeurIPS 2019, December 8-14, 2019,
  Vancouver, BC, Canada}, pages 6065--6075, 2019.

\bibitem[DK19]{DK-survey17}
Ilias Diakonikolas and Daniel~M. Kane.
\newblock Recent advances in algorithmic high-dimensional robust statistics.
\newblock {\em CoRR}, abs/1911.05911, 2019.

\bibitem[DKK{\etalchar{+}}16]{DKKLMS16}
I.~Diakonikolas, G.~Kamath, D.~M. Kane, J.~Li, A.~Moitra, and A.~Stewart.
\newblock Robust estimators in high dimensions without the computational
  intractability.
\newblock In {\em Proc.\ 57th IEEE Symposium on Foundations of Computer Science
  (FOCS)}, pages 655--664, 2016.

\bibitem[DKK{\etalchar{+}}17a]{DiakonikolasKKLMS17}
Ilias Diakonikolas, Gautam Kamath, Daniel~M. Kane, Jerry Li, Ankur Moitra, and
  Alistair Stewart.
\newblock Being robust (in high dimensions) can be practical.
\newblock In {\em {ICML}}, volume~70 of {\em Proceedings of Machine Learning
  Research}, pages 999--1008. {PMLR}, 2017.

\bibitem[DKK{\etalchar{+}}17b]{DiakonikolasKKLMS17b}
Ilias Diakonikolas, Gautam Kamath, Daniel~M. Kane, Jerry Li, Ankur Moitra, and
  Alistair Stewart.
\newblock Robustly learning a gaussian: Getting optimal error, efficiently.
\newblock {\em CoRR}, abs/1704.03866, 2017.

\bibitem[DKM{\etalchar{+}}06]{DworkKMMN06}
Cynthia Dwork, Krishnaram Kenthapadi, Frank McSherry, Ilya Mironov, and Moni
  Naor.
\newblock Our data, ourselves: Privacy via distributed noise generation.
\newblock In {\em Advances in Cryptology - {EUROCRYPT} 2006, 25th Annual
  International Conference on the Theory and Applications of Cryptographic
  Techniques, St. Petersburg, Russia, May 28 - June 1, 2006, Proceedings},
  volume 4004 of {\em Lecture Notes in Computer Science}, pages 486--503.
  Springer, 2006.

\bibitem[DKY17]{DingKY17}
Bolin Ding, Janardhan Kulkarni, and Sergey Yekhanin.
\newblock Collecting telemetry data privately.
\newblock In Isabelle Guyon, Ulrike von Luxburg, Samy Bengio, Hanna~M. Wallach,
  Rob Fergus, S.~V.~N. Vishwanathan, and Roman Garnett, editors, {\em Advances
  in Neural Information Processing Systems 30: Annual Conference on Neural
  Information Processing Systems 2017, December 4-9, 2017, Long Beach, CA,
  {USA}}, pages 3571--3580, 2017.

\bibitem[DL09]{DworkL09}
Cynthia Dwork and Jing Lei.
\newblock Differential privacy and robust statistics.
\newblock In Michael Mitzenmacher, editor, {\em Proceedings of the 41st Annual
  {ACM} Symposium on Theory of Computing, {STOC} 2009, Bethesda, MD, USA, May
  31 - June 2, 2009}, pages 371--380. {ACM}, 2009.

\bibitem[DLCC07]{MR2473563-DeLathauwer07}
Lieven De~Lathauwer, Jos\'ephine Castaing, and Jean-Fran\c{c}ois Cardoso.
\newblock Fourth-order cumulant-based blind identification of underdetermined
  mixtures.
\newblock {\em IEEE Trans. Signal Process.}, 55(6, part 2):2965--2973, 2007.

\bibitem[DMNS06]{DworkMNS06}
Cynthia Dwork, Frank McSherry, Kobbi Nissim, and Adam~D. Smith.
\newblock Calibrating noise to sensitivity in private data analysis.
\newblock In Shai Halevi and Tal Rabin, editors, {\em Theory of Cryptography,
  Third Theory of Cryptography Conference, {TCC} 2006, New York, NY, USA, March
  4-7, 2006, Proceedings}, volume 3876 of {\em Lecture Notes in Computer
  Science}, pages 265--284. Springer, 2006.

\bibitem[DN03]{DinurN03}
Irit Dinur and Kobbi Nissim.
\newblock Revealing information while preserving privacy.
\newblock In {\em {PODS}}, pages 202--210. {ACM}, 2003.

\bibitem[DR14]{DworkR14}
Cynthia Dwork and Aaron Roth.
\newblock The algorithmic foundations of differential privacy.
\newblock {\em Found. Trends Theor. Comput. Sci.}, 9(3-4):211--407, 2014.

\bibitem[DSS{\etalchar{+}}15]{DworkSSUV15}
Cynthia Dwork, Adam~D. Smith, Thomas Steinke, Jonathan~R. Ullman, and Salil~P.
  Vadhan.
\newblock Robust traceability from trace amounts.
\newblock In Venkatesan Guruswami, editor, {\em {IEEE} 56th Annual Symposium on
  Foundations of Computer Science, {FOCS} 2015, Berkeley, CA, USA, 17-20
  October, 2015}, pages 650--669. {IEEE} Computer Society, 2015.

\bibitem[DSSU17]{DworkSSU17}
Cynthia Dwork, Adam Smith, Thomas Steinke, and Jonathan Ullman.
\newblock Exposed! a survey of attacks on private data.
\newblock {\em Annual Review of Statistics and Its Application}, 4(1):61--84,
  2017.

\bibitem[EPK14]{ErlingssonPK14}
{\'U}lfar Erlingsson, Vasyl Pihur, and Aleksandra Korolova.
\newblock {RAPPOR}: Randomized aggregatable privacy-preserving ordinal
  response.
\newblock In {\em CCS}, pages 1054--1067, 2014.

\bibitem[FKP19]{TCS-086}
Noah Fleming, Pravesh Kothari, and Toniann Pitassi.
\newblock Semialgebraic proofs and efficient algorithm design.
\newblock {\em Foundations and Trends® in Theoretical Computer Science},
  14(1-2):1--221, 2019.

\bibitem[Gre16]{Greenberg2016Apple}
Andy Greenberg.
\newblock {Apple's} ``differential privacy'' is about collecting your data --
  but not your data.
\newblock {\em Wired, June}, 13, 2016.

\bibitem[HK13]{MR3385380-Hsu13}
Daniel Hsu and Sham~M. Kakade.
\newblock Learning mixtures of spherical {G}aussians: moment methods and
  spectral decompositions.
\newblock In {\em I{TCS}'13---{P}roceedings of the 2013 {ACM} {C}onference on
  {I}nnovations in {T}heoretical {C}omputer {S}cience}, pages 11--19. ACM, New
  York, 2013.

\bibitem[HKM21]{arxiv-HopkinsKM21}
Samuel~B. Hopkins, Gautam Kamath, and Mahbod Majid.
\newblock Efficient mean estimation with pure differential privacy via a
  sum-of-squares exponential mechanism.
\newblock {\em CoRR}, abs/2111.12981, 2021.

\bibitem[HL18]{HopkinsL18}
S.~B. Hopkins and J.~Li.
\newblock Mixture models, robustness, and sum of squares proofs.
\newblock In {\em Proc.\ 50th Annual ACM Symposium on Theory of Computing
  (STOC)}, pages 1021--1034, 2018.

\bibitem[HLZ20]{HopkinsLZ20}
Samuel~B. Hopkins, Jerry Li, and Fred Zhang.
\newblock Robust and heavy-tailed mean estimation made simple, via regret
  minimization.
\newblock In Hugo Larochelle, Marc'Aurelio Ranzato, Raia Hadsell,
  Maria{-}Florina Balcan, and Hsuan{-}Tien Lin, editors, {\em Advances in
  Neural Information Processing Systems 33: Annual Conference on Neural
  Information Processing Systems 2020, NeurIPS 2020, December 6-12, 2020,
  virtual}, 2020.

\bibitem[Hop20]{Hopkins20}
Samuel~B. Hopkins.
\newblock {Mean estimation with sub-Gaussian rates in polynomial time}.
\newblock {\em The Annals of Statistics}, 48(2):1193 -- 1213, 2020.

\bibitem[JH16]{MR3441448-Josz16}
C\'edric Josz and Didier Henrion.
\newblock Strong duality in {L}asserre's hierarchy for polynomial optimization.
\newblock {\em Optim. Lett.}, 10(1):3--10, 2016.

\bibitem[JLT20]{Jambulapati0T20}
Arun Jambulapati, Jerry Li, and Kevin Tian.
\newblock Robust sub-gaussian principal component analysis and
  width-independent schatten packing.
\newblock In Hugo Larochelle, Marc'Aurelio Ranzato, Raia Hadsell,
  Maria{-}Florina Balcan, and Hsuan{-}Tien Lin, editors, {\em Advances in
  Neural Information Processing Systems 33: Annual Conference on Neural
  Information Processing Systems 2020, NeurIPS 2020, December 6-12, 2020,
  virtual}, 2020.

\bibitem[KKM18]{KlivansKM18}
A.~Klivans, P.~Kothari, and R.~Meka.
\newblock Efficient algorithms for outlier-robust regression.
\newblock In {\em Proc.\ 31st Annual Conference on Learning Theory (COLT)},
  pages 1420--1430, 2018.

\bibitem[KLSU19]{KamathLSU19}
Gautam Kamath, Jerry Li, Vikrant Singhal, and Jonathan~R. Ullman.
\newblock Privately learning high-dimensional distributions.
\newblock In Alina Beygelzimer and Daniel Hsu, editors, {\em Conference on
  Learning Theory, {COLT} 2019, 25-28 June 2019, Phoenix, AZ, {USA}}, volume~99
  of {\em Proceedings of Machine Learning Research}, pages 1853--1902. {PMLR},
  2019.

\bibitem[KMS{\etalchar{+}}21]{arxiv-KamathMSSU21}
Gautam Kamath, Argyris Mouzakis, Vikrant Singhal, Thomas Steinke, and
  Jonathan~R. Ullman.
\newblock A private and computationally-efficient estimator for unbounded
  gaussians.
\newblock {\em CoRR}, abs/2111.04609, 2021.

\bibitem[KMZ21]{kothari2021polynomialtime}
Pravesh~K. Kothari, Peter Manohar, and Brian~Hu Zhang.
\newblock Polynomial-time sum-of-squares can robustly estimate mean and
  covariance of gaussians optimally, 2021.

\bibitem[KOTZ14]{DBLP:conf/soda/KauersOTZ14}
Manuel Kauers, Ryan O'Donnell, Li{-}Yang Tan, and Yuan Zhou.
\newblock Hypercontractive inequalities via sos, and the frankl-r{\"{o}}dl
  graph.
\newblock In {\em {SODA}}, pages 1644--1658. {SIAM}, 2014.

\bibitem[KS17a]{KStein17}
P.~K. Kothari and J.~Steinhardt.
\newblock Better agnostic clustering via relaxed tensor norms.
\newblock {\em CoRR}, abs/1711.07465, 2017.

\bibitem[KS17b]{KS17}
P.~K. Kothari and D.~Steurer.
\newblock Outlier-robust moment-estimation via sum-of-squares.
\newblock {\em CoRR}, abs/1711.11581, 2017.

\bibitem[KS17c]{KS17a}
Pravesh~K. Kothari and Jacob Steinhardt.
\newblock Better agnostic clustering via relaxed tensor norms.
\newblock {\em CoRR}, abs/1711.07465, 2017.

\bibitem[KSKO20]{KongSKO20}
Weihao Kong, Raghav Somani, Sham~M. Kakade, and Sewoong Oh.
\newblock Robust meta-learning for mixed linear regression with small batches.
\newblock In Hugo Larochelle, Marc'Aurelio Ranzato, Raia Hadsell,
  Maria{-}Florina Balcan, and Hsuan{-}Tien Lin, editors, {\em Advances in
  Neural Information Processing Systems 33: Annual Conference on Neural
  Information Processing Systems 2020, NeurIPS 2020, December 6-12, 2020,
  virtual}, 2020.

\bibitem[KSS18]{KothariSS18}
P.~K. Kothari, J.~Steinhardt, and D.~Steurer.
\newblock Robust moment estimation and improved clustering via sum of squares.
\newblock In {\em Proc.\ 50th Annual ACM Symposium on Theory of Computing
  (STOC)}, pages 1035--1046, 2018.

\bibitem[KSSU19]{KamathSSU19}
Gautam Kamath, Or~Sheffet, Vikrant Singhal, and Jonathan~R. Ullman.
\newblock Differentially private algorithms for learning mixtures of separated
  gaussians.
\newblock In Hanna~M. Wallach, Hugo Larochelle, Alina Beygelzimer, Florence
  d'Alch{\'{e}}{-}Buc, Emily~B. Fox, and Roman Garnett, editors, {\em Advances
  in Neural Information Processing Systems 32: Annual Conference on Neural
  Information Processing Systems 2019, NeurIPS 2019, December 8-14, 2019,
  Vancouver, BC, Canada}, pages 168--180, 2019.

\bibitem[KSU20]{KamathSU20}
Gautam Kamath, Vikrant Singhal, and Jonathan~R. Ullman.
\newblock Private mean estimation of heavy-tailed distributions.
\newblock In Jacob~D. Abernethy and Shivani Agarwal, editors, {\em Conference
  on Learning Theory, {COLT} 2020, 9-12 July 2020, Virtual Event [Graz,
  Austria]}, volume 125 of {\em Proceedings of Machine Learning Research},
  pages 2204--2235. {PMLR}, 2020.

\bibitem[KV18]{KarwaV18}
Vishesh Karwa and Salil~P. Vadhan.
\newblock Finite sample differentially private confidence intervals.
\newblock In Anna~R. Karlin, editor, {\em 9th Innovations in Theoretical
  Computer Science Conference, {ITCS} 2018, January 11-14, 2018, Cambridge, MA,
  {USA}}, volume~94 of {\em LIPIcs}, pages 44:1--44:9. Schloss Dagstuhl -
  Leibniz-Zentrum f{\"{u}}r Informatik, 2018.

\bibitem[Las01]{MR1846160-Lasserre01}
Jean~B. Lasserre.
\newblock New positive semidefinite relaxations for nonconvex quadratic
  programs.
\newblock In {\em Advances in convex analysis and global optimization
  ({P}ythagorion, 2000)}, volume~54 of {\em Nonconvex Optim. Appl.}, pages
  319--331. Kluwer Acad. Publ., Dordrecht, 2001.

\bibitem[LKKO21]{LiuKKO21}
Xiyang Liu, Weihao Kong, Sham~M. Kakade, and Sewoong Oh.
\newblock Robust and differentially private mean estimation.
\newblock {\em CoRR}, abs/2102.09159, 2021.

\bibitem[LKO21]{arxiv-LiuKO21}
Xiyang Liu, Weihao Kong, and Sewoong Oh.
\newblock Differential privacy and robust statistics in high dimensions.
\newblock {\em CoRR}, abs/2111.06578, 2021.

\bibitem[LRV16]{LaiRV16}
K.~A. Lai, A.~B. Rao, and S.~Vempala.
\newblock Agnostic estimation of mean and covariance.
\newblock In {\em Proc.\ 57th IEEE Symposium on Foundations of Computer Science
  (FOCS)}, pages 665--674, 2016.

\bibitem[LY20]{LiY20}
Jerry Li and Guanghao Ye.
\newblock Robust gaussian covariance estimation in nearly-matrix multiplication
  time.
\newblock In Hugo Larochelle, Marc'Aurelio Ranzato, Raia Hadsell,
  Maria{-}Florina Balcan, and Hsuan{-}Tien Lin, editors, {\em Advances in
  Neural Information Processing Systems 33: Annual Conference on Neural
  Information Processing Systems 2020, NeurIPS 2020, December 6-12, 2020,
  virtual}, 2020.

\bibitem[MT07]{McSherryT07}
Frank McSherry and Kunal Talwar.
\newblock Mechanism design via differential privacy.
\newblock In {\em 48th Annual {IEEE} Symposium on Foundations of Computer
  Science {(FOCS} 2007), October 20-23, 2007, Providence, RI, USA,
  Proceedings}, pages 94--103. {IEEE} Computer Society, 2007.

\bibitem[Nes00]{MR1748764-Nesterov00}
Yurii Nesterov.
\newblock Squared functional systems and optimization problems.
\newblock In {\em High performance optimization}, volume~33 of {\em Appl.
  Optim.}, pages 405--440. Kluwer Acad. Publ., Dordrecht, 2000.

\bibitem[NRS07]{NissimRS07}
Kobbi Nissim, Sofya Raskhodnikova, and Adam~D. Smith.
\newblock Smooth sensitivity and sampling in private data analysis.
\newblock In David~S. Johnson and Uriel Feige, editors, {\em Proceedings of the
  39th Annual {ACM} Symposium on Theory of Computing, San Diego, California,
  USA, June 11-13, 2007}, pages 75--84. {ACM}, 2007.

\bibitem[Par00]{parrilo2000structured}
Pablo~A Parrilo.
\newblock {\em Structured semidefinite programs and semialgebraic geometry
  methods in robustness and optimization}.
\newblock PhD thesis, California Institute of Technology, 2000.

\bibitem[SCV18]{SteinhardtCV18}
J.~Steinhardt, M.~Charikar, and G.~Valiant.
\newblock Resilience: {A} criterion for learning in the presence of arbitrary
  outliers.
\newblock In {\em Proc.\ 9th Innovations in Theoretical Computer Science
  Conference (ITCS)}, pages 45:1--45:21, 2018.

\bibitem[Sho87]{MR939596-Shor87}
N.~Z. Shor.
\newblock Quadratic optimization problems.
\newblock {\em Izv. Akad. Nauk SSSR Tekhn. Kibernet.}, (1):128--139, 222, 1987.

\bibitem[SSSS17]{ShokriSSS17}
Reza Shokri, Marco Stronati, Congzheng Song, and Vitaly Shmatikov.
\newblock Membership inference attacks against machine learning models.
\newblock In {\em 2017 {IEEE} Symposium on Security and Privacy, {SP} 2017, San
  Jose, CA, USA, May 22-26, 2017}, pages 3--18. {IEEE} Computer Society, 2017.

\bibitem[SU15]{SteinkeU15}
Thomas Steinke and Jonathan~R. Ullman.
\newblock Interactive fingerprinting codes and the hardness of preventing false
  discovery.
\newblock In Peter Gr{\"{u}}nwald, Elad Hazan, and Satyen Kale, editors, {\em
  Proceedings of The 28th Conference on Learning Theory, {COLT} 2015, Paris,
  France, July 3-6, 2015}, volume~40 of {\em {JMLR} Workshop and Conference
  Proceedings}, pages 1588--1628. JMLR.org, 2015.

\bibitem[Tao12]{taotopics}
T.~Tao.
\newblock {\em Topics in Random Matrix Theory}.
\newblock Graduate studies in mathematics. American Mathematical Society, 2012.

\bibitem[Vad17]{Vadhan17}
Salil~P. Vadhan.
\newblock The complexity of differential privacy.
\newblock In Yehuda Lindell, editor, {\em Tutorials on the Foundations of
  Cryptography}, pages 347--450. Springer International Publishing, 2017.

\bibitem[WDFS17]{Wang2017OPERATORNI}
Miaoyan Wang, Khanh~Dao Duc, Jonathan Fischer, and Yun~S. Song.
\newblock Operator norm inequalities between tensor unfoldings on the partition
  lattice.
\newblock {\em Linear algebra and its applications}, 520:44--66, 2017.

\bibitem[WXDX20]{WangXDX20}
Di~Wang, Hanshen Xiao, Srinivas Devadas, and Jinhui Xu.
\newblock On differentially private stochastic convex optimization with
  heavy-tailed data.
\newblock In {\em Proceedings of the 37th International Conference on Machine
  Learning, {ICML} 2020, 13-18 July 2020, Virtual Event}, volume 119 of {\em
  Proceedings of Machine Learning Research}, pages 10081--10091. {PMLR}, 2020.

\bibitem[ZJS19]{ZhuJS19}
Banghua Zhu, Jiantao Jiao, and Jacob Steinhardt.
\newblock Generalized resilience and robust statistics.
\newblock {\em CoRR}, abs/1909.08755, 2019.

\bibitem[ZKKW20]{ZhangKKW20}
Huanyu Zhang, Gautam Kamath, Janardhan Kulkarni, and Zhiwei~Steven Wu.
\newblock Privately learning markov random fields.
\newblock In {\em Proceedings of the 37th International Conference on Machine
  Learning, {ICML} 2020, 13-18 July 2020, Virtual Event}, volume 119 of {\em
  Proceedings of Machine Learning Research}, pages 11129--11140. {PMLR}, 2020.

\end{thebibliography}

\appendix
\section{Missing Proofs from \Cref{subsec:prelim-dp}}

\subsection{Proof of \Cref{lem:tlap-dp}}
\label{app:tlap}

\begin{proof}[Proof of \Cref{lem:tlap-dp}]
Consider any neighboring datasets $Y, Y'$ and let $\cM$ denote the truncated Laplace mechanism (with parameter as specified). Let $p, q$ denote the probability density functions of $\cM(Y), \cM(Y')$. Observe that $p(x) \leq e^\eps \cdot q(x)$ for all $x < \min\{f(Y), f(Y')\}$. Thus, we have
\begin{align*}
D_{e^\epsilon}(p,q) &= \int_{x \in \R} [p(x)-e^{\epsilon}q(x)]_+ dx \\
&= \int_{x \geq \min\{f(Y), f(Y')\}} [p(x)-e^{\epsilon}q(x)]_+ dx \\
&\leq \int_{x \geq \min\{f(Y), f(Y')\}} p(x) dx \\
(\text{Since sensitivity of } f \text{ is at most } \Delta) &\leq \int_{x \geq f(Y) - \Delta} p(x) dx \\
(\text{\Cref{lem:tlap-tail}}) &\leq \delta,
\end{align*}
which means that the truncated Laplace mechanism is indeed $(\eps, \delta)$-DP.
\end{proof}

\subsection{Proof of \Cref{lem:composition}}
\label{app:composition}

The proof of the composition lemma follows from that of the standard adaptive composition of approximate DP proof~\cite[Theorem 16]{DworkL09}.
Below we use the notation $[x]_+$ to denote $\max\{x, 0\}$ and $x \wedge y$ to denote $\min\{x, y\}$.

\begin{proof}[Proof of \Cref{lem:composition}]
It suffices to prove the theorem for $k = 2$ as we may then apply induction to arrive at the statement for any positive integer $k$. To prove the case $k = 2$, consider any $S \subseteq O_2 \cup \{\perp\}$ and any pair of neighboring datasets $Y, Y'$.

For any $S_1 \subseteq \cO_1 \cup \{\perp\}$, we define the measure $\mu(S_1) := [\Pr[\cM_1(Y) \in S_1] - e^{\eps_1} \Pr[\cM_1(Y') \in S_1]]_+$. Note that we have $\mu(\cO_1) \leq \delta_1$ due to our assumption that $\cM_1$ is $(\eps_1, \delta_1)$-DP. 


Now consider four cases:
\begin{itemize}
\item \textbf{Both $Y, Y'$ satisfy $\Psi_1$.} In this case, we may appeal to $(\eps_2, \delta_2)$-DP under $\Psi_1$ of $\cM_2$ which implies
\begin{align} \label{eq:a2-dp}
\Pr[\cM_2(o_1, Y) \in S] \leq (e^{\eps_2} \Pr[\cM_2(o_1, Y') \in S] \wedge 1) + \delta_2.
\end{align}
For ease of notation, let $p_Y: \cO_1 \to \R^+$ denote the measure obtained by restricting the probability density function of $\cM_1(Y)$ to $\cO_1$ (note that $\int_{\cO_1} p_Y(o_1)\,do_1 = 1 - \Pr[\cM_1(Y) = \perp]$). Then, observe that
\begin{align*}
\Pr[\cM(Y) \in S] 
&= \ind[\perp \in S] \Pr[\cM_1(Y) =\perp] + \int_{\cO_1} \Pr[\cM_2(o_1, Y) \in S] p_Y(o_1)\, do_1 \\
&\overset{\eqref{eq:a2-dp}}{\leq} \ind[\perp \in S] \Pr[\cM_1(Y) = \perp] + \int_{\cO_1} \left((e^{\eps_2} \Pr[\cM_2(o_1, Y') \in S] \wedge 1) + \delta_2\right) p_Y(o_1)\, do_1 \\
&\leq \ind[\perp \in S] \Pr[\cM_1(Y) = \perp] + \delta_2 \\ &\qquad +  \int_{\cO_1} (e^{\eps_2} \Pr[\cM_2(o_1, Y') \in S] \wedge 1) p_Y(o_1)\,do_1 \\
&\leq \ind[\perp \in S] (e^{\eps_1}\Pr[\cM_1(Y') = \perp] + \mu(\{\perp\})) + \delta_2 \\ &\qquad +  \int_{\cO_1} (e^{\eps_2} \Pr[\cM_2(o_1, Y') \in S] \wedge 1) (e^{\eps_1} p_{Y'}(o_1)\,do_1 + d\mu(o_1)) \\
&\leq  \ind[\perp \in S] (e^{\eps_1}\Pr[\cM_1(Y') = \perp]) + \mu(\cO_1 \cup \{\perp\}) + \delta_2\\ &\qquad +  \int_{\cO_1} (e^{\eps_2} \Pr[\cM_2(o_1, Y') \in S] \wedge 1) (e^{\eps_1} p_{Y'}(o_1))\,do_1 \\
&\leq \ind[\perp \in S] (e^{\eps_1 + \eps_2}\Pr[\cM_1(Y') = \perp]) + \delta_1 + \delta_2\\ &\qquad +  \int_{\cO_1} e^{\eps_1 + \eps_2} \Pr[\cM_2(o_1, Y') \in S] p_{Y'}(o_1)\,do_1 \\
&\leq \delta_1 + \delta_2 + e^{\eps_1 + \eps_2}\Pr[\cM(Y') \in S].
\end{align*}
\item \textbf{$Y$ satisfies $\Psi_1$ but $Y'$ does not.} In this case, we have $\Pr[\cM(Y') = \perp] = 1$, which implies that
\begin{align*}
\Pr[\cM(Y) \in S] - e^{\eps_1 + \eps_2} \Pr[\cM(Y') \in S]
&\leq \Pr[\cM(Y) \neq \perp] = \Pr[\cM(Y) \neq \perp] - e^{\eps_1} \Pr[\cM(Y') \neq \perp] \leq \delta_1,
\end{align*}
where the last inequality follows from the fact that $\cM_1$ is $(\eps_1, \delta_1)$-DP.
\item \textbf{$Y'$ satisfies $\Psi_1$ but $Y$ does not.} In this case, we have $\Pr[\cM(Y) = \perp] = 1$, which implies that
\begin{align*}
\Pr[\cM(Y) \in S] - e^{\eps_1 + \eps_2} \Pr[\cM(Y') \in S]
&\leq [\Pr[\cM(Y) = \perp] - e^{\eps_1 + \eps_2} \Pr[\cM(Y') = \perp]]_+ \\
&\leq [\Pr[\cM(Y) = \perp] - e^{\eps_1} \Pr[\cM(Y') = \perp]]_+ \\
&\leq \delta_1,
\end{align*}
where the last inequality once again follows from the fact that $\cM_1$ is $(\eps_1, \delta_1)$-DP.
\item Neither $Y$ nor $Y'$ satisfy $\Psi_1$. In this case, both $\cM(Y)$ and $\cM(Y')$ always output $\perp$. Therefore, we have $\Pr[\cM(Y) \in S] = \Pr[\cM(Y') \in S]$.
\end{itemize}
Thus, in all cases, we have $\Pr[\cM(Y) \in S] = e^{\eps_1+ \eps_2} \Pr[\cM(Y') \in S] + \delta_1 + \delta_2$ as desired.
\end{proof}

\subsection{Proof of \Cref{lem:hockeytriangle}}
\label{app:hockeytriangle}

\begin{proof}[Proof of \Cref{lem:hockeytriangle}]
 Then, note that
 \begin{align*}
  D_{e^\epsilon}(p,r) &= \int_{x\in\R^d} [p(x) - e^\epsilon r(x)]_+\,dx\\
  &= \int_{x\in\R^d} [(p(x) - e^{\epsilon/2} q(x)) + (e^{\epsilon/2} q(x) - e^\epsilon r(x))]_+\,dx\\
  &\leq \int_{x\in\R^d} [(p(x) - e^{\epsilon/2} q(x))]_+\,dx + \int_{x\in\R^d} [e^{\epsilon/2} q(x) - e^\epsilon r(x))]_+\,dx\\
  &= \int_{x\in\R^d} [(p(x) - e^{\epsilon/2} q(x))]_+\,dx + e^{\epsilon/2} \int_{x\in\R^d} [q(x) - e^{\epsilon/2} r(x))]_+\,dx\\
  &= D_{e^{\epsilon/2}}(p,q) + e^{\epsilon/2} \cdot  D_{e^{\epsilon/2}}(q,r),
 \end{align*}
as desired.
\end{proof}

\subsection{Proof of \Cref{thm:dp-apx-selection}}
\label{app:selection}

As stated earlier, the proof of \Cref{thm:dp-apx-selection} follows from applying the exponential mechanism~\cite{McSherryT07} and then use the truncated Laplace mechanism (\Cref{lem:tlap-dp}) to check that the score indeed exceeds $\kappa$.

\begin{proof}[Proof of \Cref{thm:dp-apx-selection}]
\SelectionAlg works as follows:
\begin{enumerate}
\item First, run the $(\eps/2)$-DP exponential mechanism~\cite{McSherryT07}, i.e. selecting each $c \in C$ with probability proportional to $\exp\left(\frac{\eps}{4\Delta} \cdot \score(c, Y)\right)$. Let $c_1$ be the output of this procedure.
\item Sample the noise $N \sim \tLap\left(-\Delta\left(1 + \frac{2\ln\left(1/\delta\right)}{\eps}\right), \frac{2\Delta}{\eps}\right)$ and compute $\widetilde{\score} = \score(c_1, Y) + N$. If $\widetilde{\score} \geq \kappa$, then output $c_1$. Otherwise, output $\perp$.
\end{enumerate}

We will now prove each of the claimed properties:
\begin{enumerate}
\item The first step satisfies $(\eps/2)$-DP via the standard privacy guarantee of the exponential mechanism~\cite{McSherryT07}. The second step is $(\eps/2, \delta)$-DP due to~\Cref{lem:tlap-dp}. Thereby, applying the basic composition theorem implies that \SelectionAlg is $(\eps,\delta)$-DP.
\item Since $N \leq 0$, we are guarantee that if the algorithm outputs $c^* \in \cC$, we must have $\score(c, Y) \geq \kappa$ as desired.
\item For any $c \in \cC$, the standard utility analysis of the exponential mechanism~\cite{McSherryT07} implies that, with probability $1 - 0.5\beta$, we have $\score(c_1, Y) \geq \score(c, Y) - O\left(\frac{\Delta}{\eps}\ln\left(\frac{|C|}{\beta}\right)\right)$. Moreover, the tail bound of Laplace noise (\Cref{lem:tlap-tail}) implies that with probability $1 - 0.5\beta$ we have $N \geq -\Delta\left(1 + \frac{\ln\left(1/\delta\right)}{\eps}\right) - O\left(\frac{\Delta}{\eps} \ln(1/\beta)\right) \geq -O\left(\frac{\Delta}{\eps}\ln\left(\frac{1}{\delta \beta}\right)\right)$. Therefore, if $\score(c, Y) \geq \kappa + O\left(\frac{\Delta}{\eps}\ln\left(\frac{|C|}{\delta \beta}\right)\right)$, the probability that the algorithm outputs $\perp$ is at most $\beta$, as desired.
\end{enumerate} 
\end{proof}

\end{document}